%File: formatting-instructions-latex-2025.tex
%release 2025.0
\documentclass[letterpaper]{article} % DO NOT CHANGE THIS
\usepackage{aaai25}  % DO NOT CHANGE THIS
\usepackage{times}  % DO NOT CHANGE THIS
\usepackage{helvet}  % DO NOT CHANGE THIS
\usepackage{courier}  % DO NOT CHANGE THIS
\usepackage[hyphens]{url}  % DO NOT CHANGE THIS
\usepackage{graphicx} % DO NOT CHANGE THIS
\urlstyle{rm} % DO NOT CHANGE THIS
  % DO NOT CHANGE THIS
\usepackage{natbib}  % DO NOT CHANGE THIS AND DO NOT ADD ANY OPTIONS TO IT
\usepackage{caption} % DO NOT CHANGE THIS AND DO NOT ADD ANY OPTIONS TO IT
\frenchspacing  % DO NOT CHANGE THIS
\setlength{\pdfpagewidth}{8.5in}  % DO NOT CHANGE THIS
\setlength{\pdfpageheight}{11in}  % DO NOT CHANGE THIS
%
% These are recommended to typeset algorithms but not required. See the subsubsection on algorithms. Remove them if you don't have algorithms in your paper.
\usepackage{algorithm}
\usepackage{algorithmic}

%
% These are are recommended to typeset listings but not required. See the subsubsection on listing. Remove this block if you don't have listings in your paper.
\usepackage{newfloat}
\usepackage{listings}
\DeclareCaptionStyle{ruled}{labelfont=normalfont,labelsep=colon,strut=off} % DO NOT CHANGE THIS
\lstset{%
	basicstyle={\footnotesize\ttfamily},% footnotesize acceptable for monospace
	numbers=left,numberstyle=\footnotesize,xleftmargin=2em,% show line numbers, remove this entire line if you don't want the numbers.
	aboveskip=0pt,belowskip=0pt,%
	showstringspaces=false,tabsize=2,breaklines=true}
\floatstyle{ruled}
\newfloat{listing}{tb}{lst}{}
\floatname{listing}{Listing}
%
% Keep the \pdfinfo as shown here. There's no need
% for you to add the /Title and /Author tags.
\pdfinfo{
/TemplateVersion (2025.1)
}

\usepackage{latexsym}
\usepackage{amssymb}
\usepackage{amsmath}
\usepackage{amsthm}
\usepackage{booktabs}
\usepackage{enumitem}
\usepackage{color}

%%%%%% MY PACKAGES %%%%%%
\usepackage{dsfont}
\usepackage{mathtools}
\usepackage{bm}
\usepackage{tikz}
\usepackage{scalerel}
\usepackage{ifthen}
\usepackage{comment}

%%%%%%%%%%%%%%%%%%%%%%%%%%%%%%%%%%%%%%%%%%%%%%%%%%%%%%%%%%%%%%%%%%%%%%%%

%%% Define any theorem-like environments you require here.

\newtheorem{theorem}{Theorem}
\newtheorem{lemma}[theorem]{Lemma}
\newtheorem{corollary}[theorem]{Corollary}
\newtheorem{proposition}[theorem]{Proposition}

\newtheorem{definition}{Definition}

%%%%%% MY COMMANDS %%%%%%%%
\newtheorem{assumption}{Assumption}
\newtheorem{remark}{Remark}

\newcommand{\colme}{ColME}

\newcommand{\belief}{B-ColME}
\newcommand{\consensus}{C-ColME}

\newcommand{\bigO}{\mathcal O}
\newcommand{\x}{\mathbf x}
\newcommand{\xc}{\prescript{}{c}{\x}}
\newcommand{\xav}{\bar{\x}}
\newcommand{\xavc}{{\prescript{}{c}\xav}}
\newcommand{\bb}{\mathbf b}
\newcommand{\mm}{\bm{\mu}}
\newcommand{\mmc}{\prescript{}{c}{\mm}}
\newcommand{\y}{\hat{\mm}}
\newcommand{\yc}{\prescript{}{c}{\y}}
\newcommand{\z}{\mathbf z}
\newcommand{\lambdac}{{\lambda_{2,c}}}
\newcommand{\Pc}{\prescript{}{c}{P}}
\newcommand{\Wc}{\prescript{}{c}{W}}

\newcommand*\diff{\mathop{}\!\mathrm{d}}
\newcommand{\Ei}[1]{\mathrm{Ei}\!\left(#1\right)}
\DeclareMathOperator*{\E}{\mathbb{E}}
\DeclareMathOperator*{\Prob}{\mathbb{P}}

\setcounter{secnumdepth}{2} %May be changed to 1 or 2 if section numbers are desired.

% The file aaai25.sty is the style file for AAAI Press
% proceedings, working notes, and technical reports.
%

% Title

% Your title must be in mixed case, not sentence case.
% That means all verbs (including short verbs like be, is, using,and go),
% nouns, adverbs, adjectives should be capitalized, including both words in hyphenated terms, while
% articles, conjunctions, and prepositions are lower case unless they
% directly follow a colon or long dash

\title{Scalable Decentralized Algorithms for Online Personalized Mean Estimation}
\author {
    % Authors
    Franco Galante\textsuperscript{\rm 1},
    Giovanni Neglia\textsuperscript{\rm 2},
    Emilio Leonardi\textsuperscript{\rm 1}
}
\affiliations {
    % Affiliations
    \textsuperscript{\rm 1}Politecnico di Torino\\
    \textsuperscript{\rm 2}INRIA\\
    franco.galante@polito.it, giovanni.neglia@inria.fr, emilio.leonardi@polito.it
}

\begin{document}

\maketitle

\begin{abstract}
In numerous settings, agents lack sufficient data to learn a model directly. Collaborating with other agents may help, but introduces a bias-variance trade-off when local data distributions differ.
%While it may reduce model variability, it can introduce bias if local data distributions differ. 
A key challenge is for each agent to identify clients with similar distributions while learning the model, a problem that remains largely unresolved.
This study focuses on a particular instance of the overarching problem, where each agent collects samples from a real-valued distribution over time to estimate its mean. Existing algorithms face impractical per-agent space and time complexities (linear in the number of agents~$|\mathcal{A}|$). % in large-scale systems.
To address scalability challenges, we propose a framework where agents self-organize into a graph, allowing each agent to communicate with only a selected number of peers~$r$. We propose two collaborative mean estimation algorithms: one employs a consensus-based approach, while the other uses a message-passing scheme, with complexity  $\bigO(r)$ and $\bigO(r \cdot \log |\mathcal{A}|)$, respectively. 
We establish conditions for both algorithms to yield asymptotically optimal estimates and we provide a theoretical characterization of their performance. % the asymptotic performance of our proposed algorithms.
\end{abstract}

\begin{links}
\link{Code}{https://github.com/Franco-Galante/scalable-decentralized-algorithms-AAAI25}
\end{links}

% Uncomment the following to link to your code, datasets, an extended version or similar.
%
% \begin{links}
%     \link{Code}{https://aaai.org/example/code}
%     \link{Datasets}{https://aaai.org/example/datasets}
%     \link{Extended version}{https://aaai.org/example/extended-version}
% \end{links}

% OUR LINKS, looks a bit ugly in the paper
% \begin{links}
%   \link{Code}{https://github.com/Franco-Galante/scalable-decentralized-algorithms-AAAI25}
%    \link{Extended version}{https://arxiv.org/abs/2402.12812}
% \end{links}

\section{Introduction}\label{sec:intro}

Users' devices have become increasingly sophisticated and generate vast amounts of data. This wealth of data has enabled %paved the way for 
the development of accurate and complex models. However, it has also introduced challenges related to security, privacy, real-time processing, and resource management. In response, Federated Learning (FL) has emerged as a key privacy-preserving approach for collaborative model training~\citep{kairouzAdvancesOpenProblems2021,liFederatedLearningChallenges2020}.
While traditional FL methods aim to develop a single model for all clients, the statistical diversity of clients' datasets has led to the development of \emph{personalized} models, designed to better align with the data distributions of individual clients~\citep[e.g.,][]{ghoshEfficientFrameworkClustered2020, personalized, liDitto, marfoq2021federated, dingCollaborativeLearningDetecting2022}.

Many personalized FL strategies group clients into 
%\textcolor{cyan}{similar} %homogeneous
clusters and then tailor a model for each cluster~\citep[e.g.,][]{ghoshEfficientFrameworkClustered2020, sattler, dingCollaborativeLearningDetecting2022}.
% The optimal clustering would ideally group clients whose local optimal models are similar. However, the inherent challenge lies in the fact that these optimal models are unknown a priori. This dilemma results in the two tasks---model learning for the specified task and cluster identification---being deeply interconnected.
Ideally, clustering would group clients with similar local optimal models. However, since the optimal models are unknown a priori, model learning and cluster identification become deeply interconnected tasks.
Various studies have suggested empirical measures of similarity as a workaround~\citep[e.g.,][]{ghoshEfficientFrameworkClustered2020, sattler}, while others rely on presumed knowledge of distances across data distributions~\citep[e.g.][]{dingCollaborativeLearningDetecting2022, evenSampleOptimalityPersonalized2022}. Nonetheless, accurately estimating these distances, especially within an FL framework where clients may possess limited data, proves to be particularly challenging. 
These estimation difficulties are well documented in the literature---e.g., by \citet{evenSampleOptimalityPersonalized2022} [Sec.~6]---highlighting the problem of identifying similar clients for collaborative model learning as a significant yet unresolved issue.

In this paper, we focus on a fundamental aspect of the broader challenge: estimating the mean of an $\mathbb{R}^K$-valued distribution. This problem is often regarded as the archetypal federated learning problem~\cite{dorner2024incentivizing, tsoy2024provable, grimberg2021optimal}, but it also
holds significant practical relevance across various fields, such as smart agriculture, grid management, and healthcare, where multiple sensors collect private, noisy data on identical or related variables~\citep{Adi2020}.

We consider an online, decentralized scenario where, at each time slot, clients receive new samples and exchange information with a limited number of peers. 
To the best of our knowledge, the state-of-the-art method in this setting is the Collaborative Mean Estimation algorithm (\colme) by \citet{colme}. 
Unfortunately, \colme{} faces scalability issues in large systems, as both its per-agent space and time complexities are linear in the number of clients~$|\mathcal A|$. Moreover, in its current form, \colme{} is applicable only to scalar mean estimation problems ($K=1$) and its convergence guarantees only hold for sub-Gaussian data distributions.

We extend the methodology proposed by \citet{colme}
to accommodate multidimensional data drawn from the broader class of distributions with bounded fourth moment.
To address \colme's scalability challenges, we propose that clients self-organize into a network where each client communicates with at most~$r$ neighbors. Over time, this set of neighbors is pruned as clients progressively exclude the less similar ones. In this framework, we introduce two collaborative mean estimation algorithms: one based on consensus and the other on a message-passing scheme. The complexities of these algorithms are \(\bigO(r )\) and \(\bigO(r \cdot \log |\mathcal{A}|)\), respectively. We demonstrate that, despite each client exchanging information with only \(r \ll |\mathcal{A}|\) neighbors, it is possible to achieve a convergence speedup for
mean estimates by a factor of $\Omega(|\mathcal{A}|^{1/2 - \phi})$, where $\phi$ can be made arbitrarily close to~$0$. 

Lastly, we conduct preliminary experiments demonstrating how our algorithms can be adapted to federatedly learn more general machine learning models.

\section{Related Work}\label{sec:related}

% For an overview of the field of personalized federated learning, we refer the reader to the recent survey~\cite{tanPersonalizedFederatedLearning2023}. Here, we limit to mention the most relevant approaches for this paper.
For an overview of personalized federated learning, see the recent survey by~\citet{tanPersonalizedFederatedLearning2023}. Here, we highlight only the most relevant approaches related to this paper.

\citet{ghoshEfficientFrameworkClustered2020} and \citet{sattler} were the first to propose clustered FL algorithms, which divide the clients based on the similarity of their data distributions. Similarity is empirically evaluated by the Euclidean distance between local models and by the cosine similarity of their updates. \citet{dingCollaborativeLearningDetecting2022} study more sophisticated clustering algorithms assuming that clients can efficiently estimate some specific (pseudo)-distances across local distributions (i.e., the integral probability metrics). 

\citet{beaussartWAFFLEWeightedAveraging2021}, \citet{chayti2022linear}, \citet{grimberg2021optimal}, and \citet{evenSampleOptimalityPersonalized2022} consider decentralized approaches, which allow each client to learn a personal model relying on a specific convex combination of information (gradients) from other clients. In particular, \citet{evenSampleOptimalityPersonalized2022} prove that collaboration can at most speed up the convergence time linearly in the number of similar agents and provide algorithms, which, under a priori knowledge of pairwise client distributions' distances, achieve such speedup. The authors recognize the complexity of estimating these distances and provide practical estimation algorithms for linear regression problems, which asymptotically achieve the same speedup, scaling the number of clients but maintaining the number of clusters fixed.
The generalization properties of personalized models obtained by convex combinations of clients' models are studied in~\citet{mansourThreeApproachesPersonalization2020}, while~\citet{donahueModelsharingGamesAnalyzing2021} look at the problem through the lens of game theory.
The work most similar to ours is \citet{colme}, which we describe in detail in the next section.

\section{Model and Background}\label{sec:model}

Table~\ref{tab:notation} lists the most important symbols used throughout the paper. Superscripts are added to variables to indicate whether they pertain to C-ColME (C), B-ColME (B), or both approaches (D).

We consider a set $\mathcal{A}$ of agents (computational units).
At each time instant $t$, an agent $a\in \mathcal{A}  = \{1, 2,\cdots , |\mathcal{A}|\}$ generates  a new sample $\bm{x}_a^t \in \mathbb{R}^K$, with $K\in \mathbb{N}$, drawn i.i.d.~from  
%an assigned individual 
a distribution $D_a$ with expected value $\bm{\mu}_a=\mathbb{E}[\bm{x}_a^t]$.  
Expected values are not necessarily distinct across agents. Indeed, given two agents $a$ and $a'$, and a norm~$||\cdot ||$ in~$\mathbb{R}^K$, 
we denote the gap between the agents' \textit{true} means by $\Delta_{a,a'}:=||\bm{\mu}_{a}-\bm{\mu}_{a'}||$. Let $\mathcal{C}_a$ be
the group  of  agents with the same \textit{true} mean as $a$, (i.e., those for which  $\Delta_{a,a'}=0$). In the following, we will refer to $\mathcal{C}_a$
%with  the term
as the \lq similarity class of~$a$.'  %We assume that nodes are partitioned into relatively few similarity classes.

The goal of each agent $a \in \mathcal{A}$ is to %efficiently and 
estimate its mean~$\bm{\mu}_a$. To this end, at each time~$t$ the agent can compute its \textit{local} mean estimate $\bar{\bm{x}}^t_{a,a}= \frac{1}{t}\sum_{t'=1}^{t}\bm{x}^{t'}_{a}$ over the~$t$ available samples. Additionally, the agent can obtain a more accurate estimate by leveraging information from other agents in $\mathcal{A}$, provided it can identify those who share the same true mean.
% but can also obtain a more accurate estimate taking advantage of information from other agents in its similarity class.

For the scalar case, (i.e., when $K=1$)
\citet{colme} proposed \colme{} as a collaborative algorithm for mean estimation. It relies on two key steps, executed concurrently: i)~the identification of the similarity classes; 
%% ----[old, second-to-last version]---- trying to identify the similarity classes;
%partitioning \lq\lq on the fly" the agents into similarity classes,  %placing in the same class all agents $a'$ with  the same local
% grouping together agents $a'$ with the same \textit{true} mean; 
ii)~the collaboration with agents \textit{believed to belong} to the same class to improve the local estimate.
% ----[old, second-to-last version]---- ii) improving local mean estimates by sharing information with agents believed to belong to the same class.
%We denote with similarity class, $\mathcal{C}$, a maximal set of agents, with the same average, i.e., such that  $\Delta_{l_1,l_2}:=|\mu_{l_1}-\mu_{l_2}|=0$ forall $l_1,l_2\in \mathcal{C}$.   
%Note, tht by construction  the set of agents $\mathcal{A}$ is partitioned in similiarities classes   
%  $\mathcal{C}_i$ i.e.,    $\mathcal{A}=\cup_{i\in[I]} \mathcal{C}_i$. being $[I]$  the set of similarities classes.
%
%{\textcolor{blue}{Ho la sensazione che convenga comunque definire le comunita centrandole sui nodi, altrimenti sorge il problema  che le comunita' possono sparire, venir create dinamicamete e impazziamo con gli indici... }
%
%\textcolor{black}{EL-forse dobbiamo enfatizzare meglio i  contributi nuovi rispetto a \colme{}:L'estensione al caso multdimensionale e ancora di piu' l'estensione al caso generale non sub-gaussiano sono abbastanza nascosti}
% ----[old, second-to-last version]---- Therefore,  to make our results algorithms directly comparable with  those reported in  \cite{colme}, and to ease the notation, in the following  we limit ourselves to present the scalar case; we refer the interested reader to the supplementary material for the analysis  of the more general case.
To ensure direct comparability with the results in \cite{colme} and simplify notation, in what follows, we focus on the scalar case. Readers interested in the general case can find the analysis in \textcolor{black}{Appendix~\ref{app:multidim}}. All the appendices referred to in this work are accessible at~\cite{our_airxiv}. 
%\footnote{\textcolor{blue}{The extended version of this work is available at \url{https://arxiv.org/abs/2402.12812} and the code at {https://github.com/Franco-Galante/scalable-decentralized-algorithms-AAAI25}.}}. 
% In what follows, we use superscripts on variables to indicate whether they pertain to C-ColME (C), B-ColME (B), or both approaches (D).

% \textcolor{blue}{To improve readability, Table~\ref{tab:notation} lists the most important symbols used throughout the paper. The superscript to variables denotes whether a variable refers to \consensus~($C$), \belief~($B$), or either of the two approaches~($D$).}

\begin{table}[h!]
    \centering
    \renewcommand{\arraystretch}{1.15} % add some space between items
    \resizebox{\columnwidth}{!}{ %% UNCOMMENT THIS TO MAKE IT FIT THE COLUMN OR SMALLER (adding a coefficient before \columnwidth)
        \begin{tabular}{c|l}
            \toprule
            \textbf{Symbol} & \textbf{Description}\\
            \midrule
            $\mathcal{A}$      & Set of agents (computational units)       \\ 
            $D_a$              & Distribution of agent $a \in \mathcal{A}$  \\ 
            $\bm{\mu}_a$       & True mean of distribution $D_a$, i.e., $\bm{\mu}_a=\mathbb{E}[\bm{x}_a^t]$ \\
            $\bar{\bm{x}}^t_a$ & Vectorial sample $\bar{\bm{x}}^t_a \in \mathbb{R}^K$ drawn from $D_a$  \\
            $\Delta_{a,a'}$    & Gap between agents $a$ and $a'$ true means \\ % \Delta_{a,a'}:=||\bm{\mu}_{a}-\bm{\mu}_{a'}|| \\
            $\mathcal{C}_a$    & Similarity class of $a$, $\mathcal{C}_a = \{a' \in \mathcal{A} | \Delta_{a,a'}=0\}$ \\ 
            \hline %% divide static given quantities by dynamic ones
            $\mathcal{C}_a^t$  & Estimated similarity class of $a$ at time $t$ \\ 
            $\bar{x}^t_{a,a}$  & Local mean estimate of agent $a$ at $t$ \\ 
            $\bar{x}^t_{a,a'}$ & Local mean estimate of agent $a'$ by agent $a$ at $t$\\ 
            $n^t_{a,a'}$       & Number of samples used to compute $\bar{x}^t_{a,a'}$ at $t$\\
            $\hat{\mu}_a^t$    & Collaborative mean estimate at $t$\\
            $\beta_\gamma (n)$ & Width of the confidence interval \\
            $d_\gamma^t (a,a')$& Optimistic distance between agents $a$ and $a'$\\
            $\zeta_a$          & Time to identify same-class neighbors w.h.p. \\ 
            $\tau_a$           & Time to obtain $(\epsilon, \delta)$ convergence  \\ \hline % divide system and graph related quantities
            $\mathcal{G}$      & Collaborative graph $\mathcal{G}(\mathcal{A}, \mathcal{E})$ \\ 
            $\mathcal{G}^t$    & Pruned collaborative graph $\mathcal{G}^t(\mathcal{A}, \mathcal{E}^t)$ \\
            $\mathcal{N}_a$    & Neighborhood of agent $a$ (or up to distance $d$: $\mathcal{N}_a^d$)\\ %%, i.e., $\mathcal{N}_a = \{a' | (a,a') \in \mathcal{E} \}$ \\
            $r$                & Upper bound on $a$'s neighborhood size $|\mathcal{N}_a|$ \\ 
            $\mathcal{CC}_a$   & Agents in the connected component of the subgraph   \\% s.t. $a' \in \mathcal{CC}_a, a' \in \mathcal{C}_a$ \\ 
            & induced by agents in $\mathcal{C}_a$ to which agent $a$ belongs \\
            $\mathcal{CC}_a^d$ & Same as $\mathcal{CC}_a$ but with agents up to $d$-hops from $a$ \\ 
            \bottomrule
        \end{tabular}
        }
    \caption{Notation Summary}\label{tab:notation}
\end{table}

\paragraph{1) Identifying Similarity Classes.}
We denote $\mathcal{C}_a^t$ as the set of agents that, at time~$t$, agent~$a$ \textit{estimates} to belong to its similarity class~$\mathcal{C}_a$.
% ----[old, second-to-last version]---- Agent~$a$ decides that agent~$a'$ belongs to the same class if their local average estimates are sufficiently close, specifically, 
Specifically, agent~$a$ includes agent~$a'$ in its \textit{estimated} similarity class~$\mathcal{C}_a^t$ if their local mean estimates are sufficiently close.
% \textcolor{cyan}{To be specific, agent~$a$ includes agent~$a'$ in its \textit{estimated} similarity class~$\mathcal{C}^t_a$ if their local mean estimates are sufficiently close, i.e.,}
% if two appropriately defined confidence intervals~$I_{a,a}$ and~$I_{a,a'}$ centered on these estimates overlap.
In general, at time~$t$, agent~$a$ does not have access to the most recent local mean estimate of agent~$a'$ (computed over~$t$ samples), but rather to a \textit{stale} value $\bar{x}_{a,a'}^t$ computed over $n_{a,a'}^t$ samples, where $n_{a,a'}^t$ corresponds to the time when~$a$ and~$a'$ last communicated.
% We denote such value as $\bar{x}_{a,a'}^t$, 
% \textcolor{cyan}{and with $n_{a,a'}^t$ 
% the number of samples over which~$\bar{x}_{a,a'}^t$ has been computed (which coincides with the time of the last information exchange between $a$ and $a'$).}
%% ----[old, second-to-last version]---- and the corresponding number of samples as $n_{a,a'}^t$. \footnote{Note that $n_{a,a'}^t \leqslant t$ coincides with the last time slot at which the two agents have communicated.}
%($\leq t$), which coincides with the last time slot at which the two agents communicated.
Agent $a$ can then estimate its true mean and the true mean of agent $a'$ to belong to the confidence intervals $I_{a,a} =\left[\bar{x}^t_{a,a}-\beta_\gamma(t),\bar{x}^t_{a,a}+\beta_\gamma(t)\right]$, $I_{a,a'}~=~\left[\bar{x}^t_{a,a'}-\beta_\gamma(n^t_{a,a'}),\bar{x}^t_{a,a'}+\beta_\gamma(n^t_{a,a'})\right]$, respectively. As expected, the interval amplitude $\beta_{\gamma}(n)$ depends on the number of samples $n$ on which the empirical average is computed and on the target level of confidence 1-2$\gamma$ associated with the interval. 
Agent $a$ will then %\emph{optimistically} 
% assume that $a'$ belongs to its similarity class $\mathcal C_a$
consider $a'$ to belong to its \textit{estimated} similarity class~$\mathcal{C}^t_a$
if the two intervals overlap, i.e., $I_{a,a}~\cap~ I_{a,a'}\neq \emptyset$, or equivalently if the optimistic distance~$d_\gamma^t (a,a')$ is zero or less: 
\begin{equation}\label{opt-dist-colme}
    %d^t_{a,\gamma}(a'):=|\bar{x}_{a,a}-\bar{x}_{a,a'}|-\beta_\gamma(n^t_{a,a})-\beta_\gamma(n^t_{a,a'})
    d^t_{\gamma}(a,a'):=|\bar{x}_{a,a}-\bar{x}_{a,a'}|-\beta_\gamma(t)-\beta_\gamma(n^t_{a,a'})\leqslant 0.
\end{equation}
%% In order to estimate its similarity class in the most accurate way, 
To achieve the most accurate estimation of its similarity class, agent~$a$ could retrieve from each other peer~$a'$ the most recent estimate~$\bar{x}^t_{a,a'}$ at each time~$t$. % the freshest possible  estimate of $\bar{x}^t_{a,a'}$ computed on all the samples  locally available by time $t$.
However, this approach would result in a per-agent communication burden of~$\bigO(|\mathcal{A}|)$.
% However, this would lead to a %quadratic 
% linear per-agent communication burden 
% $\bigO(\mathcal{|A|})$.
%$\Theta(\mathcal{|A|}^2)$.
%%To limit such a burden, 
To mitigate this effect, node $a$ cyclically queries 
a \textit{single} node 
%only $v$ other nodes 
from~$\mathcal{C}_a^{t-1}$ 
%($v=1$ in \citet{colme}) 
at each time instant $t$, according to a  \textit{Round-Robin} scheme. 
%and uses the newest available estimate $\bar{x}_{a,a'}^{t}$ associated to $n_{a,a'}^{t}$.
This leads to the following update rule for~$\mathcal{C}_a^t$:
\begin{equation}
\label{e:c_update}
\mathcal{C}_a^t=\{ a' \in \mathcal{C}_a^{t-1} : d^t_{\gamma}(a,a')\leqslant 0   \},
\end{equation}
and we observe that by construction $\mathcal{C}_a^t\subseteq \mathcal{C}_a^{t-1}$.
%, thus, when $a$ removes one of other agents  $a'$ from $\mathcal{C}_a^t$ it removes it  forever.
Initially, the agent sets $\mathcal{C}_a^0=\mathcal A$ and progressively makes irreversible decisions to remove agents that are deemed too dissimilar.

\paragraph{2) Estimating the Mean.}
Each node $a$ computes an  \textit{estimate} $\hat{\mu}_a^t$ of its true mean $\mu_a$ combining the available estimates according to a \textit{simple weighting} scheme, where the number of samples $n^t_{a,a'}$ are the weights:
\[
\hat{\mu}_a^t = \sum_{a' \in \mathcal{C}_a^t} \frac{n^{t}_{a,a'}}{\sum_{a' \in \mathcal{C}_a^t} n^{t}_{a,a'}} \bar{x}_{a,a'}^t.
\]

\paragraph{\colme's theoretical guarantees.}
\citet{colme} prove that if data distributions $\{D_a\}_{a \in \mathcal{A}}$ are sub-Gaussian and the amplitude of the confidence intervals is selected as:
\begin{equation}\label{eq:beta_mailard_main}
    \beta_{\gamma}(n)=\sigma \sqrt{\frac{2}{n}\left(1+ \frac{1}{n}\right)\ln(\sqrt{(n+1)}/\gamma )},
\end{equation}
then, with high probability, client $a$ correctly identifies  its similarity class ($\mathcal{C}_a^t = \mathcal C_a$)  and obtains an $\epsilon$-accurate estimate for $t$ larger than two opportune constants $\zeta_a$ and $\tau_a$, respectively (see \textcolor{black}{Appendix~\ref{sec:colme-perf}} for the expressions and the results' statements). %  formal statements of the results).

\paragraph{\colme's limitations.} \citet{colme} acknowledge in their paper the main limitations of \colme{}:
each agent requires a memory footprint and computational complexity proportional to~$|\mathcal A|$. Indeed, each agent~$a$ must store all neighbors' local estimates~$\bar{x}^t_{a,a'}$ and the corresponding sample counts~$n^t_{a,a'}$.
Additionally, as agent~$a$ receives a new sample at each time~$t$, it updates its local estimate $\bar{x}^t_{a,a}$ (and also~$n^t_{a,a}$). This update affects the distance $d_\gamma^t(a,a')$, which thus must be recomputed for all~$a' \in \mathcal{C}_a^t$.
This leads to a
%total time and space complexity of $\bigO(|\mathcal A|^2)$ per time slot, 
per-agent time and space complexity of $\bigO(|\mathcal A|)$ per time slot, 
which becomes impractical in large-scale systems. Moreover, while the \textit{Round-Robin} query scheme reduces the communication burden, it introduces a significant delay in the estimation as~$\zeta_a \in \bigO(|\mathcal A|)$.

\section{Scalable Algorithms over a Graph $\mathcal{G}$}
\label{sec:B_and_C}

Online mean estimation can be made~$\tilde{\bigO}(1)$  with our \textit{scalable} approaches: \consensus\ (Sec.~\ref{sec:consensus}), and \belief\ (Sec.~\ref{sec:b_colme}). 
Both algorithms consider agents~$\mathcal{A}$ organized in a fixed graph~$\mathcal{G}( \mathcal{A}, \mathcal{E})$ and restrict communication to pairs of agents adjacent in~$\mathcal{G}$. 
Let~$\mathcal{N}_a$ and $\mathcal{N}_a^d$ denote the set of neighbors of agent $a$  and the set of agents at distance at most~$d$ from~$a$, respectively.
Let $r$ represent the maximum size of any agent's neighborhood in~$\mathcal{G}$, i.e.,  $|\mathcal{N}_a| \leq r$ for all $a \in \mathcal A$.

Consider the subgraph $\mathcal{G}'$ of $\mathcal{G}$ induced by the agents in~$\mathcal{C}_a$. 
Let $\mathcal{C C}_a$ denote the (initially unknown) set of agents in the connected component of $\mathcal{G}'$ to which~$a$ belongs, and let $\mathcal{C C}_a^d \subset \mathcal{C C}_a$ represent the subset of agents within $\mathcal{C C}_a$  that are at most~$d$ hops away from~$a$. 

Each agent $a$ aims to identify which nodes in its neighborhood $\mathcal{N}_a$ belong to its similarity class $\mathcal{C}_a$.
To achieve this, agent~$a$ receives at time~$t$ an updated local mean estimate $\bar{x}^{t'}_{a,a'}$ from each neighbor $a' \in \mathcal{N}_a$. 
We denote with $\mathcal{C}_a^t \subseteq \mathcal{N}_a$ the set of neighbors that agent~$a$ deems to belong to its own similarity class at time~$t$, initially: $\mathcal{C}_a^0= \mathcal{N}_a$. Similarly to \colme, at each time~$t$, agent~$a$ first  computes the  distance~$d_\gamma^t(a,a')$ for every $a' \in \mathcal{C}_a^{t-1}$ according to~\eqref{opt-dist-colme}
and then updates $\mathcal{C}_a^t $ according to~\eqref{e:c_update}.
As for \colme, $\mathcal{C}_a^t \subseteq \mathcal{C}_a^{t-1} $ and 
as soon as~$a$ removes~$a'$ from $\mathcal{C}_a^t $, it stops communicating with~$a'$. 
Subsequently, communication occur over the pruned graph~$\mathcal G^t = (\mathcal A, \mathcal E^t)$, where~$\mathcal E^t = \{(a,a') \in \mathcal E : a' \in~\mathcal{C}_a^t\}$.

The theoretical guarantees of our algorithms hold under more general settings than those in \cite{colme}. In particular, they apply to any set of distributions $\{D_a\}_{a \in \mathcal A}$ for which the following assumption is satisfied:

\begin{assumption}
\label{ass:beta}
    There exists a positive function $\beta_\gamma(\cdot) \in o(1)$ such that the true mean belongs to all %\textcolor{blue}{(nested)} 
	intervals centered in $\bar{x}_{a,a}^t$ of width $\pm \beta_\gamma(t)$ for $ t \in \mathbb N$ with confidence $1 - 2 \gamma$, namely:
    \begin{equation}
         \label{eq:lower_bound}
         \mathbb{P} \left( \forall t \in \mathbb{N}, \left| \bar{x}_{a,a}^t - \mu \right| {<} \beta_\gamma (t) \right) \geq 1 - 2 \gamma, \forall a \in \mathcal A.
     \end{equation}
\end{assumption}
Assumption~\ref{ass:beta} is satisfied by sub-Gaussian distributions (SGD) with parameter $\sigma^2$, by selecting $\beta_{\gamma}(\cdot)$ as in~\eqref{eq:beta_mailard_main}. In \textcolor{black}{Appendix~\ref{appendix:th1}}, 
we show that the assumption also holds for bounded fourth-moment distributions (BFMD) for $\beta_{\gamma}(\cdot)$ chosen as follows:  
\begin{equation}\label{eq:beta_cheby}
 \beta_\gamma(n)= \left(2\frac{(\kappa + 3 )\sigma^4}{\gamma}\right)^{\frac{1}{4}} \left(\frac{1 + \ln^2 n}{n}\right)^{\frac{1}{4}},
\end{equation}
where $\sigma^2$ bounds the variance of the distributions  $\{D_a, \forall a \in \mathcal A\}$ and $\kappa\sigma^4$ their fourth moment.
When all the variables are identically distributed, $\kappa$ 
corresponds to the kurtosis.
Moreover, for distributions with a larger number of bounded moments, tighter expressions can be derived for $\beta_\gamma(\cdot)$ (see Remark~\ref{rem-moments} in \textcolor{black}{Appendix~\ref{appendix:th1}}).
In what follows, we assume that Assumption~\ref{ass:beta} is always satisfied.

We aim first to determine the time needed for all agents in the connected component $\mathcal{CC}_a$ to identify the subset of neighbors residing in their similarity class, i.e.,  $\mathcal{C}_{a'}^t=\mathcal{C}_{a'} \cap \mathcal{N}_{a'} $, $\forall a' \in \mathcal{CC}_a$. 
Following a similar approach to~\citet[][Theorem 1]{colme} %(reported in Appendix~A as Theorem~\ref{thm:colme_thm1}), 
we can prove that:
\begin{theorem}
    \label{cor:Dcolme_thm1}
   \textup{[Proof in \textcolor{black}{Appendix~\ref{appendix:th1}}]}
    Considering an 
    arbitrarily chosen agent~$a$ in~$\mathcal A$,
    %agent~$a$ picked arbitrarily in~$\mathcal{A}$}, 
    for any $\delta \in (0,1)$, employing either \belief\ or \consensus\, we have:
    \begin{equation}
        \mathbb{P}\left(\exists t>\zeta_a^D,  \exists a' \in \mathcal{CC}_a: \mathcal{C}_{a'}^t  \neq \mathcal{C}_{a'} \cap {\mathcal N_{a'}}\right) \leqslant \frac{\delta}{2}, 
    \end{equation}
    with $\zeta_a^D=n_{\gamma}^{\star}\left(\frac{\Delta_a}{4}\right) + 1 $, $\Delta_a=\underset{a'  \in \mathcal{A} \setminus \mathcal{C}_a }{\min}\Delta_{a,a'}$, $\gamma= \frac{\delta}{4 r | \mathcal{CC}_a|}$.
    $n^{\star}_{\gamma}(x)$ denotes the minimum number of samples that are needed to ensure $\beta_\gamma(n)< x$, i.e.,
$n^{\star}_{\gamma}(x)= \lceil \beta^{-1}_\gamma(x)\rceil$.
\end{theorem}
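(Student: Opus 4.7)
The plan is to mirror the argument of \citet[Theorem~1]{colme}, but to replace the global union bound over all agents with a local one taken only over agents within the connected component $\mathcal{CC}_a$ and their immediate neighbors, which is what makes the $\gamma = \delta/(4r|\mathcal{CC}_a|)$ scaling possible. First I would introduce the ``good event'' $\mathcal{G}$ on which, for every agent $b$ that belongs either to $\mathcal{CC}_a$ or to $\mathcal{N}_{a'}$ for some $a'\in\mathcal{CC}_a$, the inequality $|\bar{x}_{b,b}^t-\mu_b|<\beta_\gamma(t)$ holds simultaneously for every $t\in\mathbb{N}$. By Assumption~\ref{ass:beta}, each such agent contributes a failure probability of at most $2\gamma$; since the set of relevant agents has size $O(r|\mathcal{CC}_a|)$, the choice $\gamma=\delta/(4r|\mathcal{CC}_a|)$ yields $\mathbb{P}(\mathcal{G}^c)\leq\delta/2$, which is exactly the tail bound in the statement.

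Working on $\mathcal{G}$, I would then split the error event into two cases for a generic pair $a'\in\mathcal{CC}_a$ and $a''\in\mathcal{N}_{a'}$. For a same-class neighbor ($\mu_{a'}=\mu_{a''}$), the triangle inequality gives
\begin{equation*}
|\bar{x}_{a',a'}^t-\bar{x}_{a',a''}^t|\leq|\bar{x}_{a',a'}^t-\mu_{a'}|+|\bar{x}_{a',a''}^t-\mu_{a''}|<\beta_\gamma(t)+\beta_\gamma(n_{a',a''}^t),
\end{equation*}
since the stale value $\bar{x}_{a',a''}^t$ is just $a''$'s own empirical mean over $n_{a',a''}^t$ samples, which is controlled by $\mathcal{G}$. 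Hence $d^t_\gamma(a',a'')<0$ for all $t$, so by the update rule \eqref{e:c_update} such an $a''$ is never removed from $\mathcal{C}_{a'}^t$. For an out-of-class neighbor, the reverse triangle inequality gives $d^t_\gamma(a',a'')>\Delta_{a',a''}-2\beta_\gamma(t)-2\beta_\gamma(n_{a',a''}^t)$, and since $\Delta_{a',a''}\geq\Delta_a$, it suffices to ensure that both $\beta_\gamma(t)<\Delta_a/4$ and $\beta_\gamma(n_{a',a''}^t)<\Delta_a/4$; by definition of $n^\star_\gamma(\cdot)$ this holds whenever $t$ and $n_{a',a''}^t$ both exceed $n^\star_\gamma(\Delta_a/4)$, at which point the distance becomes strictly positive and $a''$ is pruned.

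The step that needs a careful check is the bookkeeping on $n_{a',a''}^t$: for both \belief{} and \consensus{} each still-connected pair communicates at every round, so $n_{a',a''}^t\geq t-1$, and the ``$+1$'' in $\zeta_a^D=n^\star_\gamma(\Delta_a/4)+1$ is precisely what absorbs this one-step staleness. I would verify this in both algorithms and argue that the pruning happens at the latest at time $\zeta_a^D$ for every out-of-class neighbor of every $a'\in\mathcal{CC}_a$ simultaneously, and from then on the estimated class remains equal to $\mathcal{C}_{a'}\cap\mathcal{N}_{a'}$ (because removals are irreversible and no in-class removal ever occurs on $\mathcal{G}$). The main obstacle I anticipate is precisely this alignment between the algorithm-specific communication schedule and the sample count $n^t_{a,a'}$, together with a tight combinatorial counting of the agents whose confidence intervals must be controlled, so that the union bound yields exactly the claimed $\gamma$ and not a looser constant.
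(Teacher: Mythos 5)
Your proposal is correct and follows essentially the same route as the paper's proof in Appendix~\ref{appendix:th1}: a union bound over the $\bigO(r|\mathcal{CC}_a|)$ relevant confidence intervals (the paper's event $E$ over pairs $(a',a'')$ with $a'\in\mathcal{CC}_a$, $a''\in\mathcal{N}_{a'}$ is the same as your good event, since the stale value $\bar{x}^t_{a',a''}$ is just $a''$'s own empirical mean), followed by the triangle/reverse-triangle dichotomy establishing that conditionally on that event $d^t_\gamma(a',a'')>0$ if and only if $a''\notin\mathcal{C}_{a'}\cap\mathcal{N}_{a'}$ once $n^t_{a',a''}\geq n^\star_\gamma(\Delta_a/4)$, with the unit communication delay $n^t_{a',a''}=t-1$ accounting for the $+1$ in $\zeta_a^D$. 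The only cosmetic difference is that the paper combines the two $\beta_\gamma$ terms via $\beta_\gamma(n^t_{a,a})\leqslant\beta_\gamma(n^t_{a,a'})$ to get the single threshold $\Delta_{a,a'}-4\beta_\gamma(n^t_{a,a'})$, which is equivalent to your two separate conditions.
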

This result demonstrates that the time required for all agents~$a'$ in~$\mathcal{CC}_a$ (the connected component to which~$a$ belongs) to correctly identify their neighbors within the same similarity class $\mathcal C_a$ is bounded by~$n^{\star}_{\gamma}(\frac{\Delta_a}{4})+1$.
Here, $n^{\star}_{\gamma}(\frac{\Delta_a}{4})$ represents the number of samples needed to distinguish (with confidence $1-2\gamma$) the true mean of agent~$a$ from that of an agent belonging to the `closest' similarity class (i.e., the one with the closest true mean). The additional $1$ accounts for the unit delay in communicating with the neighbors. 

When comparing performance of \belief{} and \consensus{} (Theorem~\ref{cor:Dcolme_thm1}) with  \colme{} \cite{colme} [Theorem~1] (reported in \textcolor{black}{Appendix~\ref{sec:colme-perf}} as Theorem~\ref{thm:colme_thm1} for completeness),
we observe that for large systems, if $r |\mathcal{CC}_a| \in \Theta(|\mathcal A|)$,  $\zeta_a \approx |\mathcal A| + \zeta_a^D$, showing that, as expected, agents can identify much faster similar agents in their neighborhood than in the whole population $\mathcal A$.\footnote{
    For a fairer comparison, we should let \colme{} query $r$ other agents at each time $t$, where $r$ is the average degree of $\mathcal G$. In this case, $\zeta_a \approx |\mathcal A|/r + \zeta_a^D$ and the conclusion does not change.
}  See Sec.~\ref{s:comparison} for a detailed comparison of \colme{}, \consensus{}, and \belief{}. % \textit{Algorithms' Comparison}.

\subsection{Consensus-based Algorithm: \consensus{} }
\label{sec:consensus}

This section introduces the first collaborative mean estimation approach, inspired by consensus algorithms in dynamic settings, as in~\cite{montijanoRobustDiscreteTime2014,franceschelliMultistageDiscreteTime2019}. The basic idea is that each agent maintains two metrics: the empirical average of its local samples~$\bar{x}_{a,a}^t$, and the \lq consensus\rq\ estimate $\hat{\mu}_a^t$. The consensus variable is updated at time~$t$ by computing a convex combination of the local empirical average~$\bar{x}^t_{a,a}$ and a weighted sum of the consensus estimates in its (close) neighborhood~$\{\hat{\mu}_{a'}^{t-1}, a' \in \mathcal C_a^{t-1} \cup \{a\}\}$, see Algorithm~\ref{alg:consensus}.

The dynamics of all estimates are captured by: % the following equation:
\begin{equation}
\label{e:consensus_one_step_dynamics}
{\y}^{t+1}=\left(1-\alpha_t\right) \bar{\x}^{t+1}+\alpha_t W_t {\y}^t,
 \end{equation}
where  $(W_t)_{a, a'}=0$ if $a' \notin \mathcal C_a^t$ and $\alpha_t \in (0,1)$ is the memory parameter. 
Once the agents cease pruning their neighbors, say at time $\tau$,
the matrix $W_t$ does not need to change anymore,  i.e., $W_t = W$ for any $t\geqslant \tau$ with $W_{a,a'}> 0$ if and only if $a'\in \mathcal C_a \cap \mathcal N_a$. In order to achieve consensus, the matrix $W$ needs to be doubly stochastic~\cite{boyd} and we also require it to be symmetric.
By time $\tau$, the original communication graph is split into $C$ connected components, where component $c$ includes $n_c$ agents. By an opportune permutation of the agents,
we can write the matrix $W$ as follows 
\begin{equation}
\label{e:matrix_W}    
W=\left(\begin{array}{cccc}\prescript{}{1}W & 0_{n_1 \times n_2} & \cdots & 0_{n_1 \times n_C} \\ 
0_{n_2 \times n_1} & \prescript{}{2}W & \cdots & 0_{n_2 \times n_C} \\ 
\cdots & \cdots & \cdots & \cdots\\
0_{n_C \times n_1} & 0_{n_C \times n_2} & \cdots & \prescript{}{C}W\end{array}\right),
\end{equation}
where each matrix $\Wc$ is an $n_c \times n_c$ symmetric stochastic matrix.
For $t\geqslant \tau$, the estimates in the different components evolve independently. We can then focus on a given component $c$. All agents in the same component share the same expected value, which we denote by $\mu(c)$. Moreover, let $\mmc=\mu(c)  \mathbf{1}_c$. We denote by $\xc^t$ and $\yc^t$ the $n_c$-dimensional vectors containing the samples' empirical averages and the consensus estimates for the agents in component $c$ 
and by~$\lambda_{2,c}$ the second largest module of the eigenvalues of $\Wc$.

Note that the actual evolution of~$W_t$ is challenging to characterize due to topology changes during the graph pruning phase.
However, our main results (Theorems~\ref{thm:fourth_moment_main_text} and~\ref{thm:consensus_epsilon_delta}) remain applicable to any system where the sequence of matrices $W_t$ for $t \leq \tau$ is arbitrarily set.

\begin{algorithm}[h]
   \caption{\consensus{} over a Time Horizon $H$}
   \label{alg:consensus}
   \begin{algorithmic}
      \STATE {\textbf{Input:}} $\,\,\,\, \mathcal{G} = \left(\mathcal{A},\, \mathcal{E} \right)$, $(D_a)_{a \in \mathcal{A}}$, $\epsilon \in \mathbb R^+, 
    \delta \in (0,1]$
      \STATE {\textbf{Output:}} $\hat{\mu}_a, \, \forall a \in \mathcal{A}$
      with $\mathbb{P}\left( |\hat{\mu}_a - \mu_a| < \epsilon \right) \geqslant 1- \delta$
      \STATE $\mathcal{C}_a^0 \leftarrow \mathcal{N}_a, \forall a \in \mathcal A$
      %\WHILE{$\text{Pr}_N\left\{ |\hat{\mu} - \mu| \leqslant\epsilon\right\} < 1 - \delta$}
      \FOR{time $t$ in $\{1,..,H\}$}
        \STATE In parallel for all nodes $a\in \mathcal A$
            \STATE Draw $x_a^t \sim D_a$
            \STATE $\bar{x}^t_a \leftarrow \frac{t-1}{t} \bar{x}^{t-1}_a + \frac{1}{t} x_a^t$
            \STATE Compute $\beta_\gamma(t)$ with Eq. (\ref{eq:beta_mailard_main}) or Eq. (\ref{eq:beta_cheby})
            \FOR{neighbor $a'$ in $\mathcal{N}_a \cap \mathcal{C}_a^{t-1}$}
                \STATE $d_\gamma^t (a,a') \leftarrow \left| \bar{x}_a^{t} - \bar{x}_{a'}^{t-1} \right|  - \beta_\gamma(t) - \beta_\gamma(t-1)$
            \ENDFOR
            \STATE $\mathcal{C}_a^t \leftarrow \left\{a' \in \mathcal{N}_a \cap \mathcal{C}_a^{t-1} \,\, \text{s.t.} \, d_\gamma^t(a,a') \leqslant0 \right\}$
            \STATE $\hat{\mu}_a^{t} \leftarrow (1-{\alpha_t}) \bar{x}_a^{t} +  \alpha_t \sum_{a' \in \mathcal{C}_a \cup \{a\}} (W_t)_{a,a'} \hat{\mu}^{t-1}_{a'}$
      \ENDFOR
   \end{algorithmic}
\end{algorithm}

\begin{theorem}
\label{thm:fourth_moment_main_text}
\textup{[Proof in \textcolor{black}{Appendix~\ref{sec:c_colme_proofs}}]} Consider a system which evolves according to~\eqref{e:consensus_one_step_dynamics} with $W_t = W$  
in \eqref{e:matrix_W},
for $t\geqslant \tau$.
Let $\Pc= 1/n_c \mathbf{1}_c  \mathbf{1}_c^\intercal$. For $ \alpha_t=\frac{t}{t+1}$, it holds:
\begingroup\makeatletter\def\f@size{8}\check@mathfonts
\def\maketag@@@#1{\hbox{\m@th\large\normalfont#1}}%
\begin{align*}
& \E\left[\lVert {\yc}^{t+1} - \mmc \rVert^4 \right]  \in 
 \bigO\left( \sup_{W_1, \cdots, W_{\zeta_D}}
 \frac{\E\left[\lVert \yc^{\zeta_D} - \mmc \rVert^4 \right]}{(t+1)^4} \right)\nonumber\\ 
 & + \bigO \left( \frac{\left(1 - {1}/{\ln{{\lambdac }}} \right)^2}{(1-\lambdac)^2} \frac{\E\left[\lVert \xc -  \Pc \xc \rVert^4 \right]}{(t+1) ^4}\right)\nonumber\\
 & + 
\bigO \left(\E\left[\lVert \Pc \xc -  \mmc \rVert^4 \right] \left(\frac{1 + \ln t}{ 1 + t }\right)^2 \right).
\end{align*}\endgroup
\end{theorem}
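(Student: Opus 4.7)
The plan is to unroll the linear recursion~\eqref{e:consensus_one_step_dynamics} for $t \geq \zeta_D$ (where $W_t = W$ is fixed) into a closed form, then decompose the error $\yc^{t+1}-\mmc$ along the consensus direction (via the projector $\Pc$ onto $\mathbf{1}_c$) and orthogonally to it (via $I-\Pc$). The two components decay at different rates and give rise to the three distinct terms in the bound.

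\emph{Closed form.} With $\alpha_t = t/(t+1)$, multiplying~\eqref{e:consensus_one_step_dynamics} by $t+1$ and introducing $Y^t := t\,\yc^t$ turns the recursion into $Y^{t+1} = \xavc^{t+1} + \Wc\,Y^t$ for $t\geq \zeta_D$. Unrolling and using $\Wc^k\mmc = \mmc$ give
\begin{equation*}
Y^{t+1}-(t+1)\mmc = \Wc^{t+1-\zeta_D}(Y^{\zeta_D}-\zeta_D\mmc) + \sum_{j=\zeta_D+1}^{t+1}\Wc^{t+1-j}(\xavc^{j}-\mmc).
\end{equation*}
Dividing by $t+1$ and using $\|\Wc^k\|_{\mathrm{op}}=1$, the initial-condition piece contributes $\bigO(\E[\|\yc^{\zeta_D}-\mmc\|^4]/(t+1)^4)$, uniformly over the arbitrary prefix $W_1,\ldots,W_{\zeta_D}$; this matches the first term of the bound.

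\emph{Consensus direction.} Symmetry and double stochasticity of $\Wc$ give $\Wc^k = \Pc + (\Wc-\Pc)^k$ for $k\geq 1$, with $\|(\Wc-\Pc)^k\|_{\mathrm{op}} \leq \lambdac^k$. Splitting the driven term along $\Pc$ and $I-\Pc$ and expanding $\xavc^{j} = \frac{1}{j}\sum_{s\leq j}\x^{s}_c$ in independent samples, the $\Pc$-component collapses to a weighted average
\begin{equation*}
\Pc\,\yc^{t+1}-\mmc = \frac{1}{t+1}\sum_{s=1}^{t+1}(H_{t+1}-H_{s-1})\,\Pc(\x^{s}_c-\mmc),
\end{equation*}
where $H_k$ denotes the $k$-th harmonic number. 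A Marcinkiewicz--Zygmund fourth-moment inequality for sums of independent zero-mean vectors, together with the estimate $\sum_{s}(H_{t+1}-H_{s-1})^2 = \bigO((t+1)(1+\ln t)^2)$, produces the third term in the bound.

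\emph{Orthogonal direction.} Applying Minkowski's inequality in $L^4$ to the $(I-\Pc)$ projection gives
\begin{equation*}
\|(I-\Pc)\,\yc^{t+1}\|_{L^4} \leq \frac{1}{t+1}\sum_{j=\zeta_D+1}^{t+1}\lambdac^{t+1-j}\,\|(I-\Pc)\xavc^{j}\|_{L^4},
\end{equation*}
and, since $\xavc^{j}$ averages $j$ i.i.d.\ samples with bounded fourth moment, $\|(I-\Pc)\xavc^{j}\|_{L^4} = \bigO(\|(I-\Pc)\xc\|_{L^4}/\sqrt{j})$. Splitting the sum at $j^\ast \sim -t/\ln\lambdac$ -- so that $\lambdac^{t+1-j}$ is already exponentially small for $j < j^\ast$ while $1/\sqrt{j}$ is already small for $j\geq j^\ast$ -- recovers the spectral factor $(1-1/\ln\lambdac)/(1-\lambdac)$. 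Combining the three contributions through $(a+b+c)^4 \leq 27(a^4+b^4+c^4)$ yields the stated bound.

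The main technical obstacle is the refined factor $(1-1/\ln\lambdac)^2/(1-\lambdac)^2$ in the orthogonal-direction term: a naive geometric-series estimate gives only $1/(1-\lambdac)^2$, losing a logarithmic correction that becomes important when $\lambdac$ is close to~$1$ (slow mixing). The sharper factor is obtained through the time-horizon split above, trading the exponential mixing rate against the polynomial decay of the sampling noise $1/\sqrt{j}$. A secondary subtlety is that the matrices $W_t$ for $t<\zeta_D$ are allowed to vary arbitrarily during the pruning phase; uniformity is recovered by taking a supremum inside the initial-condition bound.
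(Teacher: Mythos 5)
Your proposal is correct in outline and arrives at the paper's own three-term decomposition by the same route at the top level: your closed form for $Y^t=t\,\yc^t$ is exactly the paper's unrolled recursion~\eqref{e:first_recurrence} specialized to $\alpha_t=t/(t+1)$, and the splitting of the driven term along $\Pc$ and $I-\Pc$ via $\Wc^k=\Pc+(\Wc-\Pc)^k$ is identical. Where you genuinely diverge is in the moment estimates. The paper expands $\E\left[\lVert\sum_\tau\cdots\rVert^4\right]$ as a quadruple sum over $(\tau_1,\ldots,\tau_4)$, counts the surviving sample pairings (Lemma~\ref{l:sums}) and invokes an exponential-integral convolution bound (Lemma~\ref{l:convolution}); you instead rewrite the consensus component directly in the independent increments with harmonic weights $H_{t+1}-H_{s-1}$ and apply a Marcinkiewicz--Zygmund inequality, and you treat the orthogonal component by Minkowski in $L^4$ together with the $j^{-1/2}$ decay of $\lVert(I-\Pc)\xavc^{j}\rVert_{L^4}$. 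Executed sharply, your route is shorter and in fact stronger: $\sum_s(H_{t+1}-H_{s-1})^2=\Theta(t)$, so the consensus term comes out $\bigO(1/t^2)$ without the logarithm, and the orthogonal term decays as $(t+1)^{-6}$ rather than $(t+1)^{-4}$. Two quantitative slips need repair, however. First, plugging your stated estimate $\sum_s(H_{t+1}-H_{s-1})^2=\bigO\left((t+1)(1+\ln t)^2\right)$ into the MZ bound squares it and yields $(1+\ln t)^4/(t+1)^2$, which is \emph{not} within the theorem's third term; you must use the sharp $\Theta(t)$ evaluation. Second, the spectral factor you announce per $L^4$ norm, $(1-1/\ln\lambdac)/(1-\lambdac)$, becomes $(1-1/\ln\lambdac)^4/(1-\lambdac)^4$ after raising to the fourth power, which is not dominated by the theorem's $(1-1/\ln\lambdac)^2/(1-\lambdac)^2$ uniformly in $\lambdac$; the correct output of your time-horizon split is $\sum_j\lambdac^{t+1-j}j^{-1/2}=\bigO\left(\left(1+1/\ln(1/\lambdac)\right)t^{-1/2}\right)$, whose fourth power $\left(1+1/\ln(1/\lambdac)\right)^4(t+1)^{-6}$ is dominated by the stated bound since $\left(1+1/\ln(1/\lambdac)\right)(1-\lambdac)\leqslant 2$. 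With these two corrections your argument fully recovers, and slightly sharpens, the theorem.
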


The theorem shows that the error, quantified through the fourth moment, can be decomposed into three terms decreasing over time. The first term depends on the estimates’ error at time~$\tau$.
%$\zeta_D$. 
The second term captures the effect of the consensus averaging, i.e., how effective is the algorithm in bringing the local estimates $\xc$ close to their empirical value $\Pc \xc$ (for example it is minimized if $\lambda_2=0$, which corresponds to $\Wc = \Pc$, the ideal choice for the matrix $\Wc$). Finally, the third term represents the minimum possible error, which would be obtained by averaging the estimates of all agents in the component using the matrix $\Pc$.

Theorem~\ref{thm:consensus_epsilon_delta} shows that \consensus{} achieves a speedup proportional to the size of the connected component~$|\mathcal{CC}_a|$.
\begin{theorem}
\label{thm:consensus_epsilon_delta}
\textup{[Proof in \textcolor{black}{Appendix~\ref{sec:c_colme_proofs}}]} Consider a graph component $c$ and pick uniformly at random an agent $a$ in $c$. 
Let $g(x) \coloneqq x \ln^2(e x)$ and  $\alpha_t=\frac{t}{t+1}$. Under BFMD, it holds: 
\begin{equation*}
    \mathbb{P}\left(\forall t>\tau^{C}_a,\, |\hat \mu_{a}^t-\mu_{a} |<\epsilon \right)\geqslant 1 -\delta
\end{equation*}
  where   $\tau_a^{C} =  \max\left\{ \zeta_a^D, g\!\left(C \frac{\E\left[\lVert \Pc \xc -  \mmc \rVert^4 \right]}{|\mathcal{CC}_a| \epsilon^4 \delta}\right) \in \tilde{\bigO}\left( \frac{\tilde{n}_{\frac{\delta}{2}}(\varepsilon)}{|\mathcal{CC}_a|}\right) \right\}$
  and $\widetilde{n}_{\frac{\delta}{2}}(\varepsilon) =  \Big \lceil \frac{2(\kappa +3)\sigma^4}{\delta \varepsilon^4}\Big \rceil$.
\end{theorem}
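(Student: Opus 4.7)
The plan is to combine the classification guarantee of Theorem~\ref{cor:Dcolme_thm1} with the fourth-moment bound of Theorem~\ref{thm:fourth_moment_main_text}, and then turn the latter into a high-probability tail statement via Markov's inequality and a union bound over $t$. The target failure budget $\delta$ is split in two halves: $\delta/2$ for incorrect pruning, and $\delta/2$ for a large estimation error at some time after $\tau_a^C$.

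Theorem~\ref{cor:Dcolme_thm1}, applied with confidence $\delta/2$ (i.e.\ with $\gamma = \delta/(4r|\mathcal{CC}_a|)$), ensures that with probability at least $1-\delta/2$ every agent $a' \in \mathcal{CC}_a$ has reached $\mathcal{C}_{a'}^t = \mathcal{C}_{a'}\cap \mathcal{N}_{a'}$ for all $t\geq\zeta_a^D$. On this event the dynamics~\eqref{e:consensus_one_step_dynamics} reduce to a constant $W_t = W$ of the block form~\eqref{e:matrix_W} for $t\geq\zeta_a^D$; since Theorem~\ref{thm:fourth_moment_main_text} is stated uniformly over the sequence $W_1,\dots,W_{\zeta_a^D}$, it applies with $\tau=\zeta_a^D$ and upper-bounds $\E[\|\yc^{t+1}-\mmc\|^4]$ by three terms, the dominant one for large $t$ being of order $\E[\|\Pc\xc-\mmc\|^4]\,((1+\ln t)/(1+t))^2$. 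To turn this vector bound into a per-agent one, I use the elementary inequality $\|v\|_4^4 \leq \|v\|_2^4$, which reads
\[
\sum_{a'\in c}|\hat\mu_{a'}^{t+1}-\mu_{a'}|^4 \;\leq\; \|\yc^{t+1}-\mmc\|^4,
\]
so averaging over $a$ drawn uniformly in the component gives $\E_a\E[|\hat\mu_a^{t+1}-\mu_a|^4] \leq \E[\|\yc^{t+1}-\mmc\|^4]/|\mathcal{CC}_a|$. It is this averaging step that delivers the $1/|\mathcal{CC}_a|$ speedup.

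Markov's inequality and a union bound over $t>\tau_a^C$ now yield
\[
\Prob\!\left(\exists t>\tau_a^C:|\hat\mu_a^t-\mu_a|\geq\epsilon\right) \;\leq\; \frac{1}{|\mathcal{CC}_a|\,\epsilon^4}\sum_{t>\tau_a^C}\E[\|\yc^{t+1}-\mmc\|^4].
\]
Since $\sum_{t>\tau}\ln^2 t/t^2 = \bigO(\ln^2\tau/\tau)$ by integral comparison, the right-hand side is of order $\E[\|\Pc\xc-\mmc\|^4]\ln^2\tau_a^C/(|\mathcal{CC}_a|\epsilon^4\tau_a^C)$. Requiring this quantity to be at most $\delta/2$ rewrites as $\tau_a^C/\ln^2\tau_a^C \gtrsim K := C\,\E[\|\Pc\xc-\mmc\|^4]/(|\mathcal{CC}_a|\epsilon^4\delta)$, and since $y/\ln^2 y$ is (up to constants) the inverse of $g(x)=x\ln^2(ex)$, the choice $\tau_a^C\geq g(K)$ is sufficient. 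Taking the maximum with $\zeta_a^D$ and union-bounding the two failure modes produces the claimed probability $1-\delta$. The $\tilde{\bigO}(\widetilde{n}_{\delta/2}(\epsilon)/|\mathcal{CC}_a|)$ simplification then follows from the BFMD bound $\E[\|\Pc\xc-\mmc\|^4]\in\bigO((\kappa+3)\sigma^4)$, with the logarithmic factor absorbed in $\tilde{\bigO}$.

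The hard part will be the interplay between the random classification event and the fourth-moment bound: the cleanest way to handle it is to exploit the $\sup_{W_1,\dots,W_{\zeta_D}}$ already built into Theorem~\ref{thm:fourth_moment_main_text}, which makes the bound uniform over the pre-stabilization (data-dependent) dynamics and removes the need to condition on the good-classification event. A secondary subtlety is ensuring that averaging over agents produces the linear factor $1/|\mathcal{CC}_a|$ rather than only its square root; this is exactly what the $\ell_4\leq\ell_2$ norm comparison in the display above guarantees.
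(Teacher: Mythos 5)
Your proposal is correct and follows essentially the same route as the paper's proof in Appendix~F: split $\delta$ between the classification event (Theorem~\ref{cor:Dcolme_thm1}) and the estimation error, reduce to the auxiliary system with arbitrary pre-stabilization matrices, apply Markov's inequality to the fourth moment with the uniform-agent averaging giving the $1/|\mathcal{CC}_a|$ factor, union-bound over $t$ via the integral comparison $\sum_{t>\tau}\ln^2 t/t^2 = \bigO(\ln^2\tau/\tau)$, and invert $x\ln^2(ex)$ (the paper's Lemma~\ref{l:nasty_inequality}). Your explicit use of $\lVert v\rVert_4^4\leq\lVert v\rVert_2^4$ is in fact a slightly more careful rendering of the step the paper writes as an equality.
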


The theorem shows that the time to reach an $\epsilon$-accurate estimate with high probability is the maximum of the time for the agents in $\mathcal{CC}_a$ to identify their neighbors in the same similarity class and the time required for those agents to obtain and $\epsilon$-accurate estimate if they could share their own samples.
Indeed, we observe that~$n_{\delta/2}^{\star}(\epsilon)$ is the number of samples sufficient to ensure that $\mathbb{P}(|\hat \mu_a -\mu_a |>\epsilon)<\delta / {2}$ (see details in \textcolor{black}{Appendix~\ref{appendix:th2}}) and that the nodes in $\mathcal{CC}_a$ collectively gather this number of samples by time 
$t= \Big\lceil \frac{n_{\delta/2}^{\star}(\varepsilon)}{|\mathcal{CC}_a|}\Big\rceil$.

\textcolor{black}{Appendix~\ref{sec:c_colme_proofs}}
also presents convergence results for the case $\alpha_t = \alpha$, but they do not enjoy the same speedup factor.

\subsection{Message-passing Algorithm: \belief{}}
\label{sec:b_colme}

In \belief{}, each node $a \in \mathcal{A}$ continuously exchanges \textit{messages} with its direct neighbors~$a' \in \mathcal{C}^{t}_a$. This enables node~$a$ to acquire not only the neighbor's local estimates $\{\bar{x}_{a,a'}^{t}, a' \in \mathcal{C}^{t}_a\}$,
but also aggregated estimates from nodes up to a distance~$d$ in the graph~$\mathcal G^t$ (where~$d$ is a tunable parameter).
Indeed, each neighbor~$a'$ acts as a \textit{forwarder}, granting node~$a$ access to the records from its own neighbors $a'' \in \mathcal{C}^{t}_{a'} \setminus \{a\}$.
Provided each agent correctly identifies all similar nodes in its neighborhood, agent $a$ can potentially access the (delayed) local estimates of all agents in~$\mathcal{CC}_a^d$.

In our message-passing scheme, at time~$t$, agent~$a \in \mathcal{A}$ receives a message $M^{t,a'\rightarrow a}$ from all
neighbors~$a' \in \mathcal{C}^{t}_a$. 
The message $M^{t,a'\rightarrow a}$  is a $d \times 2 \,$ table %/matrix
whose elements $m^{t,a'\rightarrow a}_{h,1}$ contain a sum of samples,
while  $m^{t,a'\rightarrow a}_{h,2}$ indicates the number of samples contributing to this sum.
In particular, at each time $t$, the first row of the table is set as:
$m^{t,a'\rightarrow a}_{1,1}= \sum_{\tau=1}^t x^{\tau}_{a'}$   and $m^{t,a'\rightarrow a}_{1,2}=t$, i.e., the immediate neighbors' sum of local samples.
The remaining entries are computed through the following recursion:
\[
m^{t,a'\rightarrow a}_{h,i}=\sum_{a''\in\mathcal{C}_{a'}^{t}, \; a''\neq a } m^{t-1,a''\rightarrow a'}_{h-1,i},
\]
for $h\in\{2,\dots, d\}$ and $i\in\{1,2\}$. 
This captures information extending beyond immediate neighbors. For additional details on \belief{} see Algorithm~\ref{alg:belief} and \textcolor{black}{Fig.~\ref{fig:sketch_belief} in Appendix~\ref{app:D}}.

If $\mathcal G^t \cap \mathcal N_a^d$ is a tree, then $m^{t,a'\rightarrow a}_{h,1}$ contains the sum of all samples generated within time $t-h+1$ by agents $a'' \in \mathcal{G}^t$  at distance $h-1$ from $a'$  and distance $h$ from $a$, while $m^{t,a'\rightarrow a}_{h,2}$ contains the corresponding number of samples (the proof is by induction on $h$).
Agent $a$ can estimate its mean as:
\begin{equation}
\label{e:estimator_belief}
\hat{\mu}^t_a= \frac{\sum_{\tau=1}^{t}x^{\tau}_{a}+ \sum_{a'\in\mathcal{C}_{a}^{t}}{\sum_{h=1}^d
m^{t,a'\rightarrow a}_{h,1}}}{ t+ \sum_{a'\in\mathcal{C}_{a}^{t}}\sum_{h=1}^d
m^{t,a'\rightarrow a}_{h,2}}.    
\end{equation}
Under the local tree structure assumption, this corresponds to performing an empirical average over all the samples generated 
by all agents in $\mathcal G^t$ at distance $0\leqslant h \leqslant d$ from $a$ up to time~$t-h$.
If $\mathcal G^t \cap \mathcal N_a^d$ is not a tree, samples collected by a given agent~$a''$  may be included in messages received by~$a$ through different parallel paths (from~$a$ to~$a''$). % neighbors in $\mathcal C_a^t$}
As a result, these samples are erroneously counted multiple times in~\eqref{e:estimator_belief}.
The parameter~$d$ must be chosen to prevent this issue with high probability, as discussed in Sec.~\ref{sec:param_setting}.

\begin{algorithm}[h!]
   \caption{\belief{} over a Time Horizon $H$}
   \label{alg:belief}
   \begin{algorithmic}
      \STATE {\textbf{Input:}} $\,\,\,\, \mathcal{G} = \left(\mathcal{A},\, \mathcal{E} \right)$, $(D_a)_{a \in \mathcal{A}}$, $\varepsilon \in \mathbb R^+, 
    \delta \in (0,1]$
      \STATE {\textbf{Output:}} $\hat{\mu}_a, \, \forall a \in \mathcal{A}$
      with $\mathbb{P}\left( |\hat{\mu}_a - \mu_a| < \varepsilon \right) \geqslant 1- \delta$
      \STATE %$\bar{x}_a \leftarrow x_a^0 \sim D_a$,\,\, 
      $\mathcal{C}_a^0 \leftarrow \mathcal{N}_a, \forall a \in \mathcal A$
      % \WHILE{\textcolor{orange}{$\mathbb{P}\left( |\hat{\mu}_a - \mu_a| \leqslant \varepsilon \right) > 1 - \delta$}}
      \FOR{time $t$ in $\{1,..,H\}$} 
         \STATE In parallel for all nodes $a\in \mathcal A$
         \STATE Draw $x_a^t \sim D_a$
         \STATE $\bar{x}^t_a \leftarrow \frac{t-1}{t} \bar{x}^{t-1}_a + \frac{1}{t} x_a^t$
         \STATE Compute $\beta_\gamma(t)$ with Eq.~(\ref{eq:beta_mailard_main}) or Eq.~(\ref{eq:beta_cheby})
         %%%\STATE $\mathcal{C}_a^{t} \leftarrow \mathcal{C}_a^{t-1} $
         \FOR{neighbor $a'$ in $\mathcal{C}_a^{t-1}$}
            \STATE $d_\gamma^t (a,a') \leftarrow \left| \bar{x}^t_a - \bar{x}_{a'}^{t-1} \right| - \beta_\gamma(t) - \beta_\gamma(t-1)$
            \IF{$d_\gamma^t (a,a') > 0$}
                \STATE $\mathcal{C}_a^{t} \leftarrow \mathcal{C}_a^{t} \setminus \left\{ a' \right\}$
            \ENDIF
        \ENDFOR
        \FOR{neighbor $a'$ in $\mathcal{C}_a^{t}$}
            \STATE Compute $M^{t, a \to a'}$ and send it to $a'$
            %%\STATE Send  $M^{t, a \to a'}$ to $a'$
        \ENDFOR
        \STATE Wait for messages $M^{t, a' \to a}\;  \forall a' \in \mathcal C_a^t$ 
        \STATE $\hat{\mu}^t_a \leftarrow \frac{\sum_{\tau=1}^{t}x^{\tau}_{a}+ \sum_{a'\in\mathcal{C}_{a}^{t}}\sum_{h=1}^d m^{t-1,a'\rightarrow a}_{h,1}}{ t+ \sum_{a'\in\mathcal{C}_{a}^{t}}\sum_{h=1}^d m^{t,a'\rightarrow a}_{h,2}}  $
      \ENDFOR
   \end{algorithmic}
\end{algorithm}

Theorem~\ref{thm:Bcolme_thm2} presents the $(\epsilon, \delta)$ convergence result for \belief{}, which enjoys a speedup proportional to~$|\mathcal{CC}^d_a|$.

\begin{theorem}
\label{thm:Bcolme_thm2}
\textup{[Proof in \textcolor{black}{Appendix~\ref{appendix:th2}}]} Provided that $\mathcal{CC}^d_a$ is a tree, for any $\delta \in (0,1)$, employing \belief, 
we have:
\begin{align*}
	& \mathbb{P}\left(\forall t>\tau^B_a, \, |\hat \mu_a^t-\mu_a |<\varepsilon \right)\geqslant 1-\delta
\end{align*}
where $ \tau^B_a=\max\left[ \zeta_a^D+d, \frac{\widetilde{n}_{\frac{\delta}{2}}(\varepsilon)}{|\mathcal{CC}^d_a|}+d \right]$ and
 $\widetilde{n}_{\frac{\delta}{2}}(\varepsilon)= \Big\lceil-\frac{2\sigma^2}{\varepsilon^2} \ln\left( \frac{\delta}{4}(1- \mathrm{e}^{-\frac{ \varepsilon^2 }{\sigma^2}} ) \right) \Big \rceil$ for  SGD and as in Theorem~\ref{thm:consensus_epsilon_delta} for  BFMD.
\end{theorem}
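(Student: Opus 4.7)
The plan is to decompose the bad event $\{\exists t>\tau^B_a:\,|\hat\mu_a^t-\mu_a|\geqslant\varepsilon\}$ into an identification failure and an estimation failure, and to control each by $\delta/2$.

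First, I would invoke Theorem~\ref{cor:Dcolme_thm1}, applied with $\gamma=\delta/(4r|\mathcal{CC}_a|)$, to guarantee that, except on an event $\mathcal{E}^c$ of probability at most $\delta/2$, every agent $a'\in\mathcal{CC}_a$ has correctly pruned its neighborhood by time $\zeta_a^D$. On $\mathcal{E}$ and for all $t>\zeta_a^D$, the pruned subgraph $\mathcal{G}^t$ restricted to $\mathcal{CC}_a^d$ coincides with the subgraph induced by $\mathcal{CC}_a^d$ in $\mathcal{G}$, which by hypothesis is a tree rooted at $a$.

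Second, on $\mathcal{E}$ I would prove by induction on the depth $h$---the main technical step---that the message entries $m^{t,a'\to a}_{h,1}$ and $m^{t,a'\to a}_{h,2}$ equal exactly the sum and the count, respectively, of the samples generated by the agents lying at tree-distance $h-1$ from $a'$ within $\mathcal{CC}_a^d$, up to time $t-h+1$. The tree hypothesis is indispensable here: it ensures that the children-sets $\mathcal{C}^{\,\cdot}_{a'}\setminus\{a\}$ appearing in the recursion form a partition of the descendants of $a'$ away from the root, so no sample is ever counted twice along two distinct paths from $a$. As a consequence, the denominator of~\eqref{e:estimator_belief} satisfies
\[
 N(t)\;=\;t+\sum_{h=1}^{d}n_h\,(t-h+1)\;\geqslant\;|\mathcal{CC}_a^d|\,(t-d+1),
\]
with $n_h$ the number of agents at tree-distance $h$ from $a$, and $\hat\mu_a^t$ becomes an honest empirical mean of $N(t)$ independent random variables, each with expectation $\mu_a$ and either sub-Gaussian parameter $\sigma^2$ (SGD case) or bounded fourth moment $\kappa\sigma^4$ (BFMD case).

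Third, I would apply a single-time concentration inequality on $\mathcal{E}$: Hoeffding gives $\mathbb{P}(|\hat\mu_a^t-\mu_a|\geqslant\varepsilon)\leqslant 2e^{-N(t)\varepsilon^2/(2\sigma^2)}$ for SGD, and Markov on the fourth moment gives $\mathbb{P}(|\hat\mu_a^t-\mu_a|\geqslant\varepsilon)\leqslant(\kappa+3)\sigma^4/(N(t)^2\varepsilon^4)$ for BFMD. Since $N(t+1)-N(t)=|\mathcal{CC}_a^d|$, summing the SGD bounds over $t>\tau^B_a$ yields a geometric series, while in the BFMD case it yields a convergent $\sum 1/t^2$ tail; in both cases the sum is at most $\delta/2$ as soon as $\tau_a^B-d\geqslant\widetilde n_{\delta/2}(\varepsilon)/|\mathcal{CC}_a^d|$, which is precisely the claimed threshold. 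A final union bound with $\mathcal{E}^c$ restores the $1-\delta$ guarantee.

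I expect the main obstacle to be the inductive unrolling of the message recursion in the second step: one has to simultaneously track the one-step delay introduced at each hop and verify that, on $\mathcal{E}$, the sets $\mathcal{C}^{\,\cdot}_{a'}\setminus\{a\}$ that appear through the recursion indeed coincide with the tree-children of $a'$ pointing away from $a$. It is this bookkeeping---and the absence of cycles in $\mathcal{CC}_a^d$---that turns the collective sample count $|\mathcal{CC}_a^d|(t-d+1)$ into the clean speedup factor of $|\mathcal{CC}_a^d|$ inside the concentration bound.
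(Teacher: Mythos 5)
Your proposal is correct and follows essentially the same route as the paper's proof: split the failure event into identification error (controlled by $\delta/2$ via Theorem~\ref{cor:Dcolme_thm1}) and estimation error, use the tree structure to show $\hat\mu_a^t$ is an honest empirical mean over at least $|\mathcal{CC}_a^d|(t-d)$ i.i.d.\ samples, then apply a per-$n$ concentration bound (sub-Gaussian tail or fourth-moment Chebyshev) together with a union bound over the tail and a final union bound with the identification event. The only cosmetic differences are that you make the induction on the message depth $h$ explicit (the paper states it in the main text and uses it implicitly) and you take the union bound over times $t$ rather than over sample counts $n$, which changes nothing substantive.
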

Similar considerations to those for Theorem~\ref{thm:consensus_epsilon_delta} apply. The additional term~$d$ accounts for the delay introduced by the message-passing scheme.

\begin{corollary}
Let $\mathbb{P}(\mathcal{CC}^d_a \text{ is not a tree})=\delta'$, then for any $\delta \in (0,1)$, employing \belief, 
we have:
\[
\mathbb{P}\left(\forall t>\tau^B_a\, :\, |\hat \mu_a^t-\mu_a |<\varepsilon \right)\geqslant 1 -\delta-\delta'.
\]
\end{corollary}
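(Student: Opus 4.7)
The plan is to reduce the corollary to Theorem~\ref{thm:Bcolme_thm2} by conditioning on the event that $\mathcal{CC}^d_a$ is a tree and applying a union bound. Let $E_T$ denote the (graph-topological) event ``$\mathcal{CC}^d_a$ is a tree'' and let $E_{\varepsilon}$ denote the (data-driven) convergence event $\{\forall t > \tau^B_a : |\hat{\mu}_a^t - \mu_a| < \varepsilon\}$. By hypothesis, $\mathbb{P}(E_T^c) = \delta'$, so $\mathbb{P}(E_T) = 1 - \delta'$.

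First, I would invoke Theorem~\ref{thm:Bcolme_thm2} in its conditional form: conditional on $E_T$, the tree assumption required by the theorem holds, and so the recursion that defines $m^{t,a'\rightarrow a}_{h,i}$ correctly aggregates samples generated by agents in $\mathcal{CC}^d_a$ without double-counting. Consequently, conditional on $E_T$, we obtain $\mathbb{P}(E_{\varepsilon} \mid E_T) \geq 1 - \delta$. Then, by the law of total probability and by lower-bounding $\mathbb{P}(E_{\varepsilon} \cap E_T^c)$ by zero, I would write
\[
\mathbb{P}(E_{\varepsilon}) \;\geq\; \mathbb{P}(E_{\varepsilon} \cap E_T) \;=\; \mathbb{P}(E_{\varepsilon} \mid E_T)\, \mathbb{P}(E_T) \;\geq\; (1-\delta)(1-\delta'),
\]
from which $\mathbb{P}(E_{\varepsilon}) \geq 1 - \delta - \delta' + \delta\delta' \geq 1 - \delta - \delta'$ follows immediately. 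Equivalently, a direct union bound on $E_{\varepsilon}^c \subseteq (E_{\varepsilon}^c \cap E_T) \cup E_T^c$ gives $\mathbb{P}(E_{\varepsilon}^c) \leq \delta + \delta'$ in a single line.

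The only real subtlety to justify is that Theorem~\ref{thm:Bcolme_thm2} may be applied conditionally on $E_T$: since $E_T$ depends only on the (random) graph topology and is independent of the i.i.d.\ samples $\{x_a^t\}_{a,t}$ that drive the mean estimation and the confidence-interval construction, the theorem's probabilistic guarantees over the sample space transfer verbatim to the conditional measure. There is no main obstacle here; the corollary is essentially a bookkeeping extension of Theorem~\ref{thm:Bcolme_thm2} that separates the combinatorial failure mode (the local neighborhood not being a tree, which invalidates the no-double-counting analysis leading to~\eqref{e:estimator_belief}) from the statistical failure mode (tail events on the samples).
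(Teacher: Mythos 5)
Your proof is correct and follows exactly the argument the paper intends (the paper leaves this corollary unproved as an immediate consequence of Theorem~\ref{thm:Bcolme_thm2}): condition on the tree event, apply the theorem under that conditioning, and union-bound the two failure modes, with the independence of the graph topology from the samples justifying the conditional application. Nothing further is needed.
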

\textcolor{black}{Theorem~\ref{tree-like-Gnr} (Appendix~\ref{app:Gnr})} provides upper bounds for~$\delta'$ when $\mathcal G$ is a random regular graph. In particular, as long we set  $d$ as in Proposition~\ref{prop:gnr} (Sec.~\ref{sec:param_setting}),  $\delta'$ converges to $0$ as the number of agents $|\mathcal{A}|$ increases.

\begin{table*}[ht!]
\centering
\begin{tabular}{l|c|c|c}
\toprule
 & \textbf{Per-agent space/time} & \multicolumn{2}{c}{\textbf{Convergence time}}\\
 & \textbf{complexity} & \textbf{sub-Gaussian} & \textbf{bounded $4$-th moment}\\
\midrule
ColME ({\footnotesize $r$ communications}) & $|\mathcal A|$ & $ \frac{1}{\Delta_a^2} \log \frac{|\mathcal A|}{\Delta_a \delta}  + \frac{|\mathcal A|}{r} + \frac{1}{|\mathcal C_a|} {\frac{1}{\varepsilon^2} \log \frac{1}{\delta \varepsilon^2}}  $ & $\frac{1}{\Delta_a^4} \frac{|\mathcal A| }{\delta} + \frac{|\mathcal A|}{r} +\frac{1}{|\mathcal C_a|}\frac{1}{\delta \varepsilon^4 }  $\\
C-ColME [Thm. \ref{thm:consensus_epsilon_delta}] ($\alpha_t = \frac{t}{t+1}$) & $r$ & $ -$ & $\frac{1}{\Delta_a^4} \frac{ |\mathcal{CC}_a| r}{\delta}  +\frac{1}{|\mathcal{CC}_a|}\frac{1}{\delta \varepsilon^4 }  $\\
B-ColME [Thm. \ref{thm:Bcolme_thm2}]  & $r  d$ & $ \frac{1}{ \Delta_a^2} \log \frac{ |\mathcal{CC}_a| r}{\Delta_a \delta} + d + \frac{1}{|\mathcal{CC}_a^d|} {\frac{1}{\varepsilon^2} \log \frac{1}{\delta \varepsilon^2}}  $ & $\frac{1}{\Delta_a^4} \frac{ |\mathcal{CC}_a| r}{\delta} + d +\frac{1}{|\mathcal{CC}_a^d|}\frac{1}{\delta \varepsilon^4 }  $\\
\bottomrule
\end{tabular}
\caption{Comparison of collaborative estimation algorithms. The convergence time is provided in order sense.}
\label{t:comparison}
\end{table*}

\subsection{Choice of the Graph and other Parameters}\label{sec:param_setting}

The selection of the graph~$\mathcal G(\mathcal{A},\mathcal{E})$ is crucial for the effectiveness of our algorithms.
% plays a pivotal role in the efficacy of our algorithms. 
Here we state the key desirable properties of $\mathcal G(\mathcal{A},\mathcal{E})$.
First,  Theorems~\ref{thm:consensus_epsilon_delta} and~\ref{thm:Bcolme_thm2} show that learning timescales, $\tau^B_a$ and $\tau^C_a$, decrease as the size of the collaborating agent groups, $\mathcal{CC}_a^d$ and $\mathcal{CC}_a$, increase.
Therefore, a highly connected graph is preferred to promote the formation of large clusters of agents belonging to the same similarity class
after the disconnection of inter-class edges. 
Second, %the complexities---both spatial and temporal---of 
the spatial and temporal complexities of \belief{} and \consensus{} are directly proportional to the agents' degree within the graph. Hence, we want the degree to be small and possibly uniform across the agents to balance computation across agents. 
%We observe that these two desiderata can be at odds, 
Note that %there is a trade-off between 
the first two criteria partially conflict, as a higher degree generally leads to larger groups~$\mathcal{CC}_a^d$ and~$\mathcal{CC}_a$, while a smaller degree ensures better spatial and temporal complexities.
A third criterion, specific to \belief{}, is that each agent's neighborhood should have a tree-like structure extending up to~$d$ hops, with~$d$ as large as possible. 

Considering these criteria, we opt for the class of \textit{simple} random regular graphs $\mathcal G_0(N,r)$.
These graphs are sampled uniformly at random from the set of all $r$-regular simple graphs with~$N$ nodes, i.e., graphs without parallel edges or self-loops, and in which every node has exactly~$r$ neighbors. Note that an even product $rN$ guarantees the set is not empty.
The class $\mathcal G_0(N,r)$ exhibits strong connectivity properties for small values of~$r$. Specifically, for any $r\geqslant 3$, the probability that the sampled graph is connected approaches one as~$N$ increases. Moreover, the sampled graph demonstrates a local tree-like structure with high probability (proof in \textcolor{black}{Appendix~\ref{app:Gnr}}). 
The choice of~$r$ (agents' degree) illustrates the trade-off discussed above between reducing complexity (low~$r$) and having large connected components (high~$r$).
A sensible rule is to select~$r$ sufficiently large to guarantee that most agents in the smallest (most critical) class belong to the same connected component. Consider a class including a fraction~$p_{k_a}$ of agents, \textcolor{black}{Table~\ref{tab:extinction_prob} in Appendix~\ref{app:Gnr}} shows the average fraction of agents in this class that is not connected to the main connected component as a function of~$r$. To keep this fraction below e.g.~$10^{-2}$ a good rule of thumb is $r=4/p_{k_a}$. %%{Add some more details on the choice of $r$ [Reviewer~1]}

A final key parameter for \belief{}  is the maximum distance $d$ over which local estimates from agents are propagated. This parameter must be carefully calibrated: it should be small enough to ensure that $\mathcal{CC}_a^d$, for a randomly chosen $a\in \mathcal{A}$,  has a tree-like structure with high probability. However, choosing a $d$ that is too small could unnecessarily restrict the size of $\mathcal{CC}_a^d$, thereby undermining the effectiveness of the estimation process (Theorem \ref{thm:Bcolme_thm2}). A comprehensive analysis of how the parameters~$r$ and~$d$ influence both the structure of $\mathcal{N}_a^d$ and the size of $\mathcal{CC}_a^d$ can be found in \textcolor{black}{Appendix~\ref{app:Gnr}}.
Here, we informally summarize the main result:
\begin{proposition}\label{prop:gnr}
By  selecting $d=\Big \lfloor \frac{1}{2}\log_{r-1} 
\frac{|\mathcal{A}|}{\log_{r-1}|\mathcal|A|} \Big \rfloor$ 
%,  for an arbitrary $\phi_0<1$,
the number of  nodes $a\in \mathcal{A}$, whose $d$-neighborhood is not a tree, is $o(|\mathcal{A}|)$ with a probability tending to 1 as $|\mathcal{A}|$ increases.
For the same $d$ and $r \in \Theta\left(\log(1/\delta)\right)$, $|\mathcal{CC}_a^d|$ %(with $d$ chosen as above) 
is in  $\Omega( |\mathcal{A}|^{\frac{1}{2}-\phi})$ for any arbitrarily small  $\phi>0 $ with probability arbitrarily close to~1. 
\end{proposition}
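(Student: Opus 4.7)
The plan is to exploit the local tree-like structure of $\mathcal{G}_0(N,r)$ (with $N=|\mathcal{A}|$) and couple the breadth-first exploration of the $d$-neighborhood of a random vertex with a Galton–Watson branching process. For this I would work in the configuration model, which is contiguous with the uniform random $r$-regular graph for bounded $r$, so that asymptotic statements transfer.

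For the first part (tree-like $d$-neighborhoods), I would estimate, for a fixed vertex $a$, the probability that its $d$-neighborhood contains a cycle. Exploring the neighborhood one half-edge at a time by the principle of deferred decisions, each newly revealed half-edge attaches to an already-explored vertex with probability at most $|\text{explored}|/(rN - 2|\text{explored}|)$. Since the exploration to depth $d$ touches at most $r(r-1)^{d-1}$ half-edges, a union bound yields a collision probability of order $(r-1)^{2d}/N$. Substituting $d = \lfloor \tfrac{1}{2}\log_{r-1}(N/\log_{r-1} N)\rfloor$ gives $(r-1)^{2d}/N \leq 1/\log_{r-1} N = o(1)$. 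Letting $B$ be the number of vertices whose $d$-neighborhood is not a tree, linearity of expectation yields $\E[B] = o(N)$, and Markov's inequality gives $B = o(N)$ with probability tending to $1$.

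For the second part I would couple the exploration of the same-class subgraph rooted at $a$ with a Galton–Watson tree whose offspring distribution is $\mathrm{Bin}(r-1,p_{k_a})$ at generation $\geq 1$ (and $\mathrm{Bin}(r, p_{k_a})$ at the root), which is valid conditional on the tree-likeness event from Part~1. The mean offspring is $m = (r-1)p_{k_a}$; taking $r = \Theta(\log(1/\delta))$ with a sufficiently large constant makes the extinction probability $q \leq e^{-c m}$ smaller than $\delta/2$. Conditional on survival, I would use a non-asymptotic Chernoff-type concentration for the generation size, obtained by iterating the moment-generating-function recursion of the process, to conclude that $Z_d \geq m^d/2$ with probability at least $1-\delta/2$. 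Plugging in $m=(r-1)p_{k_a}$ and $d\approx \tfrac{1}{2}\log_{r-1} N$ yields
\[
m^d = (r-1)^d\, p_{k_a}^{\,d} = N^{1/2}\, p_{k_a}^{\,d} = N^{\,1/2 - \log(1/p_{k_a})/(2\log(r-1))},
\]
so choosing $r$ large enough that $\log(1/p_{k_a})/(2\log(r-1)) \leq \phi$ delivers $|\mathcal{CC}_a^d|\geq Z_d = \Omega(N^{1/2-\phi})$ with probability at least $1-\delta$.

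The main obstacle is making the branching-process coupling quantitative and uniform as $d$ grows with $N$: the classical Kesten–Stigum statement that $Z_d/m^d$ converges almost surely is not directly usable, and one must instead establish a finite-$d$ lower-tail bound depending explicitly on $m$ and $d$. A second subtlety is that the exploration/branching-process coupling degrades once the explored set reaches $\Theta(\sqrt{N})$ vertices, precisely the regime at which edge collisions become likely; but the target bound $\Omega(N^{1/2-\phi})$ stays just below this threshold, so the coupling survives throughout the $d$ generations that appear in the statement. Finally, the probability of being in the failure event (either non-tree-like neighborhood, extinction, or lower-tail deviation of $Z_d$) is controlled by a union bound over three $o(1)$-or-$\delta$ terms, which is the quantity driven arbitrarily close to $0$ in the proposition.
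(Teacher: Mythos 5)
Your first part follows the paper's own route almost verbatim: the paper (Theorem~\ref{tree-like-Gnr}) performs the same breadth-first stub-matching exploration in the configuration model, bounds the per-step collision probability, obtains $\mathbb{P}(\mathcal{N}^{(v_0)}_d\neq\mathcal{T}_d)\le H(H+1)/(2N)$ with $H=\Theta((r-1)^d)$, and concludes via linearity of expectation, Markov, and contiguity with $\mathcal{G}_0(N,r)$. Nothing to add there.

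For the second part you use the same two-stage branching-process coupling as the paper, but the key quantitative step is both unproven and, as stated, too strong. You claim that conditional on survival $Z_d\ge m^d/2$ with probability at least $1-\delta/2$, to be obtained by ``iterating the moment-generating-function recursion.'' This would fail: for a supercritical Galton--Watson process with offspring $\mathrm{Bin}(r-1,p_{k_a})$ (which has $p_0>0$), the martingale $Z_d/m^d$ converges to a limit $W$ whose conditional law given survival places positive mass on every neighborhood of $0$, so $\mathbb{P}(Z_d\ge m^d/2\mid\text{survival})$ is bounded away from $1$ by a constant depending on the offspring law; making it exceed $1-\delta/2$ would require a separate left-tail estimate on $W$ tied to $r$, which $r\in\Theta(\log(1/\delta))$ does not obviously deliver. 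The saving observation --- and the one the paper uses --- is that the proposition does not need $Z_d=\Omega(m^d)$: since $m_0m^{d-1}=\widetilde{\Theta}\bigl(|\mathcal{A}|^{\frac{1}{2}(1+\log_{r-1}p_{k_a})}\bigr)$ and the target is only $\Omega(|\mathcal{A}|^{1/2-\phi})$, there is polynomial slack, so it suffices to take any $f(|\mathcal{A}|)$ with $f=o(m_0m^{d-1})$ and $f=\Omega(|\mathcal{A}|^{1/2-\phi})$ and invoke the extinction--explosion principle, $\mathbb{P}(Z_i\to 0)+\mathbb{P}(\lim_i Z_i/(m_0m^{i-1})>0)=1$, which gives $\mathbb{P}(|\mathcal{CC}_a^d|>f)\to 1-q_{2BP}$ directly, with no finite-$d$ concentration needed. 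The failure probability is then exactly the extinction probability $q_{2BP}=[(1-p_{k_a})+p_{k_a}q_{GW}]^{r}$, driven to $0$ by increasing $r$, and the same increase in $r$ makes $\frac{1}{2}\log_{r-1}(1/p_{k_a})\le\phi$. Replacing your Chernoff step with this soft limit argument closes the gap and recovers the paper's proof.
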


Finally, for \consensus{}, the consensus matrix $W$ could be chosen to minimize the second largest module $\lambda_{2,c}$ of the eigenvalues of each block $\prescript{}{c}W$ in order to minimize the bound in Theorem~\ref{thm:fourth_moment_main_text}. This optimization problem has been studied by \citet{boyd} and requires in general a centralized solution. In what follows, we consider the following simple, decentralized configuration rule: $(W_t)_{a,b} = \frac{1}{\max\{|\mathcal C_{a}^t|, |\mathcal C_{b}^t|\}+1}, \forall b \in \mathcal C_a^t$ and $(W_t)_{a,a} = 1- \sum\limits_{b \in \mathcal C_a^t} (W_t)_{a,b}$, making~$W$ symmetric and doubly stochastic.

\section{Algorithms' Comparison}
\label{s:comparison}
Table~\ref{t:comparison} presents a comparative analysis of the three algorithms: \colme{}, \consensus{}, and \belief{}. For a fair comparison, we consider a variant of \colme{}, where each agent can communicate with~$r$ agents at each time~$t$, so that all three algorithms incur the same communication overhead.

The second column of Table~\ref{t:comparison} outlines the space and time complexities of the algorithms. Notably, even when~$r$ and~$d$ are allowed to increase logarithmically with the number of agents $|\mathcal A|$, \belief{} retains its efficiency advantage over \colme{}. \consensus{} demonstrates even greater improvements, further reducing the per-agent burden compared to the savings achieved by \belief{}.

The third and fourth columns detail the characteristic times required to achieve $(\epsilon, \delta)$ convergence for the estimates generated by the three algorithms, considering both sub-Gaussian local data distributions and distributions with bounded fourth moment. The characteristic times correspond to $\tau_a$, $\tau^C_a$, and $\tau^B_a$ in % Theorems~\ref{thm:colme_thm2}
\textcolor{black}{Theorem~7 in Appendix~\ref{sec:colme-perf}}, Theorem~\ref{thm:consensus_epsilon_delta}, and~\ref{thm:Bcolme_thm2}, respectively. The table reports their asymptotic behavior as the number of agents $|\mathcal A|$ increases ignoring logarithmic factors. The detailed derivations of these results are provided in \textcolor{black}{Appendix~\ref{app:order_sense}}.

Three factors contribute to the characteristic times. The first factor is the time required to correctly identify potential collaborators. For \colme{}, this involves each agent classifying the other $|\mathcal A|-1$ agents, leading to a term that scales as $\log |\mathcal A|$ or $|\mathcal A|$, depending on the assumed properties of the local distribution. 
For \belief{} and \consensus{}, $|\mathcal A|$ is replaced by $|\mathcal{CC}_a|r$, which represents an upper bound on the number of connections the agents in $\mathcal{CC}_a$ may have initially established with agents from different classes. This substitution may not be immediate, as one might initially expect the relevant scale to be simply~$r$. However, this adjustment accounts for the potential ripple effect of classification errors: a mistake by any agent $a$ can impact the estimates of all agents within the same connected component~$\mathcal{CC}_a$. 

The second factor contributing to the characteristic times is the time each agent needs to collect all relevant information. For \colme{}, this time is proportional to $|\mathcal A|/r$, as an agent queries all other agents. For \belief{}, the time is specifically tied to $d$, the maximum number of hops messages propagate. Notably, this term does not appear for \consensus{}, as it is dominated by the final term.

The final term represents the time needed for accurate mean estimation after collaborators have been identified, highlighting the benefits of collaboration.
In \colme{}, the collaboration's benefit is particularly striking, as all agents within the same class work together to improve their estimates. This collective effort effectively reduces the convergence time by a factor proportional to the size of the collaborating group, $|\mathcal C_a|$.  For \belief{} and \consensus{}, although the speed-up remains proportional to the number of collaborating agents, the actual numbers of collaborators, $|\mathcal{CC}_a^d|$ for \belief{} and $|\mathcal{CC}_a|$ for \consensus{}, are in general smaller.

In conclusion, while \colme{} potentially offers the most accurate estimates, it requires longer convergence times and greater memory and computational resources.
In contrast, \belief{} and \consensus{} present more efficient alternatives, achieving faster convergence with reduced per-agent resource demands. However, this efficiency may come at the expense of the maximum attainable accuracy. The next section quantifies this trade-off experimentally.

\begin{figure*}[ht]
    \centering
    \begin{tabular}{cc}
        \includegraphics[width=0.45\textwidth]{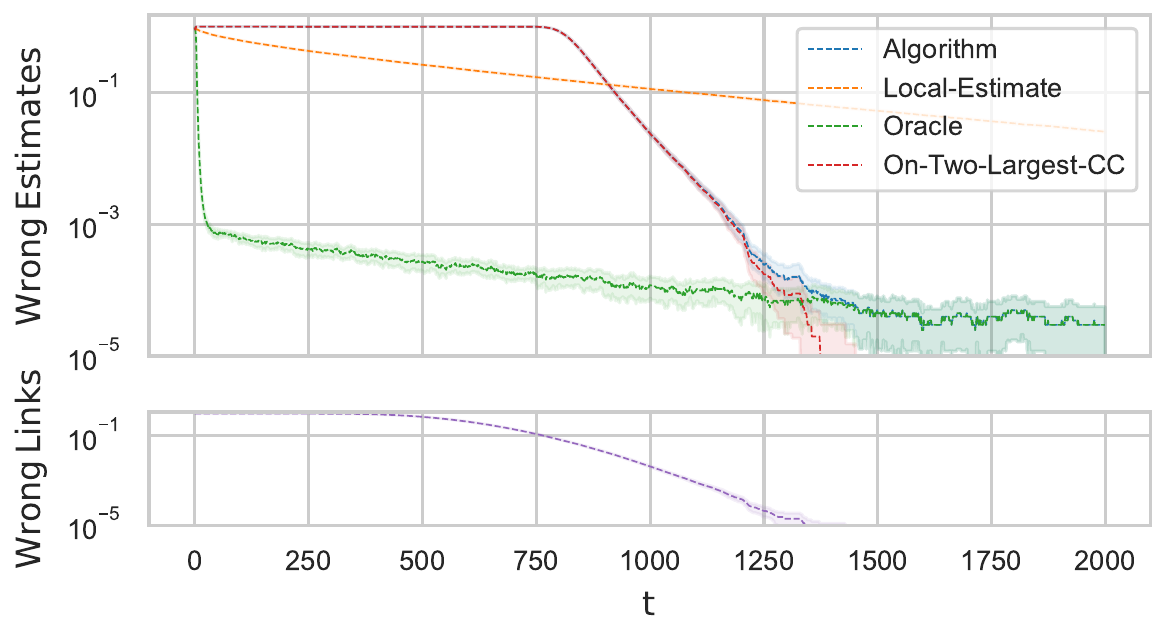} \hspace{2.5mm} & \hspace{2.5mm} \includegraphics[width=0.45\textwidth]{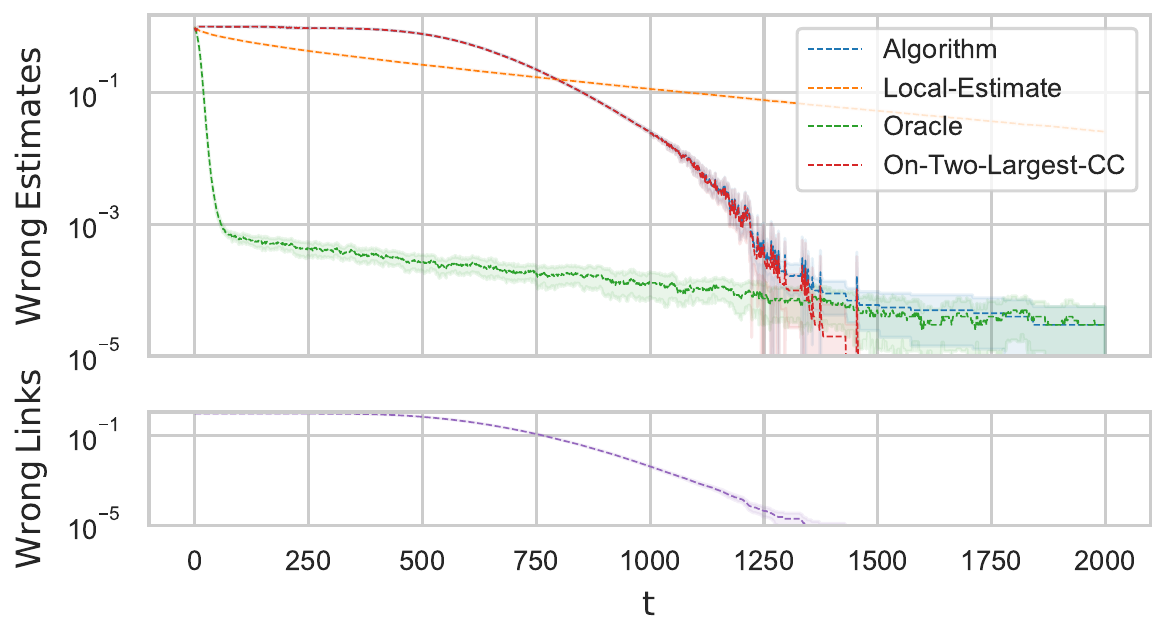}  \\
        \footnotesize (a) \belief{} \hspace{0mm} & \hspace{5mm} \footnotesize (b) \consensus{}
    \end{tabular}
    \caption{%% Fraction of agents whose estimates deviate from the true value by more than $\epsilon$ (top) and a fraction of neighbors that have not yet been identified as belonging to a different class (bottom) for \belief{} (left) and \consensus{} (right). Averages and 95\% confidence intervals were computed over 20 realizations.
%%Fraction of agents with estimates deviating by more than $\epsilon$ from the true value (top) and fraction of \textit{wrong links} (bottom) for \belief{} (left) and \consensus{} (right), averaged over 20 realizations with 95\% confidence intervals.
Fraction of agents with estimate deviates  by more than $\epsilon$ from the true value, i.e., $| \{ a \in \mathcal{A} : |\hat{\mu}_a^t - \mu_a| > \varepsilon \} | / | \mathcal{A} |$ (top)  and fraction of \textit{wrong links} (bottom) for \belief{} (a) and \consensus{} (b), over 20 realizations with 95\% confidence intervals.}       \label{fig:scalable_performance}
\end{figure*}

\section{Numerical Experiments} %over $G_0(N,r)$
\label{sec:pe}

We evaluate the proposed algorithms on the class $\mathcal{G}_0(N,r)$ of simple regular graphs (see Sec.~\ref{sec:param_setting}). In this setting, each agent connects to~$r$ other agents chosen uniformly at random in~$\mathcal{A}$. This setup also provides the tree-like local structure required for \belief{}.
Agents belong to one of two classes, associated with Gaussian distributions $D_1 \sim \mathcal{N}(\mu_1=~0, \sigma^2=4)$ and $D_2 \sim \mathcal{N}(\mu_2=1, \sigma^2=4)$. %Initially, 
Each node is assigned to one of the two classes with equal probability.
Unless otherwise stated, in the experiments $|\mathcal{A}|=N=10000$, $r=10$, $d=4$, $\varepsilon=0.1$, $\delta=0.1$, and  $\beta_\gamma (n)$ as in \eqref{eq:beta_mailard_main}. \textcolor{black}{In Appendix~\ref{app:multidim} and~\ref{sec:add_performance}},
we provide additional experiments for the multidimensional case and varying the system's parameters.

Figure~\ref{fig:scalable_performance} showcases the performance of \belief{} and \consensus{} using two key metrics: the fraction of agents with \textit{incorrect estimates} ($\hat{\mu}_a^t$ more than
$\varepsilon$ away from the true mean~$\mu_a$), and the fraction of \textit{wrong links} still in use (a wrong link connects agents from different classes). 
We compare our algorithms against two benchmarks. 
The first benchmark has each agent independently relying on its \emph{local} estimate~$\bar{x}^t_{a,a}$. In the second benchmark, an \textit{oracle} provides each agent with precise knowledge of which neighbors belong to the same similarity class (i.e., $\mathcal{C}_a^t = \mathcal{C}_a \cap \mathcal{N}_a, \forall a, t$).
The figure reveals that \belief{} has a longer transient phase but then exhibits a slightly steeper convergence than \consensus{}. Notably, \belief{}'s estimates show no apparent improvement until about~90\% of the wrong links have been removed, whereas \consensus{}'s estimates begin to improve as soon as the first edges are eliminated. This phenomenon can be explained as follows. In \belief{}, the estimates at agent~$a$ are not influenced by the removal of some wrong links as long as its $d$-hop neighborhood $\mathcal N_a^d$ remains unchanged. For instance, a given node~$a' \notin \mathcal C_a$ is removed from $\mathcal N_a^d$ only when \emph{all} paths of length at most~$d$ between agent~$a$ and agent~$a'$ are eliminated. In contrast, in \consensus{}, agent~$a'$ contributes to the weighted estimate at agent~$a$ with a weight equal to the sum over all paths between~$a$ and~$a'$ of the product of the consensus weights along the path. As paths are progressively removed, the negative impact of~$a'$ on agent~$a$'s estimate is gradually reduced. However, once all wrong links are removed, \belief{} benefits from its estimates being computed solely on agents belonging to the same class, while \consensus{} requires some additional time for the effect of past estimates to fade away.

\begin{figure}[ht!]
    \centering
    \includegraphics[width=0.85\columnwidth]{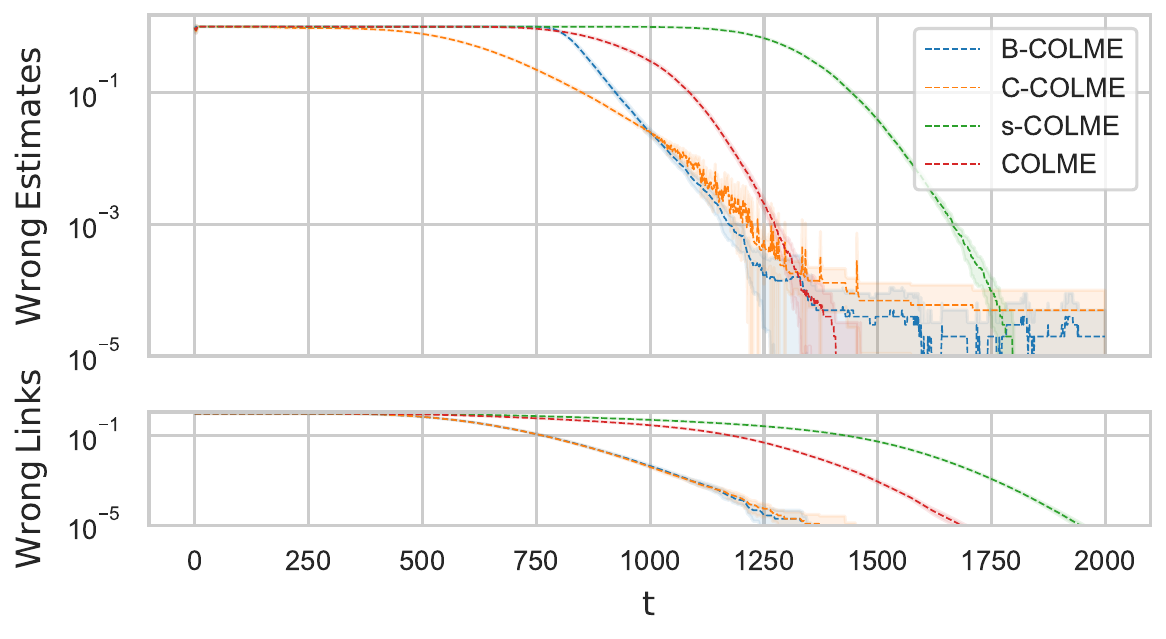}
    \caption{Comparison of our algorithms and two versions of \colme{}, %on a $G_0(N=10000,r=10)$ 
    over 10 realizations. % (\textcolor{cyan}{same format as Fig.~1}).
}
    \label{fig:comparison}
\end{figure}

We also compare the proposed algorithms with \colme{} and a simplified version (s-\colme) where the optimistic distance~$d^t_{\gamma}(a,a')$ is recomputed only for the~$r$ agents queried at time~$t$, achieving an~$\bigO(r)$ per-agent computational cost (the memory cost remains~$\bigO(|\mathcal A |)$). As predicted by the theoretical analysis, \belief{} and \consensus{} are faster than \colme{}, but at the cost of a higher asymptotic error because agent~$a$ collaborates only with the smaller group of nodes in~$\mathcal{CC}_a^d$ for \belief{}, and~$\mathcal{CC}_a$ for \consensus{}. \colme{} pays for this asymptotic improvement with a~$\bigO(|\mathcal A|)$ space-time complexity per agent, impractical for large-scale systems. Note that s-\colme{} improves \colme{}'s complexity at the cost of a much slower discovery of same-class neighbors. %%, significantly affecting the estimation process.

While we focused on %%the fundamental problem of 
online mean estimation, our approach can be adapted to decentralized federated learning. To illustrate this possibility, we adapt the consensus-based decentralized federated learning algorithm, by letting agents progressively exclude neighbors they identify as belonging to a different class. The cosine dissimilarity of agents' updates, the same metric used in {ClusteredFL}~\cite{ghoshEfficientFrameworkClustered2020,chenJointLearningCommunications2021}, replaces the optimistic distance~$d^t_{\gamma}(a,a')$ (details in \textcolor{black}{Appendix~\ref{sec:add_ML_exp}}).
Figure~\ref{fig:ML-exp} shows the performance of our \textit{decentralized FL over a dynamic graph}~(FL-DG) with~$|\mathcal{A}|=100$ agents initially organized over a complete graph.
Two different distributions are obtained from MNIST~\cite{mnist} by swapping/maintaining some labels and each client progressively receives new data samples from one of the two distributions. As the graph is progressively split in two clusters of clients belonging to the same class, each agent's model benefits from cooperating only with similar clients and it achieves a higher accuracy.
  %As soon as the wrong edges (in this case a single edge between agent 1 and agent 4) are identified, each community learns a personalized model suited for the local distribution, achieving a higher accuracy.

  \begin{figure}[h!]
      \centering
      \includegraphics[width=0.85\linewidth]{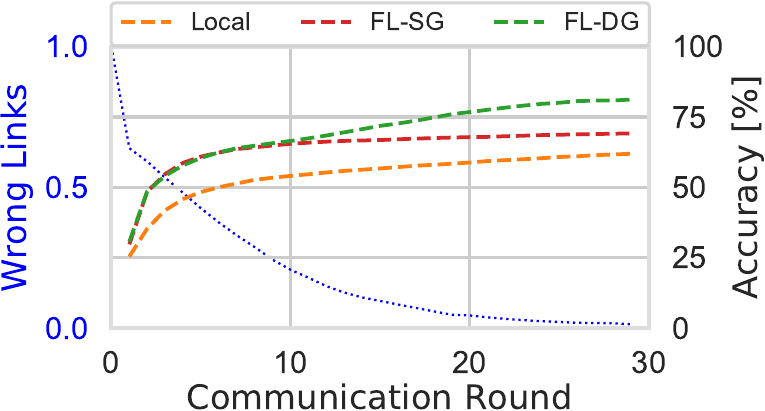}
      \caption{Accuracy of a local model (Local), a decentralized FL over a static graph (FL-SG), and our approach over a dynamic graph (FL-DG). We also show the fraction of links between classes (\textit{wrong links}) over time for FL-DG. %{\color{green} usiamo wrong links qui, incorrect edges nel testo e la formula nelle altre figure. Io sarei per usare una sola espressione testuale (la stessa) dappertutto. Per esempio wrong nodes' estimates, wrong links}
}
      \label{fig:ML-exp}
  \end{figure}

\section{Conclusions}
In this paper, we introduced \belief{} and \consensus{}, two scalable and fully distributed algorithms for collaborative local mean estimation. We thoroughly evaluated their performance through both theoretical and empirical analyses.
Additionally, we adapted our approach for personalized federated learning, applying it to the task of handwritten digit recognition using the MNIST dataset. 

This work points to several future research directions. Here we have allowed agents only to sever existing connections, but not to establish new ones.
Investigating scenarios where agents can rewire their connections to communicate with new agents outside their original neighborhood would be an interesting extension.
%It would be interesting to investigate the case where agents can rewire their connection to new agents that are not in their original neighborhood. 
Additionally, we assumed that agents are partitioned into similarity classes, with agents in the same class generating data with identical true mean. Extending our approach to accommodate more general scenarios, where each agent generates data with potentially different true mean, would be a valuable avenue for further exploration.

\section*{Acknowledgements}
This research was supported in part by the European Network of Excellence dAIEDGE under Grant Agreement Nr. 101120726 and by the Groupe La Poste, sponsor of the Inria Foundation, in the framework of the FedMalin Inria Challenge. It was also funded by the Nokia-Inria challenge LearnNet, and by the French government National Research Agency (ANR) through the UCA JEDI (ANR-15-IDEX-0001), EUR DS4H (ANR-17-EURE-0004), and the 3IA Côte d’Azur Investments in the Future project (ANR-19-P3IA-0002).

Computational resources provided by hpc@polito,which is a project of Academic
Computing within the Department of Control and Computer Engineering at the Politecnico
di Torino (http://hpc.polito.it).

\bibliography{tidy_resources}

\begin{thebibliography}{34}
\providecommand{\natexlab}[1]{#1}

\bibitem[{Adi et~al.(2020)Adi, Anwar, Baig, and Zeadally}]{Adi2020}
Adi, E.; Anwar, A.; Baig, Z.; and Zeadally, S. 2020.
\newblock Machine learning and data analytics for the IoT.
\newblock \emph{Neural Computing and Applications}, 32(20): 16205–16233.

\bibitem[{Amini(2010)}]{amini2010bootstrap}
Amini, H. 2010.
\newblock Bootstrap Percolation and Diffusion in Random Graphs with Given
  Vertex Degrees.
\newblock \emph{The Electronic Journal of Combinatorics}, 17(1).

\bibitem[{Asadi et~al.(2022)Asadi, Bellet, Maillard, and Tommasi}]{colme}
Asadi, M.; Bellet, A.; Maillard, O.-A.; and Tommasi, M. 2022.
\newblock Collaborative {{Algorithms}} for {{Online Personalized Mean
  Estimation}}.
\newblock \emph{Transactions on Machine Learning Research}.

\bibitem[{Beaussart et~al.(2021)Beaussart, Grimberg, Hartley, and
  Jaggi}]{beaussartWAFFLEWeightedAveraging2021}
Beaussart, M.; Grimberg, F.; Hartley, M.-A.; and Jaggi, M. 2021.
\newblock {{WAFFLE}}: {{Weighted Averaging}} for {{Personalized Federated
  Learning}}.

\bibitem[{Chayti et~al.(2022)Chayti, Karimireddy, Stich, Flammarion, and
  Jaggi}]{chayti2022linear}
Chayti, E.~M.; Karimireddy, S.~P.; Stich, S.~U.; Flammarion, N.; and Jaggi, M.
  2022.
\newblock Linear Speedup in Personalized Collaborative Learning.

\bibitem[{Chen et~al.(2021)Chen, Yang, Saad, Yin, Poor, and
  Cui}]{chenJointLearningCommunications2021}
Chen, M.; Yang, Z.; Saad, W.; Yin, C.; Poor, H.~V.; and Cui, S. 2021.
\newblock A {{Joint Learning}} and {{Communications Framework}} for {{Federated
  Learning Over Wireless Networks}}.
\newblock \emph{IEEE Transactions on Wireless Communications}, 20(1): 269--283.

\bibitem[{Deng(2012)}]{mnist}
Deng, L. 2012.
\newblock The mnist database of handwritten digit images for machine learning
  research.
\newblock \emph{IEEE Signal Processing Magazine}, 29(6): 141--142.

\bibitem[{Ding and Wang(2022)}]{dingCollaborativeLearningDetecting2022}
Ding, S.; and Wang, W. 2022.
\newblock Collaborative {{Learning}} by {{Detecting Collaboration Partners}}.
\newblock \emph{Advances in Neural Information Processing Systems}, 35:
  15629--15641.

\bibitem[{Donahue and Kleinberg(2021)}]{donahueModelsharingGamesAnalyzing2021}
Donahue, K.; and Kleinberg, J. 2021.
\newblock Model-Sharing {{Games}}: {{Analyzing Federated Learning Under
  Voluntary Participation}}.
\newblock In \emph{Proceedings of the AAAI Conference on Artificial
  Intelligence}.

\bibitem[{Dorner et~al.(2024)Dorner, Konstantinov, Pashaliev, and
  Vechev}]{dorner2024incentivizing}
Dorner, F.~E.; Konstantinov, N.; Pashaliev, G.; and Vechev, M. 2024.
\newblock Incentivizing honesty among competitors in collaborative learning and
  optimization.
\newblock \emph{Advances in Neural Information Processing Systems}, 36:
  7659--7696.

\bibitem[{Even, Massoulié, and
  Scaman(2022)}]{evenSampleOptimalityPersonalized2022}
Even, M.; Massoulié, L.; and Scaman, K. 2022.
\newblock On {{Sample Optimality}} in {{Personalized Collaborative}} and
  {{Federated Learning}}.
\newblock In \emph{Advances in Neural Information Processing Systems}.

\bibitem[{Fallah, Mokhtari, and Ozdaglar(2020)}]{personalized}
Fallah, A.; Mokhtari, A.; and Ozdaglar, A. 2020.
\newblock Personalized Federated Learning with Theoretical Guarantees: A
  Model-Agnostic Meta-Learning Approach.
\newblock In \emph{Advances in Neural Information Processing Systems}.

\bibitem[{Franceschelli and
  Gasparri(2019)}]{franceschelliMultistageDiscreteTime2019}
Franceschelli, M.; and Gasparri, A. 2019.
\newblock Multi-stage discrete time and randomized dynamic average consensus.
\newblock \emph{Automatica}, 99: 69--81.

\bibitem[{Galante, Neglia, and Leonardi(2024)}]{our_airxiv}
Galante, F.; Neglia, G.; and Leonardi, E. 2024.
\newblock Scalable Decentralized Algorithms for Online Personalized Mean
  Estimation.
\newblock arXiv:2402.12812.

\bibitem[{Ghosh et~al.(2020)Ghosh, Chung, Yin, and
  Ramchandran}]{ghoshEfficientFrameworkClustered2020}
Ghosh, A.; Chung, J.; Yin, D.; and Ramchandran, K. 2020.
\newblock An {{Efficient Framework}} for {{Clustered Federated Learning}}.
\newblock In \emph{Advances in {{Neural Information Processing Systems}}}.

\bibitem[{Grimberg et~al.(2021)Grimberg, Hartley, Karimireddy, and
  Jaggi}]{grimberg2021optimal}
Grimberg, F.; Hartley, M.-A.; Karimireddy, S.~P.; and Jaggi, M. 2021.
\newblock Optimal Model Averaging: Towards Personalized Collaborative Learning.

\bibitem[{Janson(2009)}]{janson_2009}
Janson, S. 2009.
\newblock The Probability That a Random Multigraph is Simple.
\newblock \emph{Combinatorics, Probability and Computing}, 18(1-2): 205–225.

\bibitem[{Kairouz et~al.(2021)Kairouz, McMahan, Avent, Bellet, Bennis,
  Nitin~Bhagoji, Bonawitz, Charles, Cormode, Cummings, D’Oliveira, Eichner,
  El~Rouayheb, Evans, Gardner, Garrett, Gascón, Ghazi, Gibbons, Gruteser,
  Harchaoui, He, He, Huo, Hutchinson, Hsu, Jaggi, Javidi, Joshi, Khodak,
  Konecný, Korolova, Koushanfar, Koyejo, Lepoint, Liu, Mittal, Mohri, Nock,
  Özgür, Pagh, Qi, Ramage, Raskar, Raykova, Song, Song, Stich, Sun, Suresh,
  Tramèr, Vepakomma, Wang, Xiong, Xu, Yang, Yu, Yu, and
  Zhao}]{kairouzAdvancesOpenProblems2021}
Kairouz, P.; McMahan, H.~B.; Avent, B.; Bellet, A.; Bennis, M.; Nitin~Bhagoji,
  A.; Bonawitz, K.; Charles, Z.; Cormode, G.; Cummings, R.; D’Oliveira, R.
  G.~L.; Eichner, H.; El~Rouayheb, S.; Evans, D.; Gardner, J.; Garrett, Z.;
  Gascón, A.; Ghazi, B.; Gibbons, P.~B.; Gruteser, M.; Harchaoui, Z.; He, C.;
  He, L.; Huo, Z.; Hutchinson, B.; Hsu, J.; Jaggi, M.; Javidi, T.; Joshi, G.;
  Khodak, M.; Konecný, J.; Korolova, A.; Koushanfar, F.; Koyejo, S.; Lepoint,
  T.; Liu, Y.; Mittal, P.; Mohri, M.; Nock, R.; Özgür, A.; Pagh, R.; Qi, H.;
  Ramage, D.; Raskar, R.; Raykova, M.; Song, D.; Song, W.; Stich, S.~U.; Sun,
  Z.; Suresh, A.~T.; Tramèr, F.; Vepakomma, P.; Wang, J.; Xiong, L.; Xu, Z.;
  Yang, Q.; Yu, F.~X.; Yu, H.; and Zhao, S. 2021.
\newblock Advances and {{Open Problems}} in {{Federated Learning}}.
\newblock \emph{Foundations and Trends® in Machine Learning}, 14(1–2):
  1--210.

\bibitem[{Li et~al.(2021)Li, Hu, Beirami, and Smith}]{liDitto}
Li, T.; Hu, S.; Beirami, A.; and Smith, V. 2021.
\newblock Ditto: Fair and Robust Federated Learning Through Personalization.
\newblock In \emph{Proceedings of the 38th International Conference on Machine
  Learning}.

\bibitem[{Li et~al.(2020)Li, Sahu, Talwalkar, and
  Smith}]{liFederatedLearningChallenges2020}
Li, T.; Sahu, A.~K.; Talwalkar, A.; and Smith, V. 2020.
\newblock Federated Learning: Challenges, Methods, and Future Directions.
\newblock \emph{IEEE Signal Processing Magazine}, 37(3): 50--60.

\bibitem[{Maillard(2019)}]{maillard}
Maillard, O.-A. 2019.
\newblock \emph{{Mathematics of Statistical Sequential Decision Making}}.
\newblock Habilitation {\`a} diriger des recherches, {Universit{\'e} de Lille
  Nord de France}.

\bibitem[{Mansour et~al.(2020)Mansour, Mohri, Ro, and
  Suresh}]{mansourThreeApproachesPersonalization2020}
Mansour, Y.; Mohri, M.; Ro, J.; and Suresh, A.~T. 2020.
\newblock Three {{Approaches}} for {{Personalization}} with {{Applications}} to
  {{Federated Learning}}.

\bibitem[{Marfoq et~al.(2021)Marfoq, Neglia, Bellet, Kameni, and
  Vidal}]{marfoq2021federated}
Marfoq, O.; Neglia, G.; Bellet, A.; Kameni, L.; and Vidal, R. 2021.
\newblock Federated Multi-Task Learning under a Mixture of Distributions.
\newblock In \emph{Advances in Neural Information Processing Systems}.

\bibitem[{McKay and Wormald(1990)}]{McKay-Wormald}
McKay, B.~D.; and Wormald, N.~C. 1990.
\newblock Uniform generation of random regular graphs of moderate degree.
\newblock \emph{Journal of Algorithms}, 11(1): 52--67.

\bibitem[{Meyer(2023)}]{meyer01}
Meyer, C.~D. 2023.
\newblock \emph{Matrix Analysis and Applied Linear Algebra}.
\newblock SIAM.

\bibitem[{Montijano et~al.(2014)Montijano, Montijano, Sagüés, and
  Martínez}]{montijanoRobustDiscreteTime2014}
Montijano, E.; Montijano, J.~I.; Sagüés, C.; and Martínez, S. 2014.
\newblock Robust discrete time dynamic average consensus.
\newblock \emph{Automatica}, 50(12): 3131--3138.

\bibitem[{of~Mathematical~Functions(2023)}]{dlmf}
of~Mathematical~Functions, D.~L. 2023.
\newblock Asymptotic Expansions: Exponential and Logarithmic Integral.
\newblock [Online; accessed 23-January-2024].

\bibitem[{Rossi, Como, and Fagnani(2019)}]{como}
Rossi, W.~S.; Como, G.; and Fagnani, F. 2019.
\newblock Threshold Models of Cascades in Large-Scale Networks.
\newblock \emph{IEEE Transactions on Network Science and Engineering}, 6(2):
  158--172.

\bibitem[{Sattler, Müller, and Samek(2021)}]{sattler}
Sattler, F.; Müller, K.-R.; and Samek, W. 2021.
\newblock Clustered Federated Learning: Model-Agnostic Distributed Multitask
  Optimization Under Privacy Constraints.
\newblock \emph{IEEE Transactions on Neural Networks and Learning Systems},
  32(8): 3710--3722.

\bibitem[{Shalev-Shwartz and Ben-David(2014)}]{shalev_understanding}
Shalev-Shwartz, S.; and Ben-David, S. 2014.
\newblock \emph{Understanding Machine Learning - From Theory to Algorithms.}
\newblock Cambridge University Press.

\bibitem[{Tan et~al.(2023)Tan, Yu, Cui, and
  Yang}]{tanPersonalizedFederatedLearning2023}
Tan, A.~Z.; Yu, H.; Cui, L.; and Yang, Q. 2023.
\newblock Towards {{Personalized Federated Learning}}.
\newblock \emph{IEEE Transactions on Neural Networks and Learning Systems},
  34(12): 9587--9603.

\bibitem[{Tsoy et~al.(2024)Tsoy, Mihalkova, Todorova, and
  Konstantinov}]{tsoy2024provable}
Tsoy, N.; Mihalkova, A.; Todorova, T.~N.; and Konstantinov, N. 2024.
\newblock Provable Mutual Benefits from Federated Learning in Privacy-Sensitive
  Domains.
\newblock In \emph{International Conference on Artificial Intelligence and
  Statistics}.

\bibitem[{Vershynin(2018)}]{subgaussians}
Vershynin, R. 2018.
\newblock \emph{High-dimensional probability: An introduction with applications
  in data science}.
\newblock Cambridge University Press.

\bibitem[{Xiao and Boyd(2004)}]{boyd}
Xiao, L.; and Boyd, S. 2004.
\newblock Fast linear iterations for distributed averaging.
\newblock \emph{Systems \& Control Letters}, 53(1): 65--78.

\end{thebibliography}

\appendix

\appendix
\onecolumn

\section{Appendix A - \colme's Guarantees}
\label{sec:colme-perf}

We provide the two main theoretical results from~\citet{colme} [Theorem~1 and Theorem~2] which were proven under the assumption that the distributions $\{D_a, a \in \mathcal A\}$ are sub-Gaussian. Recall that a distribution $D$ is sub-Gaussian with parameter $\sigma^2$ if  $\forall \lambda\in \mathbb{R}$, $\log \mathbb{E}_{x \sim D} \exp(\lambda(x-\mu))\leqslant\frac{1}{2}\lambda^2\sigma^2$.
The proofs of the Theorems presented in this Appendix can be found in~\cite{colme}. 

Select the amplitude of the confidence intervals $I_{a,a}$ and $I_{a,a'}$, centered around the sample means, as:
\begin{equation}\label{eq:beta_mailard}
    \beta_\gamma(n):= \sigma \sqrt{\frac{2}{n}\left(1+ \frac{1}{n}\right)\ln(\sqrt{(n+1)}/\gamma )}
\end{equation}
and let $n^{\star}_{\gamma}(x)$ denote the minimum number of samples that are needed to ensure $\beta_\gamma(n)< x$, i.e.,
$n^{\star}_{\gamma}(x)= \lceil \beta^{-1}_\gamma(x)\rceil$. We also use $n^{\star}_{\gamma}(a,a')$ for $n^{\star}_{\gamma}(\Delta_{a,a'}/4)$ and $n^{\star}_{\gamma}(a)$ for $n^{\star}_{\gamma}(\Delta_{a}/4)$ where  $\Delta_a: = \min_{a'\in \mathcal{A}\setminus \mathcal{C}_a }\Delta_{a,a'}$; these values denote the minimum number of samples to distinguish (with confidence $1-2 \gamma$) the true mean of agent $a$ from the true mean of agent $a'$, and from the true mean of any other agent, respectively.

The first result (Theorem~\ref{thm:colme_thm1})  provides a bound on the time needed to ensure that a randomly chosen agent $a \in \mathcal{A}$ correctly identifies its similarity class, i.e, $\mathcal{C}_a^t = \mathcal{C}_a$, with high probability.\footnote{In this paper events that occur with a probability larger than $1-\delta$ are said to occur with high probability (w.h.p.).}

\begin{theorem}\label{thm:colme_thm1}
    \citep[Theorem~1]{colme} Assume distributions $\{D_a, \forall a \in \mathcal A\}$ are sub-Gaussians with parameter~$\sigma^2$. For any $\delta \in (0,1)$, and with $\gamma = \frac{\delta}{4|\mathcal{A}|}$, employing \colme, we have:
    \begin{equation}
        \mathbb{P}\left(\exists t>\zeta_a: \mathcal{C}_a^t \neq \mathcal{C}_a \right) \leqslant \frac{\delta}{2},
    \end{equation}
    %\textcolor{cyan}{C'è un piccolo error nella loro derivazione a pagina 16, dove derivano un bound per una metrica senza valore assoluto e poi usano quel bound per la metrica col valore assoluto.}
    \begin{equation*}
        \textrm{with }\zeta_a=n_{\gamma}^{\star}(a)+|\mathcal{A}|-1-\sum_{a' \in\mathcal{A} \backslash \mathcal{C}_a}  \mathds{1}_{\left\{n_{\gamma}^{\star}(a)>n_{\gamma}^{\star}(a,a')+|\mathcal{A}|-1\right\}}.
    \end{equation*} %and $\gamma= \frac{\delta}{\textcolor{black}{4}A} $.
\end{theorem}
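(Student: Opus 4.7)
The goal is to bound, with probability at least $1-\delta/2$, the time by which agent $a$ correctly identifies its similarity class. I would decompose the bad event $\{\exists t > \zeta_a : \mathcal C_a^t \neq \mathcal C_a\}$ into two types of failures: (i) some $a' \in \mathcal C_a$ is wrongly removed (\emph{false exclusion}); and (ii) some $a' \notin \mathcal C_a$ remains after time $\zeta_a$ (\emph{false inclusion}). Both will be controlled on a single ``good event'' $E$ defined as the simultaneous validity of all agents' confidence intervals for all times.

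\textbf{Step 1: union bound to define $E$.} Apply Assumption~\ref{ass:beta} (which reduces to Maillard's concentration inequality for sub-Gaussians with $\beta_\gamma$ as in~\eqref{eq:beta_mailard}) individually to each agent $a' \in \mathcal A$, asking that $|\bar x^t_{a',a'} - \mu_{a'}| < \beta_\gamma(t)$ for all $t \geq 1$. Each event holds with probability at least $1-2\gamma$, and a union bound over $|\mathcal A|$ agents with the choice $\gamma = \delta/(4|\mathcal A|)$ shows that $E$ holds with probability at least $1 - \delta/2$. It remains to show that $E$ implies no classification error after $\zeta_a$.

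\textbf{Step 2: false exclusion cannot occur on $E$.} For $a' \in \mathcal C_a$, $\mu_a = \mu_{a'}$, and the triangle inequality yields $|\bar x_{a,a}^t - \bar x_{a,a'}^t| \leq |\bar x_{a,a}^t - \mu_a| + |\bar x_{a',a'}^{n^t_{a,a'}} - \mu_{a'}| < \beta_\gamma(t) + \beta_\gamma(n^t_{a,a'})$. So the optimistic distance~\eqref{opt-dist-colme} remains $\leq 0$ and no true-class agent is ever removed under $E$.

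\textbf{Step 3: false inclusion is eliminated by time $\zeta_a$.} For $a' \notin \mathcal C_a$, the reverse triangle inequality gives $d^t_\gamma(a,a') \geq \Delta_{a,a'} - 2\beta_\gamma(t) - 2\beta_\gamma(n^t_{a,a'})$, which is strictly positive as soon as both $t$ and $n^t_{a,a'}$ exceed $n^\star_\gamma(a,a')$. The Round-Robin query rule implies that whenever $a' \in \mathcal C_a^{t-1}$, the staleness satisfies $t - n^t_{a,a'} \leq |\mathcal C_a^{t-1}| - 1 \leq |\mathcal A| - 1$, so $a'$ is guaranteed to be eliminated by step $n^\star_\gamma(a,a') + |\mathcal A| - 1$. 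Taking the maximum over the hardest class ($n^\star_\gamma(a) = \max_{a' \notin \mathcal C_a} n^\star_\gamma(a,a')$) already gives the headline term $n^\star_\gamma(a) + |\mathcal A| - 1$. The correction $-\sum_{a'} \mathds 1_{\{n^\star_\gamma(a) > n^\star_\gamma(a,a') + |\mathcal A| -1\}}$ is obtained by refining the staleness bound: for each $a'$ satisfying the indicator, $a'$ is certainly removed well before the bottleneck time $n^\star_\gamma(a)$, so by the time the hardest agent must be tested, the round-robin has already contracted and the active set has shrunk by the corresponding amount, reducing the maximum staleness by one per such $a'$. Composing these two observations yields the exact form of $\zeta_a$.

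\textbf{Main obstacle.} The non-trivial part is not the confidence-interval reasoning (essentially classical) but rather the bookkeeping of the round-robin dynamics that produces the correction term: one must argue inductively that the set $\mathcal C_a^t$ shrinks in a monotone way consistent with the order in which agents become distinguishable, so that each ``easy'' non-class agent that leaves early permanently shortens the cycle length used to bound staleness. Making this bookkeeping rigorous while handling the possibility that the queried agent at time $t$ has itself just been removed (a corner case in the round-robin update) is the delicate ingredient of the argument.
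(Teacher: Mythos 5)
Your proposal is correct and follows essentially the same route as the paper: note that the paper itself does not reprove this statement (it is quoted from \citet{colme} and the proof is explicitly deferred to that reference), but the argument the paper gives for its analogous Theorem~\ref{cor:Dcolme_thm1} in Appendix~B uses exactly your skeleton --- a union bound over all agents' anytime confidence intervals with $\gamma=\delta/(4|\mathcal A|)$, the triangle inequality to rule out false exclusions, and the reverse triangle inequality plus a staleness bound to force false inclusions out once $\beta_\gamma(n^t_{a,a'})<\Delta_{a,a'}/4$. Your identification of the round-robin bookkeeping behind the correction term $-\sum_{a'}\mathds{1}\{\cdot\}$ as the only genuinely delicate ingredient is also accurate.
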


Theorem~\ref{thm:colme_thm1} shows that the time $\zeta_a$ required by each agent $a$ to identify (with high probability) which other agents are in the same similarity class can be bounded by the sum of two terms. The first term, $n^\star_{\gamma}(a)$, is an upper-bound for the number of samples needed to 
conclude,  with probability larger or equal than $1-2\gamma$, if the true means of two agents differ by the minimum gap $\Delta_a$.
%distinguish,  with probability larger or equal  than $1-2\gamma$, two agents whose true means differ by the minimum gap $\Delta_a$.
The additional term corresponds to the residual time required to acquire the estimates from other agents. It can be shown that $n^{\star}_{\gamma}(a)$
%G: 24/4/2024 $n^{\star}_{\gamma}(a,a')$ 
grows at most as $\log |\mathcal{A}|$, then $\zeta_{a} \in \bigO(|\mathcal{A}|)$ and for large systems ($|\mathcal{A}| \gg n^{\star}_{\gamma}(a)$), the need to query all agents at least once becomes the dominant factor.

Turning our attention to the estimation error, it holds:
\begin{theorem}
	\label{thm:colme_thm2}
	\citep[Theorem~2]{colme}  Assume distributions $\{D_a, \forall a \in \mathcal A\}$ are sub-Gaussians with parameter~$\sigma^2$. For any $\delta \in (0,1)$, and with $\gamma = \frac{\delta}{4|\mathcal{A}|}$, employing \colme, we have:
	\begin{equation}
		\mathbb{P}\left(\forall t>\tau_a, |\hat \mu_a^t-\mu_a |<\varepsilon \right)\geqslant 1-\delta,\\
	\end{equation}
        with $\tau_a=\max\left[ \zeta_a, \frac{n_{\frac{\delta}{2}}^{\star}(\varepsilon)}{|\mathcal{C}_a|}+ \frac{|\mathcal{C}_a|-1}{2} \right]$.
\end{theorem}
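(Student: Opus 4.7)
}

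The plan is to decompose the failure event $\{\exists t > \tau_a : |\hat\mu_a^t - \mu_a| \geq \varepsilon\}$ into two pieces: (i) the classification fails at some point after $\zeta_a$, and (ii) classification is correct but the combined estimator still deviates from $\mu_a$. Step~(i) is already handled: by Theorem~\ref{thm:colme_thm1}, with $\gamma = \delta/(4|\mathcal A|)$, the event $A = \{\forall t > \zeta_a,\; \mathcal C_a^t = \mathcal C_a\}$ has probability at least $1-\delta/2$. So the remaining work is to bound the probability of (ii) by $\delta/2$ and then union bound.

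Conditional on $A$ and for $t > \zeta_a$, every $\bar x_{a,a'}^t$ used in $\hat\mu_a^t$ is the empirical average of $n_{a,a'}^t$ i.i.d.\ samples from a distribution with the \emph{same} mean $\mu_a$ and sub-Gaussian parameter $\sigma^2$. Consequently, letting $N_t = \sum_{a' \in \mathcal C_a} n_{a,a'}^t$, the weighted average reduces to the empirical mean of $N_t$ independent sub-Gaussian samples of mean $\mu_a$:
\[
\hat\mu_a^t = \frac{1}{N_t}\sum_{a' \in \mathcal C_a} \sum_{s=1}^{n_{a,a'}^t} x_{a'}^s.
\]
I would then apply the Maillard-type confidence sequence that underlies $\beta_\gamma(\cdot)$ in \eqref{eq:beta_mailard_main} to this pooled sample stream with $\gamma = \delta/2$, to obtain the uniform bound $|\hat\mu_a^t - \mu_a| < \beta_{\delta/2}(N_t)$ for all $t$ with probability at least $1-\delta/2$. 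Call this event $B$. By the definition of $n^\star_{\delta/2}(\varepsilon)$, as soon as $N_t \geq n^\star_{\delta/2}(\varepsilon)$ we have $\beta_{\delta/2}(N_t) \leq \varepsilon$, so on $A \cap B$ the estimator is $\varepsilon$-accurate once $N_t$ crosses this threshold.

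It remains to translate the sample-count threshold into a threshold on $t$. Under $A$, after $\zeta_a$ the round-robin permanently cycles through the $|\mathcal C_a|-1$ same-class neighbors, so each of them has been queried within the last $|\mathcal C_a|-1$ steps. Ordering the staleness of the $|\mathcal C_a|-1$ neighbors at time $t$, a direct arithmetic-sum bound gives
\[
N_t \;\geq\; t + \sum_{k=1}^{|\mathcal C_a|-1} (t - k) \;=\; |\mathcal C_a|\, t - \tfrac{|\mathcal C_a|(|\mathcal C_a|-1)}{2},
\]
so $N_t \geq n^\star_{\delta/2}(\varepsilon)$ is ensured whenever $t \geq n^\star_{\delta/2}(\varepsilon)/|\mathcal C_a| + (|\mathcal C_a|-1)/2$. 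Taking $\tau_a$ as the maximum of this quantity and $\zeta_a$, on $A\cap B$ the estimator is $\varepsilon$-accurate for all $t > \tau_a$. A final union bound $\mathbb P(A^c) + \mathbb P(B^c) \leq \delta/2 + \delta/2 = \delta$ yields the claim.

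The main delicate point will be the concentration step: $N_t$ is itself driven by the round-robin whose cycle depends on $\mathcal C_a^t$, and hence on the samples, so the pooled-sample stream is not an a~priori-ordered i.i.d.\ sequence. The clean fix is to restrict to the event $A$, on which the querying schedule after $\zeta_a$ is deterministic in the agents' indices, and to apply Maillard's uniform bound to the predictable filtration of collected samples (alternatively, one can union-bound the per-agent confidence intervals $|\bar x_{a,a'}^t - \mu_a| < \beta_\gamma(n_{a,a'}^t)$ and merge them via the weights $n_{a,a'}^t/N_t$; the former route preserves the tight $n^\star_{\delta/2}(\varepsilon)/|\mathcal C_a|$ scaling appearing in $\tau_a$, whereas the latter loses a $\sqrt{|\mathcal C_a|}$ factor).
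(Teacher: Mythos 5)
Your proposal matches the paper's own argument for this result: the paper only sketches the proof (deferring details to the original ColME paper), but that sketch is exactly your decomposition --- condition on correct class identification after $\zeta_a$ (probability $\geq 1-\delta/2$ via Theorem~\ref{thm:colme_thm1}), bound the pooled sample count by $N_t \geq |\mathcal C_a|\,t - |\mathcal C_a|(|\mathcal C_a|-1)/2$ under round-robin so that $N_t \geq n^\star_{\delta/2}(\varepsilon)$ once $t \geq n^\star_{\delta/2}(\varepsilon)/|\mathcal C_a| + (|\mathcal C_a|-1)/2$, and union bound. Your closing remark on the data-dependent querying schedule is a real subtlety the paper's sketch glosses over and your fix is sound; the only loose end (shared with the paper's sketch, given its convention that $\beta_\gamma$ yields two-sided confidence $1-2\gamma$) is that taking $\gamma=\delta/2$ for the estimation event gives failure probability $\delta$ rather than $\delta/2$, so $\gamma=\delta/4$ would be needed to recover the stated constants exactly.
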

Theorem \ref{thm:colme_thm2} admits a straightforward explanation. Provided that agent $a$ has successfully estimated its similarity class at time~$t$ (i.e., $\mathcal{C}_a^t=\mathcal{C}_a$), the error in the mean estimation will depend only on the available number of samples of agents in $\mathcal{C}_a$, used for the computation of $\hat\mu_a^t$.
Now, a number of samples equal to $n_{\frac{\delta}{2}}^{\star}(\varepsilon)$
is sufficient to ensure that $\mathbb{P}(|\hat \mu_a -\mu_a |>\varepsilon)<\delta / {2}$.
For agent $a$  such  number of samples is surely available at time $t\geqslant t^*_a=\frac{n_{\frac{\delta}{2}}^{\star}(\varepsilon)}{|\mathcal{C}_a|}+ \frac{|\mathcal{C}_a|-1}{2}$ 
where the second term is needed to take into account the effect of the delay introduced by the round-robin scheme. Applying the union bound, we can claim that  whenever  $t\geqslant \max(\zeta_a, t^*_a)$ w.p.~$1-\delta$ both $\mathcal{C}_a^t=\mathcal{C}_a$
and $ |\hat \mu_a^t - \mu_a |\leqslant \varepsilon$ hold.

\newpage

\section{Appendix B - Proof of Theorem \ref{cor:Dcolme_thm1}}
\label{appendix:th1}

In this Appendix %we give a more formal and extended presentation of the results of \textcolor{black}{Section \ref{sec:B_and_C}.} %\ref{sec:scalable_algs}
we provide the proof of Theorem \ref{cor:Dcolme_thm1} %and \textcolor{black}{\ref{thm:Bcolme_thm2}}. 
and, as a side result, we derive Equation \eqref{eq:beta_cheby}.
For the sake of clarity, we repeat the statement of the theorems. 

The first theoretical result provides a bound on the probability that
%the nodes of the graph 
the nodes in the connected component $\mathcal{CC}_a$ of a certain node $a\in \mathcal{A}$
misidentify their \textit{true} neighbors $\mathcal{C}_{a'} \cap \mathcal{N}_{a'}, \, \forall a' \in \mathcal{CC}_a$. We remark that, unlike \colme, the \textit{goodness} of an estimate of our \textit{scalable} algorithms depends not only on the ability of a given node to correctly identify its \textit{true} neighborhood but also on the neighborhood estimates of all other nodes. Communication between nodes (i.e., message passing or consensus mechanism) makes error propagation possible within a connected component. Therefore, when bounding the probability of incorrect neighborhood estimation we have to take a network perspective, which influences the choice of $\gamma$.

\begin{theorem}[Incorrect neighborhood estimation]
    Considering an agent~$a$ picked arbitrarily in~$\mathcal{A}$, for any $\delta \in (0,1)$, employing \belief\ or \consensus\, we have:
    \begin{equation*}
    	\mathbb{P}\left(\exists t>\zeta_a^D,  \exists a' \, \in \mathcal{CC}_a: \mathcal{C}_{a'}^t  \neq \mathcal{C}_{a'} \cap {\mathcal N_{a'}}\right) \leqslant \frac{\delta}{2},
    \end{equation*}
    %\text{ with } \zeta_D=n^{\star}(\gamma)+1 \text{ and }  \gamma= \frac{\delta}{8| \mathcal{E}|}.
    with $\zeta_a^D=n^{\star}_\gamma(a)+1$ and  $\gamma= \frac{\delta}{4 r | \mathcal{CC}_a|}$.
\end{theorem}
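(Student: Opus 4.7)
The plan is to mirror the proof strategy used by \citet{colme} for their Theorem~1, but restricted to a local neighborhood: instead of controlling a global event over all of $\mathcal{A}$, I only need a good event over the agents in $\mathcal{CC}_a$ together with their neighbors in $\mathcal G$. Once that good event is secured, the neighbor-identification analysis reduces to two deterministic checks at each $a' \in \mathcal{CC}_a$: (i)~a same-class neighbor is never wrongly pruned, and (ii)~a different-class neighbor is pruned by time~$\zeta_a^D$.

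First, I would define the set
$S := \mathcal{CC}_a \cup \bigcup_{a' \in \mathcal{CC}_a} \mathcal N_{a'}$,
which contains every agent whose empirical mean feeds any test $d^t_\gamma(a',a''')$ executed by an agent $a' \in \mathcal{CC}_a$. Its cardinality satisfies $|S| \leq (r+1)|\mathcal{CC}_a| \leq 2 r |\mathcal{CC}_a|$. Applying Assumption~\ref{ass:beta} to each $a'' \in S$ and taking a union bound, the event
\[
\mathcal{E} := \bigcap_{a'' \in S} \{\forall t \in \mathbb N,\ |\bar{x}^t_{a''} - \mu_{a''}| < \beta_\gamma(t)\}
\]
has probability at least $1 - 2\gamma\,|S| \geq 1 - 4\gamma r|\mathcal{CC}_a| = 1 - \delta/2$ for the stated choice $\gamma = \delta/(4 r |\mathcal{CC}_a|)$. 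All subsequent arguments are made under~$\mathcal{E}$.

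Next, I would carry out the two deterministic checks, exactly as in the single-agent case but applied to every $a' \in \mathcal{CC}_a$ and every $a''' \in \mathcal N_{a'}$. For a same-class pair ($\mu_{a'} = \mu_{a'''}$), the triangle inequality gives
$|\bar{x}^t_{a'} - \bar{x}^{t-1}_{a'''}| < \beta_\gamma(t) + \beta_\gamma(t-1)$,
so $d^t_\gamma(a',a''') \leq 0$ at every $t$ and such an $a'''$ is never removed. For a different-class pair, the reverse triangle inequality yields
$|\bar{x}^t_{a'} - \bar{x}^{t-1}_{a'''}| > \Delta_{a',a'''} - \beta_\gamma(t) - \beta_\gamma(t-1)$,
so
$d^t_\gamma(a',a''') > \Delta_{a',a'''} - 2\beta_\gamma(t) - 2\beta_\gamma(t-1)$.
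Since $a' \in \mathcal{CC}_a \subseteq \mathcal C_a$ and $a''' \notin \mathcal C_{a'} = \mathcal C_a$, we have $\Delta_{a',a'''} \geq \Delta_a$. Using that $\beta_\gamma$ is non-increasing (which follows from $\beta_\gamma \in o(1)$ together with the concrete forms \eqref{eq:beta_mailard_main} or \eqref{eq:beta_cheby}), it suffices that $\beta_\gamma(t-1) \leq \Delta_a/4$, which is guaranteed whenever $t-1 \geq n^{\star}_\gamma(\Delta_a/4)$, i.e., $t \geq \zeta_a^D = n^{\star}_\gamma(\Delta_a/4) + 1$. At that point $a'''$ is removed from $\mathcal C_{a'}^t$, and by the monotonicity of the update rule $\mathcal C_{a'}^t \subseteq \mathcal C_{a'}^{t-1}$, it stays out thereafter.

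The main obstacle, conceptually, is not any single inequality but the correct accounting of the union bound: unlike in \colme, classification errors at one agent propagate through the consensus/message-passing dynamics of $\mathcal{CC}_a$, so the good event must hold simultaneously at \emph{every} agent that influences any test inside $\mathcal{CC}_a$. This forces the union to range over the neighbor-expanded set $S$, introducing the factor~$r$ in the denominator of $\gamma$. Once this bookkeeping is done, the deterministic analysis carries over almost verbatim from the \colme{} proof. The union of steps~(i) and~(ii) across all $(a', a''')$ pairs with $a' \in \mathcal{CC}_a$ gives exactly the complementary event in the theorem, concluding the proof.
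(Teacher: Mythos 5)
Your proposal follows essentially the same route as the paper: secure a uniform concentration event over all estimates that feed any test inside $\mathcal{CC}_a$ via a union bound with per-term budget $2\gamma$ from Assumption~\ref{ass:beta}, then run the two deterministic triangle-inequality checks on $d^t_\gamma(\cdot,\cdot)$ (same-class neighbors are never pruned; different-class neighbors are pruned once $\beta_\gamma(t-1)\leq \Delta_a/4$, i.e., $t\geq n^\star_\gamma(\Delta_a/4)+1$). The deterministic part matches the paper's Lemma on the class-membership rule step for step, including the reduction $\Delta_{a',a'''}\geq\Delta_a$ and the use of $\beta_\gamma(t)\leq\beta_\gamma(t-1)$.

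There is, however, an arithmetic slip in your union bound that, as written, loses the claimed constant. With $\gamma=\frac{\delta}{4r|\mathcal{CC}_a|}$ you have $4\gamma r|\mathcal{CC}_a|=\delta$, so your chain $1-2\gamma|S|\geq 1-4\gamma r|\mathcal{CC}_a|$ yields $1-\delta$, not $1-\delta/2$; the final equality ``$=1-\delta/2$'' is false. To recover $\delta/2$ you need $|S|\leq r|\mathcal{CC}_a|$ rather than $2r|\mathcal{CC}_a|$. This is exactly what the paper's bookkeeping achieves: it union-bounds over the at most $r|\mathcal{CC}_a|$ \emph{pairs} $(a',a'')$ with $a'\in\mathcal{CC}_a$ and $a''\in\mathcal{N}_{a'}$, each contributing $2\gamma$, giving $2r|\mathcal{CC}_a|\gamma=\delta/2$. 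In your agent-level formulation the same count is available because (for $|\mathcal{CC}_a|\geq 2$) every agent of $\mathcal{CC}_a$ is itself a neighbor of another agent of $\mathcal{CC}_a$, so $S=\bigcup_{a'\in\mathcal{CC}_a}\mathcal{N}_{a'}$ and $|S|\leq r|\mathcal{CC}_a|$; the separate ``$+|\mathcal{CC}_a|$'' term in your estimate is redundant. With that correction the argument closes and coincides with the paper's proof.
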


\begin{proof}

The proof involves establishing a series of intermediate results that finally enable us to prove the theorem. We outline the steps of the proof below:
\begin{itemize}
    \item First, we show that under the two proposed confidence interval parametrizations $\beta_\gamma(n)$ (Eq. (\ref{eq:beta_mailard}) and (\ref{eq:beta_cheby})) and considering the sample mean $\bar{x}$ computed over $n$ samples, the probability that the \textit{true} mean~$\mu$, falls within the confidence interval $\left[ \bar{x} -\beta_\gamma(n), \bar{x}+\beta_\gamma(n) \right]$ for every $n\in\mathbb{N}$, is at least $1-2\gamma$. This is completely equivalent to saying that the probability that the \textit{true} mean value is outside the confidence interval for some $n\in\mathbb{N}$  is bounded above by $2\gamma$ (Lemma~\ref{lemma:ci} and Proposition~\ref{cheby}).
    
    \item Second, we remark that Lemma \ref{lemma:ci} and Proposition \ref{cheby} consider a \textit{local} perspective, taking one particular estimate $\bar{x} $ (i.e., $ \bar{x}_{a,a'}^t$), together with its number of samples $n $ (i.e., $ n_{a,a'}^t$). We extend this result by proving that the true mean falls within the confidence interval $I_{a,a'}$ {(with high probability)} for all the nodes $a' \in \mathcal{CC}_a$ in the connected component, for all records retrieved locally from the neighbors ($a'' \in \mathcal{N}_a$), and for every discrete time instant $t\in \mathbb{N}$ (Lemma \ref{lemma:beta_confidence}). We will refer to this event as $E$. This result provides a \textit{global} perspective over the entire connected component $\mathcal{CC}_a$
    %of the entire collaborative network.
    It is important to observe that only when the \textit{true} value is in $I_{a,a'}$, we can provide guarantees about the correct estimation of the similarity class.
    
    \item Then, we consider the \textit{optimistic distance} $d_\gamma^t(a,a'')$ for which we show that, conditionally over $E$,
    whenever it takes on strictly positive values (i.e., $d_\gamma^t(a,a'') > 0$), the neighbor $a''$ does not belong to the same similarity class $\mathcal{C}_a$ as agent $a$ (Lemma \ref{lemma:class_membership}). As a byproduct, we also derive the minimal number of samples $n^\star_\gamma(a)$ needed to correctly decide whether a neighboring node belongs to the same equivalence class.
    
    \item At last, by combining previous results we can easily obtain the claim.
\end{itemize}

\begin{lemma}[Interval parametrization]
\label{lemma:ci}
    For any $\gamma \in (0,1)$,
    setting $\beta_\gamma (n) = \sigma \sqrt{\frac{2}{n}\left( 1 + \frac{1}{n} \right) \ln{\frac{\sqrt{n+1}}{\gamma}}}$ (if the random variables~$x_t$ are sub-Gaussian with parameter $\sigma$) or
    $\beta_\gamma(n) = Hn^{-\alpha}, \; \text{with} \; \alpha<\frac{1}{4} \; \text{and} \;  H=\sqrt[4]{\frac{(\kappa+3)\sigma^4\zeta(2-4\alpha)}{\gamma}} $ (if $x_t$ have the first 4 polynomial moments bounded), it holds:
    \begin{equation}
    \label{eq:param_bound}
        %% \mathbb{P} \left( \forall n \in \mathbb{N}, \left| \bar{x} - \mu \right| \textcolor{black}{<} \beta_\gamma (n) \right) \geq 1 - 2 \gamma % (version 'complementata' che definisce un lower bound per la probabilità di error)
        \mathbb{P} \left( \exists n \in \mathbb{N}, \left| \bar{x} - \mu \right| \geq \beta_\gamma (n) \right) \leqslant 2 \gamma
    \end{equation}
\end{lemma}
\begin{proof}

    \textbf{($\sigma$-sub-Gaussian $x_t$)} We start from the theoretical guarantees on the parametrization (Eq. \ref{eq:beta_mailard}) of the confidence intervals from \cite{maillard} [\textit{Lemma 2.7}]: %\textit{Lemma 2.7 (Time-uniform sub-Gaussian concentration)} :
    Let $\{x_{t}\}_{t\in \mathbb{N}}$ a sequence of independent real-valued random variables, where for each $t$, $x_{t}$ has mean $\mu_{t}$ and is $\sigma_{t}$-sub-Gaussian. Then, for all $\gamma \in (0,1)$ it holds:
    \begin{equation}
    \begin{split}
        \mathbb{P}\left( \exists n \in \mathbb{N}, \, \sum_{t=1}^{n} (x_{t} - \mu_{t}) \geq \sqrt{2 \sum_{t=1}^{n} {\sigma_{t}^2} \left( 1 + \frac{1}{n}\right) \ln{\frac{\sqrt{n+1}}{\gamma}} } \right) \leqslant \gamma\\
        \mathbb{P}\left( \exists n \in \mathbb{N}, \, \sum_{t=1}^{n} (\mu_{t} - x_{t}) \geq \sqrt{2 \sum_{t=1}^{n} {\sigma_{t}^2} \left( 1 + \frac{1}{n}\right) \ln{\frac{\sqrt{n+1}}{\gamma}} } \right) \leqslant \gamma\\
    \end{split}
    \end{equation}

    Our sequence of random variables can i) either correspond to the samples $x_t$ each node $a \in \mathcal{A}$ generates at each discrete time instant $t$, which is i.i.d., with mean $\mu_a$ and $\sigma$-sub-Gaussian (by assumption), or ii) the truncated sequence up to $n_{a,a'}$ the node learns by querying its neighbors, possessing the same properties. Indeed, recall that for each \textit{locally} available estimate $\bar{x}_{a,a'}$, each node keeps also the number of samples over which that estimate has been computed $n^t_{a,a'}$. For ease of notation, we will drop the subscripts and superscripts, which for the sake of the lemma are superfluous. Being $\mu_{n'}=\mu$ and $\sigma_{n'}=\sigma$ constant in our case, it is immediate to write (considering just the first inequality for compactness):

    \begin{equation*}
        \mathbb{P}\left( \exists n \in \mathbb{N}, \, \sum_{t=1}^n (x_{t}) - n \mu \geq \sqrt{2 n\sigma^2 \left( 1 + \frac{1}{n}\right) \ln{\frac{\sqrt{n+1}}{\gamma}} } \right) \leqslant \gamma
    \end{equation*}

    Dividing by $n$ both sides we obtain the sample mean $\bar{x} = \frac{1}{n} \sum_{t}^{n} x_{t}$ (in place of the summation) and the parametrization of confidence interval $\beta_\gamma (n)$, as we have introduced in Sec.~\ref{sec:model} (which we restate here for completeness):

    \begin{equation} \label{eq:beta}
        \beta_\gamma (n) := \sigma \sqrt{\frac{2}{n}\left( 1 + \frac{1}{n} \right) \ln{\frac{\sqrt{n+1}}{\gamma}}} 
    \end{equation}
    By a simple substitution, we get:
    \begin{equation} 
        \mathbb{P}\left( \exists n \in \mathbb{N}, \, \bar{x} - \mu \geq \beta_\gamma (n) \right) \leqslant \gamma 
    \end{equation}

    Lastly, recall we are interested in the probability of the true mean being in the bilateral interval bounded by the given parametrization $\beta_\gamma$. Hence we can bound this probability with $2 \gamma$:

    \begin{equation*}
        \mathbb{P} \left( \exists n \in \mathbb{N}, \left| \bar{x} - \mu \right| \geq \beta_\gamma (n) \right) \leqslant 2 \gamma
    \end{equation*}

     This proves the first part of the lemma. Moreover, considering the complementary event and noting that $\mathbb{P}(e) \leqslant 2\gamma \iff \mathbb{P}(e^c) \geq 1 - 2\gamma$, where $e$ is a generic event and $e^c$ its complementary, it is immediate to obtain the lower bound (complementary to Eq. \ref{eq:param_bound}):
     \begin{equation}
         \label{eq:lower_bound_app}
         \mathbb{P} \left( \forall n \in \mathbb{N}, \left| \bar{x} - \mu \right| {<} \beta_\gamma (n) \right) \geq 1 - 2 \gamma
     \end{equation}
     
     Note also that the probabilistic confidence level $\gamma$ can be considered as a function of $\delta$. By choosing appropriately the function $\gamma = f(\delta)$ it is possible to provide the desired level $\delta$ for the PAC-convergence of a given algorithm.

\medskip
\medskip

    \textbf{($x^t$ with the first 4 bounded polynomial moments)}.
    Now we release the assumption that $x_t$ are extracted from sub-Gaussian distributions. 
   We only assume that  $\mathbb{E}[(x_{a}^t-\mu_a)^4]\leqslant \mu_4$ for any $a\in \mathcal{A}$.

%\subsection{Derivation of \eqref{eq:beta_cheby}} 

   We start recalling the class of concentration inequalities which generalize the classical Chebyshev inequality:

   \begin{proposition}\label{cheby}
   Given a random variable $X$ with average $\mu<\infty$ and finite $2i$-central  moment
    $\mathbb{E}[(X-\mu)^{2i}]= \mu_{(2i)}(X)$   for any $b>0$ we have:
    \[
      \mathbb{P}(|X-\mu|>b)< \frac{\mu_{(2i)}(X)}{(b)^{2i}}   
    \]
    Moreover:
  \[
      \mathbb{P}(X>b)< \frac{\mathbb{E}[Y^{2i}]}{(b)^{2i}}.
    \]
    
    \end{proposition}
    Applying previous inequality to our estimate $\bar{x}=\sum_{t=1}^n x^t$  for  the case $i=2$ we get:
    \[
     \mathbb{P}(|\bar{x}-\mu|>\beta(n) )< \frac{\mu_{(4)}(\bar{x}) }{(\beta_\gamma(n))^4}.   
    \]
   Now  observe that 
     \begin{align}
          \mu_{(4)}(\bar{x}):= & \mathbb{E}\left[\frac{1}{n^4}\sum_{t=1}^n (x^t-\mu) \sum_{\tau=1}^n (x^\tau-\mu)
         \sum_{\theta=1}^n (x^\theta-\mu)  \sum_{\phi=1}^n (x^\phi-\mu) \right] \nonumber \\
       = & \frac{1}{n^4} \sum_{t=1}^n \sum_{\tau=1}^n\sum_{\theta=1}^n\sum_{\phi=1}^n \mathbb{E}[(x^t-\mu)(x^\tau-\mu) (x^\theta-\mu) (x^\phi-\mu)]\nonumber \\
       \leq & \frac{1}{n^4} [n \kappa \sigma^4 +3n(n-1)\sigma^4]
       \label{bound-4-mom}
     \end{align}
     observe, indeed, that from the independence of  samples descends that whenever $t\not \in \{ \tau, \theta, \phi\}$ 
   \begin{equation*}
     \mathbb{E}[(x^t-\mu)(x^\tau-\mu)(x^\theta -\mu)(x^\phi -\mu)]=\mathbb{E}[(x^t-\mu)]\mathbb{E}[(x^\tau-\mu) (x^\theta-\mu)(x^\phi-\mu)]=0
   \end{equation*}  
   while whenever $t=\tau\neq \theta=\phi$
   \begin{equation*}
     \mathbb{E}[(x^t-\mu)(x^\tau-\mu)(x^\theta -\mu)(x^\phi -\mu)]=\mathbb{E}[(x^t-\mu)^2]
     \mathbb{E}[(x^\theta-\mu)^2]\leqslant \sigma^4.
   \end{equation*}  
    Therefore
\[
 \mathbb{P}(|\bar{x}-\mu|>\beta_\gamma(n) )<  \frac{ \kappa \sigma^4 }{(n^3 \beta(n))^4}+ \frac{ 3\sigma^4 }{n^2(\beta_\gamma(n))^4}.   
\]
Now by sub-additivity of probability, we get:
\[
 \mathbb{P}(\exists n \in \mathbb{N},  |\bar{x}-\mu|>\beta_\gamma(n) )=\mathbb{P}(\cup_n\{ |\bar{x}-\mu|>\beta_\gamma(n)\} )
 < \sum_n \frac{ \kappa\sigma^4 }{n^3(\beta_\gamma(n))^4}+ \sum_n \frac{ 3\sigma^4 }{n^2(\beta_\gamma(n))^4}.   
\]
Now observe that $\{\beta_\gamma(n)\}_{n\in\mathbb{N}} $  on the one hand should be chosen as small as possible and in  particular
we should enforce $\beta_\gamma(n)\to 0$ as $n$ grows large; on the other hand, however 
the choice of $\{\beta_\gamma(n)\}_{n\in\mathbb{N}}$ must guarantee that:
\[
\sum_n \frac{ \kappa \sigma^4 }{n^3(\beta_\gamma(n))^4}+ \sum_n \frac{ \sigma^4 }{n^2(\beta_\gamma(n))^4}\leqslant 2\gamma
\]
This is possible if by choosing $\beta(n)=H n^{-\alpha} $  for an  $\alpha<\frac{1}{4}$ arbitrarily close to $1/4$ and a properly chosen $H$.
Indeed  with this choice of $\beta(n)$ we have:
\begin{align*} 
\sum_n \frac{ \kappa\sigma^4 } {n^3(\beta_\gamma(n))^4}+ \sum_n \frac{ 3\sigma^4 }{n^2(\beta_\gamma(n))^4}=&
\sum_n \frac{\kappa\sigma^4}{H^4n^{3-4\alpha}}+ \sum_n \frac{ 3\sigma^4 }{H^4n^{2-4\alpha}}
= \frac{(\kappa+3)\sigma^4}{H^4}\sum_n \frac{1}{n^{2-4\alpha}}\left(1+\frac{1}{n}\right)\\
\leqslant & \frac{2(\kappa+3)\sigma^4}{H^4}\sum_n \frac{1}{n^{2-4\alpha}}= \frac{2(\kappa+3)\sigma^4}{H^4}\zeta(2-4\alpha)
\end{align*}
where $\zeta(z):=\sum_n \frac{1}{n^z}$, with $z\in \mathbb{C}$,  denotes the $\zeta$-Riemann function. We recall  that $\zeta(x)<\infty$ for any real $x>1$. Therefore
by selecting 
\[
\beta_\gamma(n)=Hn^{-\alpha} \qquad \text{with} \quad \alpha<\frac{1}{4} \quad \text{and} \quad  H=\sqrt[4]{\frac{(\kappa+3)\sigma^4\zeta(2-4\alpha)}{\gamma}}
\]
we guarantee that 
\[
 \mathbb{P}(\exists n \in \mathbb{N},  |\bar{x}-\mu|>\beta_\gamma(n) )<2\gamma.
\]
A tighter expression can be obtained as follows. Set $\beta(n)= \left(\frac{H (1+\ln^2 n)}{n}\right)^{\frac{1}{4}}$:
\begin{align*} 
\sum_{n=1}^{\infty} \frac{ \kappa\sigma^4 } {n^3(\beta_\gamma(n))^4}+ \sum_{n=1}^{\infty} \frac{ 3\sigma^4 }{n^2(\beta(n))^4}& \leqslant
\sum_{n=1}^{\infty} \frac{(\kappa+3)\sigma^4 }{n^2(\beta(n))^4}
= \sum_{n=1}^{\infty} \frac{(\kappa+3)\sigma^4 }{n (1+\ln^2 n) }\\
& \leqslant \frac{(\kappa +3) \sigma^4}{H}  \left( 1 + \sum_{n=2}^{\infty} \frac{1}{n (1+\ln^2 n) } \right)\\
& \leqslant \frac{(\kappa +3) \sigma^4}{H}  \left( 1 + \sum_{n=2}^{\infty} \frac{1}{n \ln^2 n } \right)\\
& \leqslant \frac{(\kappa +3) \sigma^4}{H}  \left( 1 + \frac{1}{2 \ln^2 2}+ \int_{2}^{\infty} \frac{1}{x \ln^2 x } \mathrm{d} x\right)\\
& = \frac{(\kappa +3) \sigma^4}{H}  \left( 1 + \frac{1}{2 \ln^2 2}+ \frac{1}{\ln 2}\right)\\
& \leqslant 4 \frac{(\kappa +3) \sigma^4}{H}. 
\end{align*}
Imposing that this is smaller than $2 \gamma$, we can conclude that 
by selecting 
\[
\beta_\gamma(n)= \left(2\frac{(\kappa + 3) \sigma^4}{\gamma}\right)^{\frac{1}{4}} \left(\frac{1 + \ln^2 n}{n}\right)^{\frac{1}{4}},
\]
we guarantee that 
\[
 \mathbb{P}(\exists n \in \mathbb{N},  |\bar{x}-\mu|>\beta_\gamma(n) )<2\gamma.
\]

\begin{remark}\label{rem-moments}
When $x^t$ exhibits  a larger number of finite moments we can refine our approach by employing Proposition 
\eqref{cheby} for a different (larger) choice of $i$.
Doing so, we obtain a more favorable behavior for $\beta_\gamma(n)$. In particular, we will get that 
\[
\beta_\gamma(n)= O( n^{-\alpha})  \qquad \text{with} \qquad \alpha< \frac{1}{2}\left(1-\frac{1}{i}\right) 
\]
At last, we wish to emphasize that $\beta_\gamma(n)$ can not be properly defined when distribution $D_a$ exhibits less than four bounded polynomial moments. The application 
of Chebyshev inequality ($i=1$), indeed, would  lead to a too the following weak upper bound:
\[
 \mathbb{P}(\exists n \in \mathbb{N},  |\bar{x}-\mu|>\beta-\gamma(n) )<\sum_n \frac{\sigma^2}{n(\beta_\gamma(n))^2}.
\]
Observe, indeed, that since $\zeta(1)=\sum_n \frac{1}{n}$  diverges, it is impossible the find of suitable 
expression for $\{\beta(n)\}$ which jointly satisfy: $\lim_{n\to\infty} \beta_\gamma(n)=0$ and $ \sum_n \frac{\sigma^2}{n(\beta_\gamma(n))^2}<\infty$.
\end{remark}

\end{proof}

This result is the first fundamental building block to define a notion of distance (which uses the estimates $\bar{x}$ and the parametrization $\beta_\gamma$) for which it is possible to provide guarantees about the class membership.

We have bounded the probability of not having the \textit{true} mean within the $\beta_\gamma$ confidence interval given a certain estimate $\bar{x}$ and the corresponding number of samples $n$. We now have to take a global perspective, so we consider the event 
% $E := \bigcap\limits_{a \in \mathcal{A}} \bigcap\limits_{t \in \mathbb{N}} \bigcap\limits_{a' \in \mathcal{N}_a} \left| \bar{x}_{a,a'} - \mu_{a'} \right|$,
$E := \left\{\forall a' \in \mathcal{CC}_a, \forall t \in \mathbb{N}, \forall a'' \in \mathcal{N}_{a'}, \, \left| \bar{x}_{a',a''}^t - \mu_{a''} \right| < \, \beta_\gamma (n) \right\}$,
which is equivalent to say that, for every node $a' \in \mathcal{CC}_a$ in the connected component of node $a \in \mathcal{A}$ and for every instant $t \in \mathbb{N}$, the \textit{true} mean value of each of the neighbors $a''$ of node $a'$, given the info $a'$ is able to collect (neighbor's sample mean and the number of samples), is within the confidence interval $I_{a',a''}$. We show that this holds with high probability with an appropriate choice of $\gamma$:

\begin{lemma}[Confidence of $\beta_\gamma$ interval]
\label{lemma:beta_confidence}
    Considering the interval parametrization $\beta_\gamma (n)$ (Eq. (\ref{eq:beta_mailard}) or (\ref{eq:beta_cheby})), setting $\gamma(\delta) = \frac{\delta}{4 r |\mathcal{CC}_a|}$, it holds:
    \begin{equation}
        \mathbb{P}\left(\forall a' \in \mathcal{CC}_a, \forall t \in \mathbb{N}, \forall a'' \in \mathcal{N}_{a'},  \, \left| \bar{x}_{a',a''}^t - \mu_{a''} \right| < \beta_\gamma (n_{a',a''}^t) \right) \geq 1 - \frac{\delta}{2}
    \end{equation}
\end{lemma}

\begin{proof}
    We have introduced the event $E = \left\{\forall a' \in \mathcal{CC}_a, \forall t \in \mathbb{N}, \forall a'' \in \mathcal{N}_{a'}, \, \left| \bar{x}_{a',a''}^t - \mu_{a''} \right| < \, \beta_\gamma (n^t_{a',a''}) \right\}$, it is more convenient to work with the complementary event: % so that to leverage Lemma \ref{lemma:ci}:
    \begin{align*}
        \mathbb{P} (E) = 1 - \mathbb{P} (E^c) = 1 - \mathbb{P} \left( \exists a' \in \mathcal{CC}_a, \exists t \in \mathbb{N}, \exists a'' \in \mathcal{N}_{a'} \, : \, \left| \bar{x}_{a',a''}^t - \mu_{a''} \right| > \beta_\gamma(n_{a',a''}^t) \right)
    \end{align*}
    
    Applying a union bound with respect to the nodes $a' \in \mathcal{CC}_a$ and neighbors $a'' \in \mathcal{N}_{a'}$, and using Lemma \ref{lemma:ci} ($\mathbb{P}(\exists n \in \mathbb{N}, |\bar{x} -  \mu| \geq \beta_\gamma(n)) \leq 2\gamma$), we can immediately obtain a lower bound on the probability of the event $E$:
    \begin{align*}
        \mathbb{P}(E) & \geq 1 - \sum_{a' \in \mathcal{CC}_a} \sum_{a'' \in \mathcal{N}_{a'}} \mathbb{P} \left( \exists t \in \mathbb{N} : \left| \bar{x}^t_{a',a''} -\mu_{a''} \right| \geq \, \beta_\gamma (n_{a',a''}) \right)\\
        & \geq 1 - r |\mathcal{CC}_a| (2 \gamma) = 1 - 2 r |\mathcal{CC}_a| \gamma
    \end{align*}

    Now, we set $\gamma = \frac{\delta}{4 r |\mathcal{CC}_a|}$ and thus we immediately obtain $\mathbb{P}(E) \geq 1 - \frac{\delta}{2}$. This explains the value of the constant $\gamma$ in the theorem. The above bound would then be used in the $(\varepsilon - \delta)$-convergence of the \belief\ and \consensus\ algorithm.
\end{proof}

This result provides a probabilistic bound for the situation in which the true value is not within the confidence interval and for which we cannot provide theoretical guarantees.

At this point, assuming that event $E$ holds (with high probability due to Lemma \ref{lemma:beta_confidence}), we need to show that the \textit{optimistic} distance $d^t_\gamma(a,a')$ allows an agent to discriminate whether one of its neighbors belongs to the same similarity class $\mathcal{C}_a$.

\begin{lemma}[Class membership rule]
\label{lemma:class_membership}
Conditionally over the event $E$,
we have 
    \begin{equation}
        d_{\gamma}^t (a,a') > 0 \iff a' \notin \mathcal{N}_a \cap \mathcal{C}_a
    \end{equation}   
\end{lemma}

\begin{proof}
 Defined  the \textit{optimistic} distance\footnote{For ease of notation we will use $a, a'$, instead of $a',a''$ as we did in the previous Lemma.} as $d_{\gamma}^t(a, a') := \left| \bar{x}_{a,a}^t - \bar{x}_{a,a'}^t \right| - \beta_\gamma (n_{a,a}^t) - \beta_\gamma (n_{a,a'}^t) $ and denoted with $\Delta_{a,a'} = \left| \mu_a - \mu_{a'} \right|$
 the gaps between the \textit{true} mean of agents belonging to different similarity classes, by summing and subtracting $(\mu_{a} -\mu_{a'})$ inside the absolute value, it is immediate to obtain:

    \begin{equation}
        \label{eq:distance_rewrite}
        d_{\gamma}^t (a,a') = \left| (\mu_a - \mu_{a'}) + (\bar{x}_{a,a}^t - \mu_a) - (\bar{x}_{a,a'}^t -\mu_{a'}) \right| - \beta_\gamma (n_{a,a}^t) - \beta_\gamma (n_{a,a'}^t)
    \end{equation}

    Now, we show that conditionally over the event $E$, $d^t_\gamma(a,a')$ satisfies two inequalities 
    which  allow us to determine whether two nodes belong to the same similarity class by looking at the sign of $d^t_\gamma(a,a')$. 
    %Indeed, this will justify our choice of considering nodes of the same class whenever $d^t_\gamma(a,a') \leqslant 0$.

    \textbf{(Forward implication)} First, let us apply the triangular inequality on the absolute value in Eq. (\ref{eq:distance_rewrite}) and bound it with the sum of the absolute values of the addends:
    \begin{equation*}
        d^t_\gamma (a, a') \leqslant \Delta_{a,a'} + \left| \bar{x}_{a,a}^t - \mu_a \right| + \left| \bar{x}_{a,a'}^t -\mu_{a'} \right| - \beta_\gamma (n_{a,a}^t) - \beta_\gamma (n_{a,a'}^t)
    \end{equation*}

Now, conditionally over $E$, we have that $\left| \bar{x}_{a,a}^t - \mu_a \right| \leq \, \beta_\gamma (n_{a,a}^t)$ and $\left| \bar{x}_{a,a'}^t - \mu_{a'} \right| \leq \, \beta_\gamma (n_{a,a'}^t)$. 
Therefore whenever $a' \in \mathcal{C}_a \cup \mathcal{N}_a$, i.e., $\Delta_{a,a'}=0$,  we have:
    \begin{equation*}
        d^t_\gamma (a, a') \leqslant \left| \bar{x}_{a,a}^t - \mu_a \right| - \beta_\gamma (n_{a,a}^t) + \left| \bar{x}_{a,a'}^t -\mu_{a'} \right| - \beta_\gamma (n_{a,a'}^t) \leqslant 0
    \end{equation*}

    So,  conditionally over $E$, and if $a' \in \mathcal{C}_a \cup \mathcal{N}_a$ the optimistic distance $d_\gamma^t (a,a')$ is smaller or equal than 0, i.e., $a' \in \mathcal{C}_a \cup \mathcal{N}_a \implies d_\gamma^t (a,a') \leqslant 0$. Considering the contrapositive statement, we immediately prove the \textbf{forward implication} of the lemma, namely:
    \begin{equation*}
        d_\gamma^t (a,a') > 0 \implies a' \notin \mathcal{C}_a \cup \mathcal{N}_a.
    \end{equation*}

    \textbf{(Backward implication)} At this point, %we need to prove the backward implication, i.e., 
    we need to show that conditionally over the event $E$, whenever two nodes do not belong to the same similarity class,  then the optimistic distance is positive (i.e., $a' \notin \mathcal{C}_a \cup \mathcal{N}_a \implies d_\gamma^t (a,a') > 0$). To do so, we start from Eq. (\ref{eq:distance_rewrite}), aiming at deriving a lower bound for $d^t_\gamma (a,a')$. By applying the reverse triangular inequality ($|a-b| > ||a|-|b|| \implies |a|-|b| \leqslant |a-b|$):
    %% |a-b+c| < |c| + |a-b| < |a| + |b| + |c|
    \begin{equation*}
        d^t_\gamma (a, a') \geq \left| \Delta_{a,a'} + \left( \bar{x}_{a,a}^t - \mu_a \right) \right| - \left| \bar{x}_{a,a'}^t -\mu_{a'} \right| - \beta_\gamma (n_{a,a}^t) - \beta_\gamma (n_{a,a'}^t)
    \end{equation*}
    And then recalling that $|a+b|\ge |a|-|b|$, we get:
    \begin{equation*}
        d^t_\gamma (a, a') \geq \Delta_{a,a'} - \left| \left( \bar{x}_{a,a}^t - \mu_a \right) \right| - \left| \bar{x}_{a,a'}^t -\mu_{a'} \right| - \beta_\gamma (n_{a,a}^t) - \beta_\gamma (n_{a,a'}^t)
    \end{equation*}
    Again, conditionally over $E$, and we have $\left| \bar{x}_{a,a}^t - \mu_a \right| \leqslant \beta_\gamma (n_{a,a}^t)$ and $\left| \bar{x}_{a,a'}^t - \mu_{a'} \right| \leq \, \beta_\gamma (n_{a,a'}^t)$. Therefore we can write:
    \begin{align*}
        d^t_\gamma (a, a') & \geq \Delta_{a,a'} - \left| \left( \bar{x}_{a,a}^t - \mu_a \right) \right| - \left| \bar{x}_{a,a'}^t -\mu_{a'} \right| - \beta_\gamma (n_{a,a}^t) - \beta_\gamma (n_{a,a'}^t) \\
        & \geq \Delta_{a,a'} - 2 \beta_\gamma (n_{a,a}) - 2 \beta_\gamma (n_{a,a'}^t)
    \end{align*}
    Moreover, consider that by definition\footnote{As a matter of fact, for our scalable algorithms, the inequality is always strict as $n^t_{a,a} = t$ and $n^t_{a,a'} = t-1$.} we have $n^t_{a,a} \geq n^t_{a,a'}$, thus $\beta_\gamma (n^t_{a,a}) \leqslant \beta_\gamma (n^t_{a,a'})$, so we can write:
    \begin{equation*}
         d^t_\gamma (a, a') \geq \Delta_{a,a'} - 4 \beta_\gamma (n^t_{a,a'})
    \end{equation*}
    {Now we need to observe that, whereas conditionally over $E$ in the previous case ($a' \in \mathcal{C}_a \cup \mathcal{N}_a$) the optimistic distance always keeps negative  (simply take in mind that by definition $\beta_\gamma (0) = + \infty$). When neighbor $a'$  belongs to a different similarity class of $a$
    (i.e., $a' \notin \mathcal{C}_a \cup \mathcal{N}_a$), the optimistic distance $d_\gamma^t (a,a')$ will become positive (thus signaling $a$ and $a'$ belong to different similarity classes), as soon as the collected number of samples $n^t_{a,a'}$ becomes sufficiently large to guarantee 
     $\beta_\gamma(n^t_{a,a'})<\frac{ \Delta_{a,a'}}{4}$.}

Now denoted with $\beta_\gamma ^{-1}(x)$ the inverse function of $\beta_\gamma (n)$,  and defined: 
    \begin{equation}
    \label{eq:def_n_star_aa}
        n_{a,a'}^\star = \left\lceil \beta_\gamma^{-1} \left( \frac{\Delta_{a,a'}}{4} \right) \right\rceil,
    \end{equation} 
conditionally over $E$, $ \forall \, n_{a,a'} \geq n_{a,a'}^\star$,  we have $a' \notin \mathcal{C}_a \cup \mathcal{N}_a \implies d_\gamma^t (a,a') > 0$. And this proves the \textbf{backward} implication, which concludes the proof of the lemma.
\end{proof}

To conclude our proof, observe that according to our scalable algorithms, at each time instant $t\in \mathbb{N}$ each node $a \in \mathcal{A}$ queries all the nodes that were in its \textit{estimated} similarity class at the previous step $\mathcal{C}_a^{t-1}$ (for $t=0$ all the neighbors $a' \in \mathcal{N}_a$ are contacted). Therefore, all received estimates $\bar{x}^t_{a,a'}$ suffer for a delay of 1 time instant, i.e., $n_{a,a'} = t-1$.
Now, Lemma \ref{lemma:beta_confidence}
 ensures that, by choosing $\gamma = \frac{\delta}{4 r|\mathcal{CC}_a|}$,  we have $\mathbb{P}(E)\ge 1- \frac{\delta}{2} $.   
Moreover, considering: 
   \begin{equation}
    \label{eq:def_n_star}
        \qquad  n_\gamma^{\star} (a) = \left\lceil \beta_\gamma^{-1} \left( \frac{\Delta_a}{4} \right) \right\rceil
    \end{equation}
where recall that $\Delta_a=\min_{ a'  \in {\mathcal{A}} \setminus \mathcal{C}_a}\Delta_{a,a'}$.
%$\Delta=\min_{a} \min_{ a'  \in \mathcal{N}_a\setminus \mathcal{N}_a\cap \mathcal{C}_a}\Delta_{a,a'}$;

By Lemma \ref{lemma:class_membership}  we have that, conditionally over $E$, as soon as  $t-1 \geq n^\star_{a,a'}$ we have
$d^t_\gamma (a,a')>0$ for all pairs of neighboring nodes $(a,a')$  belonging to different similarity classes, while 
$d^t_\gamma (a,a')\leqslant 0$  for all pairs of neighboring nodes $(a,a')$  belonging to the same similarity class. Therefore, whenever $t-1 \geq n_\gamma^{\star} (a)$ this holds for all the pairs in the connected component $\mathcal{CC}_a$, as $\Delta_a \geq \min_{ a'  \in {\mathcal{CC}_a}\setminus \mathcal{C}_a}\Delta_{a,a'}$.
\end{proof}

\newpage

\section{Appendix C - Extension to the Multidimensional Case}
\label{app:multidim}

In this Appendix, we show briefly how the previous approach can be generalized to the multidimensional case.
We assume that at each instant $t$ every agent $a$
generates a new sample $\bm{x}_a^t \in \mathbb{R}^K$  drawn i.i.d.~from  
%an assigned individual 
distribution $D_a$ with expected value $\bm{\mu}_a=\mathbb{E}[\bm{x}_a^t]$.
In particular, we show how the definition of confidence intervals $\beta_\gamma(n)$
can be extended to the multidimensional case.  Let $\Delta_{a,a'}:=||\bm{\mu}_a-\bm{\mu}_{a'}  ||$.

\subsection{Sub-Gaussian case}
\label{app:multi_sub}

In this case, we assume distributions $D_a$ to be multidimensional sub-Gaussians.
Recalling the definition of multidimensional sub-Gaussian random variables, we have:
\begin{definition}
A random vector $\bm{x} \in \mathbb{R}^K$ is said to be sub-Gaussian with variance proxy $\sigma^2 $ 
if it is centered and for any $\bm{u} \in \mathbb{R}^K$ such that $|| u||  = 1$, the real random variable $\bm{u}^T \bm{x}$  is sub-Gaussian with variance proxy $\sigma^2$. 
\end{definition}

Therefore, we can choose an orthonormal basis $\bm{u}_i$ $\forall 1\le i\le K$ in $\mathbb{R}^K$ 
and consider the uni-dimensional projections of samples along directions induced by  $\bm{u}_i$, $\forall i$.
 Since projected samples are uni-dimensional sub-Gaussian,  the theory developed in the previous sections applies to every projection $i$. In particular, we can compute  the {\it optimistic} distance along axes~$i$ as:
 \[
d_{i, \gamma/K}^t(a, a') := \left| \bm{u}_i^T( \bar{\bm{x}}_{a,a}^t - \bar{\bm{x}}_{a,a'}^t) \right| - \beta_{\gamma/K} (n_{a,a}^t) - \beta_{\gamma/K} (n_{a,a'}^t)
\]
 and decide that   $a'$ is maintained in $\mathcal{C}^t_a$, as long as $d_{i,\gamma/K}^t(a, a') \le 0$ $\forall\; 1\le i\le d$.
 At last  observe that  $\Delta^{(i)}_{a,a'}= \bm{u}_i^T(\bm{\mu}_a - \bm{\mu}_{a'} )$
 satisfies:
 \[
 \max_i \Delta^{(i)}_{a,a'}\ge \frac{1} {\sqrt{K}} \Delta_{a,a'}
\]
as a result of simple geometrical arguments.

\subsection{Fourth bounded moment}
We start assuming that that  $\mathbb{E}[||(\bm{x}_{a}^t-\bm{\mu}_a)||^4]<\infty$ for any $a\in \mathcal{A}$. 
Let  $\bar{\bm{x}}_a=\frac{1}{n}\sum_{t=1}^n \bm{x}^t$  and 
$Y_a:= ||\bar{\bm{x}}_a -\bm{\mu}_a ||\ge 0$, by  applying 
Proposition \ref{cheby} to $Y$
we have:
\[
\mathbb{P}(|Y_a|> \beta_\gamma(n))\le \frac{
	\mathbb{E}[Y_a^4]
}{\beta_\gamma(n)^4}
\]
Now defined with ${\mu}_{i,a}$ the expectation of the $i$-th component of samples $\bm{x}^t_a$, i.e:
\[
\mu_{i,a}:= 
\mathbb{E}[x^t_{i,a}] 
\]
considering the Euclidean norm, we have: 
\begin{align*}
\mathbb{E}[Y_a^4]&=\mathbb{E}\left[ \left(\sum_{i=1}^K (\bar{x}_{i,a}-{\mu}_{i,a})^2 \right)^2\right] =
\sum_{i=1}^K \sum_{h=1}^K  \mathbb{E}[ (\bar{{x}}_{i,a} -\mu_{i,a} )^2(  {\bar{x}}_{h,a} -\mu_{h,a} )^2]  \\
&\le   \sum_{i=1}^K \sum_{h=1}^K  \sqrt{ \mathbb{E} [(\bar{{x}}_{i,a} -\mu_{i,a} )^4 ]  \mathbb{E}[(\bar{{x}}_{h,a} -\mu_{h,a} )^4]}
\le K^2 \max_i \mathbb{E}[(\bar{{x}}_{i,a} -\mu_{i,a} )^4]
\end{align*}
where the second to last inequality follows by the application of the Cauchy-Schwarz inequality.  

Now proceeding exactly as in \eqref{bound-4-mom} we obtain that for every $i$
\[
\mathbb{E}[(\bar{{x}}_{i,a} -\mu_{i,a} )^4]\le \frac{1}{n^4} [n \kappa_i \sigma_i^4 +3n(n-1)\sigma_i^4]
\]
Now by sub-additivity of probability we get, recalling that $Y_a = || \bar{\bm{x}}_{a} - \bm{\mu}_{a} ||$:
\[
\mathbb{P}(\exists n \in \mathbb{N},  Y_a>\beta_\gamma(n) )=\mathbb{P}(\cup_n\{ Y_a>\beta_\gamma(n)\} )
< K^2 \max_i \left(\sum_n \frac{ \kappa_i\sigma^4_i }{n^3(\beta_\gamma(n))^4}+ \sum_n \frac{ 3\sigma_i^4 }{n^2(\beta_\gamma(n))^4}\right).   
\]
Then proceeding as in the proof of Theorem \ref{cor:Dcolme_thm1}, (i.e. forcing the r.h.s to be less than $2 \gamma$) we obtain the following expression for~$\beta_\gamma(n)$:
\[
\beta_\gamma(n)= \max_i \sqrt{K}
 \left(2\frac{(\kappa_i + 3) \sigma_i^4}{\gamma}\right)^{\frac{1}{4}} \left(\frac{1 + \ln^2 n}{n}\right)^{\frac{1}{4}}.
\]
At last, we have to generalize the definition of  \textit{optimistic} distance  as  follows:
\[
d_{\gamma}^t(a, a') := \left|\left| \bar{\bm{x}}_{a,a}^t - \bar{\bm{x}}_{a,a'}^t \right|\right| - \beta_\gamma (n_{a,a}^t) - \beta_\gamma (n_{a,a'}^t). \]
Then, proceeding exactly as in the previous section, 
we can extend Lemma 	\ref{lemma:class_membership} statement to  the multidimensional case
(we recall that by triangular inequality
%\begin{align}
%	\label{eq:distance_rewrite-multidim}
$	\left|\left| \bar{\bm{x}}_{a,a}^t - \bar{\bm{x}}_{a,a'}^t 	 \right|\right| = \left|\left| (\bm{\mu}_a - \bm{\mu}_{a'}) + 
(\bar{\bm{x}}_{a,a}^t - \bm{\mu}_a) - (\bar{\bm{x}}_{a,a'}^t -\bm{\mu}_{a'}) \right|\right|   \le \Delta_{a,a'}+ \left|\left|   \bar{\bm{x}}_{a,a}^t - \bm{\mu}_a   \right|\right|+ \left|\left|   \bar{\bm{x}}_{a,a'}^t -\bm{\mu}_{a'} \right|\right| $,
as well as by reverse triangular inequality
$
	\left|\left| \bar{\bm{x}}_{a,a}^t - \bar{\bm{x}}_{a,a'}^t 	 \right|\right|  \geq \Delta_{a,a'} - \left|\left|  \bar{\bm{x}}_{a,a}^t - \bm{\mu}_a  \right|\right| - 
	\left| \left| \bar{\bm{x}}_{a,a'}^t -\bm{\mu}_{a'} \right| \right|
$).

\subsection{Numerical Experiments}

We experimentally assess the \lq dicovery\rq\ phase, where nodes attempt to find peers in their neighborhood $\mathcal{N}_a$ that belong to the same similarity class (note that the guarantees we established for the estimate process, whether through the message-passing scheme or consensus, hold for each component). 

We consider a setup similar to Section~\ref{sec:pe}, with $N=10000$ agents, belonging to one of two classes: the first characterized by an all-zero vector mean $\bm{\mu_1}=\mathbf{0}$ and the second by $\bm{\mu_2}=\frac{1}{\sqrt K}\mathbf{1}$. Samples are drawn from a multivariate Gaussian distribution, where the common variance (the diagonal element in the covariance matrix) is $\sigma^2=4.0$.
We consider the canonical basis , i.e., $\{ \mathbf{e}_i \}_{i \in \{ 1, ..., K\}}$ \footnote{The vector $\mathbf{e}_i = (0,...,1,...,0)$ is the all-zero vector with a a $1$ in its i-th component.}, as the vectors for projecting our vectorial samples. The covariance matrix can be arbitrary, provided that the diagonal elements (the variances) are $\sigma^2$. We consider $\varepsilon=0.1$ and $\delta=0.1$, averaging over $10$ different realizations of the process.

\begin{figure*}[h!]
\centerline{\includegraphics[width=0.65\textwidth]{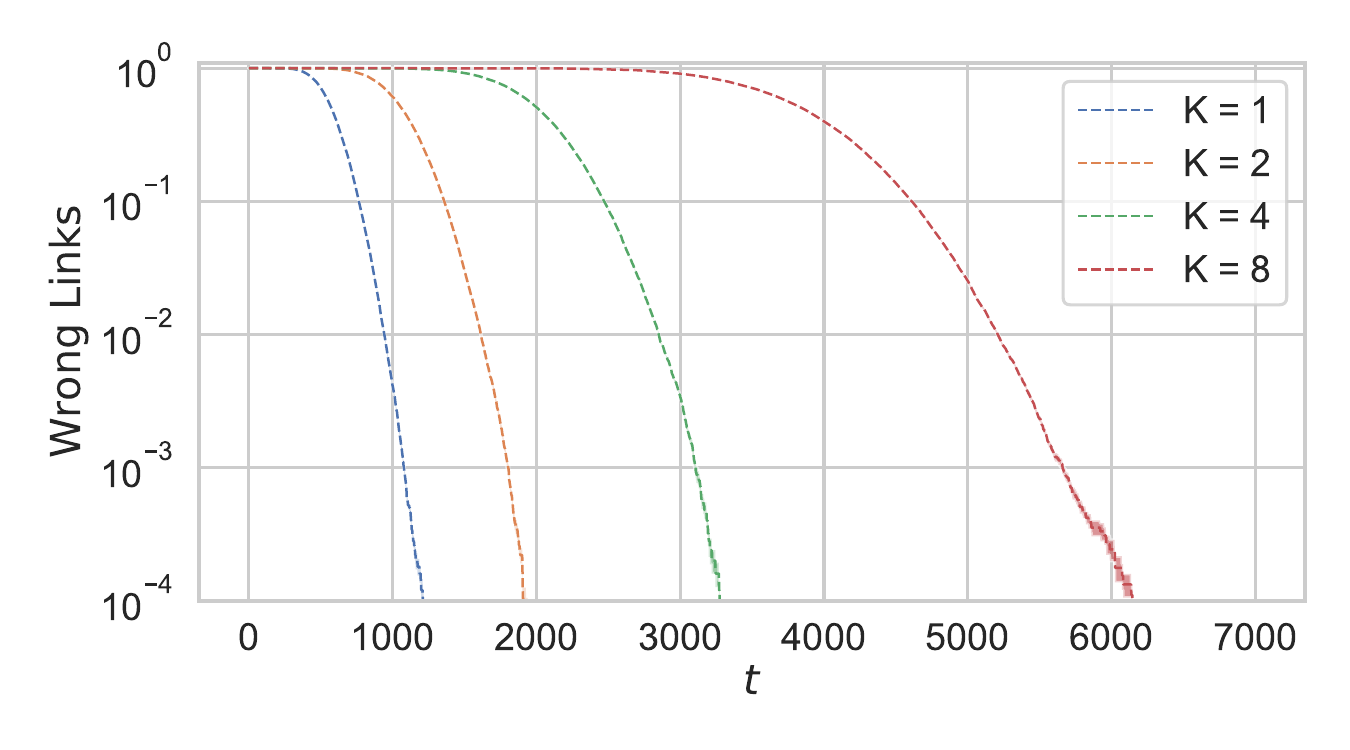}}
\caption{Fraction of the \textit{wrong} links (log scale) over time, as a function of the dimension $K$ of the mean values, $\bm{x} \in \mathbb{R}^K$.} \label{fig:multidim}
\end{figure*}

As the number of dimensions $K$ increases, the time required to discover peers in the same class also increases. However, doubling the number of dimensions results in less than a doubling of the discovery time.
This slowdown occurs because the bounds on each component need to be more stringent (note the factor $\frac{\gamma}{K}$ in Sec.~\ref{app:multi_sub}) to guarantee the same bound on the probability of incorrectly hindering a connection to a same-class peer.
% However, note also that on the flip side, it is sufficient that any of the~$K$ components of the distance~$K$ becomes positive, that the wrong link is removed, and this contributes to the sublinear increase of the discovery time.
However, it's also important to note that if any of the $K$ components of the distance~$d_{\gamma / K}^t(a,a')$ becomes positive, it is a condition sufficient to remove the incorrect link. This mechanism contributes to the sublinear increase in discovery time with respect to~$K$.

\newpage

\section{Appendix D - Schematic Representation for \belief{} }
\label{app:D}

We provide a sketch for the functioning of the \belief{} which, together with Algorithm~\ref{alg:belief} (in the main text) provides a detailed explanation of the proposed algorithm.

\begin{figure*}[h!]
\centerline{\includegraphics[width=0.95\textwidth]{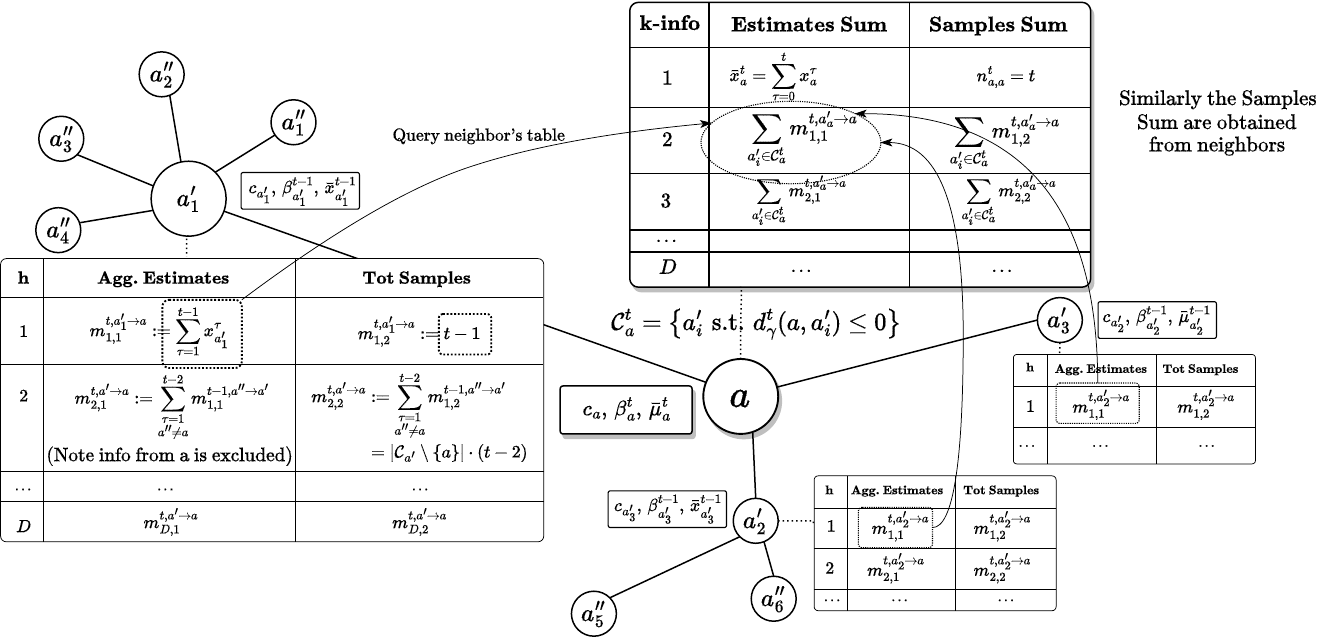}}
\caption{Simplified illustration of the functioning of the \belief{} algorithm from the point of view of node $a$ (all the quantities are already aggregated in the messages $m_{h,i}^{t, a' \rightarrow a}$, so that to exclude the \lq\lq self\rq\rq\ info sent by $a$).} \label{fig:sketch_belief}
\end{figure*}

\newpage

\section{Appendix E - Proof for \belief{} - Theorem \ref{thm:Bcolme_thm2} }
\label{appendix:th2}
%\subsection{PAC Convergence for \belief\ }

First, we recall the notion of $(\varepsilon,\delta)$-convergence (also referred to as PAC-convergence), 
that we use to assess theoretically the performance of the estimation algorithms. The definition is as follows:

\begin{definition}[PAC-convergence]
	An estimation procedure for agent $a$ is called $(\varepsilon,\delta)$-convergent if there exist $\tau_a \in \mathbb{N}$ such that:
	\begin{equation*}
		\mathbb{P} \left( \forall t > \tau_a , \left| \hat{\mu}_a^t - \mu_a \right| \leq \varepsilon \right) > 1 - \delta
	\end{equation*}
\end{definition}

% While the previous result holds for both our \textit{scalable} algorithms, here we focus on \belief{}. We derive a convergence result in the sense of PAC convergence (Definition 1).
Here we provide a convergence result in the sense of the above definition for \belief{}, we will derive a similar result for \consensus{} in Appendix \ref{sec:c_colme_proofs}.

\begin{theorem}
    Provided that $\mathcal{CC}^d_a$ has a tree structure,  for any $\delta \in (0,1)$, employing \belief (d), 
    we have:
    \begin{align*}
    	& \mathbb{P}\left(\forall t>\tau^B_a, |\hat \mu_a^t-\mu_a |<\varepsilon \right)\ge 1-\delta\\\
    	& \qquad \text{with}\quad \tau^B_a=\max\left[ \zeta_D+d, \frac{\widetilde{n}_{\frac{\delta}{2}}}{|\mathcal{CC}^d_a|} +d \right]     
    \end{align*}
where 
\[
\widetilde{n}_{\frac{\delta}{2}}(\varepsilon):=\left\{\begin{array} {ll}

 \min_{n}\left\{n : \sum_{n+1}^\infty 2Q\left(\frac{\sqrt{n}\varepsilon}{\sigma}\right)<\frac{\delta}{2}\right\}  & \text{ Gaussian distribitions} \\
\min_{n}\left\{n : \sum_{n+1}^\infty 2\exp\left(\frac{n\varepsilon^2}{2\sigma^2}\right)<\frac{\delta}{2}\right\}  & \text{ sub-Gaussian distributions } \\
\min_{n}\left\{n : \sum_{n+1}^\infty  \frac{\mu_4 +2(\sigma^2)^2}{(\varepsilon n)^2}  <\frac{\delta}{2}\right\}   &    \text{distrib. with bounded fourth moment}   \\
\end{array} \right.
\]
with 
\[
\widetilde{n}_{\frac{\delta}{2}}(\varepsilon)\le \left\{\begin{array} {ll}
\Big\lceil-\frac{\sigma^2}{\varepsilon^2} \log\left( \frac{\delta}{2}\Big(1- \mathrm{e}^{-\frac{ \varepsilon^2 }{\sigma^2}} \Big) \right) \Big \rceil    &  \text{Gaussian distributions}\\
\Big\lceil-\frac{\sigma^2}{\varepsilon^2} \log\left( \frac{\sqrt{2\pi}\varepsilon\delta}{2\sigma}\Big(1- \mathrm{e}^{-\frac{ \varepsilon^2 }{\sigma^2}} \Big) \right) \Big \rceil    &  \text{ sub-Gaussian distributions}\\
\Big \lceil \frac{2(\kappa +3)\sigma^4}{\delta \varepsilon^4}\Big \rceil     &    \text{distrib. with bounded fourth moment} 
\end{array} \right.
\]
\end{theorem}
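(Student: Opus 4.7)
The strategy is to decompose the failure event into two parts and control each by $\delta/2$ via a union bound. Let $E_1 = \{\exists t > \zeta_a^D,\ \exists a' \in \mathcal{CC}_a : \mathcal{C}_{a'}^t \neq \mathcal{C}_{a'} \cap \mathcal{N}_{a'}\}$ be the classification failure event for the entire connected component, and let $E_2 = \{\exists t > \tau_a^B : |\hat{\mu}_a^t - \mu_a| \geq \varepsilon\}$. By Theorem~\ref{cor:Dcolme_thm1}, with $\gamma = \delta/(4r|\mathcal{CC}_a|)$ we already have $\mathbb{P}(E_1) \leq \delta/2$. The bulk of the work is showing $\mathbb{P}(E_2 \mid E_1^c) \leq \delta/2$, from which the theorem follows.

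\textbf{Reducing $\hat{\mu}_a^t$ to an empirical mean.} Conditional on $E_1^c$ and using the tree structure of $\mathcal{CC}_a^d$, I would first invoke the inductive argument sketched in Sec.~\ref{sec:b_colme}: for any $t > \zeta_a^D + d$, every message entry $m^{t,a'\to a}_{h,1}$ is exactly the sum of samples generated by the agents at graph distance $h$ from $a$ (lying in $\mathcal{CC}_a^d$), each contributing its samples up to time $t-h$, and $m^{t,a'\to a}_{h,2}$ is the corresponding count, with no sample counted twice (the tree hypothesis is essential here). Consequently $\hat{\mu}_a^t$ in \eqref{e:estimator_belief} is precisely an empirical average of $N(t) := \sum_{h=0}^{d} |\{a'' \in \mathcal{CC}_a^d : \mathrm{dist}(a,a'') = h\}|(t-h) \geq |\mathcal{CC}_a^d|(t-d)$ i.i.d.\ samples drawn from distributions sharing the same mean~$\mu_a$. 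Hence whenever $t \geq \widetilde{n}_{\delta/2}(\varepsilon)/|\mathcal{CC}_a^d| + d$, we have $N(t) \geq \widetilde{n}_{\delta/2}(\varepsilon)$.

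\textbf{Concentration and union bound.} For each $t > \tau_a^B$, a standard per-$n$ tail bound applies to the empirical mean of $N(t)$ i.i.d.\ samples with mean $\mu_a$: a Gaussian/sub-Gaussian Chernoff bound of the form $2 \exp(-N(t)\varepsilon^2/(2\sigma^2))$, or a Chebyshev-type bound $(\mu_4 + 2\sigma^4)/(\varepsilon^2 N(t))^2$ under bounded fourth moment (this last one follows from \eqref{bound-4-mom} applied to the pooled empirical mean of i.i.d.\ samples with common mean and the same variance/kurtosis bound). Union-bounding over $t > \tau_a^B$, and bounding pessimistically by summing over every integer $n \geq \widetilde{n}_{\delta/2}(\varepsilon)$, I obtain
\begin{equation*}
\mathbb{P}(E_2 \mid E_1^c) \leq \sum_{n > \widetilde{n}_{\delta/2}(\varepsilon)} P_n,
\end{equation*}
where $P_n$ is the per-$n$ tail, and the definition of $\widetilde{n}_{\delta/2}(\varepsilon)$ in the statement is precisely the smallest integer making this sum $\leq \delta/2$. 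Explicit closed-form upper bounds for $\widetilde{n}_{\delta/2}(\varepsilon)$ (e.g., $\lceil 2(\kappa+3)\sigma^4/(\delta\varepsilon^4)\rceil$ for BFMD, or the logarithmic form for SGD) then follow by elementary geometric-series/integral comparisons on the tail sum.

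\textbf{Main obstacle.} The delicate step is the reduction of the recursive message-passing estimator to a clean i.i.d.\ empirical mean: it requires both the tree assumption on $\mathcal{CC}_a^d$ (to rule out double-counting along multiple paths) and the conditioning on $E_1^c$ (so that only same-class samples enter the average). Once this structural claim is in place, the rest is a routine concentration-plus-union-bound calculation, with the slight subtlety that $N(t)$ grows by $|\mathcal{CC}_a^d|$ per time step, which makes the pessimistic one-step-increment union bound used to define $\widetilde{n}_{\delta/2}$ loose but adequate for the stated bound.
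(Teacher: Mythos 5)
Your proposal is correct and follows essentially the same route as the paper: a union bound splitting the failure probability into the classification event (controlled by Theorem~\ref{cor:Dcolme_thm1}) and the estimation event, followed by the observation that, on the tree and conditional on correct classification, $\hat\mu_a^t$ is an empirical mean of at least $|\mathcal{CC}_a^d|(t-d)$ independent same-mean samples, and then a tail-sum union bound over the sample count that yields exactly the stated definition of $\widetilde{n}_{\delta/2}(\varepsilon)$ and its closed-form upper bounds. The paper's proof is organized identically, including the pessimistic summation over every integer sample count beyond $\widetilde{n}_{\delta/2}(\varepsilon)$ that you flag as loose but adequate.
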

where we recall that Gaussian distributions belong to the class of sub-Gaussians, and therefore the bound derived for sub-Gaussian distributions, can be applied also to Gaussian.

\begin{proof}
We start considering the case in which the distribution of samples for every  agent $a$  is  Gaussian (normal) with the same standard deviation $\sigma$, i.e. $D_a= \mathcal{N}(\mu_a, \sigma )$, $\forall a$. 
In such a case, if we consider an empirical average $\bar{x}(n)= \frac{1}{n}\sum_1^n x_t$ where $x_t$ are i.i.d  extracted from  $D_a$, for some $a\in \mathcal{A}$,
we have 
\[
\mathbb{P}(|\bar{x}(n)-\mu_a |>\varepsilon)= 2Q\left(\frac{\sqrt{n}\varepsilon}{\sigma}\right)
\]
indeed observe that $\bar{x}(n)$, as an immediate consequence of the elementary properties of normal random variables, is distributed as a Gaussian, with zero mean and standard deviation equal to $\frac{\sigma}{\sqrt{n}}$.

Then  given an arbitrary $n_0\in \mathbb{N}$, 
\[
\mathbb{P}(\exists n > n_0 \,:\, |\bar{x}(n)-\mu_a |>\varepsilon)\le \sum_{n_0+1}^\infty \mathbb{P}( |\bar{x}(n)-\mu_a |>\varepsilon)=
\sum_{n_0+1}^\infty 2Q\left(\frac{\sqrt{n}\varepsilon}{\sigma}\right)
\]
Observe that $\sum_{1}^\infty 2Q\left(\frac{\sqrt{n}\varepsilon}{\sigma}\right)$  converges, therefore we can safely  define 
\[
\widetilde{n}_{\frac{\delta}{2}}(\varepsilon):=\min_{n_0}\left\{n_0 : \sum_{n_0+1}^\infty 2Q\left(\frac{\sqrt{n}\varepsilon}{\sigma}\right)<\frac{\delta}{2}\right\}.
\]
Now, to conclude the proof, observe that conditionally over the fact that $\mathcal{C}_a^{t}= \mathcal{C}_a\cap \mathcal{N}_a$  for every time $t\ge t_0-d$, agent $a$  computes its average $\hat{\mu}_a$ as an average of samples collected by all 
agents in $\mathcal{CC}_a^d$. Such samples are i.i.d. and follow $D_a$; moreover, their number is easily lower bounded by 
$|\mathcal{CC}_a^d|(t_0 -d)$. Therefore whenever $|\mathcal{CC}_a^d|(t_0 -d)\ge \widetilde{n}_{\frac{\delta}{2}}$ by the definition of 
$\widetilde{n}_{\frac{\delta}{2}}$ we have 
\[
\mathbb{P}\left(\exists t>t_0 \,:\, |\hat \mu_a^t-\mu_a |>\varepsilon\; \Big| \; |\mathcal{CC}_a^d|(t_0 -d)\ge \widetilde{n}_{\frac{\delta}{2}},\,
 \mathcal{C}_a^{t_0-d}= \mathcal{C}_a\cap \mathcal{N}_a\, \forall a \in \mathcal{A}  \right)<\frac{\delta}{2}.
\]
The claim descends from the definition of $\zeta_D$ in Theorem \ref{cor:Dcolme_thm1}.

Now consider the more general case in which  the distribution of samples at nodes are Gaussian with possibly different standard deviations, uniformly bounded by $\sigma$, i.e.,
$D_a=\mathcal{N}(\mu_a, \sigma_a)$   with $\sigma_a\le \sigma$, $\forall a\in \mathcal{A}$.
In such a case if we consider an empirical average $\bar{x}(n)= \frac{1}{n}\sum_1^n x_t$  of independent  samples $x_t$ 
extracted from  Gaussian distributions with the same average $\mu_a$, but with possibly different standard deviations $\sigma_a\le \sigma$,
we have that $\bar{x}(n)$ is distributed as a Gaussian with zero mean and standard deviation smaller or equal than $\frac{\sigma}{\sqrt{n}}$.
Therefore:
\[
\mathbb{P}(|\bar{x}(n)-\mu_a |>\varepsilon)\le 2Q\left(\frac{\sqrt{n}\varepsilon}{\sigma}\right)
\]
and we can proceed exactly as in the previous case.

Now, the previous approach rather immediately extends to the case  in which  $D_a$ are sub-Gaussian  with parameter $\sigma$,
since in this case $\bar{x}(n)= \frac{1}{n}\sum_1^n x_t$  of independent  samples $x_t$ 
extracted from sub-Gaussian distributions with parameter $\sigma$ having the same average $\mu_a$ is sub-Gaussian with parameter $\sigma/\sqrt{n}$ and zero mean, and therefore by definition of sub-Gaussian (see \cite{subgaussians}  for a discussion about  sub-Gaussian distributions and their properties):
%\[
%\mathbb{P}(|\bar{x}(n)-\mu_a |>\varepsilon)\le 2Q\left(\frac{\sqrt{n}\varepsilon}{\sigma}\right).
%\]

\[
\mathbb{P}(|\bar{x}(n)-\mu_a |>\varepsilon)\le 2  \exp\left(- \frac{n\varepsilon^2}{2\sigma^2}\right) % Q\left(\frac{\sqrt{n}\varepsilon}{\sigma}\right).
\]

Now since  $\sum_{1}^\infty \exp\left(- \frac{n\varepsilon^2}{2\sigma^2}\right)$  is a converging geometrical series,  we can safely  define 
\[
\widetilde{n}_{\frac{\delta}{2}}(\varepsilon):=\min_{n_0}\left\{ n_0 : \sum_{n_0+1}^\infty 2\exp\left(- \frac{n\varepsilon^2}{2\sigma^2} \right)<\frac{\delta}{2} \right\}.
\] 
and proceed as in the previous case.

At last, consider the case in which $\mu_a$ has a fourth central moment uniformly bounded by $\mu_4$, and a variance uniformly bounded by $\sigma^2$. In this case  considering the empirical  $\bar{x}(n)= \frac{1}{n}\sum_1^n x_t$  of independent samples $x_t$ 
extracted from arbitrary distributions with the same average $\mu_a$, following the same approach as in the proof of Proposition \ref{cheby}
we can bound its fourth moment as:
\[
\mathbb{E}[(\bar{x}(n) -\mu_a)^4]\le \frac{1}{n^4} [n \kappa +3n(n-1)]\sigma^4
\]
Therefore applying Chebyshev inequality (see Proposition  \ref{cheby}) we have:
\[
\mathbb{P}(|\bar{x}{(n)}-\mu_a|\ge \varepsilon)<\frac{\mathbb{E}[(\bar{x}(n) -\mu_a)^4]}{\varepsilon^4}\le \frac{1}{(\varepsilon n)^4} [n \kappa +3n(n-1)]\sigma^4
\]

Then  given an arbitrary $n_0\in \mathbb{N}$
\[
\mathbb{P}(\exists n > n_0, \, |\bar{x}(n)-\mu_a |>\varepsilon)\le \sum_{n_0+1}^\infty \mathbb{P}( |\bar{x}(n)-\mu_a |>\varepsilon)=
\sum_{n_0+1}^\infty  \frac{1}{(\varepsilon n)^4} [[n \kappa +3n(n-1)]\sigma^4]\le 
 \frac{(\kappa +3)\sigma^4}{\varepsilon^4 n^2} 
\]
again  $\sum_{1}^\infty  \frac{(\kappa +3)\sigma^4}{\varepsilon^4 n^2}  $  converges, therefore we can  define 
\[
\widetilde{n}_{\frac{\delta}{2}}(\varepsilon)=\min_{n_0}\left\{n_0 : \sum_{n_0+1}^\infty  \frac{(\kappa +3)\sigma^4}{\varepsilon^4 n^2} <\frac{\delta}{2} \right\}
\]
Then we can proceed exactly as in previous cases.

As the last step, now we derive easy upper bounds for $\widetilde{n}_{\frac{\delta}{2}}$.  We start from the last case.
We have 
\[
\sum_{n_0+1}^\infty \frac{1}{n^2}\le \int_{n_0+1}^\infty \frac{1}{(x-1)^2} \diff x= \frac{1}{n_0}
\]
and 
\[
\sum_{n_0+1}^\infty  \frac{(\kappa +3)\sigma^4)}{\varepsilon^4 n^2}\le  \frac{(\kappa +3)\sigma^4}{\varepsilon^4n_0} \qquad n_0>1
\]
From which  we obtain that: 
\[
 \widetilde{n}_{\frac{\delta}{2}} \le \Big \lceil \frac{2(\kappa +3)\sigma^4}{\delta \varepsilon^4}\Big \rceil
\]
Now considering the Gaussian case, in this case, we can exploit the following well-known bounds for $Q(x)$:   
\[
\frac{x}{1+x^2} \theta(x) <Q(x)< \frac{1}{x}\theta(x)  \qquad \forall x\ge 0
\]
with $\theta(x)= \frac{\mathrm{e}^{-\frac{x^2}{2}}}{\sqrt{2\pi}}. $
Therefore 
\[
\sum_{n_0+1}^\infty Q\left(\frac{\sqrt{n}\varepsilon}{\sigma}\right)\le \sum_{n_0+1}^\infty 
\frac{\sigma \mathrm{e}^{-\frac{ n\varepsilon^2 }{2\sigma^2}}}{\sqrt{2\pi n}\varepsilon }\le  \sum_{n_0+1}^\infty
\frac{\sigma \mathrm{e}^{-\frac{ n\varepsilon^2 }{2\sigma^2}}}{\sqrt{2\pi }\varepsilon }=
\frac{\sigma}{\sqrt{2\pi }\varepsilon}  \mathrm{e}^{-\frac{ n_0\varepsilon^2 }{2\sigma^2}} 
\sum_{n=1}^\infty   \mathrm{e}^{-\frac{ n\varepsilon^2 }{2\sigma^2}}
\]
with 
\[
\sum_{n=1}^\infty   \mathrm{e}^{-\frac{ n\varepsilon^2 }{2\sigma^2}}=\frac{1}{1- \mathrm{e}^{-\frac{ \varepsilon^2 }{2\sigma^2}}}
\]
Therefore imposing 
\[
2\frac{\sigma}{\sqrt{2\pi }\varepsilon}  \mathrm{e}^{-\frac{ \widetilde{n}_{\frac{\delta}{2}}\varepsilon^2 }{2\sigma^2}} \frac{1}{1- \mathrm{e}^{-\frac{ \varepsilon^2 }{2\sigma^2}}}\le \frac{\delta}{2}
\]
we obtain the following upper bound on $\widetilde{n}_{\frac{\delta}{2}}$
\[
\widetilde{n}_{\frac{\delta}{2}}\le \Big \lceil-\frac{2\sigma^2}{\varepsilon^2} \ln\left( \frac{\sqrt{2\pi}\varepsilon\delta}{4\sigma}\Big(1- \mathrm{e}^{-\frac{ \varepsilon^2 }{2\sigma^2}} \Big) \right)\Big \rceil
\]
%Observe that previous bound applies (tightly)  to the case of sub-Gaussian distributions with parameter $\sigma$.
At last, consider the sub-Gaussian case:
\[
\sum_{n_0+1}^\infty \exp\left(- \frac{n\varepsilon^2}{2\sigma^2} \right)= 
\exp\left(- \frac{n_0\varepsilon^2}{2\sigma^2}\right) \sum_{n=1}^\infty\exp\left(- \frac{n\varepsilon^2}{2\sigma^2}\right)
= \frac{ \exp \left(- \frac{n_0\varepsilon^2}{2\sigma^2} \right)   }{1- \exp\left(- \frac{\varepsilon^2}{2\sigma^2}\right)}
\]
from which we obtain:
\[
\widetilde{n}_{\frac{\delta}{2}}\le \Big \lceil-\frac{2\sigma^2}{\varepsilon^2} \ln\left( \frac{\delta}{4} \Big(1- \mathrm{e}^{-\frac{ \varepsilon^2 }{2\sigma^2}} \Big) \right)\Big \rceil
\]
\end{proof}

We observe that $\widetilde{n}_{\frac{\delta}{2}}(\varepsilon)$ is smaller than the corresponding term  $\beta_{\frac{\delta}{2}}^{-1}(\varepsilon)$ appearing in Theorem \ref{thm:colme_thm2} for \colme. Our proofs can readily be adapted to \colme, enabling to substitute  $n_{\frac{\delta}{2}}^\star(\varepsilon)$ in \cite{colme} [Theorem~2] with the smaller term~$\widetilde{n}_{\frac{\delta}{2}}(\varepsilon)$.

\newpage

\section{Appendix F - Proofs for  \consensus{} - Theorem \ref{thm:fourth_moment_main_text} and Theorem \ref{thm:consensus_epsilon_delta}}
\label{sec:c_colme_proofs}

It is convenient to consider an auxiliary system for which the consensus matrix can be arbitrary until time $\zeta_d$ and then switches to a situation where agents only communicate with their neighbors belonging to the same class, i.e., $W_t = W$ for any $t\geqslant \zeta_d$ and $W_{a,a'}> 0$ if and only if $a'\in \mathcal C_a \cap \mathcal N_a$. 

We will derive some bounds for the auxiliary system for any choice of the matrices $W_1, W_2, \dots W_{\gamma_D}$. With probability $1- \frac{\delta}{2}$ these bounds apply also to the original system under study because with such probability each agent correctly detects the neighbors in the same class for any $t > \zeta_D$. From now on, we refer then to the auxiliary system.

\subsection{Preliminaries}
After time $\zeta_D$ the original graph has been then split into $C$ connected components, where component $c$ includes $n_c$ agents. By an opportune permutation of the agents, we can write the matrix $W$ as follows 
\[W=\left(\begin{array}{cccc}\prescript{}{1}W & 0_{n_1 \times n_2} & \cdots & 0_{n_1 \times n_C} \\ 
0_{n_2 \times n_1} & \prescript{}{2}W & \cdots & 0_{n_2 \times n_C} \\ 
\cdots & \cdots & \cdots & \cdots\\
0_{n_C \times n_1} & 0_{n_C \times n_2} & \cdots & \prescript{}{C}W\end{array}\right),\]
where $0_{n \times m}$ denotes an $n\times m$ matrix with 0 elements.

We focus on a given component $c$ with $n_c$ agents. All agents in the same component share the same expected value, which we denote by $\mu(c)$. Moreover, let $\mmc=\mu(c)  \mathbf{1}_c$. We denote by $\xc^t$ and $\yc^t$ the $n_c$-dimensional vectors containing the samples' empirical averages and the estimates for the agents in component $c$. 

For $t > \zeta_D$, the estimates in component $c$ evolve independently from the other components and we can write:
\[
{\yc}^{t+1}=\left(1-\alpha_t\right) \xavc^{t+1} +\alpha_t \Wc {\y}^t(c).
\]
It is then easy to prove by recurrence that 
\begin{align}
\label{e:first_recurrence}
{\yc}^{t+1} - \mmc=\alpha_{\zeta_D,t} \Wc^{t+1 - \zeta_D}(\y^{\zeta_D}(c) - \mmc) + \sum_{\tau=\zeta_D}^t (1-\alpha_\tau) \alpha_{\tau+1,t} {\Wc}^{t-\tau} (\xavc^{\tau+1} - \mmc),
\end{align}
where $\alpha_{i,j} \triangleq \prod_{\ell= i}^j \alpha_{\ell}$, with the usual convention that  $\alpha_{i,j}=1$ if $j< i$.

Let $\Pc \triangleq \frac{1}{n_c}\mathbf 1 \mathbf 1^{\top}$.
It is easy to check that the doubly-stochasticity  of $W$ implies that $(\Wc - \Pc)^t = \Wc^t - P$. From which it follows 
{\allowdisplaybreaks
\begin{align}
\label{e:second_recurrence}
{\yc}^{t+1} - \mmc& =\alpha_{\zeta_D,t} \Wc^{t+1 - \zeta_D}(\y^{\zeta_D}(c) - \mmc) + \sum_{\tau=\zeta_D}^t (1-\alpha_\tau) \alpha_{\tau+1,t} {\Wc}^{t-\tau} (\xavc^{\tau+1} - \mmc)\\
& =\alpha_{\zeta_D,t} \Wc^{t+1 - \zeta_D}(\y^{\zeta_D}(c) - \mmc) \nonumber\\
& \phantom{=} + \sum_{\tau=\zeta_D}^t (1-\alpha_\tau) \alpha_{\tau+1,t} {\Wc}^{t-\tau} (\xavc^{\tau+1} - \Pc \xavc^{\tau+1})\nonumber \\
& \phantom{=} + \sum_{\tau=\zeta_D}^t (1-\alpha_\tau) \alpha_{\tau+1,t} (\Pc \xavc^{\tau+1} - \mmc)\\
& =\alpha_{\zeta_D,t} \Wc^{t+1 - \zeta_D}(\y^{\zeta_D}(c) - \mmc) \nonumber\\
& \phantom{=} + \sum_{\tau=\zeta_D}^t (1-\alpha_\tau) \alpha_{\tau+1,t} {\Wc}^{t-\tau} (\xavc^{\tau+1} - \Pc \xavc^{\tau+1})\nonumber \\
& \phantom{=} + \sum_{\tau=\zeta_D}^t (1-\alpha_\tau) \alpha_{\tau+1,t} (\Pc \xavc^{\tau+1} - \mmc)\\
& =\alpha_{\zeta_D,t} \Wc^{t+1 - \zeta_D}(\y^{\zeta_D}(c) - \mmc) \nonumber\\
& \phantom{=} + \sum_{\tau=\zeta_D}^t (1-\alpha_\tau) \alpha_{\tau+1,t} {(\Wc - \Pc)}^{t-\tau} (\xavc^{\tau+1} - \Pc \xavc^{\tau+1})\nonumber \\
& \phantom{=} + \sum_{\tau=\zeta_D}^t (1-\alpha_\tau) \alpha_{\tau+1,t} (\Pc \xavc^{\tau+1} - \mmc). \label{e:estimation_error}
\end{align}
}

\subsection{Technical Results}
\begin{lemma}
\label{l:convolution}
For $0<\beta<1$
\begin{align}
\sum_{\tau=t_0}^t \frac{\beta^{t-\tau}}{\tau+1} & \in \bigO\left(\left(1 + \frac{1}{\ln{\frac{1}{\beta}}} \right)\frac{1}{t+1}\right)
\end{align}
\end{lemma}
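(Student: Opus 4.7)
The plan is to split the sum at its midpoint and estimate the two halves in the two regimes where, respectively, the harmonic factor $1/(\tau+1)$ and the geometric factor $\beta^{t-\tau}$ dominates. Reindexing $k = t-\tau$ and extending the range of summation gives
$$\sum_{\tau=t_0}^{t} \frac{\beta^{t-\tau}}{\tau+1} \leq \sum_{k=0}^{t} \frac{\beta^{k}}{t-k+1},$$
and I split the right-hand side at $k^\star = \lfloor t/2 \rfloor$.

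For the low-$k$ portion, the harmonic factor satisfies $1/(t-k+1) \leq 2/(t+1)$, so pulling it out and summing the geometric tail yields
$$\sum_{k=0}^{k^\star} \frac{\beta^{k}}{t-k+1} \leq \frac{2}{t+1}\cdot\frac{1}{1-\beta}.$$
A one-line verification (writing $\alpha = \ln(1/\beta)\geq 0$, the inequality $1/(1-\beta) \leq 1 + 1/\alpha$ is equivalent to $e^{-\alpha}(1+\alpha) \leq 1$, which holds because $g(\alpha)=e^{-\alpha}(1+\alpha)$ satisfies $g(0)=1$ and $g'(\alpha)=-\alpha e^{-\alpha}\leq 0$) then places this half in exactly the form claimed by the lemma, $\bigO\bigl((1+1/\ln(1/\beta))/(t+1)\bigr)$.

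For the high-$k$ portion, I pull out $\beta^{k}\leq \beta^{t/2}$ and bound the remaining harmonic tail by $H_{\lceil t/2\rceil}$:
$$\sum_{k=k^\star+1}^{t} \frac{\beta^{k}}{t-k+1} \leq \beta^{t/2}\bigl(1 + \ln(t/2+1)\bigr).$$
The main obstacle is to absorb this residual into the first estimate. Since $\beta\in(0,1)$ is bounded away from $1$ in the regime of interest (it plays the role of the spectral gap $\lambdac$), the exponential decay $\beta^{t/2}$ dominates the polylogarithmic factor, so $(t+1)\beta^{t/2}(1+\ln(t/2+1))$ is a bounded function of $t$ that vanishes as $t\to\infty$; hence the residual is $\bigO(1/(t+1))$ and is subsumed by the first bound. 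Summing the two estimates yields the claim.
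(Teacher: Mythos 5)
Your proof is correct, but it takes a genuinely different and more elementary route than the paper's. The paper factors the sum as $\beta^{t+1}\sum_{\tau}\beta^{-\tau}/\tau$, exploits that $\beta^{-\tau}/\tau$ is eventually increasing to compare with an integral, and then invokes the exponential integral $\Ei{\cdot}$ together with its asymptotic series (with an explicit remainder bound) to extract the leading term $\frac{1}{(t+1)\ln(1/\beta)}$. You instead reindex, split at the midpoint, bound the half near $\tau=t$ by a geometric series times $2/(t+1)$, and convert $1/(1-\beta)$ into the target form via the clean inequality $(1+\alpha)e^{-\alpha}\le 1$; the far half is controlled by $\beta^{t/2}$ times a harmonic sum. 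This avoids special functions entirely and is arguably simpler to verify. Both arguments produce the same leading $\beta$-dependence $1+1/\ln(1/\beta)$. One caveat worth stating explicitly: the implied constant in your bound for the high-$k$ residual, namely $\sup_t (t+1)\beta^{t/2}\bigl(1+\ln(t/2+1)\bigr)$, is finite for each fixed $\beta<1$ but degrades (by roughly a $\ln\ln$ factor beyond $1/\ln(1/\beta)$) as $\beta\uparrow 1$, so the $\bigO$ must be read as asymptotic in $t$ for fixed $\beta$ rather than uniform over $\beta\in(0,1)$. This is not a defect relative to the paper: the paper's own leftover term $\beta^{t+1}C$, with $C=\sum_{\tau=t_0}^{t'-1}\beta^{-\tau}/\tau$ and $t'\approx 1/\ln(1/\beta)$, exhibits exactly the same non-uniformity, and the lemma is only ever applied downstream with $\beta$ equal to a fixed spectral quantity of the system.
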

\begin{proof}
\begin{align}
\label{e:convolution1}
 \sum_{\tau=t_0}^t \frac{\beta^{t-\tau}}{\tau+1}
 =\beta^{t+1} \sum_{\tau=t_0+1}^{t+1}\frac{\beta^{-\tau}}{\tau} 
 \end{align}

Let  $t^{\prime}=\max \left\{\left\lceil\frac{1}{ \ln \frac{1}{\beta}}\right\rceil, t_0+1\right\}$. For  $\tau_2 \geqslant \tau_1 \geqslant t^{\prime}$, 
$\frac{\beta^{-\tau_1}}{\tau_1} \leqslant \frac{\beta^{-\tau_2}}{\tau_2}$.

{\allowdisplaybreaks
\begin{align}
\sum_{\tau=t_0+1}^{t+1} \frac{\beta^{-\tau}}{\tau} 
& =\underbrace{\sum_{\tau=t_0}^{t^{\prime}-1} \frac{\beta^{-\tau}}{\tau}}_{C}+\sum_{\tau=t^{\prime}}^t \frac{\beta^{-\tau}}{\tau} +\frac{\beta^{-t-1}}{t+1}\\
& \leqslant C+\frac{\beta^{-t-1}}{t+1}+\int_{t^\prime}^{t+1} \frac{\beta^{-\tau}}{\tau} \diff \tau \\
& =C +\frac{\beta^{-t-1}}{t+1} +\int_{t^{\prime}}^{t+1} \frac{e^{\ln \frac{1}{\beta} \tau}}{\tau} \diff \tau \\
%& \sqrt{\ln \frac{1}{\beta}} \sqrt{\tau}=x \quad d x=\frac{\sqrt{\ln \frac{1}{\beta}}}{2} \frac{1}{\sqrt{\tau}} \diff \tau \\
 & =C+\frac{\beta^{-t-1}}{t+1}+ \int_{{t^\prime} \ln \frac{1}{\beta}} ^{(t+1){\ln \frac{1}{\beta}} } \frac{e^{x}}{x} \diff x \\
% & =\textcolor{magenta}{C'}+\frac{\beta^{-t-1}}{t+1}
% + \ln \frac{1}{\beta} \left(\int_{{t^\prime} \ln \frac{1}{\beta}} ^{(t+1){\ln \frac{1}{\beta}} } \frac{e^{x}}{x} \diff x -  \frac{e^{x}}{x}\Bigg\rvert_{{t^\prime} \ln \frac{1}{\beta}} ^{(t+1){\ln \frac{1}{\beta}} }\right)\\
& \leqslant C+\frac{\beta^{-t-1}}{t+1}  
+ 
\Ei{(t+1){\ln \frac{1}{\beta}}} - { \Ei{t'\,{\ln \frac{1}{\beta}}}}\\
& \leqslant C+\frac{\beta^{-t-1}}{t+1}  
+ 
\Ei{(t+1){\ln \frac{1}{\beta}}} \\
%& \leqslant C+\frac{\beta^{-t-1}}{t+1}
%+ \ln \frac{1}{\beta} \left(
% \Ei{(t+1){\ln \frac{1}{\beta}}} - \frac{e^{(t+1)\ln{\frac{1}{\beta}}}}{(t+1)\ln{\frac{1}{\beta}}} 
%  \right)\\
& \leqslant C+\frac{\beta^{-t-1}}{t+1}
+ 
\frac{e^{(t+1)\ln{\frac{1}{\beta}}}}{(t+1)\ln{\frac{1}{\beta}}} \left(1+ \frac{3}{(t+1)\ln{\frac{1}{\beta}}}\right) \\
% & = C+\frac{\beta^{-t-1}}{t+1}
% + \frac{3}{\ln{\frac{1}{\beta}}}
% \frac{e^{(t+1)\ln{\frac{1}{\beta}}}}{(t+1)^2}\\
 & = C+
\left(1 + \frac{1}{\ln{\frac{1}{\beta}}} + \frac{3}{(t+1) \ln{\frac{1}{\beta}}} \right)\frac{\beta^{-t-1}}{t+1}, \label{e:convolution2}
\end{align}
}
where $\Ei{t}\triangleq \int_{-\infty}^t \frac{e^x}{x} \diff x$ is the exponential integral. The third inequality follows  from the series representation 
\[\Ei{t}= \frac{e^t}{t} \left( \sum_{k=0}^n \frac{k !}{t^k} + e_n(t) \right),\]
where $e_n(t) \triangleq (n+1)! t e^{-t} \int_{-\infty}^t \frac{e^x}{x^{n+2}} \diff x$ for $n=0$. The remainder $e_n(t)$ can be bounded by the $n+1$-th term times the factor $1+ \sqrt{\pi}\frac{\Gamma(n/2+3/2)}{\Gamma(n/2+1)}$~\cite{dlmf}.
%(thanks also to \cite{mathstackexchange} for the pointer). 
This factor is smaller than $3$ for $n=0$.

Finally, from \eqref{e:convolution1} and \eqref{e:convolution2}, we obtain
\begin{align}
 \sum_{\tau=t_0}^t \frac{\beta^{t-\tau}}{\tau+1}
 \leqslant \beta^{t+1} C+
\left(1 + \frac{1}{\ln{\frac{1}{\beta}}} + \frac{3}{(t+1) \ln{\frac{1}{\beta}}} \right)\frac{1}{t+1}
\in \bigO\left(\left(1 + \frac{1}{\ln{\frac{1}{\beta}}} \right)\frac{1}{t+1}\right).
\end{align}
\end{proof}

\begin{lemma}
\label{l:sums}
Let $(\z_1, \z_2, \dots, \z_t, \dots)$ be a sequence of i.i.d. vectorial random variables in $\mathbb R^n$ with expected value $\bf 0$ and finite $4$-th moment $\E[\lVert z_i \rVert^4]$, and
$\{A(t_1, t_2), (t_1,t_2)\in \mathbb N^2\}$ a set of $n \times n$ matrices with bounded norms.
%, and $\{\gamma(t_1,t_2), (t_1,t_2)\in \mathbb N^2\} $ a set of bounded coefficients. 
Let $\bb_\tau \triangleq \frac{1}{\tau}\sum_{t=1}^\tau \z_t$. It holds:

\begin{align}
\E\left[\sum_{\tau_1, \tau_2, \tau_3, \tau_4=t_0}^t \bb_{\tau_1}^\top A(\tau_1, \tau_2) \bb_{\tau_2} \bb_{\tau_3}^\top A(\tau_3, \tau_4) \bb_{\tau_4}\right] 
%\leqslant 3 \E\left[\lVert \z \rVert^4 \right] \sum_{\tau_1, \tau_2, \tau_3, \tau_4=1}^t \frac{\lVert A(\tau_1, \tau_2) \rVert \cdot \lVert A(\tau_3, \tau_4) \rVert }{ \tau_2 \tau_3 } \]
\leqslant 2 \E\left[\lVert \z \rVert^4 \right] \sum_{\tau_1, \tau_2, \tau_3, \tau_4=t_0}^t & \left(\frac{\lVert A(\tau_1, \tau_2) \rVert \cdot \lVert A(\tau_3, \tau_4) \rVert }{ \tau_2 \tau_3 } \right. \nonumber\\
& \left. + \frac{\lVert A(\tau_1, \tau_2) \rVert \cdot \lVert A(\tau_3, \tau_4) \rVert }{ \tau_2 \tau_4 } \right) \end{align}
\end{lemma}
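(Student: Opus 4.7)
The plan is to expand each empirical mean $\bb_{\tau_i} = \tau_i^{-1} \sum_{s_i=1}^{\tau_i} \z_{s_i}$ inside the scalar product, interchange expectation with the finite sums, and exploit the independence and mean-zero property of the $\z_s$ to reduce a quadruple index sum to a combinatorial enumeration of index pairings. The stated bound will then follow from uniform control of each surviving scalar expectation combined with a counting argument on the number of surviving index tuples.

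Fixing a quadruple $(\tau_1,\tau_2,\tau_3,\tau_4)$, the expansion yields
\begin{equation*}
\E\!\left[\bb_{\tau_1}^{\top} A(\tau_1,\tau_2)\bb_{\tau_2}\,\bb_{\tau_3}^{\top} A(\tau_3,\tau_4)\bb_{\tau_4}\right] = \frac{1}{\tau_1\tau_2\tau_3\tau_4}\sum_{s_1,s_2,s_3,s_4}\E\!\left[\z_{s_1}^{\top} A(\tau_1,\tau_2)\z_{s_2}\,\z_{s_3}^{\top} A(\tau_3,\tau_4)\z_{s_4}\right].
\end{equation*}
Because the $\z_s$ are iid with zero mean, the inner expectation vanishes unless every index in $\{s_1,s_2,s_3,s_4\}$ is matched with at least one other; the surviving configurations split into the all-equal diagonal $s_1=s_2=s_3=s_4$ and the three pair patterns $P_1 : \{s_1=s_2,\,s_3=s_4\}$, $P_2 : \{s_1=s_3,\,s_2=s_4\}$, $P_3 : \{s_1=s_4,\,s_2=s_3\}$. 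For every surviving tuple I would combine the elementary inequality $|\z^{\top} M \z'| \leqslant \lVert M \rVert\,\lVert \z \rVert\,\lVert \z' \rVert$, independence (to factor the expectation across distinct indices), and the Cauchy--Schwarz/Lyapunov bound $\E[\lVert \z \rVert^2]^2 \leqslant \E[\lVert \z \rVert^4]$, obtaining a uniform majorant $\lVert A(\tau_1,\tau_2) \rVert\,\lVert A(\tau_3,\tau_4) \rVert\,\E[\lVert \z \rVert^4]$ on each scalar expectation.

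I would then count surviving tuples: $P_j$ contains at most $\min(\tau_a,\tau_b)\min(\tau_c,\tau_d)$ tuples for the corresponding index pairing, and the all-equal diagonal is already absorbed in $P_1$ once the "distinct pair" restriction is dropped. Dividing these counts by $\tau_1\tau_2\tau_3\tau_4$ produces factors of the form $1/[\max(\tau_a,\tau_b)\max(\tau_c,\tau_d)]$; applying $1/\max(u,v) \leqslant 1/u$ judiciously routes $P_1$ and $P_2$ under $1/(\tau_2\tau_3)$ and $P_3$ under $1/(\tau_2\tau_4)$, yielding an aggregate bound $\leqslant 2/(\tau_2\tau_3) + 1/(\tau_2\tau_4) \leqslant 2\bigl(1/(\tau_2\tau_3) + 1/(\tau_2\tau_4)\bigr)$. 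Multiplying by the uniform scalar majorant and summing over $\tau_1,\tau_2,\tau_3,\tau_4 \in [t_0,t]$ delivers the lemma.

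The main obstacle is purely combinatorial: $P_3$'s divided count $1/[\max(\tau_1,\tau_4)\max(\tau_2,\tau_3)]$ has neither $\tau_2$ nor $\tau_3$ in its first factor, so it cannot be routed under $1/(\tau_2\tau_3)$; this asymmetric constraint is what forces the two-denominator structure $\{1/(\tau_2\tau_3),\,1/(\tau_2\tau_4)\}$ on the right-hand side and explains the prefactor~$2$. Beyond this bookkeeping, no moment estimate beyond $\E[\lVert \z \rVert^4] < \infty$ is needed, and no sharper cross-moment analysis (e.g.\ via $\mathrm{tr}(A\Sigma B^{\top}\Sigma)$) is necessary, since the crude operator-norm bound already delivers the uniform majorant.
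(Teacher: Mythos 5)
Your proposal is correct and follows essentially the same route as the paper's proof: expand the empirical means, use independence and zero mean to reduce the quadruple sample sum to the all-equal diagonal plus the three pair patterns, bound each surviving expectation uniformly by $\lVert A(\tau_1,\tau_2)\rVert\cdot\lVert A(\tau_3,\tau_4)\rVert\cdot\E[\lVert \z\rVert^4]$ via Cauchy--Schwarz, and count surviving tuples with $\min$'s so that $P_1,P_2$ are routed under $1/(\tau_2\tau_3)$ and $P_3$ under $1/(\tau_2\tau_4)$. The only cosmetic difference is bookkeeping of the diagonal term: you absorb it into $P_1$ by dropping the distinct-pair restriction, whereas the paper keeps it as a separate term bounded by $1/(\tau_2\tau_3\tau_4)\leqslant 1/(\tau_2\tau_3)$; both yield the stated constant $2$.
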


\begin{proof}
We will omit the indices when they run from $1$ to $t$ and denote by $\z$ and $\z'$ two generic independent random variables distributed as $\z_\tau$. 
{\allowdisplaybreaks
\begin{align}
& \E\left[\sum_{\tau_1, \tau_2, \tau_3, \tau_4=t_0}^t \bb_{\tau_1}^\top A(\tau_1, \tau_2) \bb_{\tau_2} \bb_{\tau_3}^\top A(\tau_3, \tau_4) \bb_{\tau_4}\right] \\
 & = \sum_{\tau_1, \tau_2, \tau_3, \tau_4=t_0}^t \frac{1}{\tau_1 \tau_2 \tau_3 \tau_4} \sum_{t_1=1}^{\tau_1}\sum_{t_2=1}^{\tau_2}\sum_{t_3=1}^{\tau_3}\sum_{t_4=1}^{\tau_4}
 \E\left[\z_{t_1}^\top A(\tau_1, \tau_2) \z_{t_2} \z_{t_3}^\top A(\tau_3, \tau_4) \z_{t_4}\right]\\
 %& = \sum_{\tau_1, \tau_2, \tau_3, \tau_4=t_0}^t \frac{1}{\tau_1 \tau_2 \tau_3 \tau_4} \sum_{t_1, t_2, t_3, t_4} 
 %\bm{1}_{t_1\leqslant \tau_1} \bm{1}_{t_2 \leqslant \tau_2} \bm{1}_{t_3\leqslant \tau_3} \bm{1}_{t_4\leqslant \tau_4}
 %\E\left[\z_{t_1}^\top A(\tau_1, \tau_2) \z_{t_2} \z_{t_3}^\top A(\tau_3, \tau_4) \z_{t_4}\right]\\
 & = \sum_{\tau_1, \tau_2, \tau_3, \tau_4=t_0}^t \frac{1}{\tau_1 \tau_2 \tau_3 \tau_4} \sum_{t_1, t_2, t_3, t_4} 
 \bm{1}_{t_1\leqslant \tau_1} \bm{1}_{t_2 \leqslant \tau_2} \bm{1}_{t_3\leqslant \tau_3} \bm{1}_{t_4\leqslant \tau_4}
 \E\left[\z_{t_1}^\top A(\tau_1, \tau_2) \z_{t_2} \z_{t_3}^\top A(\tau_3, \tau_4) \z_{t_4}\right]\\
  & = \sum_{\tau_1, \tau_2, \tau_3, \tau_4=t_0}^t \frac{1}{\tau_1 \tau_2 \tau_3 \tau_4}  \sum_{t_1, t_2, t_3, t_4} 
 \bm{1}_{t_1\leqslant \tau_1, t_2 \leqslant \tau_2, t_3\leqslant \tau_3, t_4\leqslant \tau_4}  \times \Big( \bm{1}_{t_1=t_2=t_3=t_4}
 \E\left[\z_{t_1}^\top A(\tau_1, \tau_2) \z_{t_2} \z_{t_3}^\top A(\tau_3, \tau_4) \z_{t_4}\right]\nonumber\\
 & \phantom{= \sum}+ \bm{1}_{t_1=t_2, t_3=t_4, t_1\neq t_3}
 \E\left[\z_{t_1}^\top A(\tau_1, \tau_2) \z_{t_2} \z_{t_3}^\top A(\tau_3, \tau_4) \z_{t_4}\right]\nonumber\\ 
 & \phantom{= \sum}+ \bm{1}_{t_1=t_3, t_2=t_4, t_1\neq t_2}
 \E\left[\z_{t_1}^\top A(\tau_1, \tau_2) \z_{t_2} \z_{t_3}^\top A(\tau_3, \tau_4) \z_{t_4}\right] \nonumber \\
  &{ \phantom{= \sum}+\bm{1}_{t_1=t_4, t_2=t_3, t_1\neq t_2}
 \E\left[\z_{t_1}^\top A(\tau_1, \tau_2) \z_{t_2} \z_{t_3}^\top A(\tau_3, \tau_4) \z_{t_4}\right]\Big) } \label{e:sums_expected}\\
 & = \sum_{\tau_1, \tau_2, \tau_3, \tau_4=t_0}^t \frac{1}{\tau_1 \tau_2 \tau_3 \tau_4}  \sum_{t_1} 
 \bm{1}_{t_1\leqslant \tau_1, t_1 \leqslant \tau_2, t_1\leqslant \tau_3, t_1\leqslant \tau_4}
  \E\left[\z_{t_1}^\top A(\tau_1, \tau_2) \z_{t_1} \z_{t_1}^\top A(\tau_3, \tau_4) \z_{t_1}\right]\nonumber\\
 & \phantom{=} + \sum_{\tau_1, \tau_2, \tau_3, \tau_4=t_0}^t 
 \frac{1}{\tau_1 \tau_2 \tau_3 \tau_4}  \sum_{t_1, t_3} 
 \bm{1}_{t_1\leqslant \tau_1, t_1 \leqslant \tau_2, t_3\leqslant \tau_3, t_3\leqslant \tau_4}\bm{1}_{t_1\neq t_3}
 \E\left[\z_{t_1}^\top A(\tau_1, \tau_2) \z_{t_1} \z_{t_3}^\top A(\tau_3, \tau_4) \z_{t_3}\right]\nonumber\\ 
 & \phantom{=} + \sum_{\tau_1, \tau_2, \tau_3, \tau_4=t_0}^t \frac{1}{\tau_1 \tau_2 \tau_3 \tau_4}  \sum_{t_1, t_2} 
 \bm{1}_{t_1\leqslant \tau_1, t_2 \leqslant \tau_2, t_1\leqslant \tau_3, t_2\leqslant \tau_4}\bm{1}_{t_1\neq t_2}
 \E\left[\z_{t_1}^\top A(\tau_1, \tau_2) \z_{t_2} \z_{t_1}^\top A(\tau_3, \tau_4) \z_{t_2}\right] \nonumber \\
 & {\phantom{=} + \sum_{\tau_1, \tau_2, \tau_3, \tau_4=t_0}^t \frac{1}{\tau_1 \tau_2 \tau_3 \tau_4}  \sum_{t_1, t_2} 
 \bm{1}_{t_1\leqslant \tau_1, t_2 \leqslant \tau_2, t_2\leqslant \tau_3, t_1\leqslant \tau_4}\bm{1}_{t_1\neq t_2}
 \E\left[\z_{t_1}^\top A(\tau_1, \tau_2) \z_{t_2} \z_{t_2}^\top A(\tau_3, \tau_4) \z_{t_1}\right]} \\
 & = \sum_{\tau_1, \tau_2, \tau_3, \tau_4=t_0}^t \frac{1}{\tau_1 \tau_2 \tau_3 \tau_4}  \sum_{t_1} 
 \bm{1}_{t_1\leqslant \tau_1, t_1 \leqslant \tau_2, t_1\leqslant \tau_3, t_1\leqslant \tau_4}
  \E\left[\z^\top A(\tau_1, \tau_2) \z \z^\top A(\tau_3, \tau_4) \z\right]\nonumber\\
 & \phantom{=}+ \sum_{\tau_1, \tau_2, \tau_3, \tau_4=t_0}^t 
 \frac{1}{\tau_1 \tau_2 \tau_3 \tau_4}  \sum_{t_1, t_3} 
 \bm{1}_{t_1\leqslant \tau_1, t_1 \leqslant \tau_2, t_3\leqslant \tau_3, t_3\leqslant \tau_4}\bm{1}_{t_1\neq t_3}
 \E\left[\z^\top A(\tau_1, \tau_2) \z\right] \E\left[ {\z'}^\top A(\tau_3, \tau_4) {\z'}\right]\nonumber\\ 
 & \phantom{=} + \sum_{\tau_1, \tau_2, \tau_3, \tau_4=t_0}^t \frac{1}{\tau_1 \tau_2 \tau_3 \tau_4}  \sum_{t_1, t_2} 
 \bm{1}_{t_1\leqslant \tau_1, t_2 \leqslant \tau_2, t_1\leqslant \tau_3, t_2\leqslant \tau_4}\bm{1}_{t_1\neq t_2}
 \E\left[\z^\top A(\tau_1, \tau_2) \z' \z^\top A(\tau_3, \tau_4) \z'\right] \nonumber\\
 & {\phantom{=} + \sum_{\tau_1, \tau_2, \tau_3, \tau_4=t_0}^t \frac{1}{\tau_1 \tau_2 \tau_3 \tau_4}  \sum_{t_1, t_2} 
 \bm{1}_{t_1\leqslant \tau_1, t_2 \leqslant \tau_2, t_2\leqslant \tau_3, t_1\leqslant \tau_4}\bm{1}_{t_1\neq t_2}
 \E\left[\z^\top A(\tau_1, \tau_2) \z' \z'^\top A(\tau_3, \tau_4) \z\right] }\\ 
 & = \sum_{\tau_1, \tau_2, \tau_3, \tau_4=t_0}^t \frac{\min\{\tau_1, \tau_2, \tau_3, \tau_4\}}{\tau_1 \tau_2 \tau_3 \tau_4}  
  \E\left[\z^\top A(\tau_1, \tau_2) \z \z^\top A(\tau_3, \tau_4) \z\right]\nonumber\\
 & \phantom{=}+ \sum_{\tau_1, \tau_2, \tau_3, \tau_4=t_0}^t 
 \frac{\min\{\tau_1, \tau_2\}(\min\{\tau_3, \tau_4\}-1)}{\tau_1 \tau_2 \tau_3 \tau_4}  
 \E\left[\z^\top A(\tau_1, \tau_2) \z\right] \E\left[ {\z'}^\top A(\tau_3, \tau_4) {\z'}\right]\nonumber\\ 
 & \phantom{=}+ \sum_{\tau_1, \tau_2, \tau_3, \tau_4=t_0}^t \frac{\min\{\tau_1, \tau_3\}(\min\{\tau_2, \tau_4\}-1)}{\tau_1 \tau_2 \tau_3 \tau_4}  
 \E\left[\z^\top A(\tau_1, \tau_2) \z' \z^\top A(\tau_3, \tau_4) \z'\right] \nonumber\\
  & {\phantom{=}+ \sum_{\tau_1, \tau_2, \tau_3, \tau_4=t_0}^t \frac{\min\{\tau_1, \tau_4\}(\min\{\tau_2, \tau_3\}-1)}{\tau_1 \tau_2 \tau_3 \tau_4}  
 \E\left[\z^\top A(\tau_1, \tau_2) \z' (\z')^\top A(\tau_3, \tau_4) \z\right] }
 \label{e:sums_three_terms},
 \end{align}
 }
 where \eqref{e:sums_expected} follows from the independence of the variables $\{\z_t\}_{t \in \mathbb N}$ and the fact that $\E[\z_t] = \bm{0}$ for any $t$.
 
 Now we can upperbound the three terms in \eqref{e:sums_three_terms}.
 \begin{align}
 & \E\left[\sum_{\tau_1, \tau_2, \tau_3, \tau_4=t_0}^t \bb_{\tau_1}^\top A(\tau_1, \tau_2) \bb_{\tau_2} \bb_{\tau_3}^\top A(\tau_3, \tau_4) \bb_{\tau_4}\right]\\
 & \leqslant \sum_{\tau_1, \tau_2, \tau_3, \tau_4=t_0}^t \frac{1}{ \tau_2 \tau_3 \tau_4}  
  \E\left[\z^\top A(\tau_1, \tau_2) \z \z^\top A(\tau_3, \tau_4) \z\right]\nonumber\\
 & \phantom{\leqslant}+ \sum_{\tau_1, \tau_2, \tau_3, \tau_4=t_0}^t 
 \frac{1}{\tau_2 \tau_3}  
 \E\left[\z^\top A(\tau_1, \tau_2) \z\right] \E\left[ {\z'}^\top A(\tau_3, \tau_4) {\z'}\right]\nonumber\\ 
 & \phantom{\leqslant} + \sum_{\tau_1, \tau_2, \tau_3, \tau_4=t_0}^t \frac{1}{\tau_2 \tau_3}  
 \E\left[\z^\top A(\tau_1, \tau_2) \z' \z^\top A(\tau_3, \tau_4) \z'\right] \nonumber\\
& {\phantom{\leqslant} + \sum_{\tau_1, \tau_2, \tau_3, \tau_4=t_0}^t \frac{1}{\tau_2 \tau_4}  
 \E\left[\z^\top A(\tau_1, \tau_2) \z' \z'^\top A(\tau_3, \tau_4) \z\right]}\\
 & \leqslant \sum_{\tau_1, \tau_2, \tau_3, \tau_4=t_0}^t \frac{1}{ \tau_2 \tau_3 \tau_4}  
  \lVert A(\tau_1, \tau_2) \rVert \cdot \lVert A(\tau_3, \tau_4) \rVert \cdot \E\left[\lVert \z \rVert^4 \right] \nonumber\\
 & \phantom{\leqslant}+ \sum_{\tau_1, \tau_2, \tau_3, \tau_4=t_0}^t 
 \frac{1}{\tau_2 \tau_3}  
 \lVert A(\tau_1, \tau_2) \rVert \cdot \lVert A(\tau_3, \tau_4) \rVert \cdot \E\left[\lVert \z \rVert^2 \right]^2\nonumber\\ 
 & \phantom{\leqslant} + \sum_{\tau_1, \tau_2, \tau_3, \tau_4=t_0}^t \frac{1}{\tau_2 \tau_3}  \lVert A(\tau_1, \tau_2) \rVert \cdot \lVert A(\tau_3, \tau_4) \rVert \cdot
 \E\left[\lVert \z \rVert^2 \cdot \lVert \z' \rVert^2\right] \nonumber\\
 & {\phantom{\leqslant} + \sum_{\tau_1, \tau_2, \tau_3, \tau_4=t_0}^t \frac{1}{\tau_2 \tau_4}  \lVert A(\tau_1, \tau_3) \rVert \cdot \lVert A(\tau_3, \tau_4) \rVert \cdot
 \E\left[\lVert \z \rVert^2 \cdot \lVert \z' \rVert^2\right] }\\ 
 & {\leqslant 2 \E\left[\lVert \z \rVert^4 \right] \sum_{\tau_1, \tau_2, \tau_3, \tau_4=t_0}^t \left(\frac{\lVert A(\tau_1, \tau_2) \rVert \cdot \lVert A(\tau_3, \tau_4) \rVert }{ \tau_2 \tau_3 } + \frac{\lVert A(\tau_1, \tau_2) \rVert \cdot \lVert A(\tau_3, \tau_4) \rVert }{ \tau_2 \tau_4 } \right)}. \nonumber
%\textcolor{magenta}{ & \phantom{\leqslant}   + \E\left[\lVert \z \rVert^4 \right] \sum_{\tau_1, \tau_2, \tau_3, \tau_4=t_0}^t \frac{\lVert A(\tau_1, \tau_2) \rVert \cdot \lVert A(\tau_3, \tau_4) \rVert }{ \tau_1 \tau_3 } }\\
 % & \textcolor{magenta}{  
 % \leqslant 4 \E\left[\lVert \z \rVert^4 \right] 
 % \max \left( 
 % \sum_{\tau_1, \tau_2, \tau_3, \tau_4=1}^t \frac{\lVert A(\tau_1, \tau_2) \rVert \cdot \lVert A(\tau_3, \tau_4) \rVert }{ \tau_2 \tau_3 },  
 % \sum_{\tau_1, \tau_2, \tau_3, \tau_4=1}^t \frac{\lVert A(\tau_1, \tau_3) \rVert \cdot \lVert A(\tau_3, \tau_4) \rVert }{ \tau_1 \tau_3 }
 % \right) }. 
\end{align}
\end{proof}

We now particularize the result of Lemma~\ref{l:sums} to the two cases of interest for what follows.

\begin{corollary}
\label{cor:sums_stochastic_matrix}
Let $(\z_1, \z_2, \dots, \z_t, \dots)$ be a sequence of i.i.d. vectorial random variables in $\mathbb R^n$ with expected value $\bf 0$ and finite $4$-th moment $\E[\lVert z_i \rVert^4]$ and $\{A(t_1, t_2), (t_1, t_2) \in \mathbb N^2\}$ a set of $n \times n$ symmetric stochastic matrices. Let $\bb_\tau \triangleq \frac{1}{\tau}\sum_{t=1}^\tau \z_t$. It holds:

\[\E\left[\sum_{\tau_1, \tau_2, \tau_3, \tau_4=t_0}^t \bb_{\tau_1}^\top A(\tau_1, \tau_2) \bb_{\tau_2} \bb_{\tau_3}^\top A(\tau_3, \tau_4) \bb_{\tau_4}\right] 
\leqslant 4 \E\left[\lVert \z \rVert^4 \right] t^2 (1+ \ln t)^2. \]
\end{corollary}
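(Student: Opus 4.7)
The plan is to reduce the general estimate of Lemma~\ref{l:sums} to the present setting by exploiting two structural features of the matrices $A(\tau_i,\tau_j)$. The first observation is that for any symmetric stochastic matrix $A$ one has $\lVert A \rVert = 1$: symmetry forces the spectral norm (the largest singular value) to coincide with the spectral radius, and Perron--Frobenius theory guarantees that the spectral radius of a stochastic matrix equals $1$ (the all-ones vector is a right eigenvector of eigenvalue $1$, and every other eigenvalue lies in the closed unit disk). I can therefore substitute $\lVert A(\tau_i,\tau_j)\rVert = 1$ throughout the bound provided by Lemma~\ref{l:sums}, reducing the estimate to a purely scalar quadruple sum.

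With the operator norms gone, the upper bound becomes $2\,\E[\lVert \z\rVert^4]$ times
\[
\sum_{\tau_1,\tau_2,\tau_3,\tau_4=t_0}^{t} \Big(\frac{1}{\tau_2\tau_3} + \frac{1}{\tau_2\tau_4}\Big).
\]
The main step is then to factor each quadruple sum as a product of independent one-dimensional sums. In the first term, $\tau_1$ and $\tau_4$ are free and each contributes at most $t-t_0+1\leqslant t$ terms, while $\tau_2$ and $\tau_3$ are coupled only through the separable factor $(1/\tau_2)(1/\tau_3)$ and each contributes the harmonic partial sum $\sum_{\tau=t_0}^{t} 1/\tau$, bounded by $1+\ln t$ for $t_0 \geqslant 1$. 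An identical factorization handles the second term with the roles of $\tau_3$ and $\tau_4$ swapped. Each of the two quadruple sums is therefore at most $t^{2}(1+\ln t)^{2}$; summing the two contributions and combining with the prefactor $2\,\E[\lVert \z\rVert^4]$ yields the claimed $4\,\E[\lVert \z\rVert^4]\, t^{2}(1+\ln t)^{2}$.

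There is essentially no serious obstacle: the corollary is a routine specialization of Lemma~\ref{l:sums}. The only slightly non-trivial ingredient is the identity $\lVert A\rVert = 1$, which genuinely uses both hypotheses on the matrices $A(\tau_i,\tau_j)$, symmetry (so that the spectral norm equals the spectral radius) and stochasticity (so that the spectral radius equals $1$). Everything else reduces to separating the variables in the quadruple sums and applying the elementary bound on the harmonic partial sum.
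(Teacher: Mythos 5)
Your proposal is correct and follows essentially the same route as the paper: the paper's proof consists precisely of the two observations you make, namely that $\lVert A(\tau_1,\tau_2)\rVert = 1$ for symmetric stochastic matrices and that $\sum_{\tau=1}^{t} 1/\tau \leqslant 1+\ln t$, after which the quadruple sums factor as you describe. Your write-up merely spells out the bookkeeping that the paper leaves implicit.
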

\begin{proof}
It is sufficient to observe that $\lVert A(\tau_1, \tau_2)\rVert= 1$ and that $\sum_{\tau=1}^t \frac{1}{\tau} \leqslant 1 + \ln t$.
\end{proof}

\begin{corollary}
\label{cor:sums_stochastic_matrix_minus_projector}
Let $(\z_1, \z_2, \dots, \z_t, \dots)$ be a sequence of i.i.d. vectorial random variables in $\mathbb R^n$ with expected value $\bf 0$ and finite $4$-th moment $\E[\lVert z_i \rVert^4]$ and $\{A(t_1, t_2), (t_1, t_2) \in \mathbb N^2\}= \beta^{2t-t_1-t_2} B^{2t-t_1-t_2}$, where $\beta \in [0,1]$, B a symmetric matrix.
%is an irreducible symmetric stochastic matrix with non-negative diagonal elements, 
%and $P = \frac{1}{n} \mathbf{1}\mathbf{1}^\top$. 
Let  $\rho(B)$ denote the spectral norm of $B$ and $\bb_\tau \triangleq \frac{1}{\tau}\sum_{t=1}^\tau \z_t$. If $\beta \rho(B)<1$, then it holds:
%Let  $\lambda_2(B)$ denote the module of the second largest eigenvalue of $B$ in module and $\bb_\tau \triangleq \frac{1}{\tau}\sum_{t=1}^\tau \z_t$. If $\beta \lambda_2(B)<1$, then  holds:

\[\E\left[\sum_{\tau_1, \tau_2, \tau_3, \tau_4=t_0}^t \bb_{\tau_1}^\top A(\tau_1, \tau_2) \bb_{\tau_2} \bb_{\tau_3}^\top A(\tau_3, \tau_4) \bb_{\tau_4}\right] 
\in \bigO \left( \E\left[\lVert \z \rVert^4 \right] \frac{1}{(1-\beta \rho(B))^2} \left(1 + \frac{1}{\ln{\frac{1}{\beta \rho(B)}}} \right)^2\frac{1}{(t+1) ^2}\right). \]
\end{corollary}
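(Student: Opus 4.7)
The plan is to start from the bound of Lemma~\ref{l:sums} and reduce the quadruple sum to a product of two decoupled double sums. The first step is to bound the matrix norms: since $B$ is symmetric, $B^k$ is symmetric with spectral norm equal to $\rho(B)^k$, so $\lVert A(\tau_i,\tau_j)\rVert = (\beta\rho(B))^{2t-\tau_i-\tau_j}$. Writing $\beta' \triangleq \beta\rho(B) \in [0,1)$, the bound from Lemma~\ref{l:sums} becomes
\[
2 \E\left[\lVert \z \rVert^4 \right] \sum_{\tau_1, \tau_2, \tau_3, \tau_4=t_0}^t \left(\frac{(\beta')^{4t-\tau_1-\tau_2-\tau_3-\tau_4} }{ \tau_2 \tau_3 } + \frac{(\beta')^{4t-\tau_1-\tau_2-\tau_3-\tau_4} }{ \tau_2 \tau_4 } \right).
\]

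The second step is to observe that both summands factorize perfectly. For instance,
\[
\sum_{\tau_1, \tau_2, \tau_3, \tau_4=t_0}^t \frac{(\beta')^{4t-\tau_1-\tau_2-\tau_3-\tau_4}}{\tau_2 \tau_3} = \left(\sum_{\tau_1=t_0}^t (\beta')^{t-\tau_1}\right)\left(\sum_{\tau_2=t_0}^t \frac{(\beta')^{t-\tau_2}}{\tau_2}\right)\left(\sum_{\tau_3=t_0}^t \frac{(\beta')^{t-\tau_3}}{\tau_3}\right)\left(\sum_{\tau_4=t_0}^t (\beta')^{t-\tau_4}\right),
\]
and analogously for the term with $\tau_2 \tau_4$ in the denominator.

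The third step handles each one-dimensional sum. The plain geometric sums satisfy $\sum_{\tau=t_0}^t (\beta')^{t-\tau} \leqslant 1/(1-\beta')$, while the convolution-type sums are exactly those treated by Lemma~\ref{l:convolution}, yielding $\sum_{\tau=t_0}^t (\beta')^{t-\tau}/\tau \in \bigO\bigl((1+1/\ln(1/\beta'))/(t+1)\bigr)$. Multiplying the four factors in each of the two terms and collecting constants produces the claimed
\[
\bigO\!\left( \E\left[\lVert \z \rVert^4 \right] \frac{1}{(1-\beta')^2} \left(1 + \frac{1}{\ln(1/\beta')} \right)^2\frac{1}{(t+1)^2}\right).
\]

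There is no real obstacle here: the statement is essentially a bookkeeping consequence of the two preceding lemmas, and the only point that requires a moment of care is the clean separation of the quadruple sum into independent one-dimensional sums, which works only because the weights $\lVert A(\tau_i,\tau_j)\rVert$ depend on $\tau_i$ and $\tau_j$ through $\tau_i + \tau_j$ (so that the exponents simply add across the four indices) and because the $1/\tau$ factors attach to single indices (the $\tau_2\tau_3$ and $\tau_2\tau_4$ patterns). Once those two structural facts are in place, the corollary reduces to invoking Lemma~\ref{l:convolution} twice and a geometric series twice.
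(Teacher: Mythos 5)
Your proposal is correct and follows essentially the same route as the paper: apply Lemma~\ref{l:sums} with $\lVert A(\tau_i,\tau_j)\rVert = (\beta\rho(B))^{2t-\tau_i-\tau_j}$ (using symmetry of $B$), factor the quadruple sum into two plain geometric sums and two convolution sums, bound the former by $1/(1-\beta\rho(B))$ and the latter via Lemma~\ref{l:convolution}. The paper merely adds the cosmetic step of noting that the $\tau_2\tau_3$ and $\tau_2\tau_4$ terms contribute identically (hence a factor $4$ instead of $2\times 2$), which changes nothing.
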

\begin{proof}
% We observe that the largest module of the eigenvalues of $B$ is equal to $1$, because $B$ is symmetric and stochastic. It follows that $\rho(B)\le 1$. The condition $\beta \rho(B) < 1$ excludes then that both values may be equal to 1.
As $B$ is symmetric, then $\lVert B \rVert = \rho(B)$.
%As $B$ is symmetric and stochastic, then the module of its largest eigenvalue is equal to $1$. Moreover, $B$ is  irreducible with non-negative elements on the diagonal, then it is primitive, i.e., $1$ is the only eigenvalue on the unit circle. 
From Lemma~\ref{l:sums}, we obtain
\begin{align}
  & \E\left[\sum_{\tau_1, \tau_2, \tau_3, \tau_4=t_0}^t \bb_{\tau_1}^\top A(\tau_1, \tau_2) \bb_{\tau_2} \bb_{\tau_3}^\top A(\tau_3, \tau_4) \bb_{\tau_4}\right] \nonumber\\ 
 & \leqslant 2 \E\left[\lVert \z \rVert^4 \right] \left(\sum_{\tau_1, \tau_2, \tau_3, \tau_4=1}^t \frac{(\beta \lambda_2(B))^{4 t- \tau_1-\tau_2-\tau_3-\tau_4}}{ \tau_2 \tau_3 }
 + \sum_{\tau_1, \tau_2, \tau_3, \tau_4=1}^t \frac{(\beta \lambda_2(B))^{4 t- \tau_1-\tau_2-\tau_3-\tau_4}}{ \tau_2 \tau_4 }
 \right)
 \\   
 & =4 \E\left[\lVert \z \rVert^4 \right] \sum_{\tau_1, \tau_2, \tau_3, \tau_4=1}^t \frac{(\beta \lambda_2(B))^{4 t- \tau_1-\tau_2-\tau_3-\tau_4}}{ \tau_2 \tau_3 }
 \\   
& = 4 \E\left[\lVert \z \rVert^4 \right] \left(\sum_{\tau_1=1}^t (\beta \lambda_2(B))^{ t- \tau_1}\right)^2 \left(\sum_{\tau_2=1}^t \frac{(\beta \lambda_2(B))^{ t- \tau_2}}{\tau_2}\right)^2\\
& \le 4 \E\left[\lVert \z \rVert^4 \right] \frac{1}{(1-\beta \lambda_2(B))^2} \left(\sum_{\tau_2=1}^t \frac{(\beta \lambda_2(B))^{ t- \tau_2}}{\tau_2}\right)^2\\
& \in \bigO \left( \E\left[\lVert \z \rVert^4 \right] \frac{1}{(1-\beta \lambda_2(B))^2} \left(1 + \frac{1}{\ln{\frac{1}{\beta \lambda_2(B)}}} \right)^2\frac{1}{(t+1) ^2}\right),
\end{align}
 where in the last step we used Lemma~\ref{l:convolution}.
\end{proof}

\begin{lemma}
    \label{l:second_eigenvalue}
    Let $W$ be an $n \times n$ symmetric, stochastic, and irreducible matrix and $P = \frac{1}{n} \mathbf 1_n \mathbf 1_n^\top$, where $\mathbf{1}_n$ is an $n$-dimensional vector whose elements are all equal to $1$, then $\rho(W-P) = \lambda_2(W)<1$.
\end{lemma}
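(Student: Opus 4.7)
My plan is to leverage the spectral decomposition of $W$, exploit the fact that $P$ is precisely the spectral projector onto the top eigenspace, and then read off the eigenvalues of $W-P$ directly.

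First I would invoke the spectral theorem: since $W$ is symmetric, it admits a decomposition $W = \sum_{i=1}^n \lambda_i v_i v_i^\top$, with real eigenvalues $\lambda_1 \geqslant \lambda_2 \geqslant \cdots \geqslant \lambda_n$ and an orthonormal basis $\{v_i\}$. By Perron--Frobenius applied to the non-negative irreducible stochastic matrix $W$, the eigenvalue $1$ is simple with eigenvector proportional to $\mathbf{1}_n$, so $\lambda_1 = 1$ and we may take $v_1 = \frac{1}{\sqrt{n}} \mathbf{1}_n$. Consequently $P = \frac{1}{n}\mathbf{1}_n \mathbf{1}_n^\top = v_1 v_1^\top$ coincides exactly with the orthogonal projector onto the top eigenspace of $W$.

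Next, by orthonormality $v_1 v_1^\top v_i = 0$ for $i \geqslant 2$ and $v_1 v_1^\top v_1 = v_1$, so subtracting $P$ removes only the top component of the spectral decomposition:
\begin{equation*}
W - P = \sum_{i=2}^n \lambda_i v_i v_i^\top.
\end{equation*}
This identifies the eigenvalues of $W-P$ as $0, \lambda_2, \lambda_3, \dots, \lambda_n$, with the same orthonormal eigenvectors. Because $W-P$ is symmetric, its spectral radius equals its operator norm, yielding $\rho(W-P) = \max_{i \geqslant 2} |\lambda_i(W)| = \lambda_2(W)$ in the notation of the paper (where $\lambda_2$ denotes the second largest modulus).

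Finally, to obtain the strict inequality $\lambda_2(W) < 1$, I would argue that no eigenvalue other than $\lambda_1$ has modulus one. Irreducibility alone forces $\lambda_1=1$ to be simple, but a priori an eigenvalue $-1$ could exist for a periodic (bipartite) $W$. The main subtlety will be to rule this out: in the algorithms considered here, the consensus matrix $W$ has strictly positive diagonal entries (by the decentralized configuration rule in Sec.~\ref{sec:param_setting}, $(W)_{a,a} > 0$ for every $a$), which makes $W$ primitive (aperiodic). Perron--Frobenius for primitive stochastic matrices then gives $|\lambda_i| < 1$ for all $i \geqslant 2$, completing the proof. The subtlety I flag is exactly this step, as it requires flagging the primitivity hypothesis that is implicit in the rest of the analysis rather than stated explicitly in the lemma.
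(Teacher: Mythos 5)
Your proof is correct and follows essentially the same route as the paper's: the spectral theorem, the identification of $P$ as the spectral projector onto the eigenspace of the simple eigenvalue $1$, and Perron--Frobenius primitivity to exclude any other eigenvalue of modulus one. Your explicit observation that irreducibility alone does not rule out an eigenvalue $-1$ (bipartite case) and that the strictly positive diagonal entries guaranteed by the configuration rule must be invoked is exactly the step the paper also relies on, though the paper states it slightly loosely as ``irreducible with non-negative elements on the diagonal,'' where positivity of (at least one) diagonal entry is what primitivity actually requires.
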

\begin{proof}
As $W$ is symmetric and stochastic, then the module of its largest eigenvalue is equal to $1$.
The vector $\mathbf 1_n$ is both a left and right eigenvector of $W$ relative to the simple eigenvalue $1$. Then, $P$ is the projector onto the null space of $W- I$ along the range of $W- I$ \cite{meyer01}[p.518]. The spectral theorem leads us to conclude that 
the eigenvalues of $W-P$ (counted with their multiplicity) are then all eigenvalues of $W$ except $1$ and with the addition of $0$. We can then conclude that $\lVert W-P \rVert = \rho(W-P) = \lambda_2(W)$. 
$W$ is irreducible with non-negative elements on the diagonal, then it is primitive~\cite{meyer01}[Example 8.3.3], i.e., $1$ is the only eigenvalue on the unit circle. 
    
\end{proof}

\begin{lemma}
    \label{l:nasty_inequality}
    For any  $a> 0 $, if $x \ge \max\{a \ln^2 a,1\}$ then $x\geqslant \frac{a}{4} \ln^2 x$.    
\end{lemma}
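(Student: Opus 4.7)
The plan is to recast the claim as a statement about the function $f(x) := x/\ln^2 x$, since the inequality $x \ge \tfrac{a}{4}\ln^2 x$ is equivalent to $f(x) \ge a/4$ whenever $x > 1$. First I would dispose of the trivial boundary: at $x=1$ the right-hand side is $0$, so the inequality holds automatically. From then on I can freely assume $x > 1$.

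Second, I would analyze $f$ on $(1,\infty)$. A direct derivative computation gives $f'(x) = (\ln x - 2)/\ln^3 x$, so $f$ is strictly decreasing on $(1,e^2)$, strictly increasing on $(e^2,\infty)$, and attains its global minimum $f(e^2) = e^2/4$ at $x=e^2$. This single structural fact drives the remaining argument.

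Third, I would split on $a$. If $a \le e^2$, then $a/4 \le e^2/4 = \min_{x>1} f(x)$, so $f(x) \ge a/4$ for every $x>1$ with no further work. If $a > e^2$, then $a\ln^2 a > e^2 \cdot 4 > e^2$, so any $x$ with $x \ge a\ln^2 a$ lies in the increasing branch of $f$; hence
\[
f(x) \;\ge\; f(a\ln^2 a) \;=\; \frac{a\ln^2 a}{(\ln a + 2\ln\ln a)^2}.
\]
Rearranging $f(a\ln^2 a) \ge a/4$ reduces the task to the one-variable inequality $\ln a \ge 2\ln\ln a$.

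Finally, I would verify $\ln a \ge 2\ln\ln a$ for $a \ge e^2$ by setting $u := \ln a \ge 2$ and showing $k(u) := u - 2\ln u \ge 0$ on $[2,\infty)$: indeed $k(2) = 2 - 2\ln 2 > 0$ and $k'(u) = 1 - 2/u \ge 0$ there, so $k$ is non-decreasing and stays non-negative. The only mild subtlety is that the two cases must meet cleanly at $a = e^2$, which is precisely the minimizer of $f$, so no gap arises. There is no real obstacle; the work is a careful calculus case split built around the minimum of $f$.
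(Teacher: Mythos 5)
Your proof is correct, but it takes a genuinely different route from the paper's. The paper's argument is a three-line implication chain: for $a\ge 1$ it takes square roots of $x\ge a\ln^2 a$ to get $\sqrt{x}\ge\sqrt{a}\ln a$, invokes Lemma~A.1 of Shalev-Shwartz and Ben-David to pass to $\sqrt{x}\ge\sqrt{a}\ln\sqrt{x}=\tfrac{\sqrt{a}}{2}\ln x$, and squares; the case $a<1$ is dismissed as "easy to check." You instead give a self-contained monotonicity analysis of $f(x)=x/\ln^2 x$: the computation $f'(x)=(\ln x-2)/\ln^3 x$, the global minimum $f(e^2)=e^2/4$ on $(1,\infty)$, the split at $a=e^2$, and the reduction of the second case to $\ln a\ge 2\ln\ln a$ (itself settled by minimizing $u-2\ln u$ at $u=2$) are all verified correctly, as is the boundary observation that $x\ge a\ln^2 a>e^2$ places $x$ on the increasing branch. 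What your approach buys is self-containedness and a uniform, rigorous treatment of every $a\le e^2$ (in particular $a<1$) without appeal to an external lemma or an unproved "easy" case; indeed your Case~1 shows the conclusion holds for all $x\ge 1$ regardless of the hypothesis $x\ge a\ln^2 a$ when $a\le e^2$. What the paper's approach buys is brevity, at the cost of outsourcing the essential step. Both proofs are valid.
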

\begin{proof}
    We consider first $a\ge 1$.
    \begin{align}
        x \geqslant a \ln^2 a & \implies \sqrt{x} \geqslant \sqrt{a} |\ln a | \underset{a \ge 1}{=}  \sqrt{a} \ln a\\
        & \implies \sqrt{x} \geqslant \sqrt{a} \ln \sqrt{x} = \frac{\sqrt{a}}{2} \ln x \label{e:use_shalev}\\
        & \implies x \geqslant \frac{a}{4} \ln^2 x,
    \end{align}
    where \eqref{e:use_shalev} follows from Lemma~A.1 in \cite{shalev_understanding}.
    For $a < 1$, it is easy to check that for $x \geqslant 1$, $x \geqslant\frac{a}{4} \ln^2 x$ holds unconditionally.
\end{proof}

\subsection{Bounding the $4$-th Moment of the Estimation Error.} 
\label{app:fourth_moment}
For convenience, we omit from now on the dependence on the specific clustered component $c$.

\begin{theorem}
    \label{thm:fourth_moment}
Let $\lambda_2(W)$ denote the module of the second largest eigenvalue in module of $W$. It holds:
\begin{align}
\E\left[\lVert {\y}^{t+1} - \mm \rVert^4 \right]  & \in  \bigO\left( \sup_{W_1, W_2, \cdots, W_{\zeta_D}}\E\left[\lVert \y^{\zeta_D} - \mm \rVert^4 \right]
\alpha^{4 t}\right) \nonumber\\ 
&  + \bigO\left(\E\left[\lVert \x -  \mm \rVert^4 \right] \frac{(1-\alpha)^4}{(1-\alpha \lambda_2(W))^2} \left(1 + \frac{1}{\ln{\frac{1}{\alpha \lambda_2(W)}}} \right)^2\frac{1}{(t+1) ^2}\right)\nonumber\\
& + \bigO\left(\E\left[\lVert P \x -  \mm \rVert^4 \right] {(1-\alpha)^2} \left(1 + \frac{1}{\ln{\frac{1}{\alpha }}} \right)^2\frac{1}{(t+1) ^2}\right), & \textrm{if  } \alpha_t=\alpha,
\label{e:fourth_moment_alpha_const}
\end{align}

\begin{align}
\E\left[\lVert {\y}^{t+1} - \mm \rVert^4 \right] & \in  \bigO\left( \sup_{W_1, W_2, \cdots, W_{\zeta_D}}\E\left[\lVert \y^{\zeta_D} - \mm \rVert^4 \right]
\frac{1}{(t+1)^4} \right)\nonumber\\ 
&  + \bigO \left(\E\left[\lVert \x -  \mm \rVert^4 \right] \frac{1}{(1-\lambda_2(W))^2} \left(1 + \frac{1}{\ln{\frac{1}{\lambda_2(W) }}} \right)^2\frac{1}{(t+1) ^4}\right)\nonumber\\
& + \bigO \left(\E\left[\lVert P \x -  \mm \rVert^4 \right] \left(\frac{1 + \ln t}{ 1 + t }\right)^2 \right), & \textrm{if  } \alpha_t=\frac{t}{t+1}.
\label{e:fourth_moment_alpha_t}
\end{align}
\end{theorem}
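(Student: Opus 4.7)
The plan is to start from the already-established recursion~\eqref{e:estimation_error}, which expresses $\yc^{t+1}-\mmc$ as a sum of three terms
$T_1^{(t)}+T_2^{(t)}+T_3^{(t)}$: an initial-condition propagation, a consensus-averaging fluctuation built on powers of $\Wc-\Pc$, and a centered-ensemble average built on $\Pc$. Since $\|T_1+T_2+T_3\|^4\le 27(\|T_1\|^4+\|T_2\|^4+\|T_3\|^4)$ by convexity of $x\mapsto x^4$, it is enough to bound each $\E[\|T_i^{(t)}\|^4]$ separately, and then take a supremum over $W_1,\dots,W_{\zeta_D}$ since our control on these matrices is only through $\|W_t\|\le 1$.

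For the first term I would exploit that $\Wc$ is symmetric and stochastic, hence $\|\Wc^{k}\|\le 1$ for every $k$, so $\E[\|T_1^{(t)}\|^4]\le \alpha_{\zeta_D,t}^4\,\E[\|\yc^{\zeta_D}-\mmc\|^4]$. The two cases of the theorem then follow from elementary evaluation of $\alpha_{\zeta_D,t}$: for constant $\alpha_t=\alpha$ this product equals $\alpha^{t+1-\zeta_D}$, while for $\alpha_t=t/(t+1)$ the product telescopes to $\zeta_D/(t+1)$, giving the $\bigO(1/(t+1)^4)$ decay.

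For the second term I would use the identity $\Wc P_c=\Pc=\Pc^2$ to rewrite $(I-\Pc)(\xavc^{\tau+1}-\mmc)=\bb_{\tau+1}$, where $\bb_\tau\coloneqq\frac{1}{\tau}\sum_{s=1}^{\tau}(\x^s-\mmc)$, and thereby turn $T_2^{(t)}$ into $\sum_{\tau=\zeta_D}^{t}(1-\alpha_\tau)\alpha_{\tau+1,t}(\Wc-\Pc)^{t-\tau}\bb_{\tau+1}$. Expanding $\|T_2^{(t)}\|^4$ into a quadruple sum and invoking Lemma~\ref{l:second_eigenvalue} (which gives $\rho(\Wc-\Pc)=\lambdac<1$) prepares the ground for Corollary~\ref{cor:sums_stochastic_matrix_minus_projector}. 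For $\alpha_t=\alpha$ the coefficient $(1-\alpha_\tau)\alpha_{\tau+1,t}=(1-\alpha)\alpha^{t-\tau}$ is exactly of the form required by the Corollary with $\beta=\alpha$ and $B=\Wc-\Pc$, producing the stated bound~\eqref{e:fourth_moment_alpha_const}. For $\alpha_t=t/(t+1)$ the same coefficient simplifies to the constant $1/(t+1)$, so $\|T_2^{(t)}\|^4=(t+1)^{-4}\|\sum_{\tau}(\Wc-\Pc)^{t-\tau}\bb_{\tau+1}\|^4$, and the Corollary is applied with $\beta=1$, $B=\Wc-\Pc$, producing the extra $1/(t+1)^2$ factor that yields the final $1/(t+1)^4$ rate in~\eqref{e:fourth_moment_alpha_t}.

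For the third term I would note $\Pc\xavc^{\tau+1}-\mmc=\Pc\bb_{\tau+1}$, and proceed analogously: expand $\|T_3^{(t)}\|^4$ into a quadruple sum and apply Corollary~\ref{cor:sums_stochastic_matrix} with the constant, symmetric, stochastic choice $A(\cdot,\cdot)=\Pc$. For $\alpha_t=t/(t+1)$ the coefficients again collapse to $1/(t+1)$, so the Corollary's $(1+\ln t)^2 t^2$ factor combined with $1/(t+1)^4$ produces the $((1+\ln t)/(1+t))^2$ rate of~\eqref{e:fourth_moment_alpha_t}; for constant $\alpha_t=\alpha$, the coefficients $(1-\alpha)^2\alpha^{2t-\tau_1-\tau_2}$ can be pulled out of the $\tau_i$ sums via Lemma~\ref{l:convolution} (with $\beta=\alpha$), matching~\eqref{e:fourth_moment_alpha_const}. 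The main obstacle in the whole argument is the coefficient bookkeeping in the constant-$\alpha$ case of $T_2$: one must be careful to rewrite $(1-\alpha)^4\alpha^{4t-\sum\tau_i}$ in the exact $\beta^{2t-\tau_1-\tau_2}\cdot\beta^{2t-\tau_3-\tau_4}$ form required by Corollary~\ref{cor:sums_stochastic_matrix_minus_projector} and to keep the dependence on $\lambdac$ separated from the dependence on $\alpha$, so that $(1-\alpha\lambdac)^{-2}$ and the $1/\ln(1/(\alpha\lambdac))$ factor appear with the right exponents in the final statement.
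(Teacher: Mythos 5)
Your proposal follows essentially the same route as the paper's proof: the same three-term decomposition from~\eqref{e:estimation_error}, the factor-$27$ power-mean inequality, the evaluation of $\alpha_{\zeta_D,t}$ in the two regimes ($\alpha^{t+1-\zeta_D}$ vs.\ the telescoping $\zeta_D/(t+1)$), and the same applications of Corollary~\ref{cor:sums_stochastic_matrix_minus_projector} (with $B=\Wc-\Pc$, $\beta=\alpha$ or $\beta=1$, via Lemma~\ref{l:second_eigenvalue}) and Corollary~\ref{cor:sums_stochastic_matrix} for the three terms. The only quibble is notational: with your definition $\bb_\tau=\frac{1}{\tau}\sum_{s\le\tau}(\x^s-\mmc)$ one has $(I-\Pc)(\xavc^{\tau+1}-\mmc)=(I-\Pc)\bb_{\tau+1}$, not $\bb_{\tau+1}$ itself, so the corollary must be invoked with $\z_s=\x^s-\Pc\x^s$ (as the paper does); this does not affect the validity of the argument.
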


\begin{proof}
$W$ is  irreducible (the graph component is connected and $W_{i,j}>0$ for each link), then by Lemma~\ref{l:second_eigenvalue}, $\lambda_2(W)< 1$.
For $\alpha_t=\alpha<1$, it would be sufficient to observe that $\lambda_2(W)\leqslant \rho(W)=1$, but for $\alpha_t=\frac{t}{t+1}$, we need the strict inequality.

Our starting point is \eqref{e:estimation_error}, which we repeat here (omitting the dependence on the specific clustered component $c$):
\begin{align}
{\y}^{t+1} - \mm&  =\alpha_{\zeta_D,t} W^{t+1 - \zeta_D}(\y^{\zeta_D} - \mm) \nonumber\\
& \phantom{=} + \sum_{\tau=\zeta_D}^t (1-\alpha_\tau) \alpha_{\tau+1,t} {(W - P)}^{t-\tau} (\xav^{\tau+1} - P\xav^{\tau+1})\nonumber \\
& \phantom{=} + \sum_{\tau=\zeta_D}^t (1-\alpha_\tau) \alpha_{\tau+1,t} (P \xav^{\tau+1} - \mm). 
\end{align}

Applying twice $(\sum_{i=1}^n a_i)^2 \le n \sum_{i=1}^n a_i^2$ with $n=3$, and applying the expectation we obtain:
{\allowdisplaybreaks
\begin{align}
\E\left[\lVert {\y}^{t+1} - \mm \rVert^4 \right] &   \le 27 \cdot \alpha_{\zeta_D,t}^4 \cdot \E\left[\lVert W\rVert^{4(t+1  - \zeta_D)} \cdot \lVert \y^{\zeta_D} - \mm \rVert^4 \right]\nonumber\\
& \phantom{=} + 27 \E\left[\left\lVert \sum_{\tau=\zeta_D}^t (1-\alpha_\tau) \alpha_{\tau+1,t} {(W - P)}^{t-\tau} (\xav^{\tau+1} - P\xav^{\tau+1})\right\rVert^4 \right] \nonumber \\
& \phantom{=} + 27 \E\left[\left\lVert \sum_{\tau=\zeta_D}^t (1-\alpha_\tau) \alpha_{\tau+1,t} (P \xav^{\tau+1} - \mm) \right\rVert^4\right]\\
& \le 27 \cdot \underbrace{\alpha_{\zeta_D,t}^4 \cdot  \E\left[\lVert \y^{\zeta_D} - \mm \rVert^4 \right]}_{C_1} \nonumber\\
& \phantom{=} + 27 \underbrace{\E\left[\left\lVert \sum_{\tau=\zeta_D}^t (1-\alpha_\tau) \alpha_{\tau+1,t} {(W - P)}^{t-\tau} (\xav^{\tau+1} - P\xav^{\tau+1})\right\rVert^4 \right]}_{C_2} \nonumber \\
& \phantom{=} + 27 \underbrace{\E\left[\left\lVert \sum_{\tau=\zeta_D}^t (1-\alpha_\tau) \alpha_{\tau+1,t} (P \xav^{\tau+1} - \mm) \right\rVert^4 \right]}_{C_3}, 
\end{align}
}
where in the last step we took advantage of the fact that $W$ is doubly stochastic and symmetric and then $\lVert W \rVert=1$.

We now move to bound the three terms $C_1$, $C_2$, and $C_3$. We observe that 
\begin{align}
\alpha_{t_0+1,t} = \begin{cases}
    \alpha^{t-t_0}, & \text{if }\alpha_t = \alpha,\\
    \frac{t_0+1}{t} & \text{if }\alpha_t = \frac{t}{t+1},
    \end{cases}
\end{align}
and
\begin{align}
(1-\alpha_{t_0})\alpha_{t_0+1,t} = \begin{cases}
    (1-\alpha)\alpha^{t-t_0}, & \text{if }\alpha_t = \alpha,\\
    \frac{1}{t} & \text{if }\alpha_t = \frac{t}{t+1},
    \end{cases}
\end{align}

\begin{align}
C_1 & \leqslant \alpha_{\zeta_D,t}^4   \sup_{W_1, W_2, \cdots, W_{\zeta_D}}\E\left[\lVert \y^{\zeta_D} - \mm \rVert^4 \right]
 \in  \begin{cases}
    \bigO\left(\sup_{W_1, W_2, \cdots, W_{\zeta_D}}\E\left[\lVert \y^{\zeta_D} - \mm \rVert^4 \right]
\alpha^{4 t}\right), & \text{if }\alpha_t = \alpha,\\
    \bigO\left(\sup_{W_1, W_2, \cdots, W_{\zeta_D}}\E\left[\lVert \y^{\zeta_D} - \mm \rVert^4 \right]
\frac{1}{t^4}\right) & \text{if }\alpha_t = \frac{t}{t+1}.
\end{cases}
\end{align}
{\allowdisplaybreaks
\begin{align}
    C_2  = \E &\left[ \left(\sum_{\tau_1=\zeta_D}^t (1-\alpha_{\tau_1}) \alpha_{\tau_1+1,t} {(W - P)}^{t-\tau_1} (\xav^{\tau_1+1} - P\xav^{\tau_1+1}) \right)^\top\right.\nonumber\\
    & \left(\sum_{\tau_2=\zeta_D}^t (1-\alpha_{\tau_2}) \alpha_{\tau_2+1,t} {(W - P)}^{t-\tau_2} (\xav^{\tau_2+1} - P\xav^{\tau_2+1}) \right)\nonumber\\
    & \left(\sum_{\tau_3=\zeta_D}^t (1-\alpha_{\tau_3}) \alpha_{\tau_3+1,t} {(W - P)}^{t-\tau_3} (\xav^{\tau_3+1} - P\xav^{\tau_3+1}) \right)^\top\nonumber\\
    & \left.\left(\sum_{\tau_4=\zeta_D}^t (1-\alpha_{\tau_4}) \alpha_{\tau_4+1,t} {(W - P)}^{t-\tau_4} (\xav^{\tau_4+1} - P\xav^{\tau_4+1}) \right)
    \right] \\
    = \E &\left[ 
    \sum_{\tau_1, \tau_2, \tau_3, \tau_4=\zeta_D}^t
    (1-\alpha_{\tau_1})  \alpha_{\tau_1+1,t} 
    (1-\alpha_{\tau_2}) \alpha_{\tau_2+1,t} \right. (1-\alpha_{\tau_3})  \alpha_{\tau_3+1,t} 
    (1-\alpha_{\tau_4}) \alpha_{\tau_4+1,t} \nonumber\\
    & \phantom{\sum_{\tau=\zeta_D}^t} (\xav^{\tau_1+1} - P\xav^{\tau_1+1})^\top {(W - P)}^{2t-\tau_1- \tau_2} (\xav^{\tau_1+1} - P\xav^{\tau_1+1}) \nonumber\\
    & \left. \phantom{\sum_{\tau=\zeta_D}^t} (\xav^{\tau_3+1} - P\xav^{\tau_3+1})^\top {(W - P)}^{2t-\tau_3- \tau_4} (\xav^{\tau_4+1} - P\xav^{\tau_4+1}) 
    \right] \nonumber\\
    & \in \begin{cases}
        \bigO \left( \E\left[\lVert \x -  \mm \rVert^4 \right] \frac{(1-\alpha)^4}{(1-\alpha \lambda_2(W))^2} \left(1 + \frac{1}{\ln{\frac{1}{\alpha \lambda_2(W)}}} \right)^2\frac{1}{(t+1) ^2}\right), & \text{if } \alpha_t=\alpha,\\
        \bigO \left( \E\left[\lVert \x-  \mm \rVert^4 \right] \frac{1}{(1- \lambda_2(W))^2} \left(1 + \frac{1}{\ln{\frac{1}{ \lambda_2(W)}}} \right)^2\frac{1}{(t+1) ^4}\right), & \text{if } \alpha_t=\frac{t}{t+1},\label{e:bound_c2}\\
    \end{cases}
\end{align}
}
where the last result follows from observing that $\rho(W-P) = \lambda_2(W)<1$ and $\lVert \x - P \x \rVert^4 \le \lVert \x - \mm\rVert^4 $ and then applying Corollary~\ref{cor:sums_stochastic_matrix_minus_projector} with $\z = \x - P\x$, $B= W-P$ and 1)~$\beta=\alpha$ for $\alpha_t=\alpha$, 2)~$\beta=1$ for $\alpha_t=\frac{t}{t+1}$.

%As $B-P$ is symmetric $\lVert B-P \rVert = \rho(B-P)$.

The calculations to bound $C_3$ are similar:
\begin{align}
    C_3  =  \E &\left[ 
    \sum_{\tau_1, \tau_2, \tau_3, \tau_4=\zeta_D}^t
    (1-\alpha_{\tau_1})  \alpha_{\tau_1+1,t} 
    (1-\alpha_{\tau_2}) \alpha_{\tau_2+1,t} \right. (1-\alpha_{\tau_3})  \alpha_{\tau_3+1,t} 
    (1-\alpha_{\tau_4}) \alpha_{\tau_4+1,t} \nonumber\\
    & \left. \phantom{\sum_{\tau=\zeta_D}^t} (P \xav^{\tau_1+1} - \mm)^\top  (P \xav^{\tau_1+1} - \mm)  \phantom{\sum_{\tau=\zeta_D}^t} (P \xav^{\tau_3+1} - \mm)^\top  (P \xav^{\tau_4+1} - \mm) 
    \right] \nonumber\\
    & \in \begin{cases}
        \bigO \left( \E\left[\lVert P \x -  \mm \rVert^4 \right] {(1-\alpha)^2} \left(1 + \frac{1}{\ln{\frac{1}{\alpha }}} \right)^2\frac{1}{(t+1) ^2}\right), & \text{if } \alpha_t=\alpha,\\
        \bigO \left( \E\left[\lVert P \x-  \mm \rVert^4 \right] 
        \left(\frac{1 + \ln t}{1+t}\right)^2 \right), & \text{if }\alpha_t=\frac{t}{t+1}.\label{e:bound_c3}\\
    \end{cases}
\end{align}
In this case, we apply 1)~Corollary~\ref{cor:sums_stochastic_matrix_minus_projector} with $\z = \x - P\x$, $\beta=\alpha$, and $B=I$, for $\alpha_t=\alpha$, and 2)~Corollary~\ref{cor:sums_stochastic_matrix} with $\z = \x - P\x$ and $A(t_1,t_2)= I,$  $\forall (t_1, t_2) \in \mathbb N^2$, for $\alpha_t=\frac{t}{t+1}$.

The result follows by simply aggregating the three bounds.

\end{proof}

\begin{remark}
\label{rem:comparison_forth_moments}
We observe that 
\begin{align}
\E\left[\left\lVert \xc - \mmc  \right\rVert^4 \right] 
& = n_c \kappa \sigma^4 + n_c (n_c -1) \sigma^4,\\
\E\left[\left\lVert \xc - \Pc \xc  \right\rVert^4 \right] 
& = \left(n_c - 2 + \frac{1}{n_c}\right) \kappa \sigma^4 + (n_c -1)\left(n_c - 2 + \frac{3}{n_c}\right) \sigma^4,\\    
\E\left[\left\lVert \Pc \xc - \mmc  \right\rVert^4 \right] 
& = \frac{1}{n_c} \kappa \sigma^4 + 3\frac{n_c -1}{n_c} \sigma^4,. 
\end{align}
where $\kappa$ is the kurtosis index (and then $\kappa \sigma^4$ is the fourth moment). Then, for $n_c \leqslant 2$
\begin{align}
\label{e:relation_fourth_moments}
\E\left[\left\lVert \Pc \xc - \mmc  \right\rVert^4 \right]
\leqslant \frac{3}{n_c^2} \E\left[\left\lVert \xc - \Pc \xc \right\rVert^4 \right] \leqslant \frac{3}{n_c^2} \E\left[\left\lVert \xc - \mmc \right\rVert^4 \right], 
\end{align}
showing the advantage of averaging the estimates across all agents in the same connected components.
\end{remark}

 \subsection{ $(\varepsilon,\delta)$-Bounds: Proof of Theorem~\ref{thm:consensus_epsilon_delta}} 

We prove this theorem whose scope is larger.
\begin{theorem}
\label{thm:consensus_epsilon_delta_long}
%Consider a connected component $\mathcal{CC}_a$ and pick uniformly at random an agent $a'$ in it, then
Consider a graph component $c$ and pick uniformly at random an agent $a$ in $c$, then 
\[
		\mathbb{P}\left(\forall t>\tau^C_a, |\hat \mu_a^t-\mu_a |<\varepsilon \right)\ge 1 -\delta
		\]
  where 
  \[\tau_a^C = \max\left\{ \zeta_D, C' \frac{ \E\left[\lVert \x -  \mm \rVert^4 \right]}{n_c \varepsilon^4 \delta} \left( \frac{(1-\alpha)^2}{(1-\alpha \lambda_2(W))^2} \left(1 + \frac{1}{\ln{\frac{1}{\alpha \lambda_2(W)}}} \right)^2 + \frac{1}{n_c^2}  \left(1 + \frac{1}{\ln{\frac{1}{\alpha }}} \right)^2\right)\right\}\] 
  for $\alpha_t= \alpha$, and 
  \[\tau_a^C \triangleq  \max\left\{ \zeta_D,C'' \frac{\E\left[\lVert \Pc \xc -  \mmc \rVert^4 \right]}{n_c \varepsilon^4 \delta} \ln^2\left(e C'' \frac{\E\left[\lVert \Pc \xc -  \mmc \rVert^4 \right]}{n_c \varepsilon^4 \delta}\right)\right\}.\] 
  for $\alpha_t= t/(t+1)$.
\end{theorem}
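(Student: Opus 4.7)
My overall strategy is a clean two-step combination: first I invoke the neighborhood-identification guarantee of Theorem~\ref{cor:Dcolme_thm1} to establish that the auxiliary system of Section~\ref{sec:c_colme_proofs} coincides with the real one after $\zeta_a^D$ with probability at least $1-\delta/2$; then, conditional on this good event, I use the fourth-moment decay of Theorem~\ref{thm:fourth_moment} together with Markov's inequality and a union bound over time to obtain a uniform-in-$t$ deviation bound that holds with probability at least $1-\delta/2$. Combining the two contributions via a union bound yields the desired $1-\delta$ high-probability statement.

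\textbf{Step 1: from vector error to scalar error for a random agent.}
Let $c$ be the component of the randomly picked agent $a$. Since $\lVert \yc^t-\mmc\rVert^4=\bigl(\sum_{a'\in c}(\hat\mu_{a'}^t-\mu_{a'})^2\bigr)^2\ge\sum_{a'\in c}(\hat\mu_{a'}^t-\mu_{a'})^4$, averaging over $a$ gives
\begin{equation*}
\E_{a,\omega}\!\left[(\hat\mu_a^t-\mu_a)^4\right]
=\frac{1}{n_c}\sum_{a'\in c}\E\!\left[(\hat\mu_{a'}^t-\mu_{a'})^4\right]
\le \frac{\E\!\left[\lVert \yc^t-\mmc\rVert^4\right]}{n_c}.
\end{equation*}
Markov's inequality then yields $\mathbb P(|\hat\mu_a^t-\mu_a|>\varepsilon)\le \E[\lVert\yc^t-\mmc\rVert^4]/(n_c\varepsilon^4)$.

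\textbf{Step 2: union bound in time using Theorem~\ref{thm:fourth_moment}.}
For $t>\zeta_a^D$, the first ($\y^{\zeta_D}$-dependent) term in Theorem~\ref{thm:fourth_moment} is of the same order as the averaging term and can be absorbed into it (the constants $C',C''$ in the statement are chosen large enough to accommodate this absorption). Summing the resulting bounds over $t>\tau$:
\begin{itemize}
\item For $\alpha_t=\alpha$ the relevant terms decay like $1/(t+1)^2$, giving
$\sum_{t>\tau}\mathbb P(|\hat\mu_a^t-\mu_a|>\varepsilon)\in \bigO\bigl(K_\alpha/\tau\bigr)$
with $K_\alpha = \tfrac{\E[\lVert\x-\mm\rVert^4]}{n_c\varepsilon^4}\!\bigl(\tfrac{(1-\alpha)^2}{(1-\alpha\lambda_2(W))^2}\ln^{-2}\!\tfrac{1}{\alpha\lambda_2(W)}+\tfrac{1}{n_c^2}\ln^{-2}\!\tfrac{1}{\alpha}\bigr)$ (after using Remark~\ref{rem:comparison_forth_moments} to swap $\E[\lVert P\x-\mm\rVert^4]$ for $\E[\lVert\x-\mm\rVert^4]/n_c^2$ in the third term). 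Imposing $K_\alpha/\tau\le\delta/2$ gives the first claimed $\tau_a^C$.
\item For $\alpha_t=t/(t+1)$ the dominant term is the averaging term $(1+\ln t)^2/(1+t)^2$, so
\begin{equation*}
\sum_{t>\tau}\frac{(1+\ln t)^2}{(1+t)^2}\;\le\;\int_\tau^\infty\!\frac{(1+\ln x)^2}{x^2}\,\diff x
\;\in\;\bigO\!\left(\frac{\ln^2\tau}{\tau}\right).
\end{equation*}
\end{itemize}
Setting this $\bigO(\cdot)$ bound $\le\delta/2$ and calling $C_1\triangleq C''\,\E[\lVert \Pc\xc-\mmc\rVert^4]/(n_c\varepsilon^4\delta)$, it suffices to have $\tau\ge C_1\ln^2\tau$, which by Lemma~\ref{l:nasty_inequality} is satisfied as soon as $\tau\ge C_1\ln^2(eC_1)$. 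This is the form stated for $\tau_a^C$.

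\textbf{Step 3: final union bound.} Calling $\mathcal E_1$ the event $\{\forall t>\zeta_a^D,\ \mathcal C_{a'}^t=\mathcal C_{a'}\cap\mathcal N_{a'}\ \forall a'\in\mathcal{CC}_a\}$ (so that the auxiliary system of Section~\ref{sec:c_colme_proofs} coincides with the true one after $\zeta_a^D$) and $\mathcal E_2$ the event $\{\forall t>\tau_a^C,\ |\hat\mu_a^t-\mu_a|<\varepsilon\}$, Theorem~\ref{cor:Dcolme_thm1} gives $\mathbb P(\mathcal E_1^c)\le\delta/2$ and the previous step gives $\mathbb P(\mathcal E_2^c\mid\mathcal E_1)\le\delta/2$, hence $\mathbb P(\mathcal E_2)\ge 1-\delta$.

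\textbf{Expected main obstacle.} The delicate point is the union bound over time in the $\alpha_t=t/(t+1)$ case: the $(1+\ln t)^2/(1+t)^2$ tail of the fourth-moment bound is only marginally summable, and squeezing out of the resulting implicit inequality $\tau\ge C_1\ln^2\tau$ the explicit closed form $\tau_a^C=C_1\ln^2(eC_1)$ needs precisely Lemma~\ref{l:nasty_inequality}. A secondary subtlety is justifying that the transient term coming from $\y^{\zeta_D}$ in Theorem~\ref{thm:fourth_moment} is indeed dominated by the averaging term for $t>\zeta_a^D$, which is why the constants $C',C''$ are left generic in the statement.
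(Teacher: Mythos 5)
Your proposal is correct and follows essentially the same route as the paper's proof: Markov's inequality on the fourth moment combined with averaging over the uniformly random agent to pass to $\E[\lVert \yc^t - \mmc\rVert^4]/n_c$, a union bound over time with the integral estimates $\sum_{t>\tau}(t+1)^{-2}\in\bigO(1/\tau)$ and $\sum_{t>\tau}(\ln(1+t)/(1+t))^2\in\bigO(\ln^2(e\tau)/\tau)$, Lemma~\ref{l:nasty_inequality} to resolve the implicit inequality $\tau\ge C_1\ln^2(e\tau)$, and a final union bound with the neighborhood-identification event of Theorem~\ref{cor:Dcolme_thm1}. The only cosmetic discrepancy is that your shorthand $\ln^{-2}(1/(\alpha\lambda_2(W)))$ in $K_\alpha$ should read $\left(1+1/\ln\frac{1}{\alpha\lambda_2(W)}\right)^2$ to match the stated $\tau_a^C$, but this does not affect the argument.
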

\begin{proof}
We start considering the auxiliary system studied in the previous sections: consensus matrices can be arbitrary until time $\zeta_D$ and then agents acquire perfect knowledge about which neighbors belong to the same class and simply rely on information arriving through these links.
    \begin{align}
       \Prob\left( |\hat{\mu}_a^{t+1} - \mu_a|   \ge \varepsilon\right) 
            & = \Prob\left( \left(\hat{\mu}_a^{t+1} - \mu_a\right)^4   \ge \varepsilon^4\right) \\
            & \leqslant \frac{\E\left[  \left(\hat{\mu}_a^{t+1} - \mu_a\right)^4 \right]}{\varepsilon^4}\\
            & = \frac{\frac{1}{n_c}\sum_{a'=1}^{n_c}\E\left[  \left(\hat{\mu}_a^{t+1} - \mu_a\right)^4 \right]}{\varepsilon^4}\\
            & = \frac{\E\left[  \left\lVert{\yc}^{t+1} - \mmc\right\rVert^4 \right]}{n_c \varepsilon^4}
    \end{align}
Applying the union bound, we obtain:
\begin{align}
       \Prob\left( \exists t \ge t' |   |\hat{\mu}_a^{t+1} - \mu_a|   \ge \varepsilon\right) 
            & \le \sum_{t=t'}^\infty \frac{\E\left[  \left\lVert{\yc}^{t+1} - \mmc\right\rVert^4 \right]}{n_c \varepsilon^4}.
\end{align}

When $\alpha_t= \alpha$, considering the dominant term in~\eqref{e:fourth_moment_alpha_const} and~\eqref{e:relation_fourth_moments} leads to 
\begin{align}
    \E\left[  \left\lVert{\yc}^{t+1} - \mmc\right\rVert^4 \right] & \leqslant \frac{C'}{2} \frac{ \E\left[\lVert \xc -  \mmc \rVert^4 \right]}{(t+1)^2} \left( \frac{(1-\alpha)^2}{(1-\alpha \lambda_2(W))^2} \left(1 + \frac{1}{\ln{\frac{1}{\alpha \lambda_2(W)}}} \right)^2 + \frac{1}{n_c^2}  \left(1 + \frac{1}{\ln{\frac{1}{\alpha }}} \right)^2\right)
\end{align}
We observe that 
\begin{align}
    \sum_{t=t'}^\infty \frac{1}{(t+1)^2} & \leqslant \int_{t'}^{\infty} \frac{1}{t^2}\diff t = \frac{1}{t'},
\end{align}
from which we conclude:
\begin{align}
\Prob\left( \exists t \ge t' :    |\hat{\mu}_a^{t+1} - \mu_a|   \ge \varepsilon\right) 
            & \le \frac{C'}{2} \frac{ \E\left[\lVert \x -  \mm \rVert^4 \right]}{n_c \varepsilon^4 t} \left( \frac{(1-\alpha)^2}{(1-\alpha \lambda_2(W))^2} \left(1 + \frac{1}{\ln{\frac{1}{\alpha \lambda_2(W)}}} \right)^2 + \frac{1}{n_c^2}  \left(1 + \frac{1}{\ln{\frac{1}{\alpha }}} \right)^2\right).
\end{align}
This probability is then smaller than $\delta/2$ for 
\begin{align}
    t'\geqslant  \tau_a^C = \max\left\{ \zeta_D, C' \frac{ \E\left[\lVert \x -  \mm \rVert^4 \right]}{n_c \varepsilon^4 \delta} \left( \frac{(1-\alpha)^2}{(1-\alpha \lambda_2(W))^2} \left(1 + \frac{1}{\ln{\frac{1}{\alpha \lambda_2(W)}}} \right)^2 + \frac{1}{n_c^2}  \left(1 + \frac{1}{\ln{\frac{1}{\alpha }}} \right)^2\right)\right\}.
\end{align}

Similarly, when $\alpha_t=\frac{t}{t+1}$, considering the dominant term in \eqref{e:fourth_moment_alpha_t} leads to
\begin{align}
    \E\left[  \left\lVert{\yc}^{t+1} - \mmc\right\rVert^4 \right] & \leqslant \frac{C''}{16}\E\left[\lVert \Pc \xc -  \mmc \rVert^4 \right] \left(\frac{\ln (1 + t)}{ 1 + t }\right)^2 .
\end{align}

We observe that 
\begin{align}
    \sum_{t=t'}^\infty \left(\frac{\ln(1+t)}{t+1}\right)^2 & \leqslant \int_{t'}^{\infty} \left(\frac{\ln t}{t}\right)^2 \diff t \\
    & = \int_{\ln t'}^{\infty} x^2 e^{-x} \diff x  \\
    & = \left(e^{-x}\left(x^2 + 2 x + 2\right)\right) \Big\rvert_{\infty}^{\ln t'} \\
    & = \frac{(\ln t')^2 + 2 \ln t' + 2}{t'},\\
    & = \frac{(\ln t' + 1)^2 +1}{t'},\\
    & \leqslant 2 \frac{(\ln t' + 1)^2 }{t'},\\
    & = 2 \frac{\ln^2{(et')} }{t'}
\end{align}
from which we conclude:
\begin{align}
\Prob\left( \exists t \ge t' :    |\hat{\mu}_a^{t+1} - \mu_a|   \ge \varepsilon\right) & \leqslant \frac{C''}{8} \frac{\E\left[\lVert \Pc \xc -  \mmc \rVert^4 \right]}{n_c \varepsilon^4} \frac{\ln^2{(et')} }{t'}.
% \\
%     & \leqslant {2 C''} \frac{\E\left[\lVert \Pc \xc -  \mmc \rVert^4 \right]}{n_c \varepsilon^4 (t')^{\frac{1}{1+\eta}}}.
\end{align}
This probability is then smaller than $\delta/2$ for
\begin{align}
    t'\geqslant \frac{C''}{4} \frac{\E\left[\lVert \Pc \xc -  \mmc \rVert^4 \right]}{n_c \varepsilon^4 \delta} \ln^2{(et')},
\end{align}
and by applying Lemma~\ref{l:nasty_inequality} with $x = t' e$, we obtain that a sufficient condition is 
\begin{align}
    t'\geqslant C'' \frac{\E\left[\lVert \Pc \xc -  \mmc \rVert^4 \right]}{n_c \varepsilon^4 \delta} \ln^2\left(e C'' \frac{\E\left[\lVert \Pc \xc -  \mmc \rVert^4 \right]}{n_c \varepsilon^4 \delta}\right).
\end{align}
Let then define 
\begin{align}
    \tau_a^C \triangleq  \max\left\{ \zeta_D,C'' \frac{\E\left[\lVert \Pc \xc -  \mmc \rVert^4 \right]}{n_c \varepsilon^4 \delta} \ln^2\left(e C'' \frac{\E\left[\lVert \Pc \xc -  \mmc \rVert^4 \right]}{n_c \varepsilon^4 \delta}\right)\right\}.
\end{align}
Finally, let us consider the system of interest. With probability $1- \delta/2$ the agents will have identified the correct links by time $\zeta_D$. The corresponding trajectories coincide with trajectories of the auxiliary system we studied. For the auxiliary system, the estimates have the required precision after time $\tau_a^C$ with probability $1-\delta/2$. It follows that the probability that the estimates in the stochastic have not the required precision after time $\tau_a^C$ is at most $\delta$.
\end{proof}

\newpage

\section{ Appendix G - On the  Structure of $G(N,r)$ and its Impact on Performance of our Algorithms}\label{app:Gnr}

\subsection{ On the Local Tree  Structure of $G(N,r)$ }
Let $\mathcal{N}= G(\mathcal{V},\mathcal{E})$ be   a network sampled from the class of 
random regular graphs $G(N,r)$ with fixed degree $r$. \footnote{observe that graphs in $G(N,r)$ are not necessarily simple} Here and in the following we fix $\mathcal{V}=\mathcal{A}$ and  $n=|\mathcal{V}|=|\mathcal{A}|$.
We are interested to investigate  the structure of the 
$d$-deep neighborhood, $\mathcal{N}^{(v)}_d$, of a generic node $v$ (i.e. the sub-graph inter-connecting nodes at a distance smaller or equal than $d$v from $v$).
Observe that as $n$ grows large, for any $d\in \mathbb{N}$, we should expect that 
$\mathcal{N}^{(v)}_d$ is likely  equal to $ \mathcal{T}_d$, a perfectly balanced   
tree of depth $d$,  in which the root has $r$ children, and all the other nodes have $r-1$. children.

To this end  using an approach inspired by Lemma 5 in  \citep{como} (which applies to directed graphs), we can claim that:
\begin{theorem} \label{tree-like-Gnr}
Whenever $v_0$ is  chosen uniformly at random,  we have 
\[
\mathbb{P}(\mathcal{N}^{(v_0)}_d\neq \mathcal{T}_d)\le \frac{(H+1)H}{2N}
%= 1- \prod_{h=1}^{H}} \left ( 1 - \frac{hr-(2h-1)}{nr-2(h-1)}\right)
\]
where $H= 1+\sum_{d'=1}^d r(r-1)^{d'-1}$.
\end{theorem}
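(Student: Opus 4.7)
\textbf{Proof plan for Theorem \ref{tree-like-Gnr}.} The plan is to realize $\mathcal{N}\sim G(N,r)$ via the configuration model: each of the $N$ vertices is assigned $r$ half-edges, and the $Nr$ half-edges are paired by a uniformly random perfect matching. I would then explore $\mathcal{N}^{(v_0)}_d$ by a breadth-first search starting at $v_0$, revealing the pairings of half-edges only on demand. The key observation is that $\mathcal{N}^{(v_0)}_d$ is isomorphic to the perfect tree $\mathcal{T}_d$ if and only if the exploration never encounters a ``collision'': every revealed pairing of a half-edge incident to an already-discovered vertex must connect to a half-edge of a not-yet-discovered vertex, since otherwise the pairing creates a cycle, a self-loop, or a multi-edge inside $\mathcal{N}^{(v_0)}_d$.

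I would then bound the conditional probability of a collision at the $j$-th pairing, given that none occurred in steps $1,\dots,j-1$. Under this assumption the BFS has discovered $j$ distinct vertices, contributing $jr - 2(j-1) = j(r-2)+2$ unpaired half-edges; one of them is taken and matched to a uniformly random partner among the $Nr - 2(j-1) - 1$ remaining unpaired half-edges. A collision occurs iff the partner is one of the $j(r-2)+1\leq jr$ other unpaired half-edges on the already-discovered vertices, so
\[
\mathbb{P}(\text{collision at step $j$}\mid\text{no previous collision})\;\leq\;\frac{jr}{Nr-2(j-1)-1}\;\leq\;\frac{j}{N},
\]
where the last inequality uses $H\lesssim\sqrt{N}$ (which we may assume, since otherwise $(H+1)H/(2N)\geq 1$ and the bound is trivial).

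The exploration of $\mathcal{N}^{(v_0)}_d$ involves at most $H$ relevant pairings: the $H-1$ pairings that would form the tree's edges, plus the boundary pairings between half-edges of depth-$d$ vertices that, if matched to one another or to an interior vertex, would still produce a non-tree edge inside $\mathcal{N}^{(v_0)}_d$. A union bound over these $H$ steps then yields
\[
\mathbb{P}\!\left(\mathcal{N}^{(v_0)}_d\neq\mathcal{T}_d\right)\;\leq\;\sum_{j=1}^{H}\frac{j}{N}\;=\;\frac{(H+1)H}{2N},
\]
as claimed. The uniform choice of $v_0$ only serves to pick a generic root; in the configuration model all vertices are exchangeable. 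The transfer of the bound from the configuration model to the space of simple $r$-regular graphs costs only a constant factor via conditioning on simplicity, whose probability is $\Theta(1)$ for fixed $r$.

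The main obstacle I foresee is the boundary bookkeeping required to make the summation range up to $H$ (producing the clean constant $(H+1)H/2$) rather than only up to $H-1$: one must carefully argue that collision events at the depth-$d$ frontier, which do not add new vertices but do create extra edges inside $\mathcal{N}^{(v_0)}_d$, contribute exactly the missing linear-in-$H$ term. A secondary but less serious issue is tightening $j(r-2)+1$ against $Nr-O(H)$ so that the per-step probability is bounded by $j/N$ without an extra $r$-dependent constant.
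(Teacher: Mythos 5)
Your plan follows essentially the same route as the paper's proof: realize $G(N,r)$ via the configuration model, reveal stub pairings along a breadth-first exploration of the $d$-neighborhood, bound the probability of a collision at step $j$ (conditioned on none before) by $j/N$, and union-bound over $H$ steps to obtain $\frac{(H+1)H}{2N}$. The only substantive difference is that the paper keeps the exact numerator $h(r-2)+1 = hr-(2h-1)$ rather than loosening it to $hr$, which is precisely what makes the per-step bound $\leq h/N$ go through (for $N\geq h$) without the extra $r$-dependent slack you flag as a remaining issue.
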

%%{\it P{roof}}
\begin{proof}
First observe  that  realizations of  $G(N,r)$   graphs  are typically obtained through the following  standard procedure:
every node is initially connected to $r\ge 2$ stubs; then stubs then are sequentially  
randomly matched/paired to form edges, as follows: at each stage, an arbitrarily selected free/unpaired stub is selected and paired  with a different stub 
picked uniformly at random among the still unpaired stubs.

We identify every stub with different number in $[1, r N]$ (we assume $r N$ to be even). 
Now, let $\nu(i)$ for $i \in [1, r N]$ be  the function that returns the identity of the node to which the stub is
connected.  See Fig.~\ref{fig:gnr_fig1}

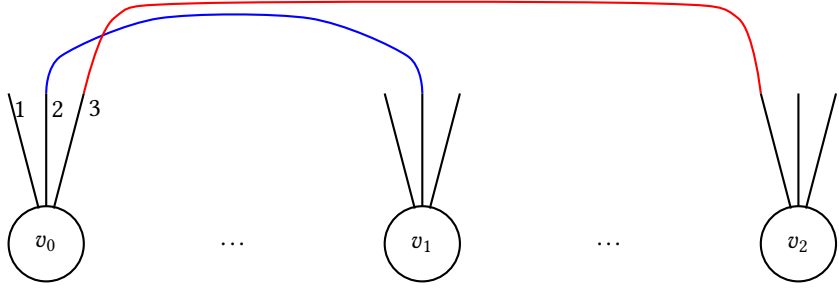
\begin{figure}[h] \setlength{\unitlength}{0.1 cm} % selecting unit length \centering      % used for centering Figure
 \begin{center}
		\begin{tikzpicture}
			
                \draw[thick](  0,0) circle(0.5);
			\draw[thick]((5,0) circle(0.5);
			\draw[thick]((10 ,0) circle(0.5);
                 \node (a) at(0, 0){$v_0$};
                 \node (b) at(5, 0){$v_1$};
                \node (b) at(10, 0){$v_2$};

                 \draw [blue,thick] plot [smooth] coordinates {(0,2) (0.2, 2.5) (1.5,3) (3.5,3) (4.8,2.5) (5,2)};
                 \draw [red,thick] plot [smooth] coordinates {(0.5, 2) (0.9,  3)  (2, 3.2) (8.0, 3.2) (9.2, 3) (9.5, 2)};
                 
                \draw[-,thick](0, 0.5)--(0,2);
                \draw[-,thick](0.1, 0.47)--(0.5,2);
                \draw[-,thick](-0.1, 0.47)--(-0.5, 2);
              
                 \node  at(0.15, 1.8){$2$};  
                 \node  at(0.65, 1.8){$3$};  
                 \node  at(-0.35, 1.8){$1$};  
       
                \draw[-,thick](5, 0.5)--(5,2);
                \draw[-,thick](5.1, 0.47)--(5.5,2);
                \draw[-,thick]( 4.9, 0.47)--(4.5, 2);
             
                \draw[-,thick](10, 0.5)--(10,2);
                \draw[-,thick](10.1, 0.47)--(10.5,2);
                \draw[-,thick]( 9.9, 0.47)--(9.5, 2);

                    \node at (2.5, 0) {\ldots};
                     \node at (7.5, 0) {\ldots};
		\end{tikzpicture}
            \end{center}
		\caption{Random  matching of stubs, $r=3$}
            \label{fig:gnr_fig1}
	\end{figure} 

Our procedure explores the $d$-neighborhood  of a node $v_0$, taken at random, by sequentially unveiling stub-pairings,  according to a bread-first approach.  At every step, the procedure checks whether the already explored portion of  $\mathcal{N}^{(v_0)}_d$, $G(\mathcal{V}, \mathcal{E})$,  has a tree structure.

The procedure initializes $\mathcal{V}^{(0)}=v_0$ and $\mathcal{E}^{(0)}= \emptyset$.

At step 1, our procedure takes a stub $k_1$ connected to $v_0$ (i.e. such that $\nu(k_1)=v_0$) and matches it, uniformly at random with another stub $r(k_1)\neq k_1$.  Let  $v_1:=\nu(r(k_1))$.
Then the procedure updates sets   $\mathcal{V}$ and $\mathcal{E}$ according to the rule:
$\mathcal{V}^{(1)}=\mathcal{V}^{(0)}\cup \{v_1\}$ and 
$\mathcal{E}^{(1)}= \mathcal{E}^{(0)}\cup \{(v_0,v_1)\}$.
At this stage  $G(\mathcal{V}^{(1)},\mathcal{E}^{(1)})$    is  a
tree  only provided that  $v_1\not \in \mathcal{V}^{(0)}$.
This happens with a probability:
\[
p_1:=\mathbb{P}(v_1:=\nu(r(k_1))\not \in  \{v_0\} )=1- \frac{r-1}{Nr-1}.
\]
In case    $v_1\neq v_0$   the algorithm  proceeds, otherwise it prematurely  terminates providing $G(\mathcal{V}^{(1)},\mathcal{E}^{(1)})$. 
In step 2,  our procedure takes a new unmatched stub $k_2$   (connected again to $v_0$).
$k_2$ is matched uniformly at random with another free (i.e. still unmatched) stub $r(k_2)\not \in  \{k_1, r(k_1), k_2\}$,  let  $v_2:=\nu(r(k_2))$.
Then  sets   $\mathcal{V}$ and $\mathcal{E}$ are updated:
$\mathcal{V}^{(2)}=\mathcal{V}^{(1)}\cup \{v_2\}$ and 
$\mathcal{E}^{(2)}= \mathcal{E}^{(2)}\cup \{(v_0,v_2)\}$. 
$G(\mathcal{V}^{(2)},\mathcal{E}^{(2)})$   is  a tree  
only  under the condition that $v_2:=\nu(r(k_2))\not \in \mathcal{V}^{(1)}$. This happens with a probability:
\[
p_2:=\mathbb{P}(v_2 :=\nu(r(k_2))\not \in \{v_0,v_1\}\mid v_0\neq v_1 )= 1- \frac{2r-2}{Nr-3}
\]
again if $G(\mathcal{V}^{(2)},\mathcal{E}^{(2)})$   is a tree the algorithm proceeds, otherwise it prematurely terminates,  providing $G(\mathcal{V}^{(2)},\mathcal{E}^{(2)})$. 
At a generic step $h$, our procedure takes a free stub $k_h$, connected to  vertex 
$v_{\lfloor (h-1)/r \rfloor } \in \{v_0, v_1, \cdots , v_{h-1} \}$,  (recall that we explore nodes/edges  according to a breadth-first  approach),
and matches it with a randomly chosen (still unmatched) stub $r(k_h)$.   Let $v_h=r(k_h)$.  Then 
Then  sets   $\mathcal{V}$ and $\mathcal{E}$ are updated as follows:
$\mathcal{V}^{(h)}=\mathcal{V}^{(h-1)}\cup \{v_h\}$ and 
$\mathcal{E}^{(h)}= \mathcal{E}^{(h-1)}\cup \{(v_\lfloor (h-1)/r \rfloor   ,v_h)\}$. 
Again  $G(\mathcal{V}^{(h)},\mathcal{E}^{(g)})$   is a tree only if  
$v_h \not \in  \mathcal{V}^{(h-1)}$, and this happens with a probability  
\begin{align*}
p_h:=& \mathbb{P}(\nu(r(k_h))\not \in \{v_0, v_1, \cdots, v_{h-1} \}\\ & \mid \{v_0=v_1=, \cdots,=v_{h-1}\})\\
= &1- \frac{hr-(2h-1)}{Nr-2(h-1)},
\end{align*}
in such a case the algorithm proceeds, otherwise it prematurely terminates, providing $G(\mathcal{V}^{(h)},\mathcal{E}^{(h)})$ in output.

The algorithm naturally  terminates (providing a tree) when all the nodes in   $\mathcal{N}^{(v_0)}_d$ have been unveiled  (i.e., placed in $\mathcal{V}$) and the corresponding
unveiled graph $G(\mathcal{V},\mathcal{E} )$  is  a tree.
This happens at step $H$.
The probability  that the algorithm terminates providing  a tree is given by: 
\begin{align*}
\mathbb{P}(\mathcal{N}^{(V)}_d\neq \mathcal{T}_d) =& 1-\prod_{h=1}^{H} p_h\le  \sum_h (1-p_h)\\
= & \sum_{h=1}^{H}\frac{hr-(2h-1)}{Nr-2(h-1)}\\
\le & \sum_{h=1}^{H} \frac{h}{N}= \frac{1}{N} \sum_1^{H} h=\frac{(H+1)H} {2N} \\ 
\end{align*} %%{\it $\square$}
\end{proof}
%Note that   Lemma 5 in  \citepp{como} applies to more general class of   configuration model  graphs, with respect to which the class of  directed  random regular graph $G(n,r)$  constitutes a particular case.

Now denoting with $M$ the number of vertices $v\in \mathcal V$ for which   ${\mathcal{N}^{(v)}_d\neq \mathcal{T}_d}$, we have that 
\[
M= \sum_{v\in \mathcal{V}} \bm{1}_{ \{ \mathcal{N}^{(v)}_d\neq \mathcal{T}_d \} }
\]
and therefore 
\begin{align*}
\mathbb{E} [ M] &= \sum_{v\in \mathcal{V}}  \mathbb{E} [\bm{1}_{ \{ \mathcal{N}{(v)}_d\neq \mathcal{T}_d \} }]
=  \sum_{v\in \mathcal{V}} \mathbb{P}(\mathcal{N}^{(v)}_d\neq \mathcal{T}_d)\\
& %=n\frac{\sum_{v\in \mathcal{V}} \mathbb{P}(\mathcal{N}^{(v)}_d\neq \mathcal{T}_d)}{n}= 
= N \mathbb{P}(\mathcal{N}^{(v_0)}_d\neq \mathcal{T}_d) \le \frac{(H+1)H}{2}
\end{align*}
where $v_0$ is uniformly taken at random.
By assuming ${(H+1)}{H}=o(N)$, and 
 applying Markov inequality we can claim that for any $\varepsilon >0$ arbitrarily slowly:
\begin{equation}\label{tree-asympt}
\mathbb{P}\left(\frac{M}{N}>\varepsilon\right)\downarrow 0 \quad   \forall \varepsilon>0.
\end{equation}
i.e. the fraction of nodes $v$ for which $\mathcal{N}^{(v_0)}_d\neq \mathcal{T}_d$ is negligible with  a probability tending to 1.

At last, we would like to highlight that \eqref{tree-asympt} can be transferred
to the class $G_0(N,r)$ of uniformly chosen {\it simple} regular graphs thanks to: 
\begin{proposition}[\citet{amini2010bootstrap}]\label{amini}
     Any sequence of event $E_n$ occurring with a probability tending to 1 (0) in $G(N,r)$ occurs well in $G_0(N,r)$ with a probability tending to 1 (0).
 \end{proposition}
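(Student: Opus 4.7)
The plan is to exploit the standard relationship between the configuration model $G(N,r)$ and the uniform simple $r$-regular graph $G_0(N,r)$. The key starting observation is that the law of $G(N,r)$ conditioned on the event $S_N = \{\text{the sampled multigraph is simple}\}$ is exactly uniform on the set of simple $r$-regular graphs, i.e.\ it coincides with $G_0(N,r)$. This is a direct consequence of the fact that each simple $r$-regular graph arises from exactly the same number of perfect matchings of the $rN$ stubs, so conditional uniformity follows from uniformity over stub-matchings.

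The second ingredient I would invoke is the classical result of Bollob\'as (later refined by McKay and Wormald) which asserts that for \emph{fixed} degree $r$,
\begin{equation*}
\lim_{N\to\infty} \mathbb{P}_{G(N,r)}(S_N) \;=\; e^{-(r^2-1)/4} \;=:\; c_r \;>\; 0.
\end{equation*}
In particular, $\mathbb{P}_{G(N,r)}(S_N)$ is bounded below by $c_r/2$ for all sufficiently large $N$. With these two facts in hand, for any sequence of events $E_N$ we have the uniform bound
\begin{equation*}
\mathbb{P}_{G_0(N,r)}(E_N) \;=\; \mathbb{P}_{G(N,r)}(E_N \mid S_N) \;=\; \frac{\mathbb{P}_{G(N,r)}(E_N \cap S_N)}{\mathbb{P}_{G(N,r)}(S_N)} \;\leq\; \frac{\mathbb{P}_{G(N,r)}(E_N)}{\mathbb{P}_{G(N,r)}(S_N)}.
\end{equation*}

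From here both cases of the proposition follow immediately. If $\mathbb{P}_{G(N,r)}(E_N)\to 0$, the numerator vanishes while the denominator stays bounded below by $c_r/2$, so $\mathbb{P}_{G_0(N,r)}(E_N)\to 0$. If instead $\mathbb{P}_{G(N,r)}(E_N)\to 1$, apply the same inequality to the complementary events $E_N^c$ to conclude that $\mathbb{P}_{G_0(N,r)}(E_N^c)\to 0$, and therefore $\mathbb{P}_{G_0(N,r)}(E_N)\to 1$.

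The only real obstacle is the positivity of $\lim_{N}\mathbb{P}_{G(N,r)}(S_N)$: everything else is bookkeeping. A self-contained derivation of this limit would go through a standard Poisson approximation argument, showing that the number of self-loops and the number of parallel-edge pairs in the configuration model converge jointly in distribution to independent Poisson random variables with means $(r-1)/2$ and $(r-1)^2/4$, respectively, whence simplicity---the simultaneous vanishing of both counts---has limiting probability $e^{-(r-1)/2}\cdot e^{-(r-1)^2/4} = e^{-(r^2-1)/4}$. Since this computation is classical, I would simply cite Bollob\'as' result rather than reproduce it, and the proposition then reduces to the two-line conditional-probability argument displayed above.
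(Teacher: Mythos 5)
Your proof is correct and matches the route the paper indicates: the paper states the proposition as an immediate consequence of the fact that $\liminf_{N\to\infty}\mathbb{P}\left(G(N,r)\text{ is simple}\right)>0$ (citing Janson), which is exactly your conditioning argument, with your Bollob\'as limit $e^{-(r^2-1)/4}$ being a sharper form of the same positivity fact. The conditional-uniformity observation and the two-line bound $\mathbb{P}_{G_0(N,r)}(E_N)\leq \mathbb{P}_{G(N,r)}(E_N)/\mathbb{P}_{G(N,r)}(S_N)$, applied also to complements, is precisely the intended argument.
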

 Moreover, recalling that by construction $\frac{M}{N}\le 1$, from \eqref{tree-asympt}  we can immediately deduce that  that $\mathbb{E}[M]/N \to 0$ on $G_0((N,r)$.

At last, we would like to mention the following result, from which
Proposition \ref{amini} rather immediately descends.

\begin{theorem} [\cite{janson_2009}] \label{janson}
\[
\liminf_{n\to \infty } \mathbb{P}(G((N,r) \text{ is simple} )>0
\]
\end{theorem}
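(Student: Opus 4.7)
The plan is to analyze the configuration model used to sample $G(N,r)$ (already described earlier in the proof of Theorem \ref{tree-like-Gnr}: every vertex receives $r$ stubs and a uniformly random perfect matching is drawn on the $rN$ stubs). A realization is a simple graph if and only if it has no self-loops and no multi-edges, so it suffices to show that the joint probability of these two events stays bounded away from $0$ as $N \to \infty$. I would therefore introduce the integer-valued random variables $L_N$ counting the number of self-loops and $D_N$ counting the number of unordered pairs of parallel edges produced by the configuration. The event that $G(N,r)$ is simple is exactly $\{L_N = 0, D_N = 0\}$.

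The core step is to establish the joint convergence $(L_N, D_N) \xrightarrow{d} (L, D)$, where $L$ and $D$ are \emph{independent} Poisson random variables with respective means $\lambda_L = (r-1)/2$ and $\lambda_D = (r-1)^2/4$. I would do this through the method of factorial moments. For $L_N$, one sums over ordered pairs of stubs attached to the same vertex and asks for them to be matched; counting the expected number of $k$-tuples of loops one finds that $\mathbb{E}[(L_N)_k] \to \lambda_L^k$ as $N \to \infty$, where $(x)_k$ denotes the falling factorial. An analogous (but combinatorially heavier) computation for ordered pairs of stubs at two distinct vertices gives $\mathbb{E}[(D_N)_k] \to \lambda_D^k$, and an equally explicit computation of mixed falling moments yields $\mathbb{E}[(L_N)_j (D_N)_k] \to \lambda_L^j \lambda_D^k$, which is precisely the factorial-moment criterion for convergence to a product of independent Poissons.

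Once joint Poisson convergence is in hand, the conclusion is immediate:
\[
\liminf_{N \to \infty} \mathbb{P}(G(N,r) \text{ is simple}) = \liminf_{N \to \infty} \mathbb{P}(L_N = 0, D_N = 0) = e^{-\lambda_L - \lambda_D} = \exp\!\left(-\frac{r-1}{2} - \frac{(r-1)^2}{4}\right) > 0,
\]
which is the desired claim. Consequently, any sequence of events $E_N$ whose $G(N,r)$-probability tends to $1$ (resp.\ $0$) must satisfy $\mathbb{P}(E_N \mid G(N,r) \text{ is simple}) \to 1$ (resp.\ $0$), giving Proposition \ref{amini} as a direct corollary.

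The main obstacle is the factorial-moment calculation for $D_N$ and for the mixed moments: one has to enumerate, for each configuration of $j$ prospective loops and $k$ prospective double edges on prescribed vertices, the number of matchings of the remaining stubs that realize them, and then verify that the dominant contribution, after dividing by the total number of matchings $(rN-1)!!$, yields exactly $\lambda_L^j \lambda_D^k$ in the limit; lower-order corrections and the contribution of overlapping stub-configurations must be shown to vanish as $N\to\infty$. All other steps (the reduction to $\{L_N=0, D_N=0\}$ and the passage from Poisson convergence to evaluation at $0$) are routine.
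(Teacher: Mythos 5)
The paper does not prove this statement at all: it is imported verbatim as a cited result from Janson (2009), so there is no in-paper argument to compare against. Your proposal reconstructs the standard configuration-model proof --- joint Poisson convergence of the number of self-loops $L_N$ and of pairs of parallel edges $D_N$ to independent Poissons with means $(r-1)/2$ and $(r-1)^2/4$, established via convergence of joint factorial moments, followed by evaluation at $(0,0)$ --- and this is correct for fixed degree $r$; it is essentially the classical argument of Bollob\'as (1980), of which Janson's paper is a generalization to non-regular degree sequences. Two small remarks: the convergence is in fact a genuine limit, so the $\liminf$ in the statement is conservative; and your closing observation that Proposition~\ref{amini} follows immediately by conditioning on simplicity matches exactly how the paper uses the theorem. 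The only substantive work you have deferred is the factorial-moment computation for $D_N$ and the mixed moments, which is routine but combinatorially tedious; you have correctly identified it as the main obstacle, and it does go through as you describe.
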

Observe Theorem \ref{janson} provides a theoretical foundation  
to design a simple algorithm for the generation of 
a graph in class $G_0((N,r)$, based on the  superposition of an acceptance/rejection procedure to the generation of graphs in  $G(N,r)$. More efficient algorithms are, however, well-known in literature \cite{McKay-Wormald}.

\subsection{The Structure of $\mathcal{CC}_a^d$ and  $\mathcal{CC}_a$  }
Now we investigate on the structure of $\mathcal{CC}_a^d$.
We assume that agents $a\in \mathcal{A}$  are partitioned into a finite number $K$ of 
similarity classes.  Agents are assigned to   similarity classes
independently. We indicate with  $p_k$ the probability according to which 
agent $a$ is assigned to class $k$. Note that by construction 
$\sum_{k=1}^K p_k=1$. We denote with $k_a$ the similarity class to which $a$ is assigned.
%\textcolor{blue}{EL-ci serve una notazione per la generica similarity class.}
In this scenario the structure of  $\mathcal{CC}_a^d$  can be rather easily analyzed, it turns out that:
\begin{theorem}  
Conditionally over the event  
$\{ \mathcal{N}^d_a \text{ is a tree}\}$. $\mathcal{CC}_a^d$ has the structure of a  Branching  process originating from a unique ancestor, obeying to the following properties:
\begin{itemize}
\item the number of off-springs of different 
nodes  are independent. 
\item the number of off-spring of  the ancestor  (generation 0 node) is distributed as a $\text{Bin}(r, p_{k_a})$; 
\item while the number of offs-springs of any generation $i$ node  (with $1\le i<d$) is distributed as  $\text{Bin}(r-1, p_{k_a})$. 
\item generation-$d$ nodes have no off-springs.
\end{itemize}
\end{theorem}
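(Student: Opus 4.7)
The plan is to use two independent sources of randomness: the random graph topology and the independent assignment of agents to similarity classes. Condition throughout on the event $\{\mathcal{N}_a^d \text{ is a tree}\}$, which is an event measurable with respect to the graph $\mathcal{G}$ alone, and on the class $k_a$ of agent~$a$. Because class assignments are made independently of the graph and independently across agents, conditioning on these events leaves the marginal distribution of the class $k_v$ of any other agent $v\in \mathcal{A}\setminus\{a\}$ unchanged: $\Prob(k_v=k_a)=p_{k_a}$, independently across $v$.

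Next I would use the tree assumption to give a concrete labelling. On $\{\mathcal{N}_a^d \text{ is a tree}\}$, the depth-$d$ neighborhood $\mathcal{N}_a^d$ is isomorphic to $\mathcal{T}_d$: the root $a$ has exactly $r$ neighbors, every non-root internal node has exactly $r-1$ children, and generation-$d$ nodes are leaves. The tree property guarantees that all these tree-vertices correspond to \emph{distinct} agents, so their class labels are mutually independent Bernoulli$(p_{k_a})$ indicators. A vertex $v\in \mathcal{N}_a^d$ lies in $\mathcal{CC}_a^d$ if and only if $v$ and every ancestor of $v$ belong to class $k_a$; therefore the children of $v$ in $\mathcal{CC}_a^d$ are precisely those tree-children of $v$ that carry the label $k_a$.

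From here the branching structure follows immediately. The root $a$ has $r$ children in $\mathcal{N}_a^d$, each independently in class $k_a$ with probability $p_{k_a}$, so its offspring count in $\mathcal{CC}_a^d$ is $\mathrm{Bin}(r,p_{k_a})$. For any non-root internal vertex $v$ known to be in $\mathcal{CC}_a^d$, the event ``$v\in \mathcal{CC}_a^d$'' depends only on the class labels of $v$ and its ancestors, which are disjoint from the labels of $v$'s $r-1$ children; hence, conditionally on $v\in \mathcal{CC}_a^d$, the offspring count is $\mathrm{Bin}(r-1,p_{k_a})$. Generation-$d$ vertices are leaves and so have no offspring. Independence of the offspring counts across distinct vertices is just independence of the disjoint families of Bernoulli indicators attached to their respective children.

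The only subtle point, and the one I would be most careful about, is the conditioning: one must check that conditioning on $\{\mathcal{N}_a^d \text{ is a tree}\}$ does not induce spurious correlations among the class labels. This is true because the graph and the labelling are constructed from independent randomness, so the conditional joint law of the labels given any event on $\mathcal{G}$ coincides with the unconditional one. Once this is established, the four bulleted properties read off directly from the construction above, completing the proof.
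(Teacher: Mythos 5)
Your proposal is correct and follows essentially the same argument as the paper: on the tree event the children sets of distinct vertices are disjoint, the class labels are i.i.d.\ Bernoulli$(p_{k_a})$ independent of the graph, and the binomial offspring counts (with $r$ trials at the root, $r-1$ elsewhere, and none at depth $d$) follow immediately. Your explicit remark that conditioning on the tree event cannot distort the label distribution, because the graph and the labelling come from independent randomness, is a point the paper leaves implicit, but it is the same proof.
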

\begin{proof}
The proof is rather immediate. Consider $a$ and explore $\mathcal{CC}_a^d$ according to a breath-first exploration process that stops at depth-$d$. 
The number of off-springs of $a$, $O_a$,  by construction, equals the number of nodes in $\mathcal{N}_a$ that belongs to class $k_a$. This number $O_a$ is distributed as:  $O_a\stackrel{L}{=}\text{Bin}(r, p_{k_a})$.
Consider, now,  any other explored node $a'$, at distance $i<d$ from the ancestor, this node, by construction, will have  a unique parent node $p_{a'}$, 
off-springs of $a'$, $O_{a'}$ will be given by all nodes 
in $\mathcal{N}_{a'}\setminus\{p_{a'}\}$ that belong to class $k_a$. 
Their number, $O_{a'}$, is distributed as:  $O_{a'}\stackrel{L}{=}\text{Bin}(r, p_{k_{a'}}).$ See Fig. \ref{fig:gnr_fig2}. When the exploration reaches a $d$-depth node, it stops,
therefore the number of off-springs for every  $d$-depth nodes is zero.
A last, since the sets $\mathcal{N}_{a'}\setminus\{p_{a'}\}$ are disjoint (as immediate consequence of the fact that $\mathcal{N}^d_a$ is a tree), the variables in  $\{ O_{a'}\}_{a'\in\mathcal{CC}_a^d  }$ are independent.
\end{proof}

  %\textcolor{magenta}{EL-poi forse conviene aggiungere una figurella}

	\begin{figure}[h] \setlength{\unitlength}{0.1 cm} % selecting unit length \centering      % used for centering Figure
 \begin{center}
		\begin{tikzpicture}

             \node (A)  at (0,0) [circle, thick, fill=green] {a};
              \node (B)  at (-1,2) [circle, thick, fill=green] {b};
            \node (C)  at ( 0,2) [circle, thick, fill=gray] {c};
             \node (D)  at (1,2) [circle, thick, fill=green] {d};
 %            \node[circle, draw, thick, fill=red ] (B) at (1,2) 
 %            \node[circle, draw, thick, fill=red] (C) at (0,2) 
 %            \node[circle, draw, thick, fill=blue] (D) at (-1,2) 
            \draw(0,0)[thick] (A)-- (B); 
             \draw(0,0)[thick] (A)-- (C);             
	      \draw(0,0)[thick] (A)-- (D); 
              \node (E)  at (-2,4) [circle, thick, fill=green] {e};
              \node (F)  at (-1,4) [circle, thick, fill=green] {f};
            \node (G)  at ( 1,4) [circle, thick, fill=gray] {g};
             \node (H)  at (2,4) [circle, thick, fill=green] {h};

%             \node[circle, draw, thick, fill=blue] (E) at (-1,4) 
%             \node[circle, draw, thick, fill=red ] (F) at (0,4) 
%             \node[circle, draw, thick, fill=red] (G) at (1,4) 
%             \node[circle, draw, thick, fill=blue] (H) at (2,4) 

             \draw(0,0)[thick] (B)-- (E); 
             \draw(0,0)[thick] (B)-- (F) ;            
	          \draw(0,0)[thick] (D)-- (G);
           \draw(0,0)[thick] (D)-- (H);
           \draw(1.5,0.8) node {$O_a=2$};
           \draw (0,1) arc (90:120:1.5);
           \draw (0,1) arc (90:60:1.5);
           \draw(2.4,2.8 ) node {$O_b=1$};
          \draw(-0.2 ,2.8 ) node {$O_d=2$};

                \draw (-1,3) arc (90:120:1.5);
                \draw (-1,3) arc (90:80:1.5);
                \draw (1,3) arc (90:100:1.5);
                \draw (1,3) arc (90:60:1.5);
       \end{tikzpicture}
             \end{center}
		\caption{Structure $|\mathcal{CC}_a^d|$: an example for $r=3$, $d=2$; green nodes belong to $|\mathcal{CC}_a^d|.$ }
            \label{fig:gnr_fig2}
	\end{figure}
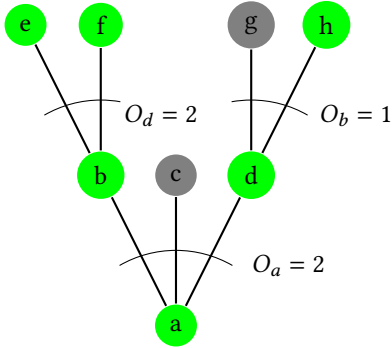

We can now recall and adapt a few standard asymptotic results on  Galton-Watson (GW) Processes    (in a standard GW process  
all nodes in the tree give origin to  number of off-springs, which are identically distributed and independent).
First observe that, in our case,  as $|\mathcal{A}|\to \infty$ we can assume that  $d\to \infty$ as well, for example choosing $d$ as in Proposition \ref{prop:gnr}.
%$d=\lfloor (\frac{1}{2}-\varepsilon )(\log_{r-1}|\mathcal{A}| )\rfloor$ with $0<\varepsilon<1/2$.

We say that a standard GW process is super-critical if the average number of off-springs of every node $\mathbb{E}[O_a]:=m>1$.
Now for a  standard supercritical GW process, denoted with  $Z_i$ the number 
of  nodes belonging to generation $i$, we have:
\begin{theorem}[extinction-explosion principle]
For every super-critical nontrivial~\footnote{ a GW process is non trivial if $\mathbb{P}(O_a=0)>0$}  GW process
$\{Z_i\}_i$   is bound to either extinction or explosion, i.e.,
\[
\mathbb{P}\left(Z_i=0 \text{ eventually}\right)+ \mathbb{P}\left(\lim \frac{Z_i}{m^i}>0\right)=1.
\]
\end{theorem}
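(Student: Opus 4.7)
The plan is to establish the dichotomy via the classical martingale approach, which is particularly clean here because the offspring distribution is Binomial and therefore has all moments finite.

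First, I would verify that $W_i := Z_i/m^i$ is a non-negative martingale with respect to the natural filtration $\mathcal{F}_i = \sigma(Z_0,\ldots,Z_i)$. Writing $Z_{i+1}$ as a sum of $Z_i$ i.i.d.\ copies of the offspring variable and using the independence assumption gives $\mathbb{E}[Z_{i+1}\mid\mathcal{F}_i] = m\, Z_i$, hence $\mathbb{E}[W_{i+1}\mid\mathcal{F}_i]=W_i$. By Doob's martingale convergence theorem, $W_i\to W$ almost surely for some $W\geq 0$ with $\mathbb{E}[W]\leq 1$. This immediately implies that on the event $\{W>0\}$, we have $Z_i\sim W m^i\to\infty$, so the process does not go extinct on that event.

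Second, I would identify the extinction probability $q := \mathbb{P}(Z_i=0 \text{ eventually})$ as the smallest fixed point in $[0,1]$ of the offspring probability generating function $f(s) = \mathbb{E}[s^{O_a}]$. A standard convexity argument, combined with $m = f'(1) > 1$ and the nontriviality $\mathbb{P}(O_a=0)>0$ (which is true for the Binomial offspring as long as $p_{k_a}<1$), gives $q\in(0,1)$ and further $\mathbb{P}(Z_i\to\infty) = 1-q > 0$. It remains to rule out a third asymptotic behavior (that $Z_i$ stays bounded but does not die out): conditional on $Z_i=n$ bounded infinitely often, a Borel-Cantelli argument using $\mathbb{P}(O_a=0)^n>0$ shows extinction occurs a.s., establishing the ``either extinction or explosion'' part.

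The remaining and main obstacle is upgrading ``explosion'' to ``$W>0$.'' For a Binomial offspring law, $\mathbb{E}[O_a^2]<\infty$, so I would use $L^2$ martingale theory rather than invoke the general Kesten--Stigum theorem. A direct computation yields
\begin{equation*}
\mathbb{E}[W_{i+1}^2\mid\mathcal{F}_i] = W_i^2 + \frac{\sigma_O^2}{m^{i+2}}\, W_i,
\end{equation*}
where $\sigma_O^2$ is the offspring variance. Taking expectations and iterating shows $\sup_i\mathbb{E}[W_i^2]<\infty$, so $W_i\to W$ in $L^2$ and in particular $\mathbb{E}[W]=1$, giving $\mathbb{P}(W>0)>0$. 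To conclude $\{W>0\}=\{\text{non-extinction}\}$ almost surely, I would use the branching recursion: if $q' := \mathbb{P}(W=0)$, then decomposing by the first generation gives $q' = f(q')$, so $q'\in\{q,1\}$; since $\mathbb{E}[W]=1$ rules out $q'=1$, we get $q'=q$, which yields the desired identity.
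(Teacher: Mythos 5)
Your proof is correct, but note that the paper does not actually prove this statement: it is recalled as a standard fact about Galton--Watson processes (the surrounding text says ``we can now recall and adapt a few standard asymptotic results''), so there is no in-paper argument to compare against. What you supply is the classical textbook proof, and every step checks out: the martingale property of $W_i = Z_i/m^i$ and a.s.\ convergence to some $W\ge 0$; the transience of every state $n\ge 1$ (via $\mathbb{P}(O_a=0)^n>0$), which gives the extinction-or-explosion dichotomy; the second-moment recursion $\mathbb{E}[W_{i+1}^2\mid\mathcal{F}_i]=W_i^2+\sigma_O^2 W_i/m^{i+2}$, which is exactly right and yields $\sup_i\mathbb{E}[W_i^2]<\infty$, hence $L^2$ convergence and $\mathbb{E}[W]=1$; and the fixed-point argument $\mathbb{P}(W=0)=f(\mathbb{P}(W=0))$ identifying $\{W>0\}$ with survival up to null sets. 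One caveat worth flagging: as literally stated (``for every super-critical nontrivial GW process''), the identity with $\lim Z_i/m^i>0$ requires the Kesten--Stigum condition $\mathbb{E}[O_a\log O_a]<\infty$; without it $W=0$ a.s.\ and the claim fails, with only the weaker dichotomy $\mathbb{P}(Z_i\to 0)+\mathbb{P}(Z_i\to\infty)=1$ surviving. You correctly sidestep this by exploiting the Binomial offspring law's finite variance, which is all the paper needs, but your proof therefore establishes a (correct) special case rather than the over-general statement as written.
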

The previous result can immediately extended to our two-stage branching process as $d\to \infty$. Denoted with $m_0= rp_{k_a} $ and 
$m:=(r-1)p_{k_a}$, under the assumption that $m>1$ we obtain that:
\begin{equation} \label{exploesti}
\mathbb{P}\left(Z_i=0 \text{ eventually}\right)+ \mathbb{P}\left(\lim \frac{Z_i}{m_0m^{i-1}}>0\right)=1.
\end{equation}

Indeed the distribution of off-springs at the root has no impact on the structural properties of the process.

Now we focus on the extinction probability 
$q_{2BP}:=\mathbb{P}\left(Z_i=0 \text{ eventually}\right)$  in a two-stages branching process, as our process.
To compute it, we can adapt classical results on GW. It turns out that:
\begin{theorem}\label{th:extinction}
Consider a two-stage branching process, in which the 
number of offspring of the root is $\text{Bin}(p_k,r)$ while the off-spring of every other node  $\text{Bin}(p_k,r-1)$.
Its extinction probability $ q_{2BP}$
is given by 
\begin{equation*}
    q_{2BP}= \sum_{h=0}^r \binom{r}{h} p_k^h (1-p_k)^{r-h} q_{GW}^h=
    [(1-p_k)+ p_k q_{GW}]^r
\end{equation*}
where $q_{GW}$ is the extinction probability of a standard GW, with distribution of off-springs given by $\text{Bin}(r-1,p_k)$.   
$q_{GW}$ can be easily computed as the only solution in $(0,1)$
of equation: 
\[
t = [(1- p_k) +p_k t]^{r-1}
\]
\end{theorem}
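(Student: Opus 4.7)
The plan is to reduce the computation of $q_{2BP}$ to the classical Galton-Watson extinction analysis by conditioning on the root's offspring. Specifically, I would first observe that, by construction, the root of the two-stage branching process has a number $H \sim \mathrm{Bin}(r,p_k)$ of children, and that, conditionally on $H=h$, each of these $h$ children independently initiates a \emph{standard} Galton-Watson subtree whose every node has offspring distribution $\mathrm{Bin}(r-1,p_k)$. Since the entire process goes extinct if and only if all these independent subtrees do, a straightforward application of independence yields
\[
\mathbb{P}(Z_i = 0 \text{ eventually} \mid H = h) = q_{GW}^h,
\]
where $q_{GW}$ denotes the extinction probability of a single such standard GW.

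The second step is to marginalize over $H$. Averaging the previous identity against the $\mathrm{Bin}(r,p_k)$ law gives
\[
q_{2BP} = \mathbb{E}[q_{GW}^H] = \sum_{h=0}^{r}\binom{r}{h} p_k^h (1-p_k)^{r-h}\, q_{GW}^h = \bigl[(1-p_k) + p_k q_{GW}\bigr]^{r},
\]
where the last equality is simply the binomial theorem (equivalently, the probability generating function of $\mathrm{Bin}(r,p_k)$ evaluated at $q_{GW}$). This establishes both equalities in the statement modulo the characterization of $q_{GW}$.

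Finally, to pin down $q_{GW}$ I would invoke the classical Galton-Watson extinction theorem: for a standard GW process with offspring PGF $f(t) = \mathbb{E}[t^X]$, the extinction probability is the smallest fixed point of $f$ in $[0,1]$. For $X \sim \mathrm{Bin}(r-1,p_k)$ this gives $f(t) = [(1-p_k) + p_k t]^{r-1}$, so $q_{GW}$ solves $t = [(1-p_k) + p_k t]^{r-1}$. Under the supercriticality assumption $m = (r-1)p_k > 1$ already adopted for Theorem~\ref{th:extinction} to apply, $f$ is strictly convex on $[0,1]$ with $f(0) = (1-p_k)^{r-1} > 0$, $f(1) = 1$, and $f'(1) = m > 1$; these conditions together force the existence of exactly one fixed point in $(0,1)$, which must be $q_{GW}$. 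I do not expect any serious obstacle in this proof: the only delicate point is the correct invocation of the extinction criterion (smallest non-negative fixed point, with $q_{GW} < 1$ iff $m > 1$), which is entirely standard. The two-stage structure is then handled transparently by the conditioning argument above, and the binomial-theorem step makes the closed form immediate.
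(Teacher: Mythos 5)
Your proof is correct and follows essentially the same route as the paper: condition on the root's offspring count $H\sim\mathrm{Bin}(r,p_k)$, use the independence of the $h$ standard GW subtrees to get $q_{GW}^h$, and average via the binomial theorem, with $q_{GW}$ characterized as the fixed point of the $\mathrm{Bin}(r-1,p_k)$ offspring PGF. The only difference is that you spell out the convexity argument for uniqueness of the fixed point in $(0,1)$ under supercriticality, which the paper leaves implicit.
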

\begin{proof} 
The proof is immediate, considering that: i) every sub-tree originated by an offspring of the ancestor has the same structure of a
 standard GW. The event $\{Z_i=0\}$   is equivalent to event  $\{ \text{every sub-tree originated by every offspring of the ancestor}\\ \text{ is extincted within $i-1$ generations} \}$. Then conditioning on the number of off-springs of the ancestor we get the claim. 
 Of course, previously computed asymptotic extinction probability $ q_{2BP}$ provides an upper bound to the probability of early extinction of 
 a  $d$-depth truncated two-stage Branching process.
 \end{proof}

\begin{table}
\begin{center}
\begin{tabular}{l|l|l|l}
\toprule
     & $p_{k_a}=1/2$  & $p_{k_a}=1/4$  & $p_{k_a}=1/8$ \\
     \midrule
    $r=4$ &    $0.146$     &      $1$         &      $1$       \\    
    $r=8$ &    $4.17\cdot 10^{-3}$          &    $0.176$           & $1$            \\
    $r=16$ &   $1.52 \cdot 10^{-5}$          &    $1.08\cdot 10^{-2}$      &    $0.190$                \\
    $r=32$ &   $2.32 \cdot 10^{-10}$         &  $1.01  \cdot 10^{-4}$ &       $1.51   \cdot 10^{-2}$                 \\
\bottomrule
\end{tabular}
\end{center}
\caption{extinction probability, $q_{2BP}$, for different values of $r$ and $p_{k_a}$.}
\label{tab:extinction_prob}
\end{table}

 At last observe that, choosing $d$ as in Proposition \ref{prop:gnr}, we have 
 $m_0m^{d-1}= \widetilde \Theta( |\mathcal{A}|^{\frac{1}{2}(1+\log_{r-1} p_{k_a})}) $,  therefore, 
 recalling that by construction $|\mathcal{CC}_a^d|> Z_d$,  by \eqref{exploesti},
 we can always  select $f( |\mathcal{A}| )$ that satisfies jointly  $f( |\mathcal{A}| )= 
 o(m_0m^{d-1}) $ and $f( |\mathcal{A}| )= \widetilde \Theta( |\mathcal{A}|^{\frac{1}{2}(1+\log_{r-1} p_{k_a}) } )$, such that: 
\[
\lim_{|\mathcal{A}|\to \infty} \mathbb{P}(|\mathcal{CC}_a^d|>f(n) )=1-q_{2BP}.
\]
Moreover, note that the extinction probability $q_{2BP}$ is actually a function of its two parameters $(r,p_{k_a})$,  its is rather immediate to show that
\[
\lim_{r\to \infty} q_{2BP}(r,p_{k_a})=0   \qquad \forall p_{k_a}>0
\]
Therefore choosing $r$ sufficiently large we can make $q_{2BP}(r,p_{k_a})$ arbitrarily small
and at the same time guarantee $|\mathcal{A}|^{\frac{1}{2}(1+\log_{r-1} p_{k_a})}> |\mathcal{A}|^{ \frac{1}{2}-\phi }$ for an arbitrarily small  $\phi>0$.

At last observe that if we turn our attention to $\mathcal{CC}_a$, since by construction we have  $\mathcal{CC}_a^d\subseteq  \mathcal{CC}_a$ $\forall d,a$,  rather immediately we have:
\[
\lim_{|\mathcal{A}|\to \infty} \mathbb{P}(|\mathcal{CC}_a|>g(n) )=1-q_{2BP}.
\]
for any $g(n)=o(|\mathcal{A}|^{\frac{1}{2}(1+\log_r p_{k_a}) })$.

\newpage

\section{Appendix H - Proof of the results in Table~\ref{t:comparison}}
\label{app:order_sense}

The results in Table~\ref{t:comparison} follow immediately, once we derive the asymptotics for $n^\star_\gamma(x)$   in the sub-Gaussian setting and in bounded 4-th moment setting. The asymptotics for $\tilde{n}_{\gamma}(x)$ are immediate to derive from the bounds in Theorem~\ref{thm:Bcolme_thm2}.

\subsection{$n^\star_\gamma(x)$, sub-Gaussian setting}
Remember that 
 $n^\star_\gamma(x) \coloneqq \lfloor \beta^{-1}_\gamma(x) \rfloor$, i.e., $n^\star_\gamma(x)$ is the smallest integer $n$ such that
 \[\beta_\gamma(n):= \sigma \sqrt{\frac{2}{n}\left(1+ \frac{1}{n}\right)\ln(\sqrt{(n+1)}/\gamma )}\le x.\]
As we are interested in upper bounds for $n^\star_\gamma(x)$, we can start by upperbounding the left-hand side expression.
\begin{align}
     \sigma \sqrt{\frac{2}{n}\left(1+ \frac{1}{n}\right)\ln\left(\frac{\sqrt{(n+1)}}{\gamma} \right)}
     & \le \sigma \sqrt{\frac{4}{n}\ln\left(\frac{\sqrt{2 n}}{\gamma} \right)}
     = \sigma \sqrt{\frac{2}{n}\ln\left(\frac{2n}{ \gamma^2} \right)},
\end{align}
and imposing that the right-hand side is smaller than $x$, we obtain:
\begin{align}
    \sigma^2 \frac{2}{n}\ln\left(\frac{2n}{ \gamma^2} \right) & \le x^2\\
    \frac{4 \sigma^2}{\gamma^2 x^2} \ln\left(\frac{2n}{ \gamma^2} \right) & \le \frac{2 n }{\gamma^2}.
\end{align}
From \cite[Lemma A.1]{shalev_understanding} a sufficient condition for this inequality to hold is 
\begin{align}
    \frac{8 \sigma^2}{\gamma^2 x^2} \ln\left(\frac{4 \sigma^2}{\gamma^2 x^2} \right) \le  \frac{2 n }{\gamma^2},
\end{align}
and then 
\begin{align}
    \frac{4 \sigma^2}{ x^2} \ln\left(\frac{4 \sigma^2}{\gamma^2 x^2} \right) \le  n.
\end{align}
We can conclude that 
\begin{align}
    n^\star_\gamma(x) \in \bigO\left(  \frac{\sigma^2}{ x^2} \ln\left(\frac{ \sigma}{\gamma x} \right)\right), 
\end{align}
from which the asymptotics for $\zeta_a$ and $\tau_a$ can be derived opportunely replacing $x$ and $\gamma$.

\subsection{$n^\star_\gamma(x)$, Bounded 4-th Moment Setting}
The reasoning is analogous, but we start from 
 \begin{align}
 \beta_\gamma(n)= \left(2\frac{\kappa + 3 \sigma^4}{\gamma}\right)^{\frac{1}{4}} \left(\frac{1 + \ln^2 n}{n}\right)^{\frac{1}{4}}.
 \end{align}
 For $n \ge 3$
 \begin{align}
    \left(2\frac{\kappa + 3 \sigma^4}{\gamma}\right)^{\frac{1}{4}} \left(\frac{1 + \ln^2 n}{n}\right)^{\frac{1}{4}}   
    & \le  \left(2\frac{\kappa + 3 \sigma^4}{\gamma}\right)^{\frac{1}{4}} \left(\frac{2 \ln^2 n}{n}\right)^{\frac{1}{4}}, 
 \end{align}
 and imposing the RHS to be smaller than $x$:
  \begin{align}
    4 \frac{\kappa + 3 \sigma^4}{\gamma} \frac{\ln^2 n}{n} & \le x^4\\
    4 \frac{\kappa + 3 \sigma^4}{\gamma x^4} {\ln^2 n} & \le {n},
 \end{align}
 and from Lemma~\ref{l:nasty_inequality} a sufficient condition for this inequality to hold is
 \begin{align}
     \max\left\{16 \frac{\kappa + 3 \sigma^4}{\gamma x^4} {\ln^2 \left( 16 \frac{\kappa + 3 \sigma^4}{\gamma x^4} \right)}, 1 \right\} & \le {n}.
 \end{align}
We can conclude that 
\begin{align}
    n^\star_\gamma(x) \in \tilde{\bigO}\left( \frac{\kappa + 3 \sigma^4}{\gamma x^4}\right).
\end{align}
\newpage

\section{Appendix I - Additional Performance Evaluation of the \belief{} and \consensus{}}
\label{sec:add_performance}

In this Appendix, we report further results  on the  performance of \belief{} and \consensus{}, as a function of the underlying graph structure and characterizing parameters.

\begin{figure}[h!]
    %\centerline{\includegraphics[width=0.65\textwidth]{figs/updated_figs/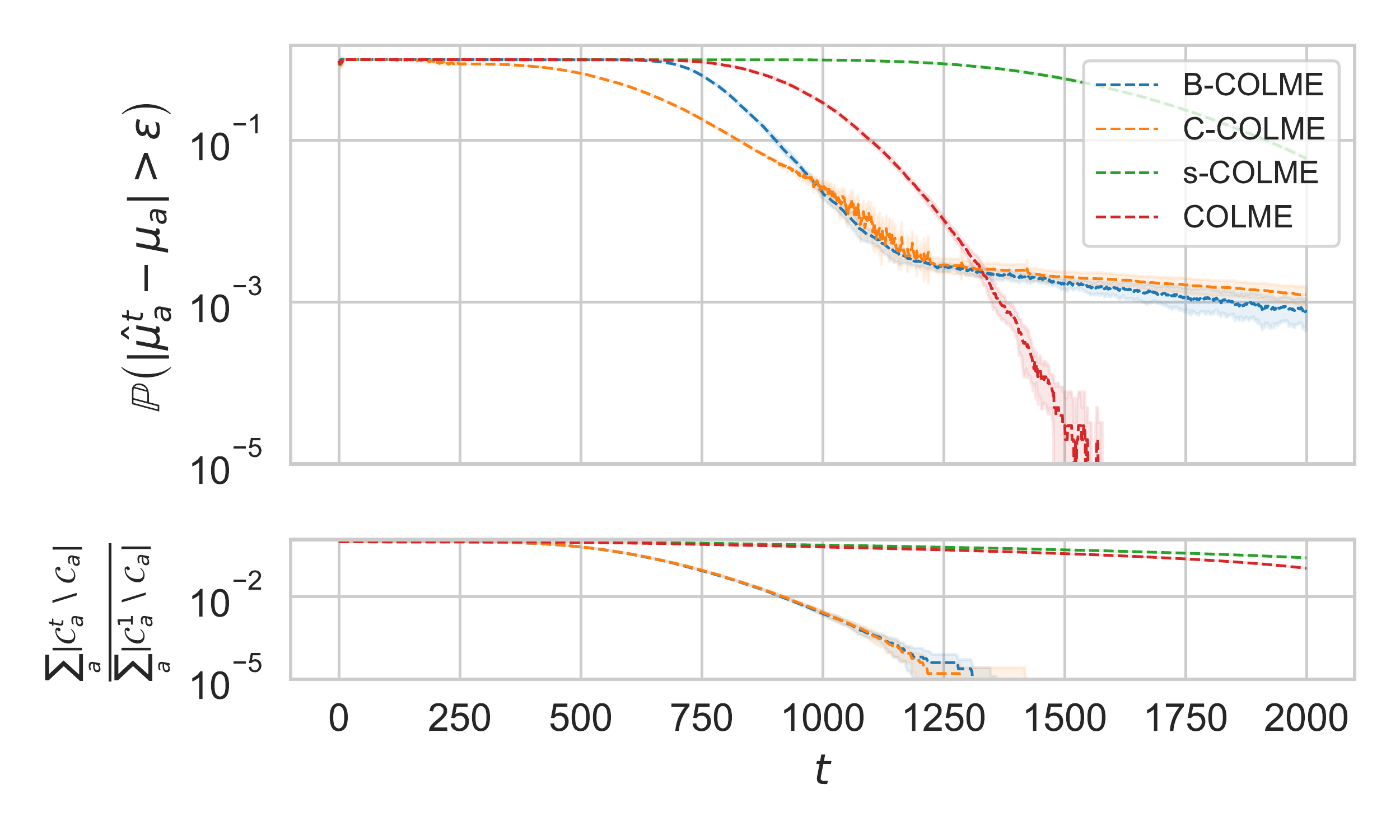}}
	\centerline{\includegraphics[width=0.65\textwidth]{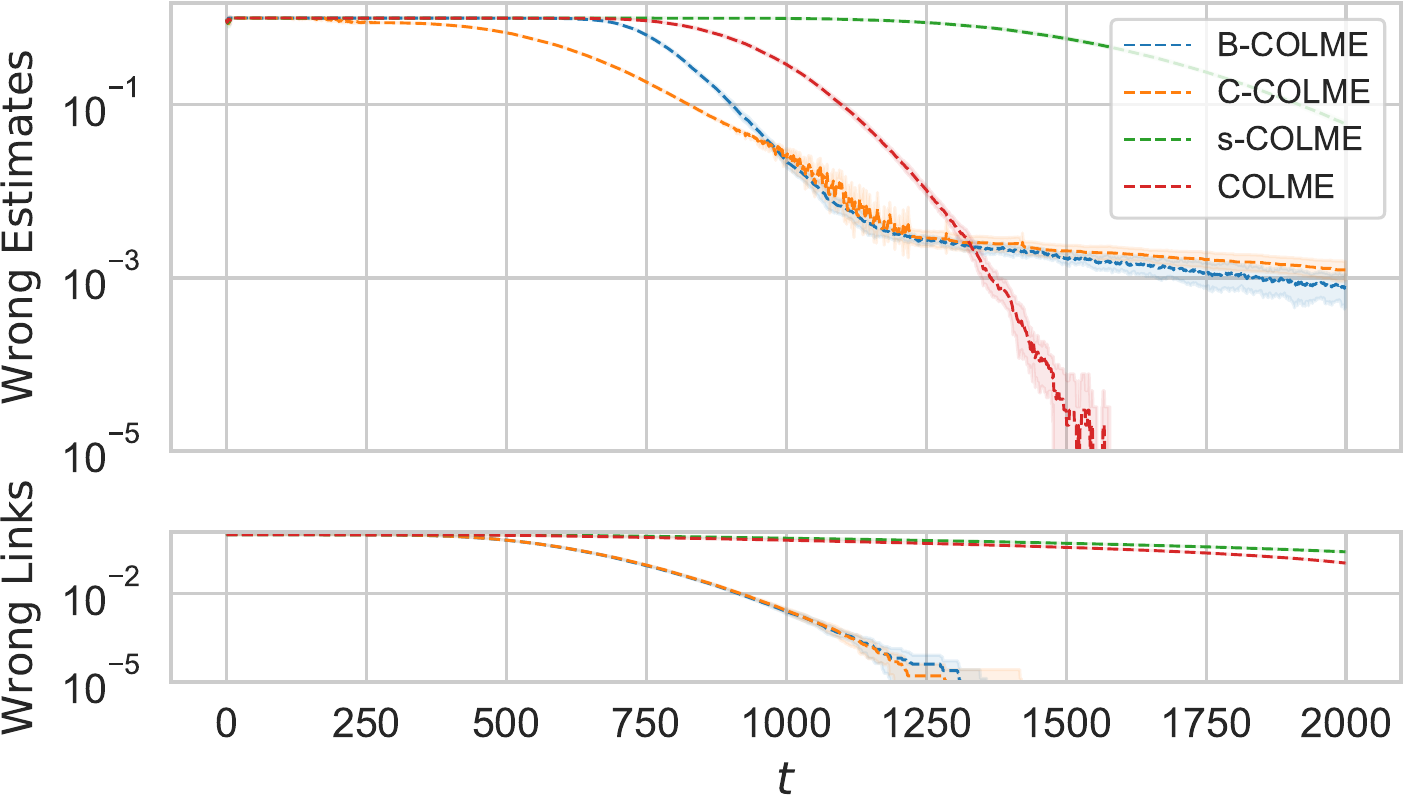}}
    \caption{All approaches compared considering a smaller $r$ compared to Figure 2 ($r=5$), this shows that $r$ needs to be chosen appropriately (large enough).} \label{fig:comparison_r5}
\end{figure}

\subsection{Over a $G_0(N,r)$ Varying $r$}

Here, we explore the performance of the two proposed \textit{scalable} algorithms as a function of the number of neighbors they are allowed to contact during the dynamics, i.e., the parameter $r$ of the $G_0(N,r)$ graph.

\begin{figure}[h!]
    % \centerline{\includegraphics[width=0.75\textwidth]{figs/updated_figs/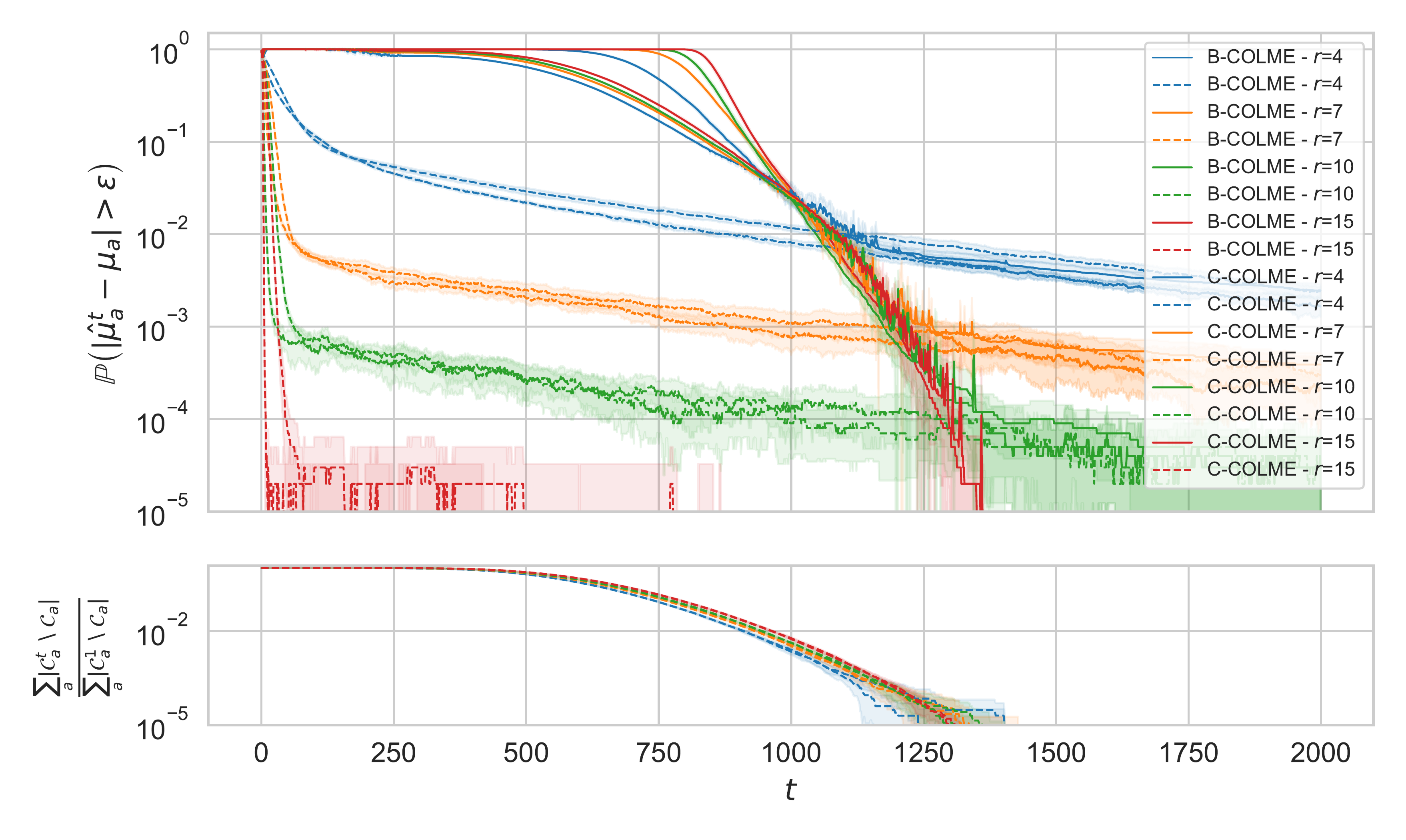}}
	\centerline{\includegraphics[width=0.7\textwidth]{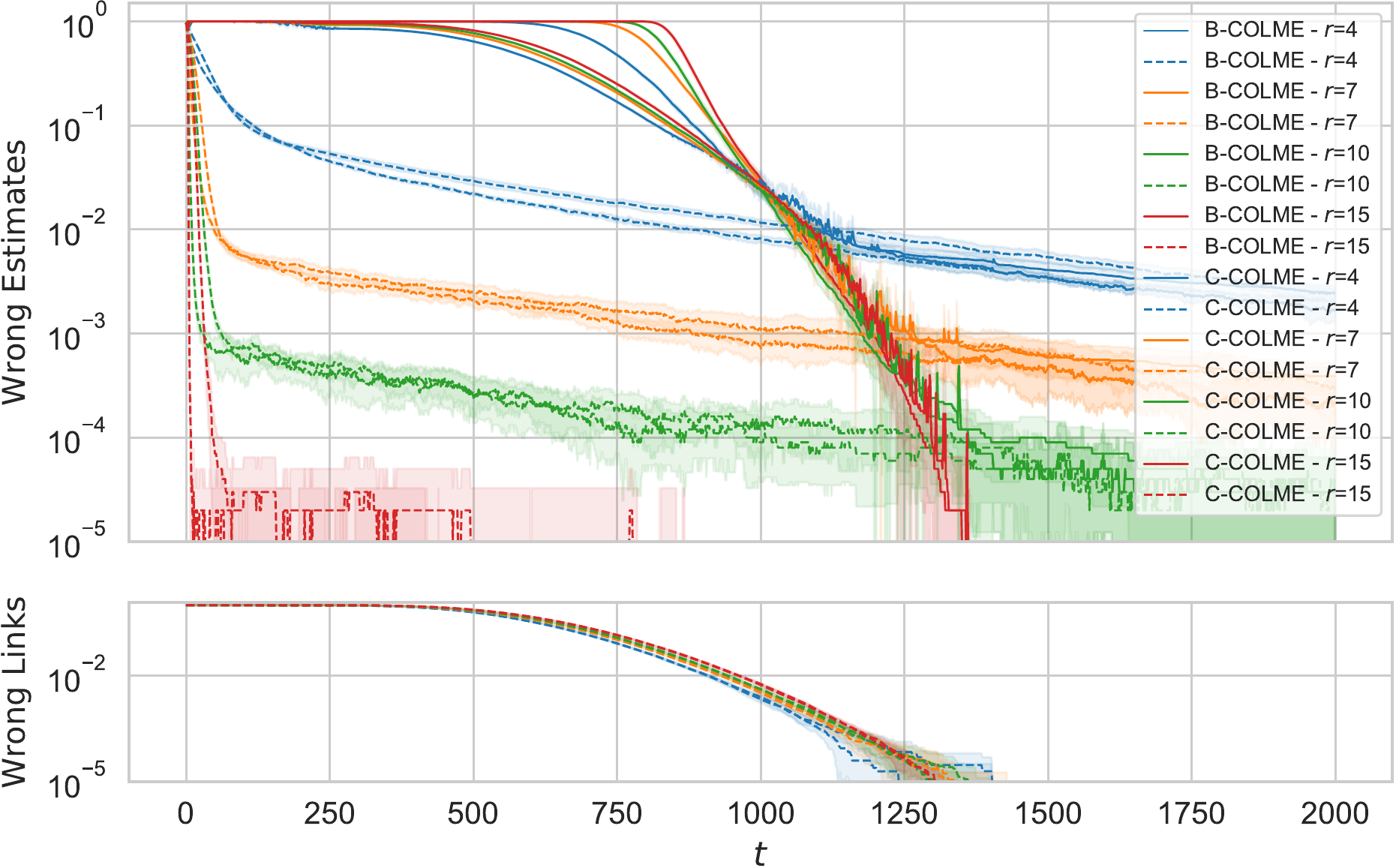}}
    \caption{Performance comparison as a function of the parameter $r$ of $G_0(N,r)$ for \belief{} and \consensus{}. Dashed line is used for the \textit{oracle} while solid line for the algorithm.} \label{fig:all_vary_r}
\end{figure}

% \subsubsection{Over a Watts-Strogatz Graph}
% \vspace{4cm}

% \subsection{Scalable Algorithms Using the Generalized Chebychev Confidence Interval}
% \vspace{4cm}

\subsection{\belief{} as a Function of the Depth of Information Kept}
\label{sec:belief_depth}

In the \belief{} algorithm, many cycles in the graph can degrade the performance of the algorithm (hence the need for a tree-like local structure). Here we study the performance of the algorithm as a function of the depth $d$ of the neighborhood that receives the estimate of a given node, and we find that a high value of this parameter eventually degrades the performance of the algorithm.

\begin{figure}[h!]
    % \centerline{\includegraphics[width=0.7\textwidth]{figs/updated_figs/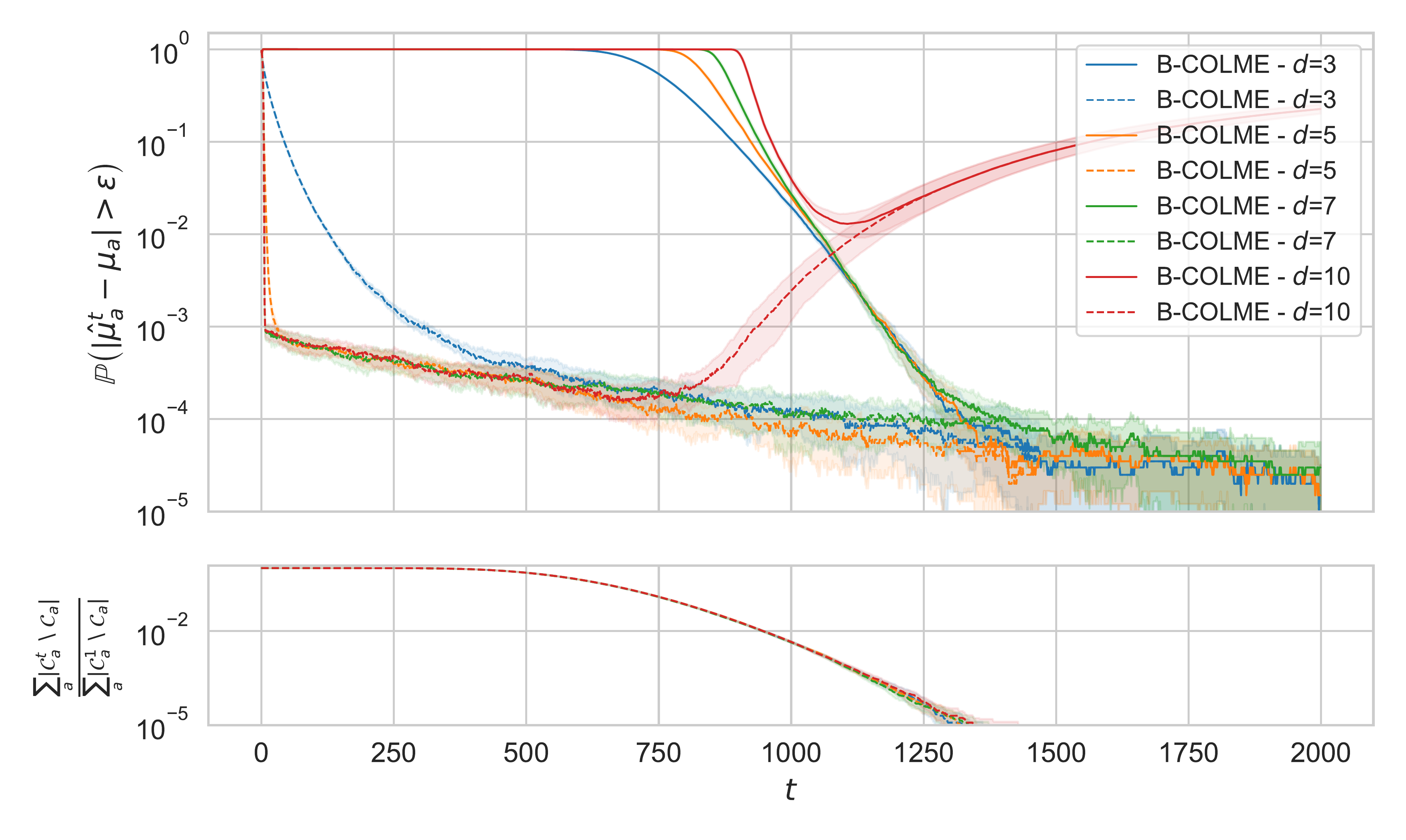}}
	\centerline{\includegraphics[width=0.7\textwidth]{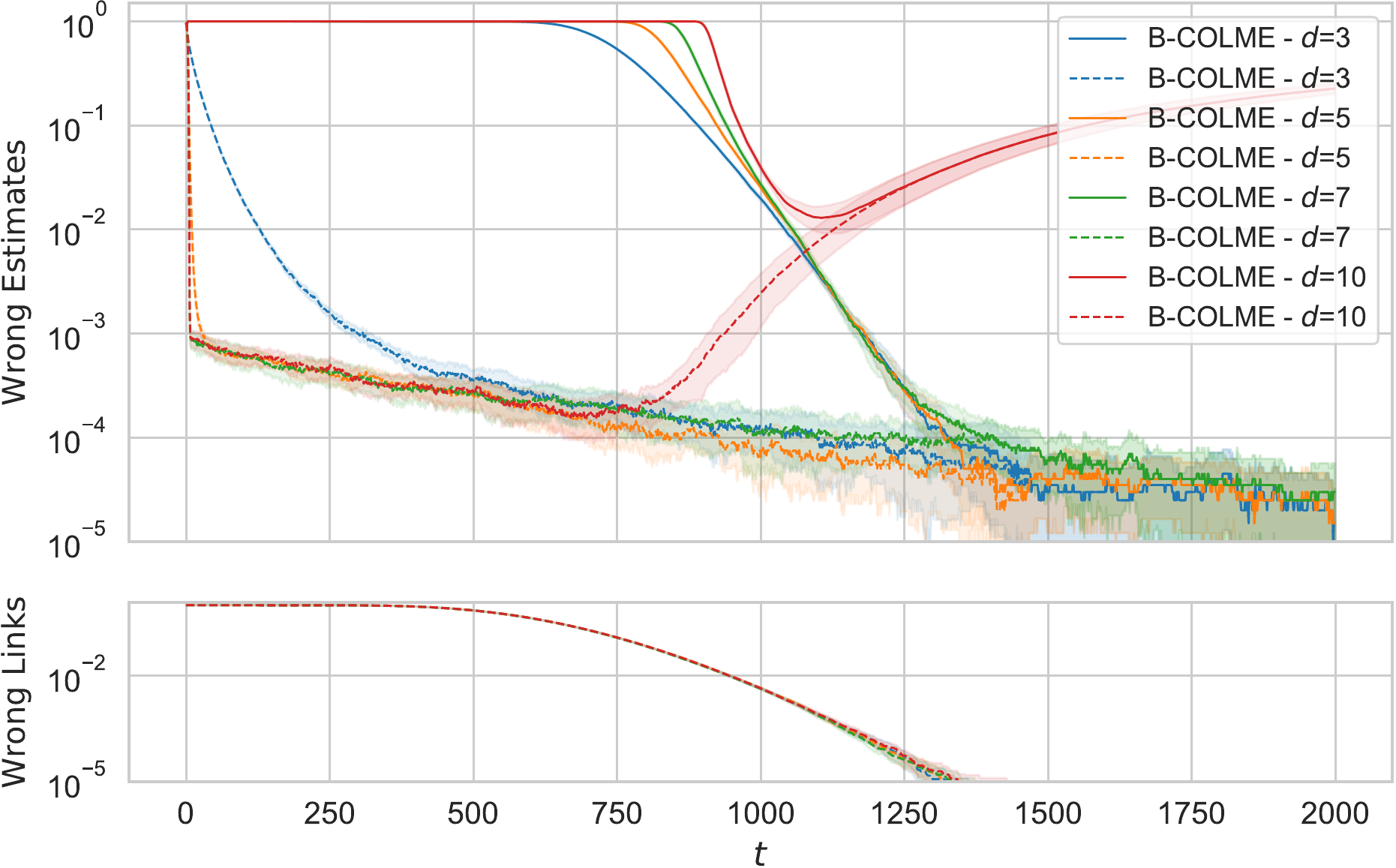}}
    \caption{Performance comparison of \belief{} as a function of the \textit{depth} $\kappa$ of the info kept. Dashed line is used for the \textit{oracle} while solid line for the algorithm.} \label{fig:belief_vary_d}
\end{figure}

\subsection{\consensus{} (Constant $\alpha$) as a Function of the Weight $\alpha$}

We report some experiments considering $\alpha$ constant, as opposed as $\alpha=\frac{t}{t+1}$ (refreshed at each topology modification), and explore the impact of the parameter on the probability of error.

\begin{figure}[h!]
    % \centerline{\includegraphics[width=0.7\textwidth]{figs/updated_figs/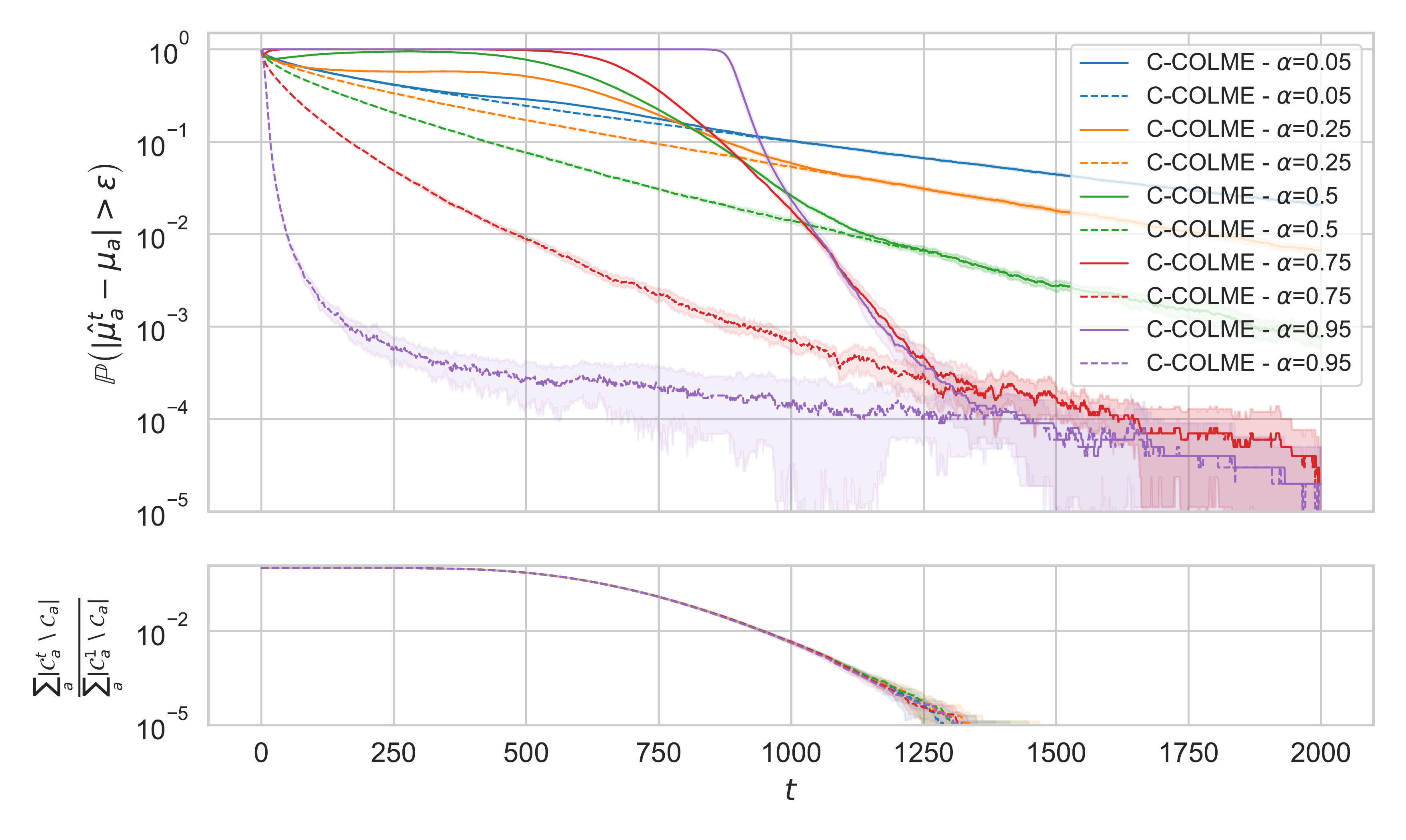}}
	\centerline{\includegraphics[width=0.7\textwidth]{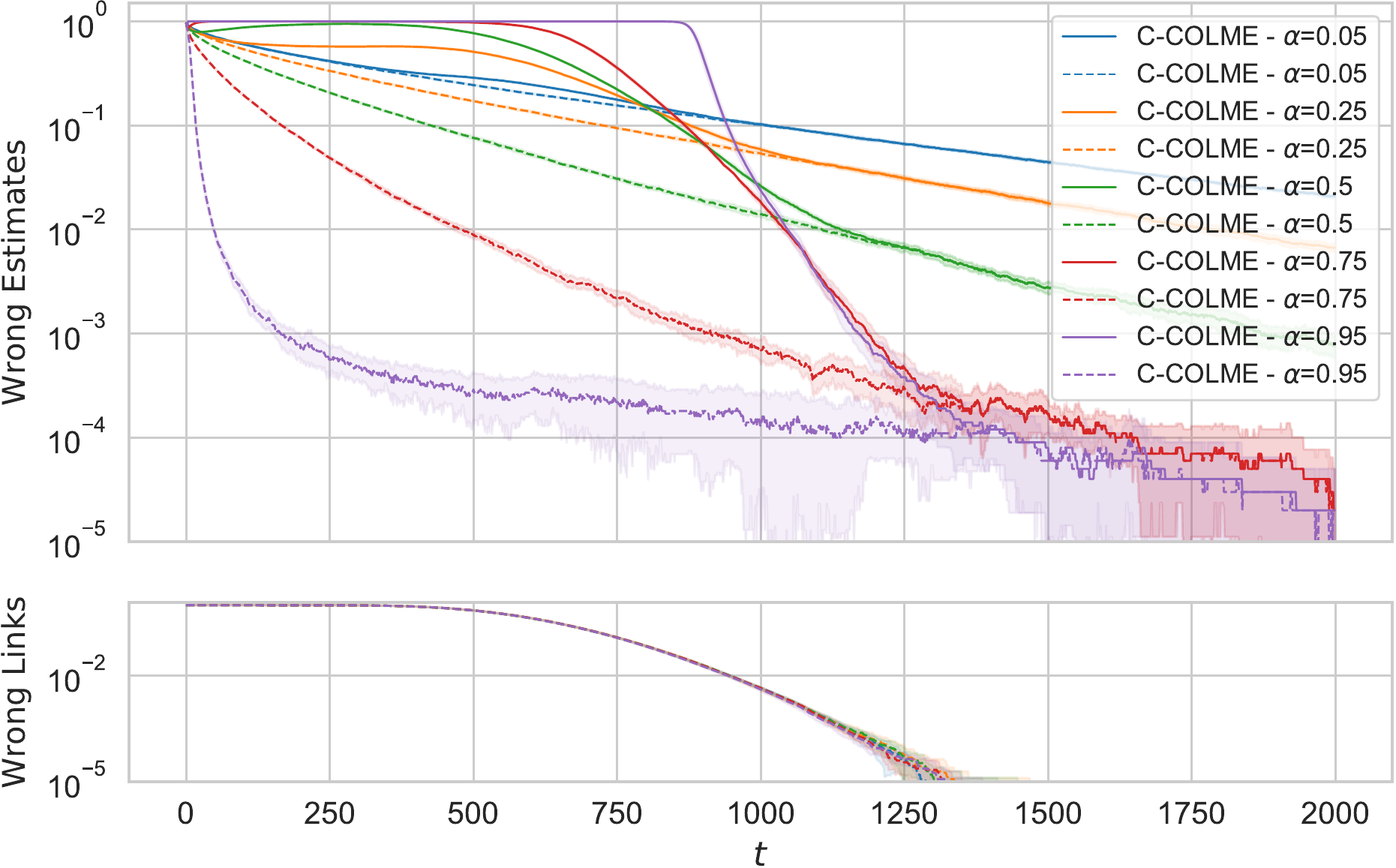}}
    \caption{Performance comparison of \consensus{} as a function of the weight $\alpha$ when it is considered as constant. Dashed line is used for the \textit{oracle} while solid line for the algorithm.} \label{fig:consensus_const_alpha}
\end{figure}

\newpage

\section{Appendix J - Additional Details on the Decentralized Federated Learning Approach FL-DG}
    \label{sec:add_ML_exp}

    \belief{} and \consensus{} are useful in contexts where nodes want to learn the preferences of users by using information from other peers in addition to local data, or when sensors jointly try to estimate certain quantities in a very heterogeneous environment (e.g., smart farming).
    Nevertheless, it is of great interest to apply these techniques in the context of decentralized personalized federated learning, where the goal of the nodes is to learn a machine learning model on a given local dataset $D_a$ while having the possibility to collaborate with other agents (assuming that they can be classified into one of $C$ possible classes, each characterized by a distribution $D_c$). We propose FL-DG, a decentralized FL algorithm inspired by \belief{} and \consensus{}, where agents decide which peers to collaborate with based on the cosine similarity between the weights updates of their models. We compare it to a classical decentralized FL algorithm over a static graph (FL-SG).

    \begin{center}
    \centering
        \begin{minipage}{.55\linewidth}
            \begin{algorithm}[H]
            \centering
            \caption{FL-SG Training}
            \label{alg:ML_training}
            \begin{algorithmic}
            \STATE {\textbf{Input:}} $\,\,\,\, \mathcal{G} = \left(\mathcal{A}, \mathcal{E}\right)$, $\, D_a \in \left\{ D_c \right\}_{c=1}^{C} \forall a \in \mathcal{A}$, $\, \theta_0$
            \STATE {\textbf{Output:}} collaborative FL-SG model $\theta_a, \, \forall a \in \mathcal{A}$
            \STATE $\mathcal{C}_a^0 \leftarrow \mathcal{N}_a^t \; \forall a \in \mathcal{A}$, $\quad \theta^0_a \leftarrow \theta_0$, $\quad \theta^0_l \leftarrow \theta_0 \; \forall l \in \mathcal{E}$
            \WHILE{new sample $s_a^t$ arrives at time $t$} \STATE{\textit{// Training Phase}}
                \FOR{node $a$ in $\mathcal{A}$}
                    \FOR{epoch $e$ in $\{1,..,E\}$}
                        \FOR{minibatch $M_{a}^t$ in $\{M_{a,0}^t, M_{a,1}^t\}$}
                            \STATE $\theta^{t+1}_a \leftarrow \theta_a^t + \text{SGD}(\theta_a^t, D_a^t)$
                            \FOR{neighbor $a'$ in $\mathcal{N}_a \cap \mathcal{C}_a^t$}
                                \STATE $\theta_{a,a'}^{t+1} \leftarrow \theta^{t}_{\{a,a'\}} + \text{SDG}(\theta^t_{\{a,a'\}}, M_a^t)$
                            \ENDFOR
                        \ENDFOR
                    \ENDFOR
                \ENDFOR                      
                \STATE{\textit{// Discovery Phase}}
                \FOR{undirected link $\{ a,a'\}$ in $\mathcal{E}$}
                    \STATE $\Delta \theta_{a,a'}^{t+1} \leftarrow \theta_{a,a'}^{t+1} - \theta_{\{ a,a' \}}^t$
                    \STATE $\Delta \theta_{a',a}^{t+1} \leftarrow \theta_{a',a}^{t+1} - \theta^t_{\{ a,a' \}} $
                    \STATE $\omega_{\{a,a'\}}^{t+1} \leftarrow \frac{1}{t+1} \frac{\langle \Delta \theta_{a,a'}^{t+1}, \Delta \theta_{a',a}^{t+1}\rangle}{||\Delta \theta_{a,a'}^{t+1}||\cdot||\Delta \theta_{a',a}^{t+1}||} + \frac{t}{t+1} \omega_{\{a,a'\}}^{t}$
                    \IF{$\omega_{a,a'}^{t+1} < \varepsilon_1$}
                        \STATE $\mathcal{C}_a^{t+1} \leftarrow \mathcal{C}_a^{t} \setminus a',\,$ and $\,\, \mathcal{C}_{a'}^{t+1} \leftarrow \mathcal{C}_{a'}^{t} \setminus a$
                    \ENDIF
                \ENDFOR
                \STATE{\textit{// Model Updating Phase}}
                \FOR{node $a$ in $\mathcal{A}$}
                    \STATE $\theta_a^{t+1} \leftarrow \frac{1}{|\mathcal{C}_a^{t+1}|+1} \theta_a^{t+1}$ 
                    \FOR{opt neighbor $a'$ in $\mathcal{C}_a^{t+1}$}
                        \STATE $\theta_a^{t+1} \leftarrow \theta_a^{t+1} + \frac{1}{|\mathcal{C}_a^{t+1}|+1} \theta_{a'}^{t+1}$ 
                    \ENDFOR
                \ENDFOR
                \FOR{undirected link $\{a,a'\} $ in $ \mathcal{E}$}
                    \STATE {$\theta_{\{a,a'\}}^{t+1} \leftarrow \frac{\theta_{a,a'}^{t+1} + \theta_{a',a}^{t+1}}{2} \quad$ \textit{// Update Link Model}}
                \ENDFOR
                \STATE $t \leftarrow t + 1$
            \ENDWHILE
            \end{algorithmic}
            \end{algorithm}
        \end{minipage}
    \end{center}

    We focus on the case $C=2$ and use the MNIST dataset \cite{mnist}. To obtain two different distributions from the MNIST dataset, we simply swap two labels (\lq\lq 3\rq\rq\ and \lq\lq 5\rq\rq\ as well as \lq\lq 1\rq\rq\ and \lq\lq 7\rq\rq). Each node has the task of recognizing handwritten digits using its local data and collaborating with neighboring nodes over $\mathcal{G}$, which is a complete graph in our scenario.
    We use a very simple feedforward neural network model for all nodes. It consists of the input layer, a hidden layer with $100$ nodes, and the output layer. We indicate the parameters of the NN as $\theta^t_a$, for agent $a$ at time $t$. 

    Again, we consider an online setting in which agents receive new samples over time. In particular, each agent $a \in \mathcal{A}$ receives a new sample $s_a^t$ at every time instant $t$. Agents are initially assigned a local database of $M_{a,0}^0$ samples (in our example $|M_{a,0}^0|=30$), and with the new samples they construct two overlapping minibatch $M_{a,0}^t$ and $M_{a,1}^t$. We consider a time horizon leading to two non-overlapping minibatch, i.e., $t=|M_{a,0}^0|$. The agents train their model for $E$ epochs ($E=15$) over the two minibatch at each time instant.

    Differently from the \belief{} and \consensus{} mean estimation algorithms we have to modify the \textit{discovery} phase to a large extent since the task of mean estimation and model training are structurally different. In \cite{sattler} it was shown that it is possible to partition the agents in a federated learning framework by using the cosine similarity of the gradient updates (or the \textit{parameters} updates) of the considered agents. Note that cosine similarity values close to $1$ indicate similar models/agents, while lower values indicate increasingly different agents. This can be intuitively understood by observing that two nodes with different data distributions are optimizing different loss functions, and, if we constrain the starting point of the optimization to be the same for both agents, we will observe an increase in the angle between the vectors corresponding to the gradient updates (see Figure 2 in \cite{sattler} for an illustrative example).
    Subject to some regularity assumptions, it is indeed possible to use the cosine similarity of the parameter updates instead of gradients. Let us denote the updates as $\Delta \theta^t = \theta ^{t+1} -\theta ^t$.
    
    To allow nodes to discover their \textit{similar} neighbors, we define a \textit{link} model $\theta^t_{\{a,a'\}}$ (for each unordered pair $(a,a')$, \lq\lq shared\rq\rq\ between the nodes) and a node-link model $\theta^t_{a,a'}$ associated with a certain (ordered) neighbors pair $(a,a')$.
    Thus, every node $a \in \mathcal{A}$ keeps a model for each of its neighbors $a' \in \mathcal{N}_a$, i.e., $\theta^t_{a,a'}$. Then, at each training round, node $a$ retrieves the shared model $\theta^t_{\{a,a'\}}$ and, starting from those parameters, trains the node-link model $\theta_{a,a'}^t$ on its local data.
    
    After all nodes have performed the \textit{training} phase, they compute the \textit{similarity} metric between the models, i.e., the cosine similarity $\omega^t_{a,a'}$, which allows them to determine whether to collaborate with a neighbor or not. We can compute $\omega^t_{a,a'}$ as:
    
    \begin{equation}\label{eq:cos_sim}
        \omega_{\{ a,a' \}} ^ {t} = \frac{\langle \Delta \theta_{a,a'}^{t}, \Delta \theta_{a',a}^{t}\rangle}{||\Delta \theta_{a,a'}^{t}||\cdot||\Delta \theta_{a',a}^{t}||}
    \end{equation}

    This metric is updated at each iteration by making an average with the previous value (see Algorithm \ref{alg:ML_training}). Whenever $\omega^t_{a,a'}$ goes below a certain threshold $\varepsilon_1$, link $\{a,a'\}$ is deemed to be connecting nodes of different classes and is removed from $\mathcal{E}$.

    Lastly, agents update their collaborative models $\theta^t_a$, averaging the parameters of the agents $a'$ in their estimated similarity class $\mathcal{C}^{t}_a$.
    Moreover, all the link models $\theta^{t}_{\{ a,a' \}}$ are updated averaging the two node-link models of the nodes at the ends of the link, i.e, $\theta^{t+1}_{\{ a,a' \}} = \theta^t_{\{ a,a' \}} + \frac{ \Delta \theta_{a,a'}^{t} +  \Delta \theta_{a',a}^{t}}{2} $. A detailed explanation of the model is provided in Algorithm \ref{alg:ML_training}.

    We report the results (Fig \ref{fig:ML-exp} in the main article) obtained over 30 communication rounds comparing the Local model, which uses only the local dataset of each node, FL-SG a decentralized FL approach that averages the model parameters of all the neighbors over a static graph, and our FL-DG approach, again an averaging model where the nodes dynamically remove connections on the basis of the cosine similarity.

    \begin{figure}
      \centering
      \includegraphics[width=0.6\linewidth]{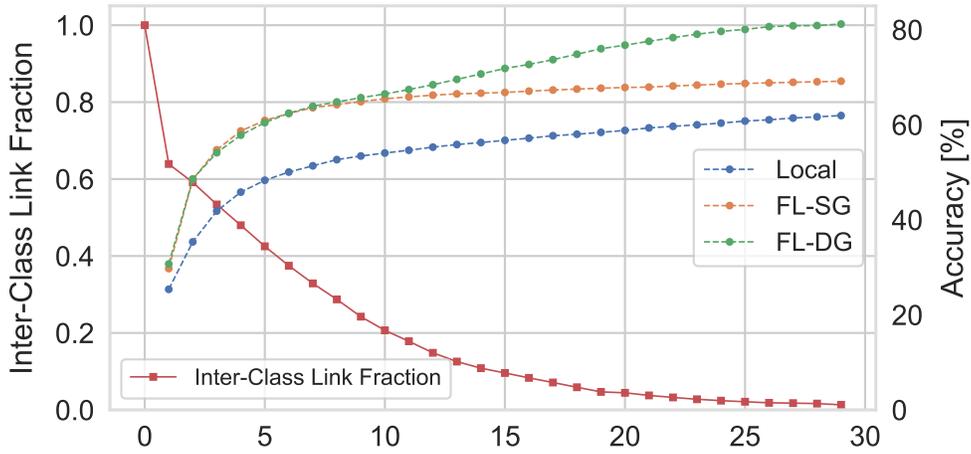}
      \caption{Accuracy of a local model (Local), a decentralized FL over a static graph (FL-SG), and our approach over a dynamic graph (FL-DG). We show also the fraction of links between communities over time for FL-DG. }
      \label{fig:ML-exp_app}
  \end{figure}

\end{document}